\documentclass[twoside,11pt]{article}

\usepackage{jmlr2e}
\usepackage{natbib}
\usepackage{amsmath}
\usepackage{ifthen}

\usepackage{lastpage}
\ShortHeadings{A Novel Theoretical Analysis for Clustering Heteroscedastic Gaussian Data}{Pastor, Dupraz, Hbilou and Ansel}
\firstpageno{1}

\usepackage{amsmath,amsfonts}
\usepackage{algorithmic}
\usepackage{algorithm}
\usepackage{array}
\usepackage[caption=false,font=small]{subfig}
\usepackage{textcomp}
\usepackage{stfloats}
\usepackage{url}
\usepackage{verbatim}

\usepackage{bbm}
\usepackage{bm}
\usepackage[normalem]{ulem}
\usepackage{stmaryrd}
\usepackage{mathrsfs}
\usepackage{cases}
\usepackage{empheq}
\usepackage{color}
\usepackage{subfig}
\usepackage{comment}
\usepackage{ifthen}
\usepackage[T1]{fontenc}
\usepackage{hyperref}
\usepackage{ragged2e}
\usepackage{subeqnarray}
\usepackage[dvipsnames]{xcolor}
\usepackage{xcolor}
\usepackage[capitalise, sort]{cleveref}

\usepackage{cancel}

\usepackage[dvipsnames]{xcolor}

\newcommand{\myscale}{.5}
\newcommand{\mylimsup}[1]{\underset{#1}{\overline{\lim}}}
\newcommand{\myiff}{\Leftrightarrow}

\newcommand{\myclubsuit}{(\textbf{A})}
\newcommand{\myspadesuit}{(\textbf{B})}
\newcommand{\Diagmat}{\Deltamat}
\newcommand{\estDiagmat}{{\Deltamat'}}
\newcommand{\estDiagmatn}{\estDiagmat_{\! \! n}}
\newcommand{\estDiagmatone}{\estDiagmat_{\! \! 1}}
\newcommand{\estDiagmatN}{\estDiagmat_{\! \! N}}

\newcommand{\Diagmatn}{\Diagmat_{n}}

\newcommand{\investDiagmatn}{\estDiagmatn^{\, -1}}

\newcommand{\new}[1]{#1}

\makeatletter
\newcommand{\manlabel}[2]{(#2)\def\@currentlabel{(#2)}\label{#1}}
\makeatother
\newcommand{\ignore}[1]{}
\newcommand{\unmarked}{\Ucal}
\newcommand{\initobs}{{\obs_{n^\star}}}
\newcommand{\initObs}{{\Obs_{n^\star}}}
\newcommand{\initseq}{{\seq_{n^\star}}}

\newcommand{\eigval}{\sqrteigval^{\, 2}}
\newcommand{\sqrteigval}{\sigma}
\newcommand{\sqrtesteigval}{{\sigma'}}
\newcommand{\sqrtesteigvalndimi}{\sqrtesteigval_{\! \! n, \dimi}}
\newcommand{\sqrtesteigvalndimib}{\sqrtesteigval_{\! \! n, \dimib}}
\newcommand{\sqrtesteigvalnmdimib}{\sqrtesteigval_{\! \! n_m, \dimib}}
\newcommand{\sqrtesteigvalnmdimi}{\sqrtesteigval_{\! \! n_m, \dimi}}
\newcommand{\esteigval}{\sqrtesteigval^{\, 2}}
\newcommand{\esteigvalndimi}{\sqrtesteigval_{\! \! n,\dimi}^{\, 2}}
\newcommand{\esteigvalnone}{\esteigval_{\! \! \! n,1}}
\newcommand{\esteigvalndim}{\esteigval_{\! \! \! n,\mydim}}
\newcommand{\eigvalnone}{\eigval_{\! n,1}}
\newcommand{\eigvalndim}{\eigval_{\! n,\mydim}}
\newcommand{\eigvalndimi}{\eigval_{\! n,\dimi}}
\newcommand{\eigvalmin}{\eigval_{\mymin}}
\newcommand{\eigvalmax}{\eigval_{\mymax}}
\newcommand{\sqrteigvalmin}{\sqrteigval_{\mymin}}
\newcommand{\sqrteigvalmax}{\sqrteigval_{\mymax}}

\newcommand{\esteigvalmin}{\esteigval_{\! \! \mymin}}
\newcommand{\esteigvalmax}{\esteigval_{\! \! \mymax}}
\newcommand{\sqrtesteigvalmin}{\sqrtesteigval_{\! \! \mymin}}
\newcommand{\sqrtesteigvalmax}{\sqrtesteigval_{\! \! \mymax}}

\newcommand{\sqrteigvalndimi}{\sqrteigval_{n,\dimi}}

\newcommand{\sqrteigvalnmdimi}{\sqrteigval_{n_m,\dimi}}
\newcommand{\sqrteigvalnmdimib}{\sqrteigval_{n_m,\dimib}}
\newcommand{\invsqrteigvalnmdimi}{\sqrteigval^{-1}_{n_m,\dimi}}

\newcommand{\investeigvalndimi}{\sqrtesteigvalndimi^{\, -2}}

\newcommand{\investeigvalmin}{\sqrtesteigvalmin^{\, -2}}

\newcommand{\invsqrtesteigvalndimi}{\sqrtesteigvalndimi^{\, -1}}

\newcommand{\invsqrtesteigvalmin}{\sqrtesteigvalmin^{\, -1}}
\newcommand{\invsqrtesteigvalmax}{\sqrtesteigvalmax^{\, -1}}
\newcommand{\invsqrteigvalmin}{\sqrteigvalmin^{\, -1}}
\newcommand{\invsqrteigvalmax}{\sqrteigvalmax^{\, -1}}

\newcommand{\investeigvalmax}{\sqrtesteigvalmax^{\, -2}}

\newcommand{\theexpect}{\Psi}

\newcommand{\constmin}{\const_{\mymin}}
\newcommand{\constmax}{\const_{\mymax}}
\newcommand{\const}{{\lambda}}
\newcommand{\theconst}{c}
\newcommand{\myf}{g}
\newcommand{\Cmat}{\Cbm}

\newcommand{\Umat}{\Ubm}

\newcommand{\Smat}{\Sbm}

\newcommand{\sigmaMLE}{{\est{\sigma}_{\text{MLE}}}}
\newcommand{\sigmaMLEsq}{{\est{\sigma}^2_{\text{MLE}}}}
\newcommand{\sigmaMEAN}{\sigma_{\text{mean}}}
\newcommand{\themax}{\mathrm{M}}

\newcommand{\lambdamin}{\lambda_{\mymin}}
\newcommand{\lambdamax}{\lambda_{\mymax}}
\newcommand{\ctrind}{k_0}

\newcommand{\fnGdiag}[2]{{\operatorname{GDiag}}{[#1,#2]}}

\newtheorem{Theorem}{Theorem}
\newtheorem{Lemma}{Lemma}
\newtheorem{Corollary}{Corollary}
\newtheorem{Remark}{Remark}

\newtheorem{Definition}{Definition}
\newtheorem{Proposition}{Proposition}

\newtheorem{Assumption}{Assumption}
\crefname{Assumption}{Assumption}{Assumptions}

\newtheorem{Predicate}{Predicate}
\crefname{Predicate}{Predicate}{Predicates}

\newenvironment{assumptioncomment}{\par\noindent{\bf Comment\ }}{\\[2mm]}

\newcommand{\matset}{\operatorname{M}_\mydim(\Rset)}
\newcommand{\diag}{\operatorname{diag}}

\newcommand{\eig}{\sqrteig^2}
\newcommand{\sqrteig}{\lambda}
\newcommand{\FreeMap}{\Thetabm}
\newcommand{\Maha}[1]{\nu_{#1}}
\newcommand{\Mahasq}[1]{\nu^2_{#1}}
\newcommand{\GL}{\operatorname{GL}_{\mydim}(\Rset)}

\newcommand{\transform}{\gamma}

\newcommand{\Models}{\operatorname{Models}}
\newcommand{\MVM}{{\Models(\Set,\family)}}
\newcommand{\MVMbis}{{\Models(\aSetbis,\family)}}
\newcommand{\MVMter}{{\Models(\bSetbis,\familyfn{\Diagmat})}}

\newcommand{\CtrSeqfn}[1]{\operatorname{CtrSeq}_{#1}}
\newcommand{\CtrSeq}{\CtrSeqfn{\seq,\MCov}}
\newcommand{\pval}[2]{\text{pval}_{#1}(#2)}
\newcommand{\Coeff}{\Lambdamat}
\newcommand{\coeff}{\lambda}

\newcommand{\algo}{CENTREx}

\def\radius{r}
\def\radiusb{\rho}
\newcommand{\theradius}{\rho}
\newcommand{\gooddist}[3]
{
	\ifthenelse{\equal{#1}{}}{\Delta^{\text{\tiny $#3$}, \text{\tiny $#2$}}_{\ctrind}}{\Delta^{\text{\tiny $#3$}, \, \text{\tiny $#2$}}_{\ctrind} ( #1 )}
}

\newcommand{\mydist}[1]
{
	\ifthenelse{\equal{#1}{}}{D^{\Ctr_0}_{\ctrind}}{D^{\Ctr_0}_{\ctrind}( #1 )}
}

\newcommand{\mydistfn}[2]
{
	\ifthenelse{\equal{#1}{}}{D^{#2}_{\ctrind}}{D^{#2}_{\ctrind}( #1 )}
}

\newcommand{\est}[1]{\widehat{#1}}

\newcommand{\ball}{\Bcal}
\newcommand{\Bcal}{\mathcal B}
\newcommand{\Ccal}{\mathcal U}
\newcommand{\Ucal}{\mathcal U}

\newcommand{\intmin}{N'}

\newcommand{\fn}[1]{\overline{#1}}
\newcommand{\TheThreshold}[1]{\mu(#1)}
\newcommand{\Topt}{\test_{\matcov}^{\alpha}}

\newcommand{\thenorm}{\nu_{\matcov}}

\newcommand{\TheFunc}{\Gbm_N^{\estMCov}}
\newcommand{\TheFuncb}{\Gbm_N^{\estDiagmat}}

\newcommand{\Thefuncbasic}{\gbm_{N}}
\newcommand{\TheFuncbasic}{\Gbm_{N}}

\newcommand{\Thefuncinit}{{\Thefuncbasic^{(\text{init})}}}

\newcommand{\MyWTer}{\MyWTerSigma}
\newcommand{\MyWTerfn}[1]{\Ubm_{N,\dimi}}
\newcommand{\MyWTerSigma}{\MyWTerfn{\est{\MCov}}}

\newcommand{\MyVTer}{\MyVTerSigma}
\newcommand{\MyVTerfn}[1]{V_{N,\dimi}}
\newcommand{\MyVTerSigma}{\MyVTerfn{\est{\MCov}}}

\newcommand{\MySBis}{S_{k,N,\dimi}}
\newcommand{\MyTBis}{T_{k,N,\dimi}}

\newcommand{\Zvec}{\Zbm}

\def\myvspace{\vspace{0.1cm}}

\newcommand{\dd}{d}

\newcommand{\dx}{\dd x}
\newcommand{\dy}{\dd y}

\newcommand{\familyfn}[1]{{\operatorname{Seq}}_{\, \seq,#1}}
\newcommand{\family}{\familyfn{\MCov}}

\newcommand{\sigmamax}{\sigma_{\mymax}}
\newcommand{\sigmamin}{\sigma_{\mymin}}

\newcommand{\dxvec}{\dd \xvec}
\newcommand{\dyvec}{\dd \yvec}

\def\Amat{\Abm}

\def\xivec{\xibm}
\def\Xvec{\Xbm}

\def\xvec{\xbm}
\def\yvec{\ybm}
\def\Wvec{\Wbm}

\def\Ibm{\bm I}

\newcommand{\Qmat}{\Qbm}
\newcommand{\Rmat}{\Rbm}

\newcommand{\Imat}{\Ibm}
\newcommand{\Deltamat}{\Deltabm}

\newcommand{\Lambdamat}{\Lambdabm}

\newcommand{\Abm}{\bm A}

\newcommand{\Cbm}{\bm C}

\newcommand{\Dbm}{\bm D}

\newcommand{\Qbm}{\bm Q}
\newcommand{\Rbm}{\bm R}

\newcommand{\varphibm}{\bm \varphi}

\newcommand{\Phibm}{\bm \Phi}

\newcommand{\Deltabm}{\bm \Delta}
\newcommand{\Lambdabm}{\bm \Lambda}

\newcommand{\thetabm}{\bm \theta}
\newcommand{\Thetabm}{\bm \Theta}

\newcommand{\Xibm}{\bm \Xi}

\newcommand{\xibm}{\bm \xi}
\newcommand{\Fbm}{\bm F}
\newcommand{\Gbm}{\bm G}
\newcommand{\fbm}{\bm f}
\newcommand{\gbm}{\bm g}
\newcommand{\Xbm}{\bm X}

\newcommand{\xbm}{\bm x}
\newcommand{\Ybm}{\bm Y}
\newcommand{\ybm}{\bm y}
\newcommand{\zbm}{\bm z}
\newcommand{\Sbm}{\bm S}
\newcommand{\Ubm}{\bm U}

\newcommand{\Wbm}{\bm W}

\newcommand{\Zbm}{\bm Z}

\def\Ebb{\mathbb{E}}
\def\Fbb{\mathbb{F}}

\def\Nbb{\mathbb{N}}
\def\Pbb{\mathbb{P}}

\def\Rbb{\mathbb{R}}
\def\Sbb{\mathbb{S}}
\def\Xbb{\mathbb{X}}
\def\Ybb{\mathbb{Y}}

\def\Acal{\mathcal{A}}
\def\Ccal{\mathcal{C}}

\def\Hcal{\mathcal{H}}

\def\Mcal{\mathcal{M}}
\def\Ncal{\mathcal{N}}

\def\Rcal{\mathcal R}
\def\Scal{\mathcal{S}}

\newcommand{\Bscr}{\mathscr B}

\newcommand{\Tfrak}{\mathfrak T}

\def\Nset{\Nbb}
\def\Rset{\Rbb}
\def\Sset{\Sbb}
\def\Xset{\Xbb}
\def\Yset{\Ybb}

\newcommand\fnrv[2]{\Mcal(#1,#2)}
\newcommand\rv{\fnrv{\Omega}{\Rset^{\mydim}}}
\newcommand\Lone{{L^1}(\Omega,\Rset^{\mydim})}
\newcommand{\mydim}{\mathrm{d}}
\newcommand{\dimi}{i}
\newcommand{\dimib}{j}
\def\dimb{\mydim}
\def\dimc{\mydim}
\newcommand{\Exp}[1]{\Ebb \big [ #1 \big ]}
\newcommand{\Expect}[1]{\Ebb \big [ #1 \big ]}

\newcommand{\transpose}{\mathrm{T}}
\newcommand{\transp}{\mathrm{\! T}}

\newcommand{\Process}[1]
{
	\ifthenelse{\equal{#1}{}}{\Fbm}{\Fbm(#1)}
}

\newcommand{\Processa}[1]
{
	\ifthenelse{\equal{#1}{}}{\Fbm}{\Fbm(#1)}
}

\newcommand{\Processb}[1]
{
	\ifthenelse{\equal{#1}{}}{\Fbm'}{\Fbm'(#1)}
}

\newcommand{\intM}{\llbracket 1, M \hspace{0.015cm} \rrbracket}
\newcommand{\intk}{\llbracket 1, K \hspace{0.015cm} \rrbracket}
\newcommand{\intn}{\llbracket 1, N \hspace{0.015cm} \rrbracket}
\newcommand{\intd}{\llbracket 1, \mydim \hspace{0.01cm} \rrbracket}

\newcommand{\intNfn}[1]{\llbracket #1 , \infty \llbracket}
\newcommand{\intNrestrict}{\llbracket 2, \infty \llbracket}

\newcommand{\intP}{\llbracket 1, P \hspace{0.01cm} \rrbracket}
\newcommand{\matcov}{\Smat}

\newcommand{\obs}{\ybm}
\newcommand{\Obs}{\Ybm}
\newcommand{\YObs}{\Ybm}
\newcommand{\Squel}[2]{{\Obs \! \left ({#1,#2} \right )}}

\newcommand{\SquelY}[2]{\YObs({#1,#2})}

\newcommand{\oobs}{\zbm}

\newcommand{\mynorm}[2][\cdot]{ \|  \, #1  \, \|_{#2}}
\newcommand{\norm}[1][\cdot]{ \| #1 \|}
\newcommand{\anothernorm}[1]{\nu \! \left ( #1 \right )}
\newcommand{\anothernormsh}{\nu}

\newcommand{\mybig}{\big}

\newcommand{\test}{\Tfrak}
\newcommand{\TestFamily}{\Tfrak^\bullet}

\def\level{\alpha}

\newcommand{\Proba}[1]{\Pbb \left [ \, #1 \, \right ]}

\newcommand{\Identity}[1]{\Imat_{#1}}

\newcommand{\Open}[2]{{] #1, #2 [}}

\newcommand{\RightOpen}[2]{{[ #1, #2 [}}
\def\intervalzeroone{\Open{0}{1}}

\def\tribu{\Acal}
\newcommand{\Ctr}{\varphibm}
\newcommand{\ctr}{\varphi}

\newcommand{\Marcum}{Q_{{\mydim}/2}}

\newcommand{\Id}{\Identity{{\mydim}}}

\newcommand{\CCtr}{\xibm}

\newcommand{\MCov}{\Cmat}
\newcommand{\MCovn}{\MCov_n}

\newcommand{\MCovnstar}{\MCov_{n^{\star}}}

\newcommand{\estMCov}{{{\Cmat'}}}

\newcommand{\estMCovn}{\estMCov_{\! \! n}}
\newcommand{\estMCovone}{\estMCov_{\! \! 1}}
\newcommand{\estMCovN}{\estMCov_{\! \! N}}

\newcommand{\estMCovnstar}{\estMCov_{\! \! n^{\star}}}
\newcommand{\MCovnm}{\MCov_{n_m}}

\newcommand{\investMCovn}{\estMCovn^{\, -1}}

\newcommand{\invsqrtestMCovn}{\estMCovn^{\, -1/2}}

\newcommand{\Ortho}{\Umat}

\newcommand{\CtrMap}{\Phibm}
\newcommand{\CtrMapBis}{\Xibm}
\newcommand{\CtrSet}{\mathscr C}

\newcommand{\TheSet}{\Ccal}
\newcommand{\limlimsupbasic}[3]{\displaystyle \lim_{#1} \left \vert \mylimsup{\, #2 \to \infty} {\Big \| #3 \Big \|} \, \right \vert_{\infty}}
\newcommand{\limlimsup}[3]{\displaystyle \lim_{#1} \left \vert \, \mylimsup{\, #2 \to \infty} \mynorm[ #3 ] \, \right \vert_{\infty}}
\newcommand{\limlimsupnu}[3]{\displaystyle \lim_{#1} \big \vert \mylimsup{\, #2 \to \infty} \nu (#3) \, \big \vert_{\infty}}

\newcommand{\esssuplim}[2]{\displaystyle \mathop{\text{ess. lim. sup}}_{#1 \, , \, #2 \to \infty}}
\newcommand{\esslimsup}[2]{\displaystyle \mathop{\text{ess. lim. sup}}_{#1 \, , \, #2 \to \infty}}
\newcommand{\esl}{\text{ess.lim.sup}}
\newcommand{\white}{\MCov_n^{-1/2}}

\newcommand{\mymin}{\text{min}}
\newcommand{\mymax}{\text{max}}

\newcommand{\cctr}{\xi}

\newcommand{\thetavec}{\thetabm}

\newcommand{\normpdf}{\, p_{\mathcal{N}}}
\newcommand{\Fchi}{\Fbb_{\chi^2_{\mydim}}}

\newcommand{\abs}[1]{\vert #1 \vert}
\newcommand{\integral}{\Upsilon(x_1)}
\newcommand{\sgn}{\text{sign}}
\newcommand{\meanvec}{\thetavec}

\newcommand{\meanvecc}{\xivec}
\newcommand{\meanveccc}{\xi}
\newcommand{\mean}{\theta}

\newcommand{\EstCtrSet}{\widehat{\hspace{1sp}\CtrSet}}% Le hspace permet de mieux centrer le chapeau sur la lettre

\newcommand{\risk}{\Rcal}

\newcommand{\statest}{\upsilon}

\newcommand{\elrisk}{\varrho_N}

\newcommand{\Dvec}{\Dbm}
\newcommand{\Diff}{\Dvec}
\newcommand{\truc}{\eta}
\newcommand{\machin}{\varepsilon}

\newcommand{\seq}{\kappa}
\newcommand{\Basefn}[2]{\Base_{#1}(#2)}
\newcommand{\Base}{\Bscr}

\newcommand{\GoodBase}[1]{\Base_{\Setbis{\Ctr_0}} \! \left ( {#1} \right )}
\newcommand{\Set}{{\Sset}}
\newcommand{\Setbis}[1]{\Set_{\ctrind}^{#1}}
\newcommand{\aSetbis}{\Setbis{\Ctr_0}}
\newcommand{\bSetbis}{\Setbis{\Rmat^\transpose \! \Ctr_0}}

\newcommand{\psit}{\widetilde{\Rmat}}
\newcommand{\invpsit}{\widetilde{\Rmat^\transpose}}

\newcommand{\psitfn}{\widetilde{\Rmat}_{(\ctrind,\Ctr_0)}}
\newcommand{\invpsitfn}{\widetilde{\Rmat^\transpose}_{(\ctrind,\Rmat^\transpose \Ctr_0)}}

\newcommand{\aSet}{\Xset}
\newcommand{\ObsSeq}{\YObsSeq}
\newcommand{\YObsSeq}{\Ybm}
\newcommand{\aObsSeq}{\aObsSeqFull}
\newcommand{\aObsSeqFull}{\Obs(\CtrMap)}
\newcommand{\YObsSeqFull}{\ObsSeq(\CtrMap)}

\newcommand{\variable}{x}

\newcommand{\thecoeff}{a}
\newcommand{\cost}{J_N}

\newcommand{\mylim}{\zeta}

\newcommand{\metric}{{H}}

\newcommand{\myeps}{\varepsilon}

\newcommand{\mywedge}{\text{\large\ensuremath {\bm \wedge}}}

\newcommand{\NsetRestricted}{{\llbracket K,\infty \llbracket}}
\newcommand{\cardkN}{\card \! \left ( \TheSet_{k,N} \right )}
\newcommand{\cardkoN}{\card \! \left ( \TheSet_{k_0,N} \right )}
\newcommand{\card}{\operatorname{card}}

\DeclareMathOperator*{\argmax}{argmax}

\renewcommand{\eqref}{\autoref}

\newcommand{\thespace}[1]{\,#1}
\newcommand{\mycomma}{\thespace{,}}
\newcommand{\myfstop}{\thespace{.}}

\newcommand{\spd}{symmetric positive-definite}
\newcommand{\spdm}{symmetric positive-definite matrix}
\newcommand{\spdms}{symmetric positive-definite matrices}

\newcommand{\assignfct}{\text{Assign}}

\DeclareMathOperator*{\argmin}{argmin}

\includecomment{twocol}

\includecomment{onecol}
\begin{document}

\title{A Novel Theoretical Analysis for Clustering Heteroscedastic Gaussian Data without Knowledge of the Number of Clusters}

\author{\name Dominique Pastor \email dominique.pastor@imt-atlantique.fr \\
	\addr Lab-STICC, \\
	IMT Atlantique, \\
	29238 Brest, France
	\AND
	\name Elsa Dupraz \email elsa.dupraz@imt-atlantique.fr \\
	\addr Lab-STICC, \\
	IMT Atlantique, \\
	29238 Brest, France
	\AND
	\name Ismail Hbilou \email ismail.hbilou@imt-atlantique.net \\
	\addr Lab-STICC, \\
	IMT Atlantique, \\
	29238 Brest, France
	\AND
	\name Guillaume Ansel \email guillaume.ansel@optisea.fr \\
	\addr OPTI'SEA \\
	23 rue du Bourgneuf \\
	29300 Quimperlé, France
}

 \editor{~}

\maketitle

\vspace{-0.1cm}
\begin{abstract}
	This paper addresses the problem of clustering measurement vectors that are heteroscedastic in that they can have different covariance matrices. From the assumption that the measurement vectors within a given cluster are Gaussian distributed with possibly different and unknown covariant matrices around the cluster centroid, we introduce a novel cost function to estimate the centroids. The zeros of the gradient of this cost function turn out to be the fixed-points of a certain function. As such, the approach generalizes the methodology employed to derive the existing Mean-Shift algorithm. But as a main and novel theoretical result compared to Mean-Shift, this paper shows that the sole fixed-points of the identified function tend to be the cluster centroids if both the number of measurements per cluster and the distances between centroids are large enough. As a second contribution, this paper introduces the Wald kernel for clustering. This kernel is defined as the p-value of the Wald hypothesis test for testing the mean of a Gaussian. As such, the Wald kernel measures the plausibility that a measurement vector belongs to a given cluster and it scales better with the dimension of the measurement vectors than the usual Gaussian kernel. Finally, the proposed theoretical framework allows us to derive a new clustering algorithm called \algo~that works by estimating the fixed-points of the identified function. As Mean-Shift, \algo~requires no prior knowledge of the number of clusters. It relies on a Wald hypothesis test to significantly reduce the number of fixed points to calculate compared to the Mean-Shift algorithm, thus resulting in a clear gain in complexity. Simulation results on synthetic and real data sets show that \algo~has comparable or better performance than standard clustering algorithms K-means and Mean-Shift, even when the covariance matrices are not perfectly known. 

\end{abstract}

\begin{keywords}
Clustering, Mean-Shift, Gaussian Mixture Model, K-means, K-means++, X-means
\end{keywords}

\section{Introduction}\label{sec:intro}
Clustering is an unsupervised learning task that consists in dividing a data set into clusters such that data inside a cluster are similar with each other, and different from data belonging to other clusters \citep{jain10PRL, Aggarwal2013}. 

Clustering is commonly used in various Machine Learning tasks \citep{ComprehensiveSurveyClusteringAlgorithms, ComprehensiveReviewClusteringTechniquesKnowledgeDiscovery,DataClusteringApplicationAndTrends}, including document classification~\citep{steinbach2000comparison}, information retrieval, or data set segmentation~\citep{huang1998extensions}. It is also widely employed in multisensor signal processing applications, as highlighted by \citet{abbasi2007survey}.

Each clustering algorithm is based on underlying assumptions that determine the prior information required by the algorithm. For example, K-means~\citep{jain10PRL}, spectral clustering~\citep{von2007tutorial}, and WARD clustering~\citep{ward1963hierarchical} algorithms require the number $K$ of clusters~\citep{jain10PRL}. Given that it can be challenging to determine practically the appropriate value of $K$, numerous techniques have been proposed to jointly estimate $K$ and perform clustering: among others, ~\citet{pelleg2000x, Hess2020, Oluknami2019,dinh2019estimating,Hamerly2004,mohammadi2021k,ClusterNumberAssistedK-means,UnsupervisedKMeansClusteringAlgorithm}.
In contrast, algorithms like DB-SCAN~\citep{ester1996density}, OPTICS~\citep{ankerst1999optics}, and BIRCH~\citep{zhang1996birch}, do not need any prior knowledge of $K$. Instead, they depend on parameters such as the maximum distance between two points in a cluster or the minimum number of points per cluster. These parameters actually characterize the intra-cluster density, and they have a significant influence on the clustering performance.

Mean-Shift~\citep{comaniciu2002mean} is another clustering method requiring no knowledge of $K$. It was shown to achieve good clustering performance in various applications \citep{PAYER2019, Cariou2021, Chakraborty2021, HUO2022, YANG2022, You2022}. The Mean-Shift algorithm is derived by formulating clustering as a kernel density estimation (KDE) problem over a data set, with a fixed kernel. Mean-Shift estimates one after the other the modes of the probability density function (pdf) of the data set, which are then considered as the cluster centroids. To do so, a location M-estimator~\citep{comaniciu2002mean,sinova2018} is derived from a cost function $\cost$ \citep[Sec.~2.6]{comaniciu2002mean}, where $N$ is the number of data in the data set. The critical points (the points for which the derivative equals $0$) of $\cost$ are shown to be the fixed-points of a certain function $\Thefuncbasic$; these fixed-points are evaluated by a standard iterative calculation and considered as estimates of the sought centroids.

%\newnew{Centroid estimation via Mean-Shift thus boils down to minimizing $\cost$. The minimizers of $\cost$ turn out to be fixed-points of a function $\Thefunc$ obtained by differentiating $\cost$ \citep[Secs. 2.3 \& 2.6]{comaniciu2002mean}. These fixed-points are calculated by applying $\Thefunc$ iteratively \citep[Sec. 2.2]{comaniciu2002mean}. Centroid estimates calculated by Mean-Shift are thus approximates of fixed points of $\Thefunc$. The current literature does not guarantee that they minimize $\cost$ or that they approximate, in some sense, the true centroids.} %\newnew{The minimizers of this cost function turn out to be fixed-points of a function $\Thefunc$ obtained by differentiating $\cost$.}
%\newnew{Here, we should not say that the fixed-points minimize the cost function (there is no proof of this). We should say that the values than make the gradient equal to $0$ are the fixed-points of a certain function. And later on we can say that what is missing is to show that the fixed-points are actually the centroids.}

\medskip
The Mean-Shift algorithm has three main drawbacks. 
\begin{enumerate}
    \item A substantial effort has been made to investigate the convergence of the fixed-point iterative calculation process~\citep{ghassabeh2013convergence,ghassabeh2015sufficient,ghassabeh2018modified,yamasaki2019properties}. The convergence was established in the one-dimensional case only~by \citet{ghassabeh2013convergence}. For higher dimensions, a restrictive sufficient convergence condition was proposed for Gaussian kernels in~\citet{ghassabeh2015sufficient}, while necessary and sufficient convergence conditions were provided for more complex variants of the Mean-Shift iterative procedure by \citet{ghassabeh2018modified} and \citet{yamasaki2019properties}.
However, investigating this convergence is not sufficient in itself, given that the following two questions are also crucial for the theoretical analysis of Mean-Shift and its variants: (i) do the fixed-points of $\Thefuncbasic$, which we know to be critical points, minimize the cost function $\cost$? (ii) do the minimizers of $\cost$ approximate the centroids with respect to a certain criterion?
%However, investigating convergence alone is not sufficient since there is no evidence that the fixed-points of $\Thefuncbasic$ actually minimize the function $\cost$, nor that these minimizers approximate the centroids with respect to a certain criterion. 
In this paper, we directly prove that the fixed-points of an extended and randomized version of $\Thefuncbasic$ converge to the centroids in a certain probabilistic sense introduced in this paper, without having to show that these fixed-points minimize $\cost$.

%Therefore, the primary objective of this paper is to address these issues. To do so, we analyze the convergence, in a certain probabilistic sense, of an extended and randomized version of $\Thefuncbasic$.  
%\newnew{Rephrase to take previous remark into consideration. The key difficulty here is the the function is random! So studying its second-order derivative in general is complicated.}

\item  The Mean-Shift algorithm depends on a kernel function $w$ which weighs the contribution of each data point in the function $g_N$. Most works use a Gaussian kernel that depends on a parameter $h$ called bandwidth~\citep{wu02PR,comaniciu2002mean,ghassabeh2015sufficient}. The value of $h$ significantly affects the clustering performance and various methods have been proposed to estimate it~\citep{comaniciu2001variable,sheather1991reliable}. 
However, although these approaches yield satisfactory clustering results for small data dimensions ($\mydim<20$), the Gaussian kernel does not scale well as the data dimension $\mydim$ increases~\citep{huang2018convergence}. Thus, the secondary objective of this paper is to introduce an alternative kernel that can better accommodate higher dimensions. We propose to choose the kernel as the p-value of an hypothesis test to decide on whether a data point belongs to a certain cluster. 

\item The third issue is that the Mean-Shift algorithm estimates the fixed points of $\Thefuncbasic$ by merging the results of $N$ iterative procedures, each initialized with a different data point among the $N$ available. Alternatively, we propose to apply a hypothesis test after each centroid estimation to remove from the set of initializers all data points that likely belong to the current cluster. 
%The third issue is that the complexity of the Mean-Shift algorithm increases linearly with the number of data. \edm{Is this in $O(N^2)$ instead, compared to $O(NK)$ for our approach?}
%Indeed, this algorithm estimates the fixed points of $\Thefuncbasic$ by merging the results of $N$ iterative procedures, each initialized with a different data among the $N$ available. Alternatively, we propose to apply an hypothesis test after each centroid estimation so as to eliminate from the set of initializers the data that likely belongs to the current cluster. 
\end{enumerate}

\section{Contributions and paper outline}\label{sec:outline}

While Mean-Shift initially considered Gaussian vectors all with identical diagonal covariance matrix, we address the case where each data vector may have its own specific covariance matrix  (heteroscedasticity). By making such an hypothesis, we also have in mind the problem of clustering measurements  collected through a network of sensors where thermal noise spectral properties can differ from one sensor to another. In addition, we analyze to what extent exact knowledge of the covariance matrices can be mitigated. Of course, we also discuss the simpler and more usual case where all covariance matrices are identical and diagonal \citep{comaniciu2002mean, wu02PR, Dupraz_ICASSP2018}. Our main contributions can then be summarized as follows:
\begin{enumerate}
	\item We introduce a new cost function $\cost$ encompassing the case where each data vector has its own covariance matrix. This cost function generalizes those considered by \citet{comaniciu2002mean} and \citet{wu02PR}. As with Mean-Shift, the critical points of our new cost function $\cost$ are the fixed-points of a function $\Thefuncbasic$ %$\Thefuncbasic = \Thefunc$ 
	obtained by differentiating $\cost$. For the general case of heteroscedastic Gaussian measurement vectors, we then show the strong theoretical result that each centroid is the fixed-point of a randomized version $\TheFuncbasic$ of $\Thefuncbasic$, when the number $N$ of data and the distances between this centroid and all the others tend to infinity. This result stated in Theorem~\ref{Theorem: fixed points} follows from a novel theoretical analysis that consists of studying the convergence of $\TheFuncbasic$ in a specific probabilistic sense defined in the paper. The approach is non-Bayesian in that, unlike Gaussian Mixture Models (GMM)~\citep{reynolds2009gaussian}, the probabilities that the data belong to the different clusters are not required in the derivation. The parametric form of our cost function also induces that the result holds even when the data covariance matrices are unknown and approximated.
%    \vspace{-0.1cm}
	\item We introduce a novel kernel $w$ for clustering Gaussian clusters. This kernel is hereafter called the Wald kernel because it derives from the p-value of Wald's Uniformly Best Constant Power (UBCP) test \citep[Proposition 3]{Wald1943} for testing the mean of a Gaussian. By construction, this kernel scales better with the data dimension than the standard Gaussian kernel. 
%    \vspace{-0.1cm}
	\item From the asymptotic result of Theorem \ref{Theorem: fixed points}, the centroids of heteroscedastic Gaussian clusters can be estimated by the fixed-points of $\TheFuncbasic$. Hence, we introduce a novel clustering algorithm called \algo~that estimates the cluster centroids, similarly to Mean-Shift, by looking for the fixed-points of the function $\Thefuncbasic$, seen as a realization of $\TheFuncbasic$. \algo~relies on Wald's UBCP test \citep[Proposition 3]{Wald1943} again, to eliminate from the set of potential initializers all the measurement vectors that are close enough to an already estimated fixed-point. This elimination strategy greatly reduces the complexity of \algo~compared to Mean-Shift. Note that, similarly to Mean-Shift, \algo{} can be used with the Gaussian kernel or the Wald kernel. In the special case of clusters with identical covariance matrices proportional to identity, we derive a Maximum Likelihood Estimator (MLE) of the variance of the data. This MLE can be used by \algo~and the Mean-Shift algorithm to perform the clustering.
\end{enumerate}
%\vspace{-0.2cm}
We provide numerical simulation results to assess \algo. The results show that \algo~has the same clustering performance as Mean-Shift when the two algorithms use the same kernel. However, \algo~provides a significant reduction in complexity. %This finding validates the selection of \algo, which is less complex than Mean-Shift. 
Furthermore, we observe that the Wald kernel outperforms the Gaussian kernel, particularly when dealing with high-dimensional data ($\mydim=100$). We further evaluate the robustness of \algo~when using the Wald kernel. For identical covariance matrices proportional to identity, our results show that the proposed MLE of the variance of the data incurs only a minor performance degradation compared to the scenario where this variance is known. We also analyze the case of diagonal covariance matrices that are not scaled identity and identify effective strategies for selecting the variance parameters for the data vectors. All these experimental results pinpoint the relevance of Theorem \ref{Theorem: fixed points}, even in non-asymptotic situations, and provide evidence of the robustness and computational efficiency of the proposed approach.

The outline of the paper is as follows. Section~\ref{sec:model} gives our notation and formally describes the considered Gaussian model. Section~\ref{sec:robust} provides the cost function. Section~\ref{subsec:Fixed-point analysis} presents the theoretical analysis for centroid estimation. Section~\ref{sec:w_function} introduces the Wald kernel. Section~\ref{sec:algo} describes the proposed algorithm \algo. Section~\ref{sec:experiments} provides the numerical results.

\section{Notation and cluster model}\label{sec:model}
\indent
This section introduces pieces of notation used throughout the paper. It also provides the statistical model of clusters used to address the centroid estimation problem in Section \ref{sec:robust}. This model will be extended in Section \ref{sec: measurement vector models} to achieve our convergence analysis.

\label{sec: notation}

%\subsubsection*{Elementary sets, functions and sequences}
\noindent\textbf{Sets, functions and sequences.}
Throughout the paper, $\Nset$ is the set of positive integers. % all non-negative integers and $\Nset^* = \Nset \setminus \{0\}$. 
Given two positive integers $p$ and $q$ such that $p \leqslant q$, $\llbracket p,q \rrbracket$ denotes the set of integers $\{p, p+1, \ldots, q \}$ and similarly, $\llbracket q, \infty \llbracket$ (~resp. $\rrbracket q , \infty \llbracket \,$) is the set of all integers $n \geqslant q$ (resp. $n > q$). 
$\Rset$ is the set of all real values. Given two elements $a$ and $b$ of the extended real line such that $-\infty \leqslant a \leqslant b \leqslant \infty$, $\Open{a}{b}$ is the interval of all $x \in \Rset$ such that $a < x < b$. \\
\indent
The cardinality of a set $\Xset$ is denoted by $\textrm{card}(\Xset)$. Given any two sets $\Xset$ and $\Yset$, the notation $f: \Xset \to \Yset$ designates a function. Given such a function $f: \Xset \to \Yset$, the inverse image $\{x \in \Xset: f(x) \in B \}$ of $B \subset \Yset$ by $f$ will be denoted as $f^{-1}(B)$, and the direct image $\{f(x): x \in A \}$ of $A \subset \Xset$ by $f$ is denoted as $f(A)$. \\
\indent
A function defined on $\Nset$, or on $\llbracket 1,N \rrbracket$ for some $N \in \Nset$, is called a sequence. The limsup of a sequence of real values $(x_n)_{n \in \Nset}$ and the limsup of a sequence $(f_n)_{n \in \Nset}$ of functions valued in $\Rset$ are denoted by $\mylimsup{n \to \infty} \, x_n$ and $\mylimsup{n \to \infty} \, f_n$, respectively.

%\subsubsection*{Vectors and matrices}
\noindent
\textbf{Vectors and matrices.}
Throughout the paper, we consider some fixed $\dimb \in \Nset$ to designate the common dimension of all the random vectors encountered below. The space of all $\dimb$-dimensional real column vectors is denoted by $\Rset^{\dimb}$. Bold characters are used to denote vectors, matrices and, more generally, functions valued in $\Rset^{\dimb}$. 
\\
\indent
In this respect, given a set $\aSet$, a sequence of functions $\fbm_n: \aSet \to \Rset^{\mydim}$ for $n \in \Nset$ is denoted by a bold character $\fbm = (\fbm_n)_{n \in \Nset}$. We also deal with functions $\fbm$, still denoted by bold characters, that assign to each $x \in \aSet$ a sequence $\fbm(x)$ of functions valued in $\Rset^{\mydim}$. In this case, for each $x \in \aSet$, the elements of the sequence $\fbm(x)$ are denoted by $\fbm(x,n)$ for $n \in \Nset$. Therefore, we have $\fbm(x) = (\fbm(x,n))_{n \in \Nset}$, where each function $\fbm(x,n)$ is valued in $\Rset^\mydim$. \\
\indent
We denote by $\matset$ and $\GL$ the set of all $\mydim \times \mydim$ matrices with real coefficients and the General Linear Group of $\Rset^{\dimb}$, respectively. In the usual way, $\cdot^\transpose$ denotes the transpose operator for vectors and matrices. %The spectrum of any $\Amat \in \matset$ is denoted by $\Spec(\Amat)$. 
The identity matrix in $\Rset^{\dimb}$ is denoted by $\Identity{\dimb}$. We write $\diag(a_1, \ldots,a_{\mydim})$ for a $\mydim \times \mydim$ diagonal matrix whose diagonal elements are the real numbers $a_1, \ldots, a_{\mydim}$. If the diagonal elements of $\Coeff = \diag(\eig_1, \ldots,\eig_{\mydim})$ are non-null, we set $\Coeff^{-1/2} = \diag(\sqrteig_1^{-1}, \ldots, \sqrteig_\mydim^{-1})$.
\\
\indent
Given $\xvec \in \Rset^\mydim$ and $\dimi \in \intd$, $(\xvec)_\dimi$ designates the $\dimi^{\text{th}}$ coordinate or $\xvec$. Most often, we simply set $x_\dimi = (\xvec)_\dimi$ to alleviate the notation so that we generally write $\xvec = (x_1, \ldots, x_{\dimb})^\transpose$. We also write $\xvec = (x_\dimi)_{1 \leqslant \dimi \leqslant \mydim}$ to shorten the notation, which does not explicitly include the transpose operator. The reader must thus keep in mind that we only handle column vectors. According to these conventions, for any $\xvec \! \in \! \Rset^{\mydim}$, any $\Amat \in \matset$ and any $i \in \intd$, $(\Amat \xvec)_i$ designates the $\dimi^{\text{th}}$ coordinate of $\Amat \xvec$ so that $\Amat \xvec = \left ( (\Amat \xvec)_1, \ldots, (\Amat \xvec)_{\mydim} \right)^\transpose$. For vectors $\Amat \xvec$ with coordinates given by rather long formulas, it will be more convenient to write $\Amat \xvec = \left ( (\Amat \xvec)_{\dimi} \right )_{1 \leqslant \dimi \leqslant \mydim}$, again without recalling the transposition.

\vspace{0.1cm}
%\subsubsection*{Mahalanobis norm}
\noindent
\textbf{Mahalanobis norm.}
Throughout, we will extensively use Mahalanobis norms. We recall that the Mahalanobis norm $\Maha{\Smat}$, associated with a given $\mydim \times \mydim$ \spdm~$\Smat$, is defined by
\begin{equation}
	\label{eq: definition of the initial mahanorm}
	\forall \xvec \in \Rset^{\mydim}, \Maha{\Smat}(\xvec) = \sqrt{\xvec^T \Smat^{-1} \xvec} \myfstop
\end{equation}
If $\Smat$ is the identity matrix $\Identity{\mydim}$, then the Mahalanobis norm $\Maha{\Smat}$ is the Euclidean norm in $\Rset^{\mydim}$. As usual, this Euclidean norm will be denoted by $\norm$.
\\
\indent
Given a $\mydim \times \mydim$ \spdm~${\Smat}$, we recall the existence of an eigen decomposition 
\begin{equation}
	\label{eq: eigen decomposition of MCov init}
	\Smat = \Ortho \Coeff \Ortho^{\transpose} \mycomma
\end{equation}
where $\Coeff = \text{diag} (\eig_1, \ldots, \eig_{\mydim})$ is a diagonal matrix whose diagonal values are the eigenvalues $\eig_1, \ldots, \eig_{\mydim}$ of $\Smat$, and the columns of $\Ortho$ are the corresponding eigenvectors. 

\vspace{0.1cm}
\noindent
\textbf{Random vectors.}
Random vectors are denoted with capital letters. Throughout, all random vectors are assumed to be defined with respect to the same probability space denoted by $(\Omega, \tribu, \Pbb)$. By $\mydim$-dimensional real random vector, we mean any function $\Zbm: \Omega \to \Rset^{\mydim}$ measurable with respect to $\tribu$ and the Borel $\sigma$-algebra of $\Rset^{\mydim}$. A random variable is a random vector valued in $\Rset$. The set of all $\mydim$-dimensional real random vectors is denoted $\rv$, without recalling the underlying $\sigma$-algebras to simplify the notation. We simply write $\fnrv{\Omega}{\Rset}$ when $\mydim = 1$. According to our conventions above for vectors of $\Rset^\mydim$, given $\Zbm \in \rv$, we write $Z_i = (\Zbm)_i$ and thus $\Zbm = (Z_1, \ldots, Z_{\mydim})^\transpose$. In the same vein, given $\Zvec \in \rv$, $\Amat \in \matset$ and $i \in \intd$, we write $(\Amat \Zvec)_{\dimi}$ for the $\dimi^{\text{th}}$ coordinate of $\Amat \Zvec$ and thus $\Amat \Zvec = \left ( (\Amat \Zvec)_1, \ldots, (\Amat \Zvec)_{\mydim} \right)^\transpose$, or even $\Amat \Zvec = \left ( (\Amat \Zvec)_{\dimi} \right )_{1 \leqslant \dimi \leqslant \mydim}$ in case of coordinates with heavy expressions. If $\Zbm: \Omega \to \Rset^{\mydim}$ is a random vector such that $\Proba{\Zbm \in B} = 1$ for some Borel set $B$ of $\Rset$, we write $\Zbm \in B$ ($\Pbb$-a.s) for {\em almost surely with respect to $\Pbb$}. For any random variable $X \in \fnrv{\Omega}{\Rset}$, the essential supremum of $\vert X \vert$ will hereafter be denoted by $\vert X \vert_{\infty}$. As often in the literature, we write cdf for cumulative distribution function. As already mentioned in the introduction, we write pdf for probability density function. 

\subsection{Cluster model}
\label{subsec: signal model}
Throughout the paper, we consider a Gaussian model for the data~\citep{bouveyron14CSDA,wu02PR,forero11SP,Hess2020,comaniciu2002mean}. 

At this stage, we assume that we are given $N \in \Nset$ and a sequence $(\Obs_{n})_{n \in \intn}$ of $\mydim$-dimensional real Gaussian random vectors defined on $(\Omega, \tribu, \Pbb)$. In what follows, these random vectors are called measurement vectors and we suppose that the data to cluster are their realizations $\obs_1 = \Obs_1(\omega)$, \ldots, $\obs_N = \Obs_N(\omega)$ for some $\omega \in \Omega$. Given $n \in \intn$, $\MCov_n$ denotes the covariance matrix of $\Obs_n$. We then define a sequence of centroids as follows.
\begin{Definition}
	\label{def: sequence of centroids}
    Given $K\in\Nset$, a sequence of centroids is any injective map $\CtrMap: \intk \to \Rset^{\mydim}$, the centroids being the $K$ vectors $\CtrMap(1), \ldots, \CtrMap(K)$. The set of all sequences of centroids is denoted by $\Set$. 
\end{Definition}

Next, we assume the existence of a surjective finite sequence $\seq = (\seq_n)_{n \in \intn}$ valued in $\intk$ such that
\begin{equation}
	\label{eq: data model (first version)}
	\forall n \in \intn, \Obs_n \thicksim \Ncal(\CtrMap(\seq_n), \MCov_{n}) \myfstop
%	\nonumber
\end{equation}
The surjectivity of the sequence $\seq$ is required to guarantee the existence of at least one random vector $\Obs_n$ whose expectation is $\CtrMap(k)$. With this definition, the $k^{\text{th}}$ cluster is the set of all measurement vectors $\Obs_n$ whose expectation equals $\CtrMap(k)$, and no cluster is empty. %\modif{Equivalently}, the $k^{\text{th}}$ cluster is unequivocally specified by the set $\{n \in \intn: \seq_n = k\}$. 

By clustering, we mean that we aim to estimate the centroids issued from $\CtrMap$. We assume no prior knowledge of the number $K$ of clusters.

\section{Centroid estimation}\label{sec:robust}
In this section, we first introduce a new cost function to estimate $\CtrMap$ from the measurement vectors $(\Obs_n)_{n \in \llbracket 1,N\rrbracket}$. We show that the centroids $\CtrMap(k)$ can be estimated by calculating the fixed-points of a certain function $\Thefuncbasic$ obtained after differentiating this cost function. We will see that the {\em mean-shift} \citep{comaniciu2002mean} is a particular case of $\Thefuncbasic$.

\subsection{Cost function for centroid estimation}
\label{subsec:Cost Function for centroid estimation}
Given an injective map $\FreeMap: \intk \to \Rset^{\mydim}$ and a realization $(\obs_1, \ldots,\obs_N)$ of $(\Obs_1, \ldots, \Obs_N)$, we introduce the following cost function $J$:
\begin{equation}
	\label{eq:cost_function}
	J(\FreeMap) = \displaystyle \sum_{k=1}^K \sum_{n=1}^N \risk(\Maha{\estMCovn}^2( \obs_n - \FreeMap(k))) \myfstop
\end{equation}
In this equation, $\risk: \mathbb{R} \rightarrow \mathbb{R}$ is an increasing and differentiable function that verifies $\risk(x)=0 \Rightarrow x=0$, and $(\estMCovn)_{n \in \intn}$ is a sequence of \spdms. Although the cost function $J$ involves the number $K$ of clusters, we will see in Section \ref{sec: Critical points of the cost function} that the derivation of the critical points of $J$ does not depend on $K$.

\begin{Remark}
	\label{rmk: fundamental remark}
	If the covariance matrices $\MCov_n$ are known, a natural choice consists of setting $\estMCovn = \MCov_n$ for each $n \in \llbracket 1,N\rrbracket$. Otherwise, our cost function allows us to use estimates $\estMCovn$ for these covariance matrices.
\end{Remark}
%\modif{At this stage, for any $n \in \intn$, $\estMCovn$ can be any covariance matrix. We discuss below the practical choices of interest for $\estMCovn$.}

Comparatively, two cost functions are introduced for Mean-Shift \citep{comaniciu2002mean}. The first one is given by a \emph{multivariate kernel density estimator} to be maximized with respect to the vectors $\FreeMap(k)$, following a KDE approach~\citep{wand1994kernel}. The second one is expressed as a \emph{location estimator} to be minimized with respect to the vectors $\FreeMap(k)$, following an M-estimation approach~\citep{zoubir12robust}. Interestingly, it is shown in~\citet{comaniciu2002mean} that both cost functions lead to the same theoretical framework and clustering algorithm. The cost function of~\eqref{eq:cost_function} generalizes those introduced in~\citet{wand1994kernel}, \citet{comaniciu2002mean}, \citet{wu02PR}, to the case of possibly different covariance matrices $\MCov_n$ for each measurement vector $\Obs_n$. In the next section, we proceed as in previous works by differentiating the cost function $J$ above.

\subsection{Critical points of the cost function}
\label{sec: Critical points of the cost function}
In order to estimate the centroids, we would like to minimize the cost function of~\eqref{eq:cost_function} among all injective maps $\FreeMap: \intk \to \Rset^{\mydim}$. For this, we follow the same steps as \citet{comaniciu2002mean} and first calculate the gradient of $J$. Since the risk function $\risk$ is differentiable, we have, for any $k \in \intk$,
% \begin{onecol}
\begin{equation}
\label{eq: differentiation-1}
	\partial_{\FreeMap(k)} J(\FreeMap) = \displaystyle \sum_{n=1}^N \partial_{\FreeMap(k)} \Big ( \Maha{\estMCovn}^2(\obs_n - \FreeMap(k)) \Big ) \times \risk' \left ( \Maha{\estMCovn}^2(\obs_n - \FreeMap(k)) \right ) \mycomma
\nonumber
\end{equation}
% \end{onecol}
% \begin{twocol}
% \begin{equation}
% \label{eq: differentiation-1}
% \begin{array}{lll}
% 	\hspace{-0.25cm} \partial_{\FreeMap(k)} J(\FreeMap) \vspace{0.1cm} \\
% 	= \displaystyle \sum_{n=1}^N \partial_{\FreeMap(k)} \Big ( \Maha{\estMCovn}^2(\obs_n - \FreeMap(k)) \Big ) \times \risk' \left ( \Maha{\estMCovn}^2(\obs_n - \FreeMap(k)) \right ),
% \end{array}
% \nonumber
% \end{equation}
% \end{twocol}
where $\partial_{\FreeMap(k)} J(\FreeMap)$ is the partial derivative of $J(\FreeMap)$ with respect to $\FreeMap(k)$. Since $$\Maha{\estMCovn}^2(\obs_n - \FreeMap(k)) = (\obs_n - \FreeMap(k) )^\transpose  
\investMCovn (\obs_n - \FreeMap(k)) \mycomma$$ it follows from standard matrix differentiation~\citep[Sec.~2.4]{Matrixcookbook} that, for any $n \in \intn$,
\begin{equation}
	\label{eq: differentiation-2}
	\partial_{\FreeMap(k)} \Big ( \Maha{\estMCovn}^2(\obs_n - \FreeMap(k)) \Big ) =  - 2 \investMCovn (\obs_n - \FreeMap(k)) \mycomma
	\nonumber
\end{equation}
so that
% \begin{onecol}
\begin{equation}
	\label{eq: differentiation-3}
		\partial_{\FreeMap(k)} J(\FreeMap) = - 2 \displaystyle \sum_{n=1}^N w \left ( \Maha{\estMCovn}^2(\obs_n - \FreeMap(k)) \right ) \investMCovn (\obs_n - \FreeMap(k)) \mycomma
	%	\nonumber
\end{equation}
% \end{onecol}
% \begin{twocol}
% \begin{equation}
% 	\label{eq: differentiation-3}
% 	\begin{array}{lll}
% 		\hspace{-0.5cm} \partial_{\FreeMap(k)} J(\FreeMap) = 
% 		\vspace{0.1cm} \\
% 		- 2 \displaystyle \sum_{n=1}^N \investMCovn (\obs_n - \FreeMap(k)) w \! \left ( \Maha{\estMCovn}^2(\obs_n - \FreeMap(k)) \right ),
% 	\end{array}
% 	%	\nonumber
% \end{equation}
% \end{twocol}
where the derivative
\begin{equation}
	\label{eq:definition of w}
	w = \risk'
\end{equation}
of $\risk$ is hereafter called the kernel. We henceforth assume that $w$ is valued in $\Open{0}{\infty}$. As a consequence, because each $\investMCovn$ is itself symmetric and positive-definite,  $$\sum_{n=1}^N w \big ( \Maha{\estMCovn}^2 (\obs_n-\FreeMap(k)) \big ) \investMCovn \in \GL \myfstop$$
It thus follows from \eqref{eq: differentiation-3} that, for all $k \in \intk$, $\partial_{\FreeMap(k)} J(\FreeMap) = 0$ if and only if
% \begin{onecol}
\begin{equation}
	\label{eq: equation to solve to find the minimizers-0}
		\left ( \sum_{n=1}^N w \big ( \Maha{\estMCovn}^2 (\obs_n-\FreeMap(k)) \big ) \investMCovn \right )^{-1} \times \, \, \, \sum_{n=1}^N w \big ( \Maha{\estMCovn}^2 (\obs_n-\FreeMap(k)) \big ) \investMCovn \obs_n = \FreeMap(k) \myfstop
\end{equation}
% \end{onecol}
% \begin{twocol}
% \begin{equation}
% 	\label{eq: equation to solve to find the minimizers-0}
% 	\begin{array}{lll}
% 		\hspace{-0.2cm} 
% 		\left ( \sum_{n=1}^N w \big ( \Maha{\estMCovn}^2 (\obs_n-\FreeMap(k)) \big ) \investMCovn \right )^{-1} 
% 		\vspace{0.1cm} \\ 
% 		\hspace{0.5cm} \times \sum_{n=1}^N w \big ( \Maha{\estMCovn}^2 (\obs_n-\FreeMap(k)) \big ) \investMCovn \obs_n = \FreeMap(k)
% 	\end{array}
% \end{equation}
% \end{twocol}
Hence, finding an injective map $\FreeMap: \intk \to \Rset^{\mydim}$ satisfying \eqref{eq: equation to solve to find the minimizers-0} amounts to looking for the fixed-points of the function $\Thefuncbasic$ defined for every $\Ctr \in \Rset^{\mydim}$ by setting
% \begin{onecol}
\begin{equation}
	\label{eq: our general function}
	\Thefuncbasic(\Ctr)
		= \left ( \sum_{n=1}^N w \big ( \Maha{\estMCovn}^2 (\obs_n-\Ctr) \big ) \investMCovn \right )^{-1}  \times \, \, \, \sum_{n=1}^N w \big ( \Maha{\estMCovn}^2 (\obs_n-\Ctr) \big ) \investMCovn \obs_n \myfstop
\end{equation}
% \end{onecol}
% \begin{twocol}
% \begin{equation}
% 	\label{eq: our general function}
% 	\begin{array}{lll}
% 		\hspace{-0.5cm} 
% 		\modif{\Thefuncbasic(\Ctr)}
% 		= \left ( \sum_{n=1}^N w \big ( \Maha{\estMCovn}^2 (\obs_n-\Ctr) \big ) \investMCovn \right )^{-1} 
% 		\vspace{0.1cm} \\ 
% 		\hspace{1.5cm} \times \sum_{n=1}^N w \big ( \Maha{\estMCovn}^2 (\obs_n-\Ctr) \big ) \investMCovn \obs_n.
% 	\end{array}
% \end{equation}
% \end{twocol}
The function $\Thefuncbasic$ will hereafter be called the {\em mean-shift function} for the following reason. Consider the particular case where there exists a positive real number $h$ such that, for each $n \in \Nset$, $\MCov_n = \estMCovn = h^2 \Id$. In this case, for every $\Ctr \in \Rset^{\mydim}$, we have:
\begin{equation}
	\label{eq: MyNewfunc}
	\hspace{-0.3cm} 
	\Thefuncbasic(\Ctr)
%	\Thefunc(\Ctr, \obs_1, \ldots, \obs_N)
%	\! = \! 
	= \dfrac{\sum_{n = 1}^{N} w \left ( {\norm[ \, \obs_n - \Ctr \, ]^2}/{h^2} \right ) \obs_n }{\sum_{n = 1}^{N} w \left ( {\norm[ \, \obs_n - \Ctr \, ]^2}/{h^2} \right )} \myfstop
\end{equation}
The {\em mean-shift} identified by~\citet[Eq.~17]{comaniciu2002mean} after differentiating the cost function is thus a particular case of $\Thefuncbasic$.
%$\Thefunc(\Ctr, \obs_1, \ldots, \obs_N)-\Ctr$. 
However, no theoretical evidence was given by~\citet{comaniciu2002mean} to justify that the fixed points of this function are the cluster centroids $\CtrMap(k)$. 

We could envisage using methods from conventional optimization theory to study whether these fixed points are minimizers of the cost function $J$, for instance by verifying conditions on the Hessian of $J$. However, such methods are difficult to apply here, due to the probabilistic model for the vectors $\Obs_n$. We thus develop below an alternative approach that involves studying the convergence in a specific probabilistic sense of a randomized version $\TheFuncbasic$ of $\Thefuncbasic$.
As a novel and strong theoretical result, we show that, at least asymptotically, the centroids $\CtrMap(k)$ are the sole fixed points of this random function. 

\section{Convergence analysis}
\label{subsec:Fixed-point analysis}
This section presents our main theoretical result stated in Theorem \ref{Theorem: fixed points}. This result is asymptotic and involves countably infinite sequences of measurement vectors. To ease the reading of this theorem, we first introduce and comment most of the material required to state it.

\vspace{-0.25cm}
\subsection{Preliminary material}
Here, we provide all the material required to define in Section \ref{sec: Essential convergence in limsup} the convergence criterion used in Theorem \ref{Theorem: fixed points}. We begin with the modeling of a sequence of measurement vectors as a function of a sequence of centroids. We also define our randomized mean-shift function, of which $\Thefuncbasic(\Ctr)$ of \eqref{eq: our general function} is a realization. We then introduce the notion of distance between a sequence of centroids and a fixed vector. Finally, we recall some definitions and properties of bases and limits to define our convergence criterion in Section \ref{sec: Essential convergence in limsup}.

\subsubsection{Centroids, clusters and measurement vector models}
%\subsubsection{Centroids and clusters}
%\label{sec: centroids and clusters}
\label{sec: measurement vector models}
Since our analysis requires making $N$ tend to infinity, we need a countably infinite {--- in short, denumerable ---} sequence of measurement vectors. We denote it by $(\Obs_n)_{n \in \Nset}$. 

%We consider a sequence of centroids in the form of an injective map $\CtrMap$, as introduced in Definition~\ref{def: sequence of centroids}. To extend the {initial} model of~\eqref{eq: data model (first version)} to the case of a sequence of measurement vectors $(\Obs_n)_{n \in \Nset}$, we {must} move from the surjective finite sequence $\seq$ of Section \ref{subsec: signal model} to a surjective denumerable sequence $\seq = (\seq_n)_{n \in \Nset}$ of integers valued in $\intk$. Given that the convergence criterion considered in Theorem \ref{Theorem: fixed points} requires $\CtrMap$ to be considered as a variable, the following definitions us to unequivocally associate centroids to measurement vectors. This mapping is achieved via an element of the set $\Set$ of all sequences of centroids as , and a surjective sequence $\seq = (\seq_n)_{n \in \Nset}$ valued in $\intk$.
We consider a sequence of centroids in the form of an injective map $\CtrMap$, as introduced in Definition~\ref{def: sequence of centroids}. To extend the {initial} model of~\eqref{eq: data model (first version)} to the case of a sequence of measurement vectors $(\Obs_n)_{n \in \Nset}$, we {must} move from the surjective finite sequence $\seq$ of Section \ref{subsec: signal model} to a surjective denumerable sequence $\seq = (\seq_n)_{n \in \Nset}$ of integers valued in $\intk$. Given that the convergence criterion considered in Theorem \ref{Theorem: fixed points} requires $\CtrMap$ to be considered as a variable, the following definitions allow to associate unequivocally centroids to measurement vectors. We remind that $\Set$ is the set of all sequences of centroids as defined in Definition \ref{def: sequence of centroids}.
%\mysout{This mapping is achieved via an element of the set $\Set$ of all sequences of centroids, and a surjective sequence $\seq = (\seq_n)_{n \in \Nset}$ valued in $\intk$.}
\begin{Definition}
	\label{def: family F}
	Given a sequence $\MCov = (\MCov_n)_{n \in \Nset}$ of \spdms~and a {surjective} sequence $\seq = (\seq_n)_{n \in \Nset}$ valued in $\intk$, {$\family$ is defined as the set of sequences $\Wvec = (\Wvec_{n})_{n \in \Nset}$ of ${\mydim}$-dimensional real Gaussian random vectors for which {there} exists $\CtrMap \in \Set$ such that}
	\begin{equation}
		\label{eq: property of family}
		\forall n \in \Nset, \Wvec_{n} \thicksim \Ncal(\CtrMap({\seq_n}),\MCov_n) \myfstop
	\end{equation}
\end{Definition}

By definition, given $\Wvec \in \family$, there is a unique $\CtrMap \in \Set$ satisfying \eqref{eq: property of family}. Indeed, if $\CtrMap' \in \Set$ satisfies \eqref{eq: property of family}, $\CtrMap'(\seq_n) = \CtrMap(\seq_n)$ for all $n \in \Nset$. Since $\seq$ is surjective, given any $k \in \intk$, there exists at least one $n \in \Nset$ such that $k = \seq_n$, which implies that $\CtrMap'(k) = \CtrMap(k)$. Whence the next definition.
\begin{Definition}
	\label{Definition: CtrMap}
	Given a sequence $\MCov = (\MCov_n)_{n \in \Nset}$ of \spdms~and a {surjective} sequence $\seq = (\seq_n)_{n \in \Nset}$ valued in $\intk$, we define $$\CtrSeq: \family \to \Set$$ as the function assigning to each given $\Wvec = (\Wvec_{n})_{n \in \Nset} \in \family$ the unique sequence of centroids $\CtrMap \in \Set$ satisfying \eqref{eq: property of family}. 
    \vspace{0.1cm} \\
    \indent
    We denote by $\MVM$ the set of all functions $\Obs: \Set \to \family$ such that: %$\CtrSeq \circ \Obs = \id_{\Set}$: 
	\begin{equation}
		\label{eq: CtrSeq circ Obs = id}
		{\forall \CtrMap \in \Set, \CtrSeq(\aObsSeq) = \CtrMap} \myfstop
	\end{equation}
\end{Definition}

%\subsubsection{Measurement vector models}
%\label{sec: measurement vector models}
With the definitions introduced above, we enrich our model and notation for the measurement vectors to {explicitly} show the variables with respect to which the asymptotic result will be established. Specifically, consider any $\Obs \in \MVM$ and any $\CtrMap \in \Set$. The sequence $\aObsSeq = (\Squel{\CtrMap}{n})_{n \in \Nset}$ is assigned by $\Obs$ to $\CtrMap$. {According to \eqref{eq: CtrSeq circ Obs = id} and \eqref{eq: property of family}, we thus have:}
\begin{equation}
	\label{eq: property of Squel}
	\forall n \in \Nset, \Squel{\CtrMap}{n} \thicksim \Ncal(\CtrMap({\seq_n}), \MCov_{n}) \myfstop
\end{equation}
From now on, the measurement vectors are modelled and denoted by the random vectors $\Squel{\CtrMap}{n}$, and $\aObsSeq$ is the sequence of measurement vectors, on the basis of which we will estimate the centroids. 

\subsubsection{Randomized mean-shift function}
\label{sec: Randomized mean-shift function}
Let $\estMCov = (\estMCovn)_{n \in \Nset}$ be any sequence of \spdms. For any $N \in \Nset$, the first $N$ elements of $\estMCov$ can be used to specify $\Thefuncbasic(\Ctr)$ in \eqref{eq: our general function}. Given any sequence $\Wvec \! = \! (\Wvec_n)_{n \in \Nset}$ of elements of $\rv$ and any $\Ctr\in\Rset^\mydim$, set
% \begin{onecol}
	\begin{equation}
		\label{eq: new h (a) - 0}
		\TheFunc(\Ctr,\Wvec) 
		= \left ( \sum_{n=1}^N w \big ( \Maha{\estMCovn}^2 (\Wvec_n-\Ctr) \big ) \investMCovn \right )^{-1} \times \, \, \, \sum_{n=1}^N w \big ( \Maha{\estMCovn}^2 (\Wvec_n-\Ctr) \big ) \investMCovn \Wvec_n \myfstop
	\end{equation}
% \end{onecol}
Given any $\Obs \in \MVM$, the definition of $\Thefuncbasic(\Ctr)$ given by \eqref{eq: our general function} can thus be extended to the random function
% \begin{onecol}
\begin{equation}
	\label{eq: new h (a) - 1}
	\begin{array}{lll}
	\hspace{-0.4cm} 
	\TheFunc(\Ctr,\aObsSeq) = \left ( \displaystyle \sum_{n=1}^N w \big ( \Maha{\estMCovn}^2 (\Squel{\CtrMap}{n} - \Ctr) \big ) \investMCovn \right )^{-1} \\
    \hspace{5cm}
	\times \displaystyle \sum_{n=1}^N w \big ( \Maha{\estMCovn}^2 ( \Squel{\CtrMap}{n} - \Ctr) \big ) \investMCovn \Squel{\CtrMap}{n} \myfstop
\end{array}
\end{equation}
% \end{onecol}
The random function $\TheFunc(\Ctr,\aObsSeq)$ is the {randomized mean-shift function} we need in what follows. In particular, {$\Thefuncbasic(\Ctr)$} is the realization of $\TheFunc(\Ctr,\aObsSeq)$ when $\obs_1 = \Squel{\CtrMap}{1}(\omega), \obs_2 = \Squel{\CtrMap}{2}(\omega), \ldots, \obs_N = \Squel{\CtrMap}{N}(\omega)$ for some $\omega \in \Omega$. {Unlike $\Thefuncbasic$, {the notation $\TheFunc$ mentions explicitly} the sequence $\estMCov$ {appearing in its expression}. This sequence can be regarded as a parameter of $\TheFunc$. This heavier notation is needed because the statement of Theorem~\ref{Theorem: fixed points} involves both $\estMCov$ and another sequence {$\estDiagmat$} of covariance matrices to parameterize the randomized mean-shift function.}

\subsubsection{Distance between a sequence of centroids and a fixed vector}
\label{sec: Distance between a sequence of centroids and a fixed vector}
In Theorem \ref{Theorem: fixed points}, we make the distance between a given centroid and all the other ones grow to {infinity}. {We need Definitions \ref{definition: subset of injective maps} and \ref{definition: definition of the subset of interest} below to arrive at Definition \ref{definition: the distance} used to define the distance between a centroid and all others.}
\begin{Definition}
	\label{definition: subset of injective maps}
	Given $\ctrind \in \intk$ and $\Ctr_0 \in \Rset^\mydim$, $\Setbis{\Ctr_0} \subset \Set$ is the set of all $\CtrMap \in \Set$ whose $\ctrind^{\text{th}}$ element is $\Ctr_0$:
	$$\Setbis{\Ctr_0} = \left \{ \CtrMap \in \Set: \CtrMap(\ctrind) = \Ctr_0 \right \} \myfstop$$ 
\end{Definition}
We thus use $\Setbis{\Ctr_0}$ to fix the $\ctrind^{\text{\scriptsize{th}}}$ centroid at $\Ctr_0$ and make all the other ones move away from it. Proving that the sole fixed point of $\TheFunc(\Ctr,\aObsSeq)$ is asymptotically $\Ctr_0$, when the distance between this centroid and all the other ones tend to {infinity}, requires considering the vector measurement models $\Obs \in \MVM$ whose domain is $\Setbis{\Ctr_0}$. Thence the following definition and notation.
\begin{Definition}
	\label{definition: definition of the subset of interest}
	Given any $\ctrind \! \in \! \intk$ and any $\Ctr_0 \! \in \! \Rset^\mydim$, $\MVMbis$ is defined as the set of all functions $\Obs: \Setbis{\Ctr_0} \to \family$ such that 
	\begin{equation}
		\label{eq: CtrSeq circ Obs = id_{Setbis{Ctr_0}}}
		{\forall \CtrMap \in \Setbis{\Ctr_0}, \CtrSeq \left (\Obs(\CtrMap) \right ) = \CtrMap} \myfstop
	\end{equation}
\end{Definition}
\indent
To make centroids other than the $\ctrind^{\text{\scriptsize{th}}}$ move away from a given $\Ctr_0 \in \Rset^\mydim$, we use the following definition.
\begin{Definition}
	\label{definition: the distance}
	Given $\ctrind \! \in \! \intk$ and $\Ctr_0 \! \in \! \Rset^\mydim$, we define the function $\mydist{}: \Setbis{\Ctr_0} \to {\RightOpen{0}{\infty}}$ for any $\CtrMap \in \Setbis{\Ctr_0}$ by setting
	\begin{equation}
		\mydist{\CtrMap} = \min \big \{ \norm[ \CtrMap(k) - \Ctr_0 ] : k \in \intk \! \setminus \! \{\ctrind\} \big \} \myfstop
		\label{eq: definition of distmin}
	\end{equation}
\end{Definition}
Given $\CtrMap \in \Setbis{\Ctr_0}$, $\mydist{\CtrMap}$ measures the distance between the centroids $\CtrMap(k)$ with $k \neq \ctrind$ and $\Ctr_0 \in \Rset^{\mydim}$. We make this distance increase to {infinity} by using an appropriate base in $\Setbis{\Ctr_0}$, as stated in the next paragraph.

\subsubsection{Bases and limits}
\label{sec: bases and limits}
We recall that a base $\Base$ --- or filter base --- in a given set $\aSet$ is a nonempty subset of the power set of $\aSet$ such that \citep[p.~128, Definition 11]{Zorich2004}:
\begin{itemize}
    \item [(i)] $\emptyset \notin \Base$~;
    \item[(ii)] For all $(B_1,B_2) \in \Base \times \Base$, there exists $B \in \Base$ such that $B \subset B_1 \cap B_2$~. 
\end{itemize}
\indent
Consider a base $\Base$ in a set $\aSet$. Since all norms on $\Rset^{\dimb}$ are equivalent, it follows from \citet[p.~129, Definition 12]{Zorich2004} that $\mylim \in \Rset^{\dimb}$ is the limit of a function $\fbm : \aSet \to \Rset^{\dimb}$ over $\Base$, which we write $\lim_{\Base} \fbm = \mylim$, if 
\begin{equation}
	\label{eq: def of a limit-any norm}
	\hspace{-0.2cm}
	\forall \eta \in \Open{0}{\infty}, \exists B \in \Base, \forall \variable \in \aSet, \left ( \variable \in B \Rightarrow \norm[\fbm(\variable) - \mylim] < \eta \right ) \myfstop
\end{equation}
As a basic and illustrative example, $\Base_\Nset 
= \{ \, \rrbracket n,\infty \llbracket: n \in \Nset \, \}$ is, in $\Nset$, the base with respect to which we define the convergence of a sequence $(x_n)_{n \in \Nset}$ of real values when $n$ tends to $\infty$.

The following facts will be helpful. If $\Base$ is a base in a set $\aSet$ and $\transform: \aSet \to \aSet'$ is a bijection, then the set $\transform(\Base) = \{ \transform(B): B \in \Base \}$ of all the direct images of the elements of $\Base$ is a base in $\aSet'$ and \eqref{eq: def of a limit-any norm} implies that, for any $\fbm : \aSet \to \Rset^{\dimb}$ with a limit over $\Base$, 	
\begin{equation}
	\label{eq: transform of a base and limits}
	\displaystyle \lim_{\Base} \fbm = \displaystyle \lim_{\transform(\Base)} (\fbm \circ \transform^{-1}) \myfstop
\end{equation}
Considering any set $\aSet$, if $H: \aSet \to \RightOpen{0}{\infty}$ is a function such that $H^{-1} \big( \, \Open{r}{\infty} \, \big ) \neq \emptyset$ for all $r \in \RightOpen{0}{\infty}$, then 
\begin{equation}
	\label{eq: example of a base via a function}
	\Basefn{\aSet}{H} = \Big \{ H^{-1} \big ( \, \Open{r}{\infty} \, \big ) : r \in \RightOpen{0}{\infty}) \Big \}
\end{equation}
is a base in $\aSet$. Given a function $\fbm: \aSet \to \Rset^{\mydim}$ and $\mylim \in \Rset^{\mydim}$, we write $\displaystyle \lim_{H(\variable) \to \infty} \fbm(\variable) = \mylim$ to mean that $\lim_{\Basefn{\aSet}{H}} \fbm = \mylim$. Therefore, according to \eqref{eq: def of a limit-any norm}, we have
\begin{onecol}
	\begin{equation}
		\label{eq: lim over B(H)}
		\displaystyle \lim_{\metric(\variable) \to \infty} \fbm(\variable) = \mylim \myiff \Big ( \, \forall \eta \in \Open{0}{\infty}, \exists \, r \in \RightOpen{0}{\infty}, \forall \variable \in \aSet, \metric(\variable) > r \Rightarrow \norm[\fbm(\variable) - \mylim] < \eta \, \Big ) \myfstop
	\end{equation}
\end{onecol}
% \begin{twocol}
% \begin{equation}
% 	\label{eq: lim over B(H)}
% 	\begin{array}{lll}
% 		\hspace{-0.25cm}
% 		\displaystyle \lim_{\metric(\variable) \to \infty} f(\variable) = \mylim \vspace{0.1cm} \\
% 		\hspace{0.1cm} 
% 		\myiff \Big ( \, \forall \eta \in \Open{0}{\infty}, \exists \, r \in \Open{0}{\infty}, \forall \variable \in \aSet, \\
% 		\qquad \qquad \qquad \quad \metric(\variable) > r \Rightarrow \norm[f(\variable) - \mylim] < \eta \, \Big )
% 	\end{array}
% 	\hspace{-0.5cm}
% \end{equation}
% \end{twocol}
In addition, suppose that we have a bijection $\gamma : \aSet \to \aSet'$ between two arbitrary sets and a function $H: \aSet \to \RightOpen{0}{\infty}$ such that, for any $r \in \RightOpen{0}{\infty}$, $H^{-1} \big( \, \Open{r}{\infty} \, \big ) \neq \emptyset$. We derive from the foregoing that $\Basefn{\aSet'}{H \circ \gamma^{-1}}$, defined according to \eqref{eq: example of a base via a function}, equals $\gamma(\Basefn{\aSet}{H})$ and is a base in $\aSet'$. It thus follows by using \eqref{eq: transform of a base and limits} and the writing conventions introduced above that, for any $f : \aSet \to \Rset^{\dimb}$ for which $\displaystyle \lim_{H(\variable) \to \infty} \fbm(\variable)$ exists, we have:
\begin{equation}
	\label{eq: change of variable}
	\displaystyle \lim_{H(\variable) \to \infty} \fbm(\variable) = \displaystyle \lim_{(H \circ \gamma^{-1})(y) \to \infty} (\fbm \circ \gamma^{-1}) (y) \myfstop
\end{equation}
% Below, we need the following bases. 
% First, consider a sequence $(x_N)_{N \in \Nset}$ of real numbers. In \eqref{eq: def of a limit-any norm}, set $\aSet = \Nset$, define $f: \Nset \to \Rset$ for each $N \in \Nset$ by $f(N) = x_N$ and choose the absolute value for the norm in $\Rset$. It results that
% \begin{equation}
% 	\label{eq: base in Nset}
% 	\BaseNset = \{ \, {\rrbracket N,\infty \llbracket}: N \in \Nset \, \}
% \end{equation} 
% is the base in $\Nset$ with respect to which the convergence of the sequence $(x_N)_{N \in \Nset}$ when $N$ tends to infinity is defined.
\\
\indent
Now, fix $\ctrind \in \intk$ and $\Ctr_0 \in \Rset^{\mydim}$. If $\metric: \Setbis{\Ctr_0} \to \RightOpen{0}{\infty}$ is a function such that $\metric^{-1}\big ( \, \Open{r}{\infty} \, \big ) \neq \emptyset$ for any $r \in \RightOpen{0}{\infty}$ then 
\begin{equation}
	\label{eq: our base}
	\GoodBase{\metric} = \Big \{ \metric^{-1} \big ( \, \Open{r}{\infty} \, \big ) : r \in \RightOpen{0}{\infty} \Big \}
\end{equation}
is a base in $\Setbis{\Ctr_0}$. 
In particular, we identify
\begin{equation}
	\label{eq: the base}
	\hspace{-0.35cm} 
	\GoodBase{\mydist{}} = \Big \{ \left ( \mydist{} \right)^{-1} \big ( \, \Open{r}{\infty} \, \big ) : r \in \RightOpen{0}{\infty} \Big \}
\end{equation}
as the suitable base in $\Setbis{\Ctr_0}$ to study the limit of a function $\fbm: \Setbis{\Ctr_0} \to \Rset^{\mydim}$ when $\mydist{\CtrMap}$ tends to $\infty$. If $\fbm$ has a limit $\mylim$ over $\GoodBase{\mydist{}}$, we thus write $$\displaystyle \lim_{\mydist{\CtrMap} \to \infty} \fbm(\CtrMap) = \mylim$$ 
and it follows from \eqref{eq: lim over B(H)} \& \eqref{eq: the base} that
\begin{onecol}
\begin{equation}
	\label{eq: lim over B(Delta)}
	\begin{array}{lll}
		\hspace{-0.3cm} \displaystyle \lim_{\mydist{\CtrMap} \to \infty} \fbm(\CtrMap) = \mylim \vspace{0.1cm} \\
		\quad \, \myiff \Big ( \, \forall \eta \in \Open{0}{\infty}, \exists \, r \in \RightOpen{0}{\infty}, \forall \, \CtrMap \in \Setbis{\Ctr_0}, \mydist{\CtrMap} > r \Rightarrow \norm[ \fbm(\CtrMap) - \mylim ] < \eta \, \Big ) \myfstop
	\end{array}
\end{equation}
\end{onecol}
% \begin{twocol}
% 	\begin{equation}
% 		\label{eq: lim over B(Delta)}
% 		\begin{array}{lll}
% 			\hspace{-0.3cm} \displaystyle \lim_{\mydist{\CtrMap} \to \infty} f(\CtrMap) = \mylim \vspace{0.1cm} \\
% 			\quad \, \myiff \Big ( \forall \eta \in \Open{0}{\infty}, \exists r \in \Open{0}{\infty}, \forall \CtrMap \in \Setbis{\Ctr_0}, \\
% 			\hspace{2cm} \mydist{\CtrMap} > r \Rightarrow \norm[ f(\CtrMap) - \mylim ] < \eta \Big )
% 		\end{array}
% 		%	\nonumber
% 	\end{equation}
% \end{twocol}
In Theorem \ref{Theorem: fixed points}, we make $N$ and $\mydist{\CtrMap}$ grow to $\infty$. This requires the specific convergence criterion introduced in the next section.

\subsection{Essential convergence in limsup (ess.lim.sup)}
\label{sec: Essential convergence in limsup}
The definition of $\esl$ is novel. It formalizes and extends the convergence criterion \citep[Sec.~2.1]{Pastor-CSDA} used to state \citet[Theorem 1]{Pastor-CSDA}.

Define a parametric discrete random process as any function $\Process{}: \aSet \times \Nset \to \rv$, where $\aSet$ is some set, typically a set of parameters. Such a function $\Process{}$ assigns to each pair $({\variable},N) \in \aSet \times \Nset$ a unique $\mydim$-dimensional real random vector $\Process{\variable,N} \in \rv$. For a given $\Obs \! \in \! \MVM$ and a given $\Ctr \in \Rset^{\mydim}$, the function  $(\CtrMap,N) \mapsto \TheFunc(\Ctr,\aObsSeq)$ defined in \eqref{eq: new h (a) - 1} is a parametric discrete random function defined on $\MVM \times \Nset$. \\
\indent Given any parametric discrete random process $\Process{}: \aSet \times \Nset \to \rv$, we now define the $\esl$ convergence of $\Process{\variable,N}$ over a base $\Base$ in $\aSet$ when $N$ tends to $\infty$ as follows.
\begin{Definition}
	\label{Definition: lim of limsup}
	Let $\mylim \in \Rset^{\mydim}$ and $\Process{}: \aSet \times \Nset \to \rv$ be a parametric discrete random process.
	% $$\begin{array}{llll}
	% 	\Process{}: & \aSet \times \Nset & \to & \rv \\
	% 	& (\variable,N) & \mapsto & \Process{\variable,N}
	% \end{array}$$
	For any pair $(\variable,N) \in \aSet \times \Nset$, set
	$$\begin{array}{llll}
		\norm[\Process{\variable,N}-\mylim \,]: & \Omega & \to & \Rset \\
		& \omega & \mapsto & \norm[\Process{\variable,N}(\omega) - \mylim \,] \myfstop
	\end{array}$$
	Given a base $\Base$ in $\aSet$, we write $$\esssuplim{\Base}{N}\Process{\variable,N} = \mylim$$ and say that $\Process{\variable,N}$ converges essentially in limsup --- or simply, in $\esl$ --- to $\mylim$, over $\Base$ when $N$ tends to $\infty$, if $$\limlimsup{\Base}{N}{\Process{\variable,N} - \mylim} = 0  \myfstop$$
%	\qed
\end{Definition}
With the same notation as in the definition above, we straightforwardly have
% \begin{onecol}
 	\begin{equation}
 	\label{eq: esl centred}
 	\esssuplim{\Base}{N}\Process{\variable,N} = \mylim \, \myiff \, \esssuplim{\Base}{N} \left ( \Process{\variable,N} - \mylim \right ) = 0 \myfstop
\end{equation}
% \end{onecol}
% \begin{twocol}
% 	\begin{equation}
% 	\label{eq: esl centred}
% 	\begin{array}{lll}
% 		\esssuplim{\Base}{N}\Process{\variable,N} = \mylim \vspace{0.1cm} \\
% 		\hspace{2cm} \myiff \esssuplim{\Base}{N} \left ( \Process{\variable,N} - \mylim \right ) = 0
% 	\end{array}
% 	\end{equation}
% \end{twocol}
Below, we consider the particular case where $\aSet = \Setbis{\Ctr_0}$ and $\GoodBase{\mydist{}}$ defined by \eqref{eq: the base} are used to make $\mydist{\CtrMap}$ tend to $\infty$. Thus, with the notation introduced above, given $\mylim \in \Rset^{\mydim}$ and any parametric discrete random process $\Process{}: \Setbis{\Ctr_0} \times \Nset \to \rv$, we have :
% \begin{onecol}
\begin{equation}
	\label{eq: esl of a discrete process in our case}
	\esssuplim{\mydist{\CtrMap} \to \infty}{N}\Process{\CtrMap,N} = \mylim
	\, \, \, \myiff \limlimsup{\mydist{\CtrMap} \to \infty}{N}{\Process{\CtrMap,N} - \mylim} = 0 \myfstop
%	\nonumber
\end{equation}
% \end{onecol}
% \begin{twocol}
% 	\begin{equation}
% 		\label{eq: esl of a discrete process in our case}
% 		\begin{array}{lll}
% 			\hspace{-0.3cm} \esssuplim{\mydist{\CtrMap} \to \infty}{N}\Process{\CtrMap,N} = \mylim \vspace{0.1cm} \\
% 			\hspace{0.1cm} \myiff \limlimsup{\mydist{\CtrMap} \to \infty}{N}{\Process{\CtrMap,N} - \mylim} = 0
% 		\end{array}
% 		\nonumber
% 	\end{equation}
% \end{twocol}

\subsection{Main result}
\label{subsec: main results}
In this section, we state our main theoretical result, namely Theorem \ref{Theorem: fixed points}. This asymptotic result relies on the following assumptions, completing those made in Sections \ref{sec:model} and \ref{sec:robust}. These assumptions are presented and commented below before stating the theorem. 
\\
\indent
Regarding the sequence $\seq = (\seq_n)_{n \in \Nset}$ that assigns the $\seq_n^{\text{th}}$ centroid to the measurement vector number $n$, we make the following assumption.

\begin{Assumption}
\label{Assumption: on alpha}
 $\seq = (\seq_n)_{n \in \Nset}$ is a surjective sequence of integers in $\intk$ for which {there} exists $\thecoeff_0 \in \Open{0}{1}$ such that $$\forall (k,N) \in \intk \times \NsetRestricted, \thecoeff_0 \leqslant \textrm{card} \left \{ n \in \intn: \seq_n = k \right \} / N \myfstop$$ 
\end{Assumption}

\begin{assumptioncomment}
The existence of the uniform lower bound $a_0 > 0$ for the proportion of measurement vectors belonging to a given cluster implies that the {cardinality of each cluster cannot be $0$ and} will increase to {infinity} as the number $N$ of data grows. {Thereby, no cluster becomes negligible in cardinality with respect to the others as $N$ tends to {infinity}.}
\end{assumptioncomment}

\begin{Assumption}
\label{Assumption: on w}
The kernel $w: \RightOpen{0}{\infty} \to \Open{0}{\infty}$ is a continuous and non-increasing function verifying $$\displaystyle \lim\limits_{t \rightarrow \infty} {t w ( t )} = 0 \myfstop$$
\end{Assumption}

\begin{assumptioncomment}
We consider kernels that are smooth enough to achieve the estimation of the centroids. This assumption is mild enough to encompass a large class of possible kernels, among which the Gaussian one.
\end{assumptioncomment}
\indent
The covariance matrices $\MCov_n$ of the measurement vectors are not necessarily equal, even when they correspond to data within the same cluster. We however introduce some coherence between these covariance matrices. In this respect, we make the following assumption.

\begin{Assumption}
\label{Assumption: on C}
The covariance matrices $\MCov_n$ for $n \in \Nset$ are {diagonalizable in the same orthonormal basis of $\Rset^\mydim$ and there exists an interval $[\eigvalmin, \eigvalmax] \subset \Open{0}{\infty}$ to which their eigenvalues all belong}.
\end{Assumption}

\begin{assumptioncomment}
The coherence between the covariance matrices is {in the sense} that they all have {the} same eigenvectors. {In contrast, the spectra of the covariance matrices are not necessarily identical. The bounds $\eigvalmin$ and $\eigvalmax$ are the sole parameters needed in the sequel to analyze the impact of the eigenvalues on the randomized mean-shift function $\TheFunc$.}
\end{assumptioncomment}
%In addition, the eigenvalues of the covariance matrices $\MCov_n$ can be unknown in practice but we bound our possible lack of prior knowledge on these eigenvalues.
\indent
We need a sequence $(\estMCovn)_{n \in \Nset}$ to define {the randomized mean-shift function} $\TheFunc(\Ctr,\Wvec)$ defined in \eqref{eq: new h (a) - 0}. {Considering, as suggested in Remark \ref{rmk: fundamental remark}, that $\estMCovn$ is an estimate of $\MCov_n$ for each $n \in \Nset$, we make the following similar assumptions for the matrices $\estMCovn$ as for the matrices $\MCov_n$}.

\begin{Assumption}
\label{Assumption: on lambda}
The covariance matrices $\estMCovn$ are diagonalizable in the same orthonormal basis as the covariance matrices $\MCov_n$ for $n \in \Nset$, and there exists an interval $[\esteigvalmin, \esteigvalmax] \subset \Open{0}{\infty}$ to which all their eigenvalues belong.
\end{Assumption}
% such that, for all $n \! \in \! \Nset$, $\Spec(\estMCovn) \! \subset \! \esteiginterval$.

\begin{assumptioncomment}
Considering, according to Remark \ref{rmk: fundamental remark}, that the matrices $\estMCovn$ are estimates of the matrices $\MCov_n$, we assume that the former share the same eigenvectors as the latter. On the other hand, the eigenvalues of the matrices $\estMCovn$ can be unknown, but {they must have} bounded possible variations. Note that the interval of variations of these eigenvalues can differ from $[\eigvalmin, \eigvalmax]$. As for \cref{Assumption: on C}, $\esteigvalmin$ and $\esteigvalmax$ are the sole parameters used below to study how the eigenvalues impact $\TheFunc$.
\end{assumptioncomment}

%\vspace{0.1cm}
The following consequences of \cref{Assumption: on C,Assumption: on lambda} will be useful to state Theorem \ref{Theorem: fixed points}. %To state Theorem \ref{Theorem: fixed points}, we also need the following material. 
First, in what follows, when we are given any two sequences $\MCov = (\MCov)_{n \in \Nset}$ and $\estMCov = (\estMCovn)_{n \in \Nset}$ of \spdms~satisfying \cref{Assumption: on C,Assumption: on lambda}, we can write that
\begin{subequations}
	\label{eq: eigen decompositions}
	\begin{empheq}[left={\forall n \in \Nset,} \empheqlbrace]{align}
		& \MCov_n = \Rmat \Diagmatn \Rmat^\transpose \mycomma\label{eq: eigen decomposition of MCov} \\
		& \estMCovn = \Rmat \estDiagmatn \Rmat^\transpose \mycomma \label{eq: eigen decomposition of estMCov}
	\end{empheq}
\end{subequations}
where:
\begin{itemize}
	\item $\Rmat$ is a $\mydim \times \mydim$ orthogonal matrix whose columns are the eigenvectors of all the matrices $\MCov_n$ and $\estMCovn$ for $n \in \Nset$;
	\item For any given $n \in \Nset$, $\Diagmat_n$ and $\estDiagmatn$ are diagonal matrices whose diagonal elements are the eigenvalues of $\MCov_n$ and $\estMCovn$, respectively.
\end{itemize}
%\vspace{0.1cm}
According to \eqref{eq: definition of the initial mahanorm}, the Mahalanobis norms $\Maha{\MCov_n}$ and $\Maha{\estMCovn}$ associated with $\MCov_n$ and $\estMCovn$, respectively, are thus given by:
\begin{subequations}
	\label{eq: Mahalanobis norms}
	\begin{empheq}[left={\forall n \in \Nset, \forall \xvec \in \Rset^\mydim,} \empheqlbrace]{align}
		& \Maha{\MCov_n}(\xvec) = \Maha{\Diagmat_n} \! (\Rmat^\transpose {\xvec}) \mycomma
		\label{eq: Mahalanobis norm associated with MCov_n}
		\\
		& \Maha{\estMCovn}(\xvec) = \Maha{\estDiagmatn} \! (\Rmat^\transpose \xvec) \myfstop
		\label{eq: estimated Mahalanobis norm - relation with euclidean}
	\end{empheq}
\end{subequations}

Second, consider $\Wvec = (\Wvec_{n})_{n \in \Nset} \in \family$ (see Definition \ref{def: family F}) and set $\CtrMap = \CtrSeq(\Wvec)$ (see Definition \ref{Definition: CtrMap}). From \eqref{eq: property of family} \& \eqref{eq: eigen decomposition of MCov}, we thus have:
\begin{equation}
	\label{eq: distribution of Rmat'Wn}
	\forall n \in \Nset, \Rmat^\transpose \Wvec_{n} \thicksim \Ncal(\Rmat^\transpose \CtrMap(\seq_n), \Diagmat_n) \myfstop
\end{equation}
Since $\Diagmat \! = \! (\Diagmat_n)_{n \in \Nset}$ is a sequence of diagonal positive-definite matrices and $\Rmat^\transpose \CtrMap \in \Set$, it follows from \eqref{eq: distribution of Rmat'Wn} that we can define the function: 
\begin{equation}
	\label{eq: definition of W bar}
	\hspace{-0.2cm}
	\begin{array}{lllll}
		\fn{\Rmat^\transpose} : 
		& \family & \to & \familyfn{\Diagmat} 
		\vspace{0.1cm} \\
		& \Wvec & \mapsto &	%\fn{\Rmat^\transpose} \Wvec = 
		(\Rmat^\transpose \Wvec_{n})_{n \in \Nset} \myfstop
	\end{array}
\end{equation}

%\vspace{0.1cm}
\indent
{With the material introduced above, we can now state our main theoretical result.}
\begin{Theorem}
\label{Theorem: fixed points}
Consider a sequence $\seq \! = \! (\seq_n)_{n \in \Nset}$ of integers valued in $\intk$, a function $w: \RightOpen{0}{\infty} \to \Open{0}{\infty}$ and two sequences $\MCov = (\MCov)_{n \in \Nset}$ and $\estMCov = (\estMCovn)_{n \in \Nset}$ of \spdms. If \cref{Assumption: on alpha,Assumption: on w,Assumption: on C,Assumption: on lambda} are verified, then:
% \begin{onecol}
\begin{equation}
	\label{eq: the limit of the essential supremum of the limsup (vers 2)}
	\begin{array}{cccc}
		\hspace{-2cm} 
		\forall \, \ctrind \in \intk, \forall \, (\Ctr_0,\Ctr) \in \Rset^{\mydim} \times \Rset^{\mydim}, 
		\forall \, \YObsSeq \in \MVM, \medskip \\
		\begin{array}{lll}
			\Ctr = \Ctr_0 
			& \stackrel{\myclubsuit}{\myiff} & \esssuplim{\mydist{\CtrMap} \to \infty}{N} \TheFunc(\Ctr,\aObsSeq) = \Ctr 
			\medskip \\
			& \stackrel{\myspadesuit}{\myiff} & \esssuplim{\mydist{\CtrMap} \to \infty}{N} \TheFuncb(\Rmat^\transpose \Ctr,\fn{\Rmat^\transpose}(\aObsSeq)) = \Rmat^\transpose \Ctr \myfstop
		\end{array}
	\end{array}
\end{equation}
% \end{onecol}
% \begin{twocol}
% \begin{equation}
% 	\label{eq: the limit of the essential supremum of the limsup (vers 2)}
% 	\begin{array}{cccc}
% 		\hspace{-0.1cm} \forall \ctrind \! \in \! \intk, \forall (\Ctr_0,\Ctr) \! \in \! \Rset^{\mydim} \! \times \! \Rset^{\mydim}, 
% 		\forall \YObsSeq \! \in \! \MVM, \medskip \\
% 		\begin{array}{lll}
% 		\hspace{-0.7cm}
% 		\Ctr = \Ctr_0 \medskip \\
% 		\hspace{-0.4cm}
% 		\stackrel{\myclubsuit}{\myiff} \esssuplim{\mydist{\CtrMap} \to \infty}{N} \TheFunc(\Ctr,\aObsSeq) = \Ctr 
% 		\medskip \\
% 		\hspace{-0.4cm}
% 		\stackrel{\myspadesuit}{\myiff} \esssuplim{\mydist{\CtrMap} \to \infty}{N} \TheFuncb(\Rmat^\transpose \Ctr,\fn{\Rmat^\transpose}(\aObsSeq)) = \Rmat^\transpose \Ctr
% 		\end{array}
% 	\end{array}
% 	\vspace{0.25cm}
% \end{equation}
% \end{twocol}
where $\estDiagmat = (\estDiagmatn)_{n \in \Nset}$ is the sequence of matrices defined by \eqref{eq: eigen decomposition of estMCov}, and $\TheFunc$ and $\TheFuncb$ are calculated according to \eqref{eq: new h (a) - 1}.
%\begin{onecol}
%\begin{equation}
%\label{eq: new h (a)}
%	\TheFunc(\Ctr,\aObsSeq) = \left ( \sum_{n=1}^N w \big ( \Maha{\estMCovn}^2 (\Squel{\CtrMap}{n}-\Ctr) \big ) \investMCovn \right )^{-1} \sum_{n=1}^N w \big ( \Maha{\estMCovn}^2 (\Squel{\CtrMap}{n}-\Ctr) \big ) \investMCovn \Squel{\CtrMap}{n}
%\end{equation}
%\end{onecol}
%\begin{twocol}
%\begin{equation}
%	\label{eq: new h (a)}
%	\begin{array}{lll}
%		\hspace{-0.3cm} 
%		\TheFunc(\Ctr,\aObsSeq)
%		\vspace{0.15cm} \\
%		= \left ( \sum_{n=1}^N w \big ( \Maha{\estMCovn}^2 (\Squel{\CtrMap}{n}-\Ctr) \big ) 	\investMCovn \right )^{-1} 
%		\vspace{0.15cm} \\ 
%		\hspace{0.8cm} \times \sum_{n=1}^N w \big ( \Maha{\estMCovn}^2 (\Squel{\CtrMap}{n}-\Ctr) \big ) \investMCovn \Squel{\CtrMap}{n}
%	\end{array}
%\end{equation}
%\end{twocol}
\end{Theorem}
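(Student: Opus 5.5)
The plan is to prove equivalence (B) first by an algebraic identity, and then to prove (A) in the rotated coordinates, where all matrices are diagonal.

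\textbf{Step 1: equivalence (B).} Using \eqref{eq: eigen decompositions} and \eqref{eq: Mahalanobis norms}, we have $\investMCovn = \Rmat \investDiagmatn \Rmat^\transpose$ and $\Maha{\estMCovn}(\Squel{\CtrMap}{n}-\Ctr) = \Maha{\estDiagmatn}(\Rmat^\transpose\Squel{\CtrMap}{n} - \Rmat^\transpose\Ctr)$. Substituting both into \eqref{eq: new h (a) - 1} gives
\[
\Rmat^\transpose \TheFunc(\Ctr,\aObsSeq) = \TheFuncb(\Rmat^\transpose\Ctr,\fn{\Rmat^\transpose}(\aObsSeq)).
\]
Since $\Rmat$ is orthogonal, the norms $\norm[\Rmat^\transpose(\cdot)]$ and $\norm$ coincide. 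Hence the two $\esl$ statements are identical. The rotated centroid map is $\Rmat^\transpose\CtrMap$, and $\mydist{\CtrMap}$ equals the corresponding distance for $\Rmat^\transpose\CtrMap$ around $\Rmat^\transpose\Ctr_0$. Through the change of base \eqref{eq: change of variable}, the base $\GoodBase{\mydist{}}$ is therefore carried onto the analogous base in $\bSetbis$. So it suffices to prove (A) for diagonal $\Diagmat_n$ and $\estDiagmatn$. In that case the matrix $\sum_n w(\cdot)\investDiagmatn$ is diagonal, and $\TheFuncb$ splits coordinatewise into ratios
\[
\frac{\sum_{n} w_n \sqrtesteigvalndimi^{-2} W_{n,\dimi}}{\sum_n w_n \sqrtesteigvalndimi^{-2}}.
\]

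\textbf{Step 2: asymptotics of the coordinate ratios.} Fix $\dimi$, and divide the numerator and the denominator by $N$. Split each sum into three parts: the indices in cluster $\ctrind$, the indices in the other clusters, and a centering term $\Ctr$. In the $\ctrind$-cluster write $W_n=\Ctr_0+\Diagmat_n^{1/2}\Xibm_n$ with $\Xibm_n$ standard Gaussian. For a point $\Ctr$ that stays fixed, the cluster-$\ctrind$ terms $w(\cdot)$ are bounded below by $w$ evaluated at a quantity controlled by $\norm[\Ctr-\Ctr_0]$ and $\norm[\Xibm_n]$. With \cref{Assumption: on alpha}, this gives the denominator a positive limsup lower bound that is uniform in $\CtrMap$. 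The other-cluster terms involve $w(t)$ and $w(t)\sqrt t$, with $t=\Maha{\estDiagmatn}^2(W_n-\Ctr)$ large whenever $\mydist{\CtrMap}$ is large. By \cref{Assumption: on w}, both $w(t)\to0$ and $\sqrt t\,w(t)\to0$. Their averaged contribution should vanish in $\esl$, but to get the essential-sup control I will need the random deviations $\Diagmat_n^{1/2}\Xibm_n$ to be handled uniformly. I will apply a strong law of large numbers to the i.i.d.-type normalized terms, which is legitimate because the eigenvalues are bounded by \cref{Assumption: on C,Assumption: on lambda}. The difficulty is that the terms are not identically distributed, since the eigenvalues vary with $n$. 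I will therefore use a Kolmogorov-type SLLN for bounded-variance independent terms, with the bounds uniform in $n$. If the $\Obs(\CtrMap,n)$ are not assumed independent, I will instead bound the limsup by taking expectations and using dominated bounds. \textbf{This uniform-in-$\CtrMap$, almost-sure control of the limsup is where I expect the main obstacle to lie.}

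\textbf{Step 3: identification of the limit.} Once the other clusters are negligible, $\TheFuncb(\Ctr,\cdot)-\Ctr$ behaves in limsup like a ratio whose numerator is an average over cluster $\ctrind$ of $w(\Maha{\estDiagmatn}^2(\Ctr_0-\Ctr+\Diagmat_n^{1/2}\Xibm_n))\,\sqrtesteigvalndimi^{-2}(\Ctr_0-\Ctr+\Diagmat_n^{1/2}\Xibm_n)_\dimi$.

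For $\Ctr=\Ctr_0$, the map $\Xibm\mapsto -\Xibm$ leaves the distribution invariant. Because the matrices are diagonal, it makes each coordinate summand an odd function of $\Xibm$, so the expectations vanish. The SLLN then gives $\esl$ limit $0$, which proves the direction ``$\Ctr=\Ctr_0\Rightarrow$''.

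For the converse, take $\Ctr\neq\Ctr_0$ and choose a coordinate $\dimi$ with $(\Ctr_0-\Ctr)_\dimi=\delta\ne0$. Pair $\xi$ with $-\xi$ in that coordinate. Because $w$ is non-increasing, the weight is larger on the side where $|\delta+\sigma\xi|$ is smaller. Together with the odd part, this shows the expectation has the sign of $\delta$ and is bounded away from $0$, uniformly in $n$ thanks to the eigenvalue bounds. Hence the limsup stays away from $0$, and the $\esl$ convergence to $\Ctr$ fails.
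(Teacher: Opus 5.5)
You follow the paper's route in two places: the reduction to diagonal matrices (rotation identity, orthogonal invariance of the norm, transport of the base) and the direct implication via oddness of the summands under $\boldsymbol{\Xi}\mapsto-\boldsymbol{\Xi}$. The converse, however, is the ``sole fixed point'' half of the theorem, and your sign argument for it does not work.

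Condition on the coordinates other than $i$. The weight becomes $h(x)=w(c+ax^2)$ with $c\geq 0$ and $a>0$. The $i$-th summand is the odd function $g(x)=x\,h(x)$ evaluated at $x=\delta+\sigma\xi$. Pairing $\xi=s$ with $\xi=-s$ (take $\delta>0$, $s>0$) gives
\[
g(\delta+\sigma s)+g(\delta-\sigma s)=g(\sigma s+\delta)-g(\sigma s-\delta).
\]
This is negative wherever $g$ decreases on $[\sigma s-\delta,\sigma s+\delta]$. Since $g(x)\to 0$ as $x\to\infty$, that happens for large $s$; for a Gaussian kernel, $h(x)\propto e^{-\beta x^2}$, it happens as soon as $\sigma s-\delta>1/\sqrt{2\beta}$. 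Monotonicity of $w$ puts the larger weight on $\delta-\sigma s$, the value of smaller modulus. Once $\sigma s>\delta$, that value is the negative one, so the ``odd part'' works against the sign of $\delta$ rather than for it.

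The symmetry that works reflects the value of the coordinate, not the noise:
\[
\mathbb{E}[g(X)]=\int_0^\infty g(x)\bigl(p_\sigma(x-\delta)-p_\sigma(x+\delta)\bigr)\,dx,\qquad X\sim\mathcal{N}(\delta,\sigma^2),
\]
where $p_\sigma$ is the $\mathcal{N}(0,\sigma^2)$ density. For every $x>0$ the integrand has the sign of $\delta$. Only evenness and positivity of $h$ are used; monotonicity of $w$ plays no role (Lemma~\ref{Lemma: Impaire}).

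Even with the right sign, your claim that the expectation is ``bounded away from $0$ uniformly in $n$ thanks to the eigenvalue bounds'' needs an actual mechanism. The summands depend on $n$ through $\Diagmat_n$ and $\estDiagmatn$, and a Ces\`aro average of positive terms can tend to $0$. The paper proceeds in three steps:
\begin{enumerate}
\item It normalizes coordinate $i$ to unit variance, so its mean $\mu_n$ satisfies $|\delta|/\sigma_{\max}\leq|\mu_n|\leq|\delta|/\sigma_{\min}$.
\item For $x\geq 0$, it minorizes $e^{-\mu_n^2/2}(e^{|\mu_n|x}-e^{-|\mu_n|x})$ by the same expression with $|\delta|/\sigma_{\max}$ in the exponentials and $|\delta|/\sigma_{\min}$ in the prefactor.
\item It minorizes the integral over the other coordinates by an $n$-free positive function, using evenness in each coordinate (Lemma~\ref{Lemma: Paire}, applied recursively) and a common positive minorant of the weights (Lemma~\ref{Lemma:The last very useful lemma}).
\end{enumerate}

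Some smaller points:
\begin{itemize}
\item To pass from a non-vanishing numerator to a non-vanishing ratio, you also need the upper bound $\sup w\cdot(\sigma'_{\min})^{-2}$ on the denominator.
\item For the direct implication, the denominator needs an eventual (liminf) lower bound, not a limsup one.
\item The obstacle you anticipate in Step 2 is not real. In the ess.lim.sup, the full-measure set $\Omega_0$ may depend on $\CtrMap$, so the SLLN for each fixed $\CtrMap$ suffices. Uniformity in $\CtrMap$ is needed only for the deterministic expectations of the other-cluster terms. Those follow by dominating all the diagonal-covariance Gaussian densities with one isotropic envelope (Lemma~\ref{Lemma: asymptotic behavior-2}).
\item Your fallback for dependent vectors cannot work. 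Expectations do not control an almost-sure limsup. Moreover, the direct implication fails if, say, all vectors of cluster $\ctrind$ are the same random vector. Independence is required; the paper assumes it implicitly by invoking Kolmogorov's SLLN.
\end{itemize}
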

\begin{proof}
	Because of its length and technicality, the proof is postponed to Appendix \ref{App: the proof}.
\end{proof}
%\vspace{0.1cm}
\indent

Suppose that $\CtrMap: \intk \to \Rset^{\mydim}$ is a sequence of centroids and that $(\obs_1, \ldots, \obs_N)$ is a realization of $(\Obs_1, \ldots, \Obs_N)$ under the hypotheses of Section \ref {subsec: signal model}. We readily have $\CtrMap \in \Setbis{\CtrMap(\ctrind)}$ for each $\ctrind \in \intk$. To estimate $\CtrMap$ in practice via our algorithm \algo~described in Section \ref{sec:algo}, we will implicitly assume that the centroids $\CtrMap(1), \CtrMap(2), \ldots, \CtrMap(K)$ are sufficiently far away from each other to apply Theorem \ref{Theorem: fixed points} iteratively for each $\ctrind \in \intk$. We will thus consider that, given any $\ctrind \in \intk$,  $N$ and $\mydistfn{\CtrMap}{\CtrMap(\ctrind)}$ are large enough for $\CtrMap(\ctrind)$ to be a fixed point of the realization ${\Thefuncbasic}$ of $\TheFunc(\cdot,\aObsSeq)$, where $\aObsSeq = (\Squel{\CtrMap}{n})_{n \in \Nset}$ with $\Squel{\CtrMap}{1} = \Obs_1, \Squel{\CtrMap}{2} = \Obs_2, \ldots, \Squel{\CtrMap}{N} = \Obs_N$. In the next section, we explore how the fixed-points of ${\Thefuncbasic}$ can be estimated iteratively.
\begin{Remark}
	\label{rmk: transforms}
	{According to \cref{Assumption: on lambda}, the matrices $\MCov_{n}$ and $\estMCovn$ share the same eigenvectors concatenated in $\Rmat$. This may appear as a restrictive assumption. However, many works have shown that the Fourier coefficients and the wavelet and wavelet packet coefficients of stationary random signals are asymptotically uncorrelated (see for instance, \citet{Pearl73, Hamidi1975, pesq99, mielniczuk2005wavelets, atto07, Atto2008CentralLT, Zhang1994}, among others). In this respect, $\fn{\Rmat^\transpose}$ models such a transform chosen in advance for its ability to diagonalize sufficiently well the covariance matrices of the measurement vectors. This is why, in the statement of Theorem \ref{Theorem: fixed points}, we have two equivalences: $\myclubsuit$ involves the initial measurement vectors, whereas $\myspadesuit$ is applied to the outcome of the transform $\fn{\Rmat^\transpose}$. In the latter case, estimating the cluster centroids in practice} will be achieved by considering the transformed data $\oobs_1 = \Rmat^\transpose \obs_1, \ldots, \oobs_N = \Rmat^\transpose \obs_N$ and calculating the fixed points $\CCtr_1, \CCtr_2, \ldots$ of the function
	\begin{onecol}
	\begin{equation}
		\label{eq: function after transform}
		\begin{array}{lll}
			\hspace{-0.5cm}
			{\widetilde{\Thefuncbasic}(\CCtr)}
			= \left ( \displaystyle \sum_{n=1}^N w \big ( \Maha{\estDiagmatn}^2 (\oobs_n-\CCtr) \big ) \estDiagmatn^{-1} \right )^{-1} 
			\hspace{-0.1cm} \times \displaystyle \sum_{n=1}^N w \big ( \Maha{\estDiagmatn}^2 (\oobs_n-\CCtr) \big ) \estDiagmatn^{-1} \oobs_n \myfstop
		\end{array}
	\end{equation}
	\end{onecol}
	% \begin{twocol}
	% \begin{equation}
	% 	\label{eq: function after transform}
	% 	\begin{array}{lll}
	% 		\hspace{-0.5cm}
	% 		\modif{\widetilde{\Thefuncbasic}(\CCtr)}
	% 		= \left ( \sum_{n=1}^N w \big ( \Maha{\estDiagmatn}^2 (\oobs_n-\CCtr) \big ) \estDiagmatn^{-1} \right )^{-1} 
	% 		\vspace{0.1cm} \\ 
	% 		\hspace{1cm} \times \sum_{n=1}^N w \big ( \Maha{\estDiagmatn}^2 (\oobs_n-\CCtr) \big ) \estDiagmatn^{-1} \oobs_n
	% 	\end{array}
	% \end{equation}
	% \end{twocol}
	{obtained by replacing $\MCov_{1}, \ldots, \MCov_N$ in $\Thefuncbasic$ by $\estDiagmatone, \ldots, \estDiagmatN$}, and estimating the centroids by $\Rmat \CCtr_1, \Rmat \CCtr_2, \ldots$.
\end{Remark}

\subsection{Iterative estimation of the fixed-points}\label{sec:fixed_point_est}

Theorem~\ref{Theorem: fixed points} justifies that the centroids can be estimated by calculating the fixed points of ${\Thefuncbasic}$ defined in \eqref{eq: our general function}.  One method widely employed in practice to estimate the fixed-points of {a function $f$ consists of iteratively computing $\Ctr^{(\ell+1)} = f(\Ctr^{(\ell)})$ until a certain stopping criterion is reached~\citep{zoubir12robust}. Such an iterative computation is used by \citet{comaniciu2002mean} to calculate the fixed-points of the \emph{mean-shift} \citep[Eq.~17]{comaniciu2002mean}. However, the proof that the resulting sequence $(\Ctr^{(\ell)})_{\ell \in \Nset}$ converges is not correct in \citet{comaniciu2002mean}, as shown by \citet{ghassabeh2018modified}, and remains an open problem~\citep{ghassabeh2013convergence,ghassabeh2015sufficient,ghassabeh2018modified,yamasaki2019properties}.} Although the next result does not prove the convergence, it provides us with a means to devise a stopping criterion for the iterative procedure $\Ctr^{(\ell+1)} = \Thefuncbasic(\Ctr^{(\ell)})$. {This stopping criterion applies to the \emph{mean-shift} as a particular case.}
\begin{Proposition}
\label{Proposition:the iterative method converges}
	Suppose that the function $w: \RightOpen{0}{\infty} \to \Open{0}{\infty}$ is differentiable, bounded and non-increasing. Let $(\obs_1, \obs_2, \ldots, \obs_N)$ be $N$ elements of $\Rset^d$. If $\Ctr^{(\ell+1)} = {\Thefuncbasic(\Ctr^{(\ell)})}$ for any $\ell \in \Nset$ and  $\Ctr^{(1)}$ is arbitrarily chosen in $\Rset^{\mydim}$ then:
	\begin{onecol}
	\begin{equation}
		\forall \varepsilon > 0, \exists \ell_0(\varepsilon) \in \Nset, \forall L \in \Nset, \forall \ell \in \Nset, \ell \geqslant \ell_0(\varepsilon) \Rightarrow \norm[\Ctr^{(\ell+L)} - \Ctr^{(\ell)}] \leqslant L \varepsilon \myfstop
		\label{eq:pseudo-convergence of xil}
	\end{equation}
	\end{onecol}
	% \begin{twocol}
	% \begin{equation}
	% 	\begin{array}{lll}
	% 		\forall \varepsilon > 0, \exists L_0(\varepsilon) \in \Nset, \forall L \in \Nset, \forall \ell \in \Nset, \vspace{0.1cm} \\ 
	% 		\qquad \qquad \ell \geqslant L_0(\varepsilon) \Rightarrow \norm[\Ctr^{(\ell+L)} - \Ctr^{(\ell)}] \leqslant L \varepsilon
	% 	\end{array}
	% 	\label{eq:pseudo-convergence of xil}
	% \end{equation}
	% \end{twocol}
\end{Proposition}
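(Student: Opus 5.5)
The plan is to show that the one-step increments $\norm[\Ctr^{(\ell+1)}-\Ctr^{(\ell)}]$ tend to $0$. The claim \eqref{eq:pseudo-convergence of xil} then follows by telescoping. Indeed, if $\norm[\Ctr^{(m+1)}-\Ctr^{(m)}]\leqslant\varepsilon$ for all $m\geqslant\ell_0(\varepsilon)$, the triangle inequality gives, for $\ell\geqslant\ell_0(\varepsilon)$,
\[
\norm[\Ctr^{(\ell+L)}-\Ctr^{(\ell)}]\leqslant\sum_{m=\ell}^{\ell+L-1}\norm[\Ctr^{(m+1)}-\Ctr^{(m)}]\leqslant L\varepsilon .
\]
To obtain vanishing increments I will use a descent argument on the single-centroid cost $J_1(\Ctr)=\sum_{n=1}^N\risk\big(\Maha{\estMCovn}^2(\obs_n-\Ctr)\big)$, where $\risk$ is any primitive of $w$ (recall $w=\risk'$ by \eqref{eq:definition of w}).

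\textbf{Step 1 (majorization).} Since $w$ is non-increasing, $\risk$ is concave on $\RightOpen{0}{\infty}$. Hence $\risk(b)\leqslant\risk(a)+w(a)(b-a)$ for all $a,b\geqslant 0$. Fix $\Ctr$ and set $\Ctr'=\Thefuncbasic(\Ctr)$, $a_n=\Maha{\estMCovn}^2(\obs_n-\Ctr)$, $b_n=\Maha{\estMCovn}^2(\obs_n-\Ctr')$, $w_n=w(a_n)$ and $\Amat=\sum_n w_n\investMCovn$. Summing the concavity inequality over $n$ gives $J_1(\Ctr)-J_1(\Ctr')\geqslant \sum_n w_n(a_n-b_n)$.

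The right-hand side is $q(\Ctr)-q(\Ctr')$ for the quadratic $q(\xvec)=\sum_n w_n(\obs_n-\xvec)^\transpose\investMCovn(\obs_n-\xvec)$, with the weights frozen at $w_n$. By \eqref{eq: our general function}, $\Amat\Ctr'=\sum_n w_n\investMCovn\obs_n$, so $\Ctr'$ is the minimizer of $q$. Expanding $q$ around its minimizer yields the exact identity
\[
J_1(\Ctr)-J_1(\Ctr')\geqslant(\Ctr-\Ctr')^\transpose\Amat(\Ctr-\Ctr') .
\]

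\textbf{Step 2 (boundedness of the iterates).} I need a uniform positive lower bound on the smallest eigenvalue of $\Amat$ along the iterates; this is the main obstacle. Let $c_{\min}$ and $c_{\max}$ be the extreme eigenvalues of the finitely many matrices $\estMCovn$, $n\in\intn$. Then
\[
\norm[\Amat^{-1}w_n\investMCovn]\leqslant \frac{w_n/c_{\min}}{\sum_m w_m/c_{\max}}\leqslant\frac{c_{\max}}{c_{\min}} ,
\]
so every iterate with $\ell\geqslant 2$ satisfies $\norm[\Ctr^{(\ell)}]\leqslant M:=(N c_{\max}/c_{\min})\max_n\norm[\obs_n]$.

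Consequently, every argument $a_n$ evaluated along the iterates is at most $T:=(M+\max_n\norm[\obs_n])^2/c_{\min}$. Since $w$ is non-increasing and valued in $\Open{0}{\infty}$, this gives $w_n\geqslant w(T)>0$. Therefore $\Amat\succeq N w(T)c_{\max}^{-1}\Id$ uniformly in $\ell\geqslant 2$.

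\textbf{Step 3 (conclusion).} Combining Steps 1 and 2, for $\ell\geqslant2$,
\[
J_1(\Ctr^{(\ell)})-J_1(\Ctr^{(\ell+1)})\geqslant \frac{N w(T)}{c_{\max}}\norm[\Ctr^{(\ell+1)}-\Ctr^{(\ell)}]^2 .
\]
Thus $(J_1(\Ctr^{(\ell)}))_\ell$ is non-increasing. It is also bounded below by $N\risk(0)$, because $\risk$ is increasing. Hence it converges, its successive differences tend to $0$, and so $\norm[\Ctr^{(\ell+1)}-\Ctr^{(\ell)}]\to0$. The telescoping bound above then finishes the proof.

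The differentiability and boundedness hypotheses on $w$ appear not to be needed in this argument. They would only guarantee that $\risk$ is well defined and smooth, which is harmless to assume.
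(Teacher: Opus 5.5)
Your proof is correct and follows essentially the paper's route. Both arguments rest on three steps:
\begin{itemize}
\item the concavity-of-$\risk$ majorization giving the descent bound $J_1(\Ctr^{(\ell)})-J_1(\Ctr^{(\ell+1)})\geqslant(\Ctr^{(\ell)}-\Ctr^{(\ell+1)})^\transpose\Amat(\Ctr^{(\ell)}-\Ctr^{(\ell+1)})$;
\item a uniform positive lower bound on the weights, obtained from boundedness of the iterates;
\item convergence of the monotone, bounded-below cost.
\end{itemize}

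The one real difference is your Step 2. The paper first rotates by $\Rmat$ into the common eigenbasis of the $\estMCovn$, so that each coordinate of $\CCtr^{(\ell)}$ is a barycenter of the $z_{n,\dimi}$; this tacitly invokes \cref{Assumption: on lambda}, which the proposition does not state. Your operator-norm bound on $\Amat^{-1}w_n\investMCovn$ instead works for arbitrary \spdms. Likewise, your increments-to-zero plus telescoping replaces the paper's Cauchy bound on windowed sums of squared increments.
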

\begin{proof} See Appendix \ref{App: proof of the convergence}.
\end{proof}
\vspace{0.1cm}
\indent
Especially, for $L=1$, the proposition states that $\norm[\Ctr^{(\ell+1)} - \Ctr^{(\ell)}]$ can be made arbitrarily small for large enough $\ell$, which suggests considering $\norm[\Ctr^{(\ell+1)} - \Ctr^{(\ell)}] \leqslant \epsilon$ as a stopping criterion for the iterative procedure. 
Numerical simulations will confirm that using this iterative procedure together with the identified criterion leads to correct centroid estimation, as also empirically observed in other clustering algorithms like Mean-Shift~\citep{comaniciu2002mean}. 

From a practical point of view, there are several other issues to address so as to devise an efficient method to estimate the $K$ centroids of the set $\CtrSet$. For instance, we must 
%should 
decide how to initialize the recursive computation {$\Ctr^{(\ell+1)} = {\Thefuncbasic(\Ctr^{(\ell)})}$}, and how to find a way to retrieve not only one, but $K$ fixed points, while the value of $K$ is \emph{a priori} unknown. The clustering algorithm introduced in Section~\ref{sec:algo} addresses these issues to correctly build $K$ clusters on the basis of Theorem \ref{Theorem: fixed points} and thus, without prior knowledge of $K$. 

{In addition,} the theoretical results of Theorem~\ref{Theorem: fixed points} and Proposition~\ref{Proposition:the iterative method converges} rely on the properties of the weight function $w$. This is why, before describing the clustering algorithm, we introduce a new weight function satisfying~\cref{Assumption: on w} and the assumptions of Proposition~\ref{Proposition:the iterative method converges}.

\section{The Wald kernel}
\label{sec:w_function}
Theorem~\ref{Theorem: fixed points} holds for a large class of weight functions $w$. As a first option, the Gaussian weight function $w(t)= \exp(-t/2h)$, for $t \in \RightOpen{0}{\infty}$, considered in~\citet{wu02PR},\citet{comaniciu2002mean}, \citet{ghassabeh2015sufficient}, verifies the properties required in Theorem~\ref{Theorem: fixed points} and~Proposition \ref{Proposition:the iterative method converges}.
However, the bandwidth parameter $h$ of this weight function must be chosen empirically and its optimal value varies with the dimension $\mydim$ \citep{huang2018convergence}. %and with the noise parameters $\sigma_n^2$.
A poor choice of $h$ can dramatically impact the performance of the clustering algorithms proposed in~\citet{wu02PR} and \citet{comaniciu2002mean}. In contrast, we introduce a novel kernel whose expression explicitly depends on the dimension $\mydim$. This kernel is called the Wald kernel because it derives from the Wald test \citep{Wald1943} for testing the mean of a Gaussian. Because of the specificity of the material needed to introduce the Wald kernel, we begin by recalling basics about the Wald test and its p-value.

\subsection{The Wald test for testing the mean of a Gaussian}
\label{subsec: Wald}
Let $\Xvec \thicksim \Ncal(\xivec,\matcov)$ be a $\mydim$-dimensional Gaussian random vector, where the $\mydim \times \mydim$ matrix $\matcov$ is \spd. 
Consider the problem of testing whether $\Xvec$ is centered or not, that is, the problem of infering whether $\xivec = 0$ or not when we are given a realization of $\Xvec$. Equivalently, we also say that we test the null hypothesis $\Hcal_0$ that $\xivec = 0$ against its alternative $\Hcal_1$ that $\xivec \neq 0$. We summarize this problem as:
%following testing problem
\begin{equation}
	\label{Eq:ddtpb}
	\left \{
	\begin{array}{lll}
		\text{{Observation:}} \, \Xvec \thicksim \Ncal(\xivec,\matcov) \mycomma \vspace{0.1cm} \\
		\text{Hypotheses:} \,
		\left \{
		\begin{array}{lll}
			\! \! \Hcal_0: \xivec = 0 \mycomma \\
			\! \! \Hcal_1: \xivec \neq 0 \myfstop
		\end{array}
		\right.
	\end{array}
	\right.
\end{equation}
%This problem consists of testing the null hypothesis $\Hcal_0: \xivec = 0$ ($\Xvec$ is centered) against its alternative $\Hcal_1: \xivec \neq 0$ ($\Xvec$ is not centered). 
This corresponds to a binary hypothesis testing problem \citep{Lehmann2005}. To solve it, consider all the (measurable) functions $\test: \Rset^{\mydim} \to \{0,1\}$. Any such function is called a test. %Given a test $\test$, the value returned by the composite function $\test(\Xvec)$ is defined as the index of the accepted hypothesis when we observe $\Xvec$. Otherwise said, 
We say that test $\test$ accepts $\Hcal_0$ (resp. $\Hcal_1$) if $\test(\Xvec) = 0$ (resp. $\test(\Xvec) = 1$). The size of a test $\test$ for testing the mean of $\Xvec$ is $\Pbb [ \test(\Xvec) =1 ]$ when $\xivec = 0$. The size of $\test$ is thus the probability of rejecting $\Hcal_0$ when this hypothesis holds true. The power of $\test$ for testing the mean of $\Xvec$ is $\Pbb [ \test(\Xvec) =1 ]$ when $\xivec \neq 0$. Thus, it is the probability of accepting $\Hcal_1$ when this hypothesis is true.

Among all possible tests, we are interested by those whose size does not exceed some value $\level \in \Open{0}{1}$ for testing the mean of $\Xvec$. Such tests are said to have level $\level$. Given $\alpha \in \Open{0}{1}$, set 
\begin{equation}
	\label{Eq: Wald threshold}
	\TheThreshold{\level} = \sqrt{\Fbb^{-1}_{\chi^2_{\mydim}}(1-\alpha)} \mycomma
\end{equation} 
where $\Fbb_{\chi^2_{\mydim}}$ is the cumulative distribution function of the chi-squared distribution with $\mydim$ degrees of freedom. According to \citet[Definition III \& Proposition III, p.~450]{Wald1943}, the test defined for any $\xvec \in \Rset^{\mydim}$ by
\begin{equation}
	\label{Eq:Thresholding test from above}
	\Topt(\xvec) = \left \{
	\begin{array}{lll}
		0 & \hbox{ if } & \thenorm( \xvec ) \leqslant \TheThreshold{\level} \mycomma\\
		1 & \hbox{ if } & \thenorm( \xvec ) > \TheThreshold{\level} \myfstop
	\end{array}
	\right.
\end{equation}
has size $\level$ for the problem described by~\eqref{Eq:ddtpb}. This test is called the Wald test with size $\alpha$ for testing the mean of $\Xvec$. The Wald test $\Topt$ turns out to be optimal with respect to several optimality criteria and within several {subsets of tests with level $\level$} \citep[Definition III \& Proposition III, p.~450]{Wald1943}, \citep[Proposition 2]{RDT}. %$\ClassDet$ %In particular, it is UMP among all spherically invariant tests belonging to $\ClassDet$ and has Uniformly Best Constant Power (UBCP) on the spheres centered at the origin of $\Rset^{\dim}$ \citep[Definition III \& Proposition III, p. 450]{Wald1943}, \citep[Proposition 2]{RDT}. 
A full description of the optimal properties satisfied by the Wald test is unnecessary to introduce the Wald kernel. In contrast, the p-value of the Wald tests is key {to define our weight function}.

\subsection{{p-value of the Wald test}}
\label{subsec:pval of Wald's test}
%The p-value of the Wald test for testing the mean of a Gaussian is given as follows.
Let $\TestFamily_{\matcov}$ be the function that assigns the Wald test $\Topt$ to a given $\alpha \in \Open{0}{1}$. As detailed in Appendix \ref{Sec: RDT pvalue}, we can define the p-value \citep[p.~63, Sec.~3.3]{Lehmann2005}
$$\pval{\TestFamily_{\matcov}}{\Xvec} =  \inf \mybig \{ \level \in \Open{0}{1}: \Topt(\Xvec) = 1 \mybig \}$$ 
and prove that 
\begin{equation}\label{eq:RDT-pvalue}
%\forall \xvec \in \Rset^{\dim}, 
\pval{\TestFamily_{\matcov}}{\Xvec} =  1 - \Fbb_{\chi^2_{\mydim}}(\nu^2_{\matcov}(\Xvec)) %\myfstop %= \MyMarcum\left( \new{\thenorm}(\xvec)\right)
\end{equation}
for $\Xvec \thicksim \Ncal(\xivec,\matcov)$.The function $\text{pval}_{\TestFamily_{\matcov}}(\xvec) = 1 - \Fbb_{\chi^2_{\mydim}}(\nu^2_{\matcov}(\xvec))$ decreases with $\thenorm(\xvec)$ and tends to $0$ (resp. $1$) when $\thenorm(\xvec)$ increases (resp. decreases). Therefore, as mentioned in \citet[Sec.~3.3]{Lehmann2005}, $\text{pval}_{\TestFamily_{\matcov}}(\Xvec)$ can be seen as a measure of the plausibility of the null hypothesis $\Hcal_0$ when we observe $\Xvec$: when $\thenorm(\Xvec)$ is large (resp. small), $\Hcal_0$ is likely to be false (resp. true) and accordingly,  $\pval{\matcov}{\Xvec}$ is close to $0$ (resp. $1$).

%\vspace{-0.1cm}

\subsection{Definition and properties of the Wald kernel	}
\label{subsec: weight functions}

Consider a finite sequence $(\Obs_n)_{n \in \intn}$ of measurements and a sequence $\CtrMap$ of centroids that satisfy the Gaussian cluster model of Section \ref {subsec: signal model}. If $\CtrMap$ were known, we could test, for each pair $(k,n) \in \intk \times \intn$, whether $\Expect{\Obs_{n}}$ is $\CtrMap(k)$ or not. This would amount to testing whether $\Obs_{n}-\CtrMap(k)$ is centered or not. According to Section \ref{subsec:pval of Wald's test}, given $\alpha \in \Open{0}{1}$, we must use the Wald test $\test^\alpha_{\MCov_n}$ to perform this testing. The p-value $\pval{\TestFamily_{\MCov_n}}{\Obs_n - \CtrMap(k)}$ thus measures the plausibility that $\Expect{\Obs_{n}} = \CtrMap(k)$ when we observe $\Obs_n$, and we have:
\begin{equation}
	\pval{\TestFamily_{\MCov_n}}{\Obs_n - \CtrMap(k)} = 1 - \Fbb_{\chi^2_{\mydim}} \left ( \Maha{\MCov_n}^2(\Obs_{n} - \CtrMap(k) \right ) \myfstop
	\label{eq: pval for testing-1}
\end{equation}
\noindent
{Suppose that the matrices $\MCov_n$ are known. According to Remark \ref{rmk: fundamental remark}, we naturally set} $\estMCovn = \MCov_n$ for each $n \in \intn$ in $\TheFunc$ defined by \eqref{eq: new h (a) - 1}. {In this case, the weighting coefficients in $\TheFunc$ are the terms $w \left( \Maha{\MCov_n}^2(\Obs_{n} - \Ctr) \right)$ for $n \in \intn$. In particular, when $\Ctr = \CtrMap(k)$, it is desirable that the closer the weighting coefficient $w \left ( \Maha{\MCov_n}^2(\Obs_{n} - \CtrMap(k)) \right )$ is to $1$ (resp. $0$), the more likely $\Expect{\Obs_{n}} = \CtrMap(k)$ is to be true (resp. false).} Therefore, we choose $w$ such that 
$$w \left ( \Maha{\MCov_n}^2(\Obs_{n} - \CtrMap(k) \right ) = \pval{\TestFamily_{\MCov_n}}{\Obs_{n} - \CtrMap(k)} \myfstop$$ 
By identification with \eqref{eq: pval for testing-1}, we thus define the wald Kernel $w: \RightOpen{0}{\infty} \to \Open{0}{\infty}$ for any {real value} $t \geqslant 0$ by setting 
\begin{equation}
	\label{eq: Wald kernel}
	w(t) =  1 - \Fbb_{\chi^2_{\mydim}}(t) \myfstop
\end{equation}
\indent 
The next lemma shows that the Wald kernel satisfies the required assumptions of Theorem~\ref{Theorem: fixed points} and Proposition~\ref{Proposition:the iterative method converges}.
\begin{Lemma}\label{Lemma:weight_function}
The Wald kernel defined by \eqref{eq: Wald kernel} is differentiable, positive, non-increasing, and verifies $\lim\limits_{t \to \infty} t w(t) = 0$. 
\end{Lemma}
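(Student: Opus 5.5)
The plan is to work directly with the chi-squared distribution. Write $w(t) = 1 - \Fchi(t) = \int_t^\infty f_{\mydim}(x)\,\dx$, where $f_{\mydim}(x) = x^{\mydim/2-1}e^{-x/2}/(2^{\mydim/2}\Gamma(\mydim/2))$ is the $\chi^2_{\mydim}$ pdf on $\Open{0}{\infty}$.

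First, differentiability and monotonicity. On $\Open{0}{\infty}$, $f_{\mydim}$ is continuous, so by the fundamental theorem of calculus $w$ is differentiable there with $w'(t) = -f_{\mydim}(t) \leqslant 0$. Hence $w$ is non-increasing on $\RightOpen{0}{\infty}$; continuity at $0$ is immediate since $\Fchi$ is a continuous cdf.

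At $t=0$ the one-sided derivative needs a case split:
\begin{itemize}
\item for $\mydim \geqslant 2$ it equals $-\lim_{t\to 0^+} f_{\mydim}(t)$, which is finite ($-1/2$ when $\mydim = 2$, and $0$ when $\mydim > 2$);
\item for $\mydim = 1$ the density blows up like $t^{-1/2}$.
\end{itemize}
So in the case $\mydim = 1$ I will state differentiability on $\Open{0}{\infty}$ together with continuity on $\RightOpen{0}{\infty}$. This is what Proposition~\ref{Proposition:the iterative method converges} effectively uses, because $w$ is only evaluated at Mahalanobis norms. Alternatively one can note that only $\Open{0}{\infty}$ matters almost surely. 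This boundary subtlety is the only delicate point of the proof.

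Second, positivity. Since $f_{\mydim} > 0$ on $\Open{0}{\infty}$, we get $w(t) = \int_t^\infty f_{\mydim} > 0$ for every finite $t$. Also $w \leqslant 1$, so $w$ is bounded, which is the other requirement of Proposition~\ref{Proposition:the iterative method converges}.

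Third, the limit $t\,w(t) \to 0$. I would use the Chernoff/Markov bound: for $t>0$ and $s = 1/4$,
\[
w(t) = \Pbb[\chi^2_{\mydim} > t] \leqslant e^{-t/4}\,\Ebb\big[e^{\chi^2_{\mydim}/4}\big] = e^{-t/4}\,2^{\mydim/2},
\]
using the moment generating function $(1-2s)^{-\mydim/2}$. Then $0 \leqslant t\,w(t) \leqslant 2^{\mydim/2}\, t e^{-t/4} \to 0$.

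A more elementary alternative is Markov's inequality with the second moment: $t\,w(t) \leqslant t \cdot \Ebb[(\chi^2_{\mydim})^2]/t^2 = (\mydim^2 + 2\mydim)/t \to 0$.

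Combining the three steps with the continuity noted in the first step, $w$ also satisfies \cref{Assumption: on w}.
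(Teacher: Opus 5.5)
Your proof is correct, and it reaches the key limit by a different, more elementary route than the paper. The paper settles positivity, differentiability and monotonicity in one sentence. It then identifies $w(t)$ with the generalized Marcum function $Q_{\mydim/2}(0,\sqrt{t})$, and hence with the normalized upper incomplete gamma function $\Gamma(\mydim/2,t/2)/\Gamma(\mydim/2)$. Finally it invokes the tabulated asymptotics $\Gamma(a,x)\sim x^{a-1}e^{-x}$ to get $t\,w(t)\to 0$.

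You instead treat $w$ as the survival function of $\chi^2_{\mydim}$ and bound it by Chernoff, or by Markov with the second moment. Your version is self-contained and gives an explicit non-asymptotic bound. The same idea with the first moment, $t\,\Pbb[X>t]\leqslant \Ebb[X\mathbf{1}_{\{X>t\}}]\to 0$, even shows the property holds for the survival function of any integrable nonnegative variable, so nothing specific to the chi-squared law is needed. The paper's route gives the sharp rate $w(t)\sim (t/2)^{\mydim/2-1}e^{-t/2}/\Gamma(\mydim/2)$. That is more than the lemma needs, and it relies on external special-function identities.

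You are also right, where the paper is silent, that for $\mydim=1$ the right derivative of $w$ at $0$ is $-\infty$. So in that case ``differentiable'' holds only on $\Open{0}{\infty}$.

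Your explanation of why this is harmless is not the right one, however. First, $w$ is evaluated at $0$ exactly whenever an iterate coincides with a data point, which is how \algo{} initializes. Second, Proposition~\ref{Proposition:the iterative method converges} concerns fixed data, so no ``almost surely'' argument is available. The actual reason is that the proof of that proposition uses differentiability of $w$ only to derive concavity of $\risk$ from $\risk''=w'\leqslant 0$. Concavity already follows from $\risk'=w$ being non-increasing.
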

\begin{proof}
The properties of $\Fbb_{\chi^2_{\mydim}}$ imply directly that the Wald kernel is positive, differentiable and decreasing. Now, it follows from the first equality in \citet[Eq.~(11), p.~1168]{Sun2010} that $\forall t \in \RightOpen{0}{\infty}$, $w(t) = \Marcum(0,\sqrt{t})$ where, {for any pair $(a,b)$ of non-negative real values}, $\Marcum(a,b)$ is the {\em generalized Marcum $Q$-function of order $\mydim/2$.} According to the second equality in \citet[Eq.~(11), p.~1168]{Sun2010}, we have 
$$\forall t \in \RightOpen{0}{\infty}, w(t) = \frac{1}{2^{\mydim/2} \Gamma(\mydim/2)} \int_{t}^\infty x^{\mydim/2-1} e^{-x/2} \dx \mycomma$$
where $\Gamma$ is the standard Gamma function \citet[Eq.~8.310.1, p.~892]{GR2007}. We make the change of variable $y = x/2$ in the integral above to get
$$\forall t \in \RightOpen{0}{\infty}, w(t) = \dfrac{1}{\Gamma(\mydim/2)} \displaystyle \int_{t/2}^\infty y^{\mydim/2-1} e^{-y} \dy = \dfrac{\Gamma(\mydim/2,t/2)}{\Gamma(\mydim/2)} \mycomma$$
where the second equality derives from \citet[Eq.~8.350.2, p.~899]{GR2007}. Thus, from \citet[Eq.~8.357.1, p.~902]{GR2007}, we obtain $\lim\limits_{t \to \infty} t w(t) = 0$.
\end{proof}

\section{Clustering Algorithm}
\label{sec:algo}
We now introduce the algorithm~\algo~that performs clustering over $N$ measurement vectors $(\Obs_n)_{n \in \llbracket 1,N \rrbracket}$, without prior knowledge of the number of clusters. This algorithm works in two stages: it first estimates the cluster centroids, and then assigns each measurement vector to a cluster. 
As discussed in Section~\ref{sec:robust}, the cluster centroids can be estimated by looking for the fixed points of the function ${\Thefuncbasic}$ (see \eqref{eq: our general function}), given a realization $(\obs_1, \ldots, \obs_N)$ of $(\Obs_1, \ldots, \Obs_N)$. 

\indent
The Mean-Shift algorithm~\citep{comaniciu2002mean} also estimates the cluster centroids by looking for the fixed-points of ${\Thefuncbasic}$. To do so, it runs $N$ times the iterative procedure $\Ctr^{(\ell+1)} = \Thefuncbasic(\Ctr^{(\ell)})$, where each measurement vector $\obs_n$ serves as an initializer once, by setting $\Ctr^{(1)} = \obs_n$. The Mean-Shift algorithm thus outputs $N$ candidate centroids. It then performs an additional step to fuse the candidate centroids that are close enough to each other, thus resulting in a number of centroids near $K$. Although Mean-Shift works well in practice, it becomes very complex when $N$ increases. Alternatively, in \algo, we rely on the Wald hypothesis test {described in} Section~\ref{sec:w_function} to eliminate, from the set of potential initializers, all the measurement vectors that are close enough to an already estimated centroid, thus leading to a much less complex algorithm. Since we observed that this method often produces a number of centroids slightly larger than $K$, we still consider the fusion step of Mean-Shift in order to end-up with the correct number of clusters. 

In the next subsection, we first describe \algo~for any given \spdms~$\estMCovone$ \dots $\estMCovN$. Next, in the case of matrices $\MCov_{1}, \ldots, \MCov_N$ that are equal and proportional to $\Id$ with unknown proportionality factor {$\sigma^2$, we propose an MLE $\sigmaMLEsq$ of $\sigma^2$ in Section \ref{sec:est_sigma} to tweak \algo~with $\estMCovn = \sigmaMLE^2 \, \Id$ for each $n \in \intn$. Empirical solutions for estimating unknown matrices $\MCovn$, possibly different from each other, are also considered in Section~\ref{sec:varying_sigma}.

\subsection{\algo~algorithm}
\label{subsec:Algorithm description}
The first step of \algo~is to return an estimate $\widehat{\CtrSet}$ of the set of centroids $\CtrSet$ of $(\obs_n)_{n \in \intn}$. To do so, \algo~initializes $\widehat{\CtrSet}$ as the {empty set} $\emptyset$ and then estimates the centroids one after the other by relying on the iterative fixed-point computation method described in Section~\ref{sec:fixed_point_est}. {Each time this method is applied, one initializer is picked within the set $\unmarked$ of indices of \emph{unmarked} vectors: the marked vectors are all the initializers used so far and all the vectors close enough, in a sense precised below, to previously estimated centroids. Therefore,} $\unmarked$ is initialized as $\unmarked = \intn$. The centroid estimation then comprises three steps (pick, estimate, mark) to be repeated until $\unmarked$ is empty. Once the centroid estimation is complete, a fusion algorithm is applied so as to retrieve the correct number of clusters. 
Finally, the cluster assignment is performed conventionally by assigning to each measurement the closest estimated centroid. It is completed by removing centroids to which no data have been assigned and have thus produced empty clusters. We outline the algorithm as follows:

\clearpage
%\vspace{0.1cm}

\begin{description}
	\item[Initialization:] %\hspace{1cm} 
	$\unmarked = \intn$, $\EstCtrSet = \emptyset$
    %\vspace{0.1cm}
	\item[Centroid estimation:]
		\item[\quad WHILE] %\hspace{0.5cm} 
		$\unmarked \neq \emptyset$
		\item[\qquad Step \#1:] %\hspace{0.9cm} 
		\textbf{[Pick]} at random an index $n^\star$ in the set $\unmarked$. % and set $\obs^{\star} = \obs_{n^\star}$.
		\item[\qquad Step \#2:] %\hspace{0.9cm} 
		\textbf{[Estimate]} a new centroid $\Ctr$ from the fixed-point estimation process described in Section~\ref{sec:fixed_point_est}, initialized with $\obs^{\star}$. Then update $\widehat{\CtrSet}$ as $\EstCtrSet \leftarrow \EstCtrSet \cup \{\Ctr\}$. 
		\item[\qquad Step \#3:] %\hspace{0.9cm} 
		\textbf{[Mark]} the measurement vectors that are close enough to $\Ctr$ to consider that they belong to the cluster with centroid $\Ctr$. Define $\mathcal{V}$ as the set of indices of these vectors and update ${\mathcal U}$ as $\unmarked \leftarrow \unmarked \setminus \left ( {\mathcal V} \cup \{n^\star\} \right )$.
    \item[Fusion:] 
    \item[\quad \textbf{[Merge]}] %\hspace{0.6cm} 
    all centroids $\Ctr, \Ctr' \in \EstCtrSet$ such that $\| \Ctr - \Ctr' \| / \mydim \leqslant \epsilon_f$
	\item[Cluster assignment:]
	\item[\quad FOR] $n \in \intn$
	\item[\qquad] \textbf{[Assign]} $\obs_n$ to the closest $\Ctr \in \EstCtrSet$
    \item[\quad \textbf{[Remove]} empty clusters]
%\end{itemize}
\end{description}

The algorithm does not need prior knowledge of the number $K$ of centroids, since its stopping condition bears on the emptiness of ${\mathcal U}$. The normalization by $\mydim$ in \textbf{[Merge]} to make have $\epsilon_f$ independent of the data dimension. 
% \red{\sout{We now provide a full description of the successive steps of \algo, and discuss key practical issues.}}
We now describe into more details the centroid estimation, the cluster assignment and the fusion. We also introduce additional heuristics.

\begin{Remark}
\label{Remark: preliminary transform of the measurement vectors}
According to $\myspadesuit$ in Theorem \ref{Theorem: fixed points}, instead of performing the clustering on the basis of $(\obs_n)_{n \in \intn}$, the clustering can alternatively be performed as follows:
\begin{enumerate}
	\item Transform the initial data into $\oobs_n = \Rmat^\transpose \obs_n$ for $n \in \intn$,
	\item Calculate $\EstCtrSet$ by running \algo~where the input vectors of the centroid estimation are now $(\oobs_n)_{n \in \llbracket 1,N\rrbracket}$ and the covariance matrices used to perform the centroid estimation are $\estDiagmat = (\estDiagmatn)_{n \in \llbracket 1,N\rrbracket}$ defined according to \eqref{eq: eigen decomposition of estMCov}, the empirical parameters $\epsilon_e$, $L$ and $\alpha$ remaining unchanged.
	\item Estimate the centroids by calculating $\Rmat \cctr_1, \Rmat \cctr_2, \ldots$ where $\cctr_1, \cctr_2, \ldots$ are the elements of $\EstCtrSet$ returned by \algo.
\end{enumerate}
\end{Remark}

\subsubsection{Centroid estimation}

The centroid estimation part of \algo~is fully stated in Algorithm~\ref{algo:centrex}. 
Among the four steps needed to estimate the centroids, the step \textbf{[Pick]} is taken as it is: it randomly picks $n^\star \in \unmarked$. 

To \textbf{[Estimate]}, the recursive fixed-point estimation process is initialized with $\Ctr^{(1)} = \initobs$. We calculate $\Ctr^{(2)} = \Thefuncinit(\initobs)$ where $\Thefuncinit$ is $\Thefuncbasic$ parameterized with $\estMCovn + \estMCovnstar$ instead of $\estMCovn$ for each $n \in \intn$. Indeed, for the sequence $\CtrMap$ of centroids we want to estimate, $\initobs$ is a realization of $\initObs \thicksim \Ncal(\CtrMap(\initseq), \MCov_{n^\star})$. For any $n \neq n^\star$, we thus have $(\Obs_n - \initObs) \thicksim \Ncal(\CtrMap(\seq_n) - \CtrMap(\initseq), \MCovn + \MCovnstar)$. 

For $\ell \geq 1$, we compute $\Ctr^{(\ell+1)} = {\Thefuncbasic}(\Ctr^{(\ell)})$ until $\Maha{\Qmat} \big ( \Ctr^{(\ell+1)} - \Ctr^{(\ell)} \big ) / \mydim \leq \epsilon_e$ or $L$ iterations have been performed, where $\Maha{\Qmat}$ is the Mahalanobis norm associated with the \spdm~$\Qmat = \frac{1}{N} \sum_{n=1} ^N \estMCovn$. The 
normalisation by $\mydim$ and the use of $\Maha{\Qmat}$ in the stopping criterion aim to temper the influence of $\mydim$ and the \spdms~$\estMCovn$ on the empirical parameter $\epsilon_e$. Note that $L$ is also an empirical parameter of the algorithm. In addition, the choice of considering $\Thefuncinit$ instead of $\Thefuncbasic$ at the first iteration is a heuristic that was shown to improve the clustering performance in our simulations. 

Finally, the \textbf{[Mark]} step discards from $\mathcal{U}$ the indices of the vectors $\obs_n$ that are sufficiently close to the newly estimated centroid to consider them as elements of the same cluster. If ${\Ctr}$ denotes this estimated centroid then, for a given realization $\obs_n$ of $\Obs_n \thicksim \Ncal(\CtrMap(\seq_n), \MCovn)$, the marking amounts to testing the hypothesis that $\CtrMap(\seq_n) = {\Ctr}$, which is the condition for $\obs_n$ to belong to the cluster represented by $\Ctr$. This testing is performed via the Wald test from \eqref{Eq:Thresholding test from above} with $\matcov = \estMCovn$ and where $\level$ is empirically chosen to calculate the threshold value $\TheThreshold{\level}$ according to~\eqref{Eq: Wald threshold}. If $\MCovn$ is known, we set $\estMCovn = \MCovn$ and the test is the optimal Wald test to make our decision; otherwise, $\estMCovn$ is an estimate of $\MCovn$ and we cannot but expect that the use of this estimate in the expression of the Wald test will not affect too much the performance of the test. In this regard, the experimental results of Section \ref{sec:experiments} illustrate the robustness of \algo~to various estimates $\estMCovn$ of the matrices $\MCovn$.

\clearpage 

\begin{algorithm}[h]
	\caption{\begin{center} Centroid estimation in \algo~\\ (\textbf{Centroid\_Estim}) \end{center}}
	\begin{algorithmic} \label{algo:centrex}
		\STATE \textbf{Inputs}: ${\obs} = (\obs_n)_{n \in \llbracket 1,N\rrbracket}$, $\estMCov = (\estMCovn)_{n \in \llbracket 1,N\rrbracket}$, $\epsilon_e$, $L$, $\alpha$ \myvspace
		\STATE \textbf{Initialization}: $\EstCtrSet \leftarrow \{ \emptyset \}$, $\mathcal{U} \leftarrow \llbracket 1,N\rrbracket$ \myvspace
		\WHILE{ $\left(\mathcal{U} \neq \{ \emptyset \}\right)$ } \myvspace
		\STATE \#\# \textbf{[Pick]} \#\# \myvspace
		\STATE Pick randomly an $n^{\star}$ in $\mathcal{U}$ \myvspace
		\STATE \#\# \textbf{[Estimate]} \#\# \myvspace
		\STATE Initialize $\Ctr^{(1)} = \obs_{n^{\star}}$, $\Ctr^{(2)} = \Thefuncinit(\Ctr^{(1)})$, $\ell=2$ \myvspace
        \STATE $\Qmat \leftarrow \frac{1}{N} \sum_{n=1} ^N \estMCovn$ \myvspace
        \WHILE{ $\Big(\big( \, \Maha{\Qmat} \big ( \Ctr^{(\ell+1)} - \Ctr^{(\ell)} \big ) / \mydim \geq \epsilon_e \, \big) \text{ AND } \big( \ell+1 < L \big)\Big)$ } \myvspace
		\STATE {$\ell \leftarrow \ell+1$} \myvspace
		\STATE Compute $\Ctr^{(\ell+1)} = \Thefuncbasic(\Ctr^{(\ell)}) $ \myvspace \\
        \#\# \textbf{Comment}: $\Thefuncbasic$ is given by \eqref{eq: our general function} and is parameterized by $\estMCov$ and $\obs$ \myvspace
		\ENDWHILE \myvspace
		\STATE $\EstCtrSet \leftarrow \EstCtrSet \cup \left\{ \Ctr^{(\ell)} \right\}$ \myvspace
		\STATE \#\# \textbf{[Mark]}  \#\# \myvspace
		\FOR{$n$ in $\mathcal{U}$} \myvspace
		% \IF{$\MCovn$ is known} \myvspace
		% \STATE Set $\matcov = \MCovn$ \myvspace
		% \ELSE \myvspace
		% \STATE Set $\matcov = \estMCovn$ \myvspace
		% \ENDIF \myvspace
		\IF{$ \big ( \Maha{\estMCovn}(\obs_n - \Ctr^{(\ell)}) \leq \TheThreshold{\level}  \big ) $  or $\big ( n = n^\star \big )$} \myvspace
		\STATE $\mathcal{U} \leftarrow \mathcal{U} \setminus \{ n \}$ \myvspace
		\ENDIF \myvspace
		\ENDFOR \myvspace
		\ENDWHILE \myvspace
		\STATE \textbf{Outputs}: $\EstCtrSet$ \myvspace
	\end{algorithmic}
\end{algorithm}

\clearpage

\subsubsection{Fusion}
In our simulations, we observed that the centroid estimation performed by \algo~may output too many centroids. Therefore, to improve the overall clustering performance, we use a second algorithm, called \emph{fusion}, to be applied to the output set $\EstCtrSet$ provided by \algo. The fusion algorithm is described in Algorithm~\ref{algo:fusion}. This algorithm first identifies the two centroids $(\Ctr, \Ctr') \! \in \! \EstCtrSet$ that are the closest to each other, among all estimated centroids in $\EstCtrSet$. It then merges these two centroids if $\| \Ctr - \Ctr' \| / \mydim \leqslant \epsilon_f$, where the division by $\mydim$ significantly reduces the dependence of $\epsilon_f$ with the data dimension. The merge is realized by removing $\Ctr$ and $\Ctr'$ from $\EstCtrSet$, and replacing them with the average centroid $\frac{\Ctr+\Ctr'}{2}$. The fusion is repeated until the closest two centroids in $\EstCtrSet$ have distance larger than $\epsilon_f$. {At the end, the cluster assignment step performed after the last merge provides the output of the clustering algorithm.} %Practical values for $\epsilon_f$ will be discussed in Section \ref{sec:experiments}. 
This fusion step is identical to the additional step for merging the centroids in Mean-Shift. \medskip \\
\begin{algorithm}[h]
	\caption{ \begin{center} Fusion in \algo \\ \textbf{(Fusion)} \end{center}}
	\begin{algorithmic}\label{algo:fusion}
		\STATE \textbf{Inputs}: $\EstCtrSet$, $\epsilon_f$ %, \revoir{$\obs = (\obs)_{n \in \intn}$, $\estMCov = (\estMCovn)_{n \in \intn}$} \myvspace
		%\#\# \newnew{See notation in Algorithm \ref{algo:centrex}} \#\# \myvspace
		\STATE \textbf{Initialization}: $ \Delta = 0$ \myvspace
		\WHILE{\Big($\Delta <\epsilon_f $ AND $\card(\EstCtrSet)>1$\Big)} \myvspace
		\STATE Compute $(\Ctr,\Ctr') = \displaystyle \argmin_{(\xvec,\xvec') \in \EstCtrSet \times \EstCtrSet, x\neq x'} \, \, \| \xvec - \xvec' \|$ \\
		{$\Delta \leftarrow \| \Ctr - \Ctr' \| / \mydim$} \myvspace \\
		\IF{ ${\Delta} <\epsilon_f$} \myvspace
		\STATE \#\# \textbf{[Merge]} \#\# \myvspace
		\STATE Update $\hspace{0.15cm} {\EstCtrSet} \leftarrow \EstCtrSet \setminus \{ \Ctr,\Ctr' \}$, $\EstCtrSet \leftarrow \EstCtrSet \cup \{ \frac{\Ctr+\Ctr'}{2} \}$ \myvspace
		\ENDIF \myvspace
		\ENDWHILE \myvspace
		\STATE \textbf{Outputs}: $\EstCtrSet$ 
	\end{algorithmic}
\end{algorithm}

\subsubsection{Cluster assignment}
Once all vectors $\obs_n$ have been marked, we use the set $\EstCtrSet$ output by the fusion to perform cluster assignment. The cluster assignment step of \algo{} is fully stated in Algorithm~\ref{algo:assignment}. 
In this algorithm, the cluster assignment is realized by selecting for each $\obs_n$ the closest centroid $\Ctr \! \in \! \EstCtrSet$.

We could think of carrying the cluster assignment by resorting again to the hypothesis test of~\eqref{Eq:Thresholding test from above} used for the marking operation. However, this test could leave some data with no cluster assignment or, to the contrary, assign some data to several different clusters. This is why here, in order to avoid any underdetermination, we consider the minimum distance condition where we use the Mahalanobis norm $\Maha{\estMCovn}$. After this cluster assignment step, centroids that lead to empty clusters are removed from the set $\EstCtrSet$.

\begin{algorithm}[h]
 \caption{ \begin{center} Cluster assignment in \algo \\ \textbf{(Cluster\_Assignment)} \end{center}}
  \begin{algorithmic}\label{algo:assignment}
   \STATE \textbf{Inputs}: $\obs = (\obs_n)_{n \in \llbracket 1,N\rrbracket}$, $\estMCov = (\estMCovn)_{n \in \llbracket 1,N\rrbracket}$, $\EstCtrSet$ 
   \myvspace 
  \FOR{$n$ in $\llbracket 1,N \rrbracket$ } \myvspace
    \STATE \#\# \textbf{[Assign]} \#\# \myvspace
    \STATE Set $\widehat{\Ctr}_n = \displaystyle \arg\min_{\Ctr \in \EstCtrSet} \Maha{\estMCovn} ( \obs_n - \Ctr )$ \myvspace
    \STATE Set $\assignfct(n) = \widehat{\Ctr}_n$
  \ENDFOR \myvspace \\
  \#\# \textbf{[Remove] empty clusters} \#\# \myvspace
  \FOR{$\Ctr \in \EstCtrSet$} \myvspace
    \IF{$\{ n \in \intn: \widehat{\Ctr}_n = \Ctr \} = \emptyset$} \myvspace
      \STATE $\EstCtrSet \leftarrow \EstCtrSet \setminus \{\Ctr\}$ \myvspace
    \ENDIF \myvspace
  \ENDFOR \myvspace
  \STATE \textbf{Outputs}: $\EstCtrSet$, $\assignfct$
  \end{algorithmic}
\end{algorithm}

\subsection{The case of proportional covariance matrices}\label{sec:est_sigma}

When the \spdms~$\MCovn$ are unknown, it is crucial in practice to estimate them with sufficient precision. Unfortunately, (As. 3) encompasses a large class of \spdms~(diagonal, non-diagonal, heteroscedastic or not, etc.), which makes it difficult to devise a single method to estimate them.

For the special case where $\MCovn = {\sigma^2} \, \MCov$ for all $n \in \intn$, {with known $\MCov$ and unknown $\sigma$}, we propose the following approach to estimate $\sigma$ from the sequence of {data $(\obs_n)_{n \in \llbracket 1,N\rrbracket}$}. 
We select $P$ different {data} at random among {$(\obs_1, \ldots, \obs_N)$}, and without loss of generality, re-index them as {$(\obs_1, \ldots, \obs_P)$}. 
We then evaluate the minimum distance $\statest$ among these $P$ {data} as 
\begin{equation}\label{eq:min_dist}
\statest = \displaystyle \min_{\begin{array}{ccc} \text{\scriptsize $(p,p') \in \intP \times \intP$} \vspace{-0.1cm} \\
		\text{\scriptsize $p < p'$} \end{array}} \Mahasq{\MCov}( \obs_p- \obs_{p'}) \myfstop
\end{equation}
% \begin{equation}\label{eq:min_dist}
% \Statest = \displaystyle \min_{\begin{array}{ccc} \text{\scriptsize $(p,p') \in \{1, \ldots, P\} \times \{1, \ldots, P \}$} \\
% 		\text{\scriptsize $p < p'$} \end{array}} \nu_{\MCov}^2( \Obs_p- \Obs_{p'}) .
% \end{equation}
For a given $(p,p') \in \intP \times \intP$ with $p < p'$ such that $\obs_p$ and $\obs_{p'}$ belong to the same cluster, $(\obs_p-\obs_{p'})$ is a realization of $(\Obs_p - \Obs_{p'}) \thicksim \Ncal(0,2{\sigma^2} \MCov)$. In this case, it follows from \eqref{eq: definition of the initial mahanorm} and \eqref{eq: eigen decomposition of MCov init} that $\Mahasq{\MCov}(\Obs_p - \Obs_{p'}) / 2 \sigma^2 \thicksim \chi_{\mydim}^2$, where $\chi_{\mydim}^2$ is the Chi-squared distribution with ${\mydim}$ degrees of freedom, and thus, that $\Mahasq{\MCov}(\Obs_p - \Obs_{p'})$ has cdf $\Fbb$ defined by
\begin{equation}
    \label{eq: Fsigma}
    \forall x \in \RightOpen{0}{\infty}, \Fbb(x) = \Fbb_{\chi^2_{\mydim}}(x/2\sigma^2) \myfstop
\end{equation}
\eqref{eq:min_dist} considers $P(P-1)/2$ different pairs $(y_p,y_{p'})$. In what follows, we assume that $M$ of these pairs are such that both $y_p$ and $y_{p'}$ belong to the same cluster, this cluster being possibly different from one pair to another: given two different pairs $(y_p,y_{p'})$ and $(y_q,y_{q'})$ among these $M$ pairs, both $y_p$ and $y_{p'}$ may belong to the same cluster, say cluster $\#k$, whereas both $y_q$ and $y_{q'}$ may belong to cluster $\#k'$ with $k' \neq k$. According to this assumption, we consider that $\statest$ is also the value taken by the minimum $\min(\Upsilon_1, \ldots,\Upsilon_M)$ of $M$ random variables $\Upsilon_1, \ldots, \Upsilon_{M} \stackrel{\text{iid}}{\thicksim} \Fbb$ with $M \leqslant P(P-1)/2$.
\\
\indent
The cdf of the minimum of a finite sequence of random variables being well-known (see \citealt[Sec.~2.4, p.~87]{Serfling1980}, among others), we calculate the cdf of $\min(\Upsilon_1, \ldots,\Upsilon_M)$ as
\begin{equation}
\forall x \in \RightOpen{0}{\infty}, \Fbb_{\min(\Upsilon_1, \ldots,\Upsilon_M)}(x) = 1 - \left ( 1 - \Fbb \left ( {x} \right ) \right )^{M} = 1 - \left ( 1 - \Fbb_{\chi^2_{\mydim}} \left ( \frac{x}{2 \sigma^2} \right ) \right )^{M} \myfstop
\label{eq: cdf of minimum of chi2}
\nonumber
\end{equation}
A pdf of $\min(\Upsilon_1, \ldots,\Upsilon_M)$ is thus $f_{\min(\Upsilon_1, \ldots,\Upsilon_M)}(x) = f(x, \sigma)$ for all $x \in \RightOpen{0}{\infty}$, with
\begin{equation}
    \label{eq: fsigma}
    \forall (x,t) \in \RightOpen{0}{\infty} \times \Open{0}{\infty}, f(x, t) = \textstyle \frac{M}{2{t^2}} f_{\chi^2_{\mydim}} (\frac{x}{2 {t^2}})\left ( 1 - \Fbb_{\chi^2_{\mydim}} \left ( \frac{x}{2 {t^2}} \right ) \right )^{M-1} \mycomma
\end{equation}
where $f_{\chi^2_{\mydim}}$ denotes the pdf of the $\chi_{\mydim}^2$ distribution. Our MLE $\sigmaMLE$ is hence defined as 
\begin{equation}
    \label{eq: sigmaMLE}
    \sigmaMLE = \argmax_{t \in \Open{0}{\infty}} f(\upsilon,t) \myfstop
\end{equation}
Having no closed form for $\sigmaMLE$ such defined, $\sigmaMLE$ is calculated numerically. We then set 
\begin{equation}
    \label{eq: final estimate of MCov in the MLE case}
    \forall n \in \intn, \estMCovn = \sigmaMLEsq \, \MCov \myfstop
\end{equation}

\section{Simulations results}
\label{sec:experiments}
In this section, we evaluate the proposed algorithm \algo{} through numerical simulations. We first show clustering results on $2$D toy data sets, and then provide extensive simulation results on synthetic measurement vectors with higher dimension $\mydim=100$. At the end, we apply \algo{} to the standard data sets Ruspini and IRIS. All simulations described below were performed using MATLAB 2024.

\subsection{Two-dimensional toy data sets}
\label{subsec:Two-dimensional toy datasets}

\begin{figure}[t]
	\includegraphics[width=1\linewidth]{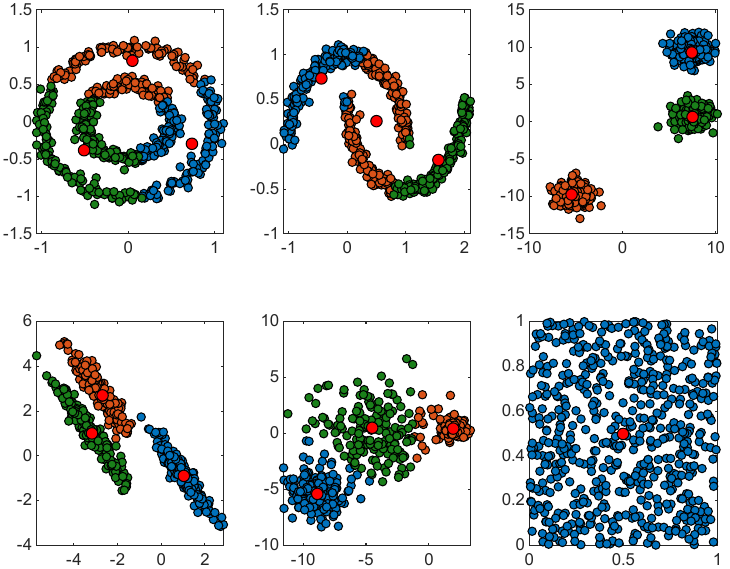}
	\caption{\small Centre-X applied on the 2D toy data sets provided in the scikit-learn documentation. The red dots are the centroids estimated by \algo{}.}	
	\label{fig:2D_centreX}
\end{figure}

In the scikit-learn documentation\footnotemark[1]\footnotetext[1]{\url{https://scikit-learn.org/stable/modules/clustering.html}}, standard clustering algorithms  (K-means, DB-SCAN, Mean-Shift, etc.)~are applied to six $2$D toy data sets. We first apply our algorithm \algo{} to these six data sets, in order to check whether the behavior of \algo{} is as expected. %The purpose of this evaluation is to check whether the behavior of~\algo{} is as expected. 
We will benchmark our algorithm {against} other methods later on in this section. We set $L=100$ maximum iterations for centroid estimation, and set the empirical parameters as $\epsilon_e = 10^{-3}$, $\epsilon_f = 0.5$, $\alpha = 10^{-3}$. For each data set, we provide the exact data covariance matrices to our algorithm (their values can be inferred from the provided Python code in the scikit-learn library), in order to observe the performance of \algo{} in its best configuration. In the same way, {all} clustering algorithms considered in the scikit-learn evaluation\footnotemark[1]  were provided with correct input parameters (for instance, K-means was given the correct value of $K$, etc.).

The clustering results obtained by \algo{} on the six data sets are shown in Figure~\ref{fig:2D_centreX}. On the three data sets that consider Gaussian data (top right, bottom left, bottom middle), we observe that \algo{} was able to correctly retrieve $K=3$ clusters of correct shapes. This result was expected, since these three data sets follow the Gaussian model of~\algo{}. On the opposite, \algo{} was not able to correctly cluster two of the other data sets (top left, top middle) since these data sets do not follow the Gaussian model. Finally, \algo{} retrieved one cluster on the last data set (bottom right), for which there is no obvious clustering solution. 

\subsection{\texorpdfstring{Measurement vectors with dimension $\mydim = 100$}{Measurement vectors with dimension d = 100}}
\label{subsec: masurement vectors}

We now extensively evaluate the performance of \algo{} on synthetic data sets with higher dimension ${\mydim} = 100$. Simulations on synthetic data are useful to evaluate the performance and robustness of \algo{}, depending on whether the \spdms~$\estMCovn$ equal the {true} covariance matrices $\MCov_n$, or not. They also permit to assess the relevance of the {proposed} Wald kernel. 

%In this section, we first specify the clustering algorithms against which we benchmark \algo{}. We then outline our simulations on synthetic data. Afterwards, we describe the synthetic data generation and provide all the simulation parameters. We then show the simulation results.

\subsubsection{Data generation}
\label{sec: data generation}

%The  measurement vectors {are} generated randomly, according to the model of Section \ref{sec: measurement vector models}. We employ diagonal matrices $\MCov_n$, and thus diagonal matrices $\estMCovn$. This is equivalent to considering the case where $\Rmat = \Identity{\mydim}$ in \eqref{eq: eigen decompositions} or, in the vein of Remarks \ref{rmk: transforms} and \ref{Remark: preliminary transform of the measurement vectors}, to considering that data are issued from the transform $\fn{\Rmat^\transpose}$, which consists of multiplying the initial measurement vectors by $\Rmat^\transpose$.

At the start of each experiment, we choose an interval $[\sigmamin,\sigmamax] \subset \Open{0}{\infty}$ to generate the measurements. Given $[\sigmamin,\sigmamax]$, we generate $800$ data sets. Each of these data sets is obtained as follows.

First, the number $K$ of centroids is randomly drawn in $ \llbracket 2,10 \rrbracket$ according to the discrete uniform law on this interval. Second, once $K$ is determined, we randomly generate $K$ centroids in $\Rset^\mydim$ by drawing $K$ realizations of a centred $\mydim$-dimensional Gaussian random vector with covariance matrix $A^2 \Id$ with $A = 20$, until the minimum distance between two different centroids exceeds $\mu_{\min} = 200$. The choice of the maximum number of centroids, that of the value of $\mu_{\min}$ and that of $A$, are arbitrary. Finally, when the $K$ centroids have been calculated, we generate $N = {400}$ {realizations $\obs_1, \ldots, \obs_N$} of measurement vectors $\Obs_1, \ldots, \Obs_N$. Each of these measurement vectors has mean randomly chosen among the $K$ available centroids, via the discrete uniform law on $\intk$, and satisfies the model of Section \ref{sec: measurement vector models} and \cref{Assumption: on C}, with $\Rmat = \Identity{\mydim}$. This is equivalent to considering the case where $\Rmat = \Identity{\mydim}$ in \eqref{eq: eigen decompositions} or, in the vein of Remarks \ref{rmk: transforms} and \ref{Remark: preliminary transform of the measurement vectors}, to considering that data are issued from the transform $\fn{\Rmat^\transpose}$, which consists of multiplying the initial measurement vectors by $\Rmat^\transpose$.

\subsubsection{Clustering algorithms and brief outline of the simulations}
\label{subsec: clustering algorithms}
Beside \algo, we systematically evaluate the K-means algorithm in our experiments, since this algorithm is widely used for clustering. To mitigate initialization issues with K-means, we actually use the K-means++ algorithm with $10$ random initializations. In addition, for K-means, we consider both the setup where $K$ is known and the one where $K$ is unknown. The latter case is referred to as the X-means algorithm, and consists in relaunching K-means with all possible values of $K$ in a certain interval, and select the $K$ that maximizes the Silhouette coefficient~\citep{e23060759}. X-means will search $K$ in the interval $\llbracket 1,10 \rrbracket $.

We will also assess the performance of Mean-Shift with the standard Gaussian kernel, the flat kernel, and the proposed Wald kernel. The Mean-Shift code in the scikit-learn library is implemented with the flat kernel only. We thus use this library to perform Mean-Shift with flat kernel. To perform Mean-Shift with either the Gaussian or the Wald kernel, we implemented the Mean-Shift algorithm as follows. According to \eqref{eq: MyNewfunc}, we particularized the mean-shift function of \algo{} to the \emph{mean-shift} of~\citet[Eq.~17]{comaniciu2002mean}; next, according to Section \ref{subsec:Algorithm description}, Mean-Shift amounts to performing \textbf{[Estimate]} for each of the $N$ input data and, skipping the marking, directly merging the candidate centroids obtained after these $N$ runs. For the fusion step in Mean-Shift, we use $\epsilon_f = 1$, as will be considered for~\algo{}.

Several empirical parameters must be provided to \algo{}. We set $\alpha = 10^{-3}$ since we observed that, most often, any value of $\alpha$ equal or lower than $10^{-2}$ leads to the same clustering performance. In the same way, the parameters $\epsilon_e$ and $L$ do not seem to influence much the clustering performance, and we set $\epsilon_e = 10^{-3}$ and $L = 100$. On the other hand, we observed that the fusion parameter $\epsilon_f$ has a greater impact on the clustering performance. Given that it is used for the fusion step, we propose to set $\epsilon_f = \mu_{\mymin}/2 \mydim$, since $\mu_\mymin/2$ is the distance at which a vector in $\Rset^\mydim$ cannot unambiguously be assigned to any two centroids. Therefore, for $\mydim = 100$ and $\mu_\mymin = 200$, we set $\epsilon_f = 1$. 

As an introduction to their detailed descriptions below, we briefly outline our simulations. In our first experiment (see Figure \ref{fig:CentreX_vs_MeanShift}), we consider the homoscedastic case of matrices $\MCov_{n}$ all equal to $\sigma^2 \Identity{\mydim}$. Because of their similarities, we compare \algo~to Mean-Shift when both employ the same kernel, either the Gaussian kernel or the Wald kernel, and both are provided with the value of $\sigma$. In Figures \ref{fig:clustering_robustness}, \ref{fig:clustering_robustness_varying-fig4} and \ref{fig:clustering_robustness_varying-fig6}, we benchmark \algo~with K-means++ and X-means to evaluate the robustness of these three algorithms. In Figure \ref{fig:clustering_robustness}, we consider {the homoscedastic case of} matrices $\MCov_{n}$ all equal to $\sigma^2 \Identity{\mydim}$, where $\sigma$ is now unknown and estimated by the MLE of Section \ref{sec:est_sigma}. In Figures \ref{fig:clustering_robustness_varying-fig4} and \ref{fig:clustering_robustness_varying-fig6}, we address two different scenarios for generating heteroscedastic diagonal matrices $\MCov_{n}$ that are not identity-scaled. The purpose is to assess the performance of \algo~tweaked with either the exact knowledge of these matrices or different estimates of them.

\subsubsection{{Performance evaluation criteria}}
\label{subsec: performance evaluation criteria}

{In the figures displayed below, the four criteria used to evaluate the performance of each benchmarked algorithm are: 1) the proportion of correctly estimated values of $K$, 2) the Silhouette coefficient, 3) the clustering error rate,  4) the average estimated value of $K$.} 

The clustering error rate is defined as follows. Given $N$ data $\obs_1, \ldots, \obs_N$, we define the upper-triangular adjacency matrix as
$$\Amat_{\obs_1, \ldots, \obs_N} = (a_{i,j})_{1 \leqslant i \leqslant N, 1 \leqslant j \leqslant N}$$
where, for ${(i,j) \in \llbracket 1, N-1 \rrbracket \times \llbracket i + 1, N \rrbracket}$, $a_{i,j} = 1$ if $\obs_i$ and $\obs_j$ belong to the same cluster and $a_{i,j} = 0$ otherwise. Similarly, after clustering by a given algorithm, we can define the output upper-triangular adjacency matrix 
$$\est{\Amat}_{\obs_1, \ldots, \obs_N} = (\est{a}_{i,j})_{1 \leqslant i \leqslant N, 1 \leqslant j \leqslant N}$$
where, for ${(i,j) \in \llbracket 1, N-1 \rrbracket \times \llbracket i + 1, N \rrbracket}$, $\est{a}_{i,j} = 1$ if $\obs_i$ and $\obs_j$ are assigned to the same cluster and $\est{a}_{i,j} = 0$ otherwise. We define the clustering error rate of the algorithm under consideration by the Hamming distance between $\Amat_{\obs_1, \ldots, \obs_N}$ and $\est{\Amat}_{\obs_1, \ldots, \obs_N}$ divided by $N(N-1)/2$. In other words, the Error Rate induced by the given algorithm is calculated as
\begin{equation}
	\label{eq: error rate}
	\text{Error Rate} = \frac{2}{N(N-1)} \sum_{i=1}^{N-1} \sum_{j = i+1}^N \vert \est{a}_{i,j} - a_{i,j} \vert \myfstop
\end{equation}

%\vspace{0.1cm}
\subsubsection{Clustering performance for constant known scaled-identity covariance matrices (see Figure \ref{fig:CentreX_vs_MeanShift})}\label{sec:fixed_sigma}

We consider the case of covariance matrices $\MCov_n$ all equal to $\sigma^2 \, \Id$, where $\sigma > 0$ is known. We thus take $\sigmamin = \sigmamax = \sigma$. In our simulations, $\sigma \in [1, 40]$. Our benchmarking involves:
\begin{itemize}
		\item \algo{} (Wald, {$\sigma^2 \Id$}) and \algo{} (Gauss, {$\sigma^2 \Id$}): \algo{} with Wald kernel and \algo{} with Gaussian kernel, both with $\estMCovn = \sigma^2 \, \Id$ for each $n \in \intn$; the Gaussian kernel used is $w_{\text{Gauss}}(t) = e^{-{t^2}/{2 c \sigma^2}} (t \geqslant 0),$ where $c$ is an empirical coefficient chosen equal to $5$ after preliminary trials;
%		\item \algo{} (Gauss, \revoir{$\sigma^2 \Id$}): \algo{} with Gaussian kernel and $\estMCovn = \sigma^2 \, \Id$ for each $n \in \intn$;
		\item Mean-Shift~(Wald), Mean-Shift~(Gauss) and Mean-Shift~(flat): Mean-Shift with Wald kernel, Gaussian kernel and flat kernel, respectively; for each variant, $h = \sigma$ in \eqref{eq: MyNewfunc} and the Gaussian kernel is $w_{\text{Gauss}}$ defined above;
 		\item K-means++ and X-means.
\end{itemize}
We observe in Figure~\ref{fig:CentreX_vs_MeanShift} that Mean-Shift (Wald) and \algo~(Wald, {$\sigma^2 \Id$}) on the one hand, and Mean-Shift (Gauss) and \algo~(Gauss, {$\sigma^2 \Id$}) on the other hand, yield very close results. We also notice that \algo{} (Gauss, {$\sigma^2 \Id$}) slightly outperforms Mean-Shift (Gauss). As expected, Mean-Shift (flat) yields the worst performance measurements, even for small values of $\sigma$, noticeably. That Mean-Shift and \algo~yield similar performance measurements, {when both are equipped with the same kernel, follows from the next remark.

\begin{Remark}
    \label{rmk: MS = centrex}
    Regardless of the covariance matrices $\MCov_n$ for $n \in \intn$, when $\estMCovn = h^2 \Id$ for some real value $h$ and for all $n \in \intn$, the mean-shift function $\Thefuncbasic$ defined by \eqref{eq: our general function} reduces to the mean-shift of \eqref{eq: MyNewfunc} used in~\citet{comaniciu2002mean}. According to their description above, the sole difference between \algo{} and the Mean-Shift algorithm is the marking step performed by \algo{} to reduce the number of estimated fixed-points to merge. Therefore, when $\estMCovn = h^2 \Id$ for some real value $h$ and all $n \in \intn$, and regardless of the covariance matrices $\MCov_n$ for $n \in \intn$, \algo{} and the Mean-Shift algorithm, whether they both use the Wald kernel or the Gaussian kernel, will perform similarly, the key advantage of \algo{} compared to Mean-Shift being that \algo{} is much less complex than Mean-Shift, as already pointed out in Section \ref{sec:algo}.
\end{Remark}

Figure~\ref{fig:CentreX_vs_MeanShift} shows also that the Wald kernel noticeably improves the clustering performance, since the performance measurements of Mean-Shift (Wald) and \algo{} (Wald, {$\sigma^2 \Id$}) are significantly higher than those of Mean-Shift (Gauss) and \algo{} (Gauss, {$\sigma^2 \Id$}).

In terms of proportion of correct $K$, \algo{} (Wald, {$\sigma^2 \Id$}) and Mean-Shift~(Wald) outperform X-means for $\sigma \leqslant 30$. For $\sigma > 30$, the proportion of correct $K$ of \algo{} (Wald, {$\sigma^2 \Id$}) and Mean-Shift~(Wald) drops, whereas X-means maintains a proportion of correct $K$ above $0.9$. Note also that the error rates of K-means++ and X-means are larger than those of \algo{} (Wald, {$\sigma^2 \Id$}) and Mean-Shift (Wald) for $\sigma$ above $7$. This trend will be verified in our next experimental results.

We must also notice the significant hump of the Silhouette curve obtained for Mean-Shift (flat). This is not contradictory to the {performance} of this algorithm in terms of proportion of correct $K$ and error rate. Indeed, Figure \ref{fig:CentreX_vs_MeanShift}(d) presents the average estimated value of $K$ in function of $\sigma$ for all the benchmarked algorithms. Since the value of $K$ is distributed uniformly between $2$ and $10$, the empirical average value of $K$ should be $6$. Mean-Shift (flat) achieves this empirical value for $\sigma \leqslant 20$. However, for $\sigma > 20$, Mean-Shift (flat) overestimates $K$, before underestimating it for $\sigma$ above $30$.

According to the foregoing, we will now always equip \algo{} and the Mean-Shift algorithm with the Wald kernel. The kernel will thus not be mentioned any longer in the legend of the next figures.

\clearpage
 
\begin{figure*}[ht]
		\subfloat[~]{ \includegraphics[width=\myscale\linewidth]{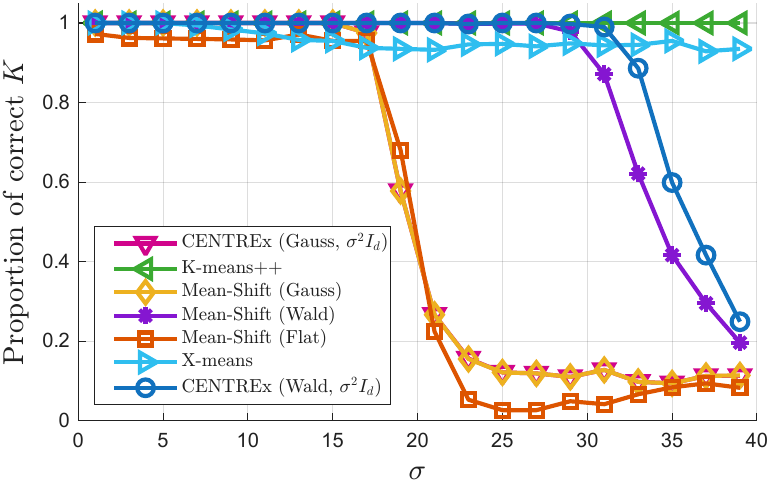}}
		\subfloat[~]{ \includegraphics[width=\myscale\linewidth]{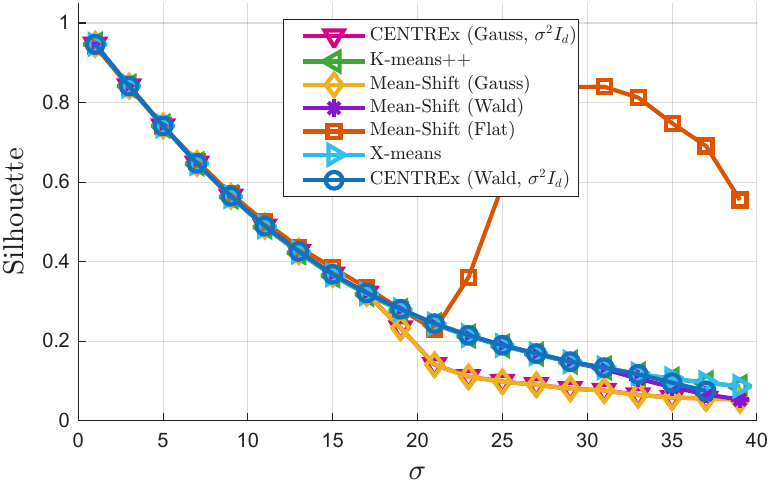}} \\
		\subfloat[~]{ \includegraphics[width=\myscale\linewidth]{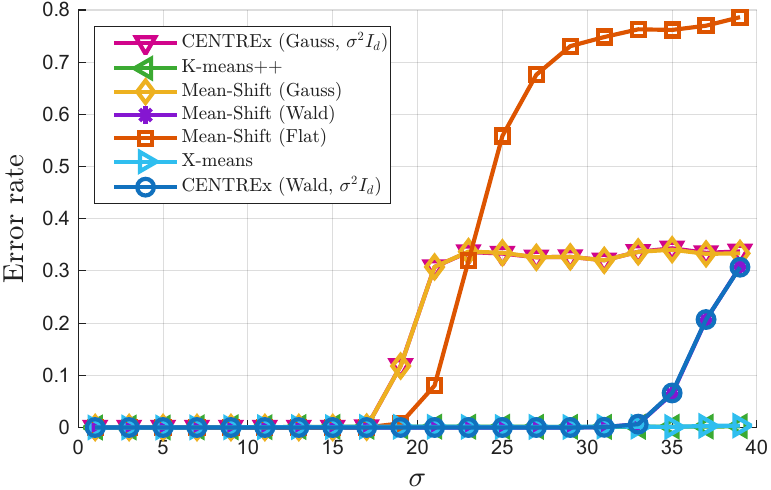}}
		\subfloat[~]{ \includegraphics[width=\myscale\linewidth]{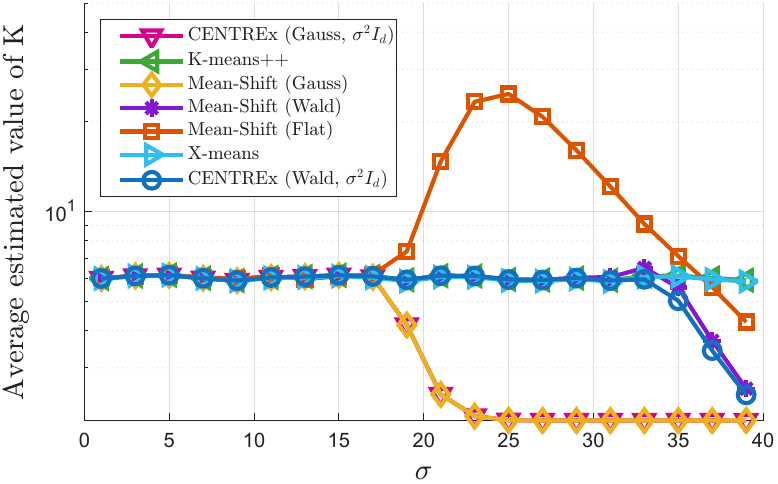}}	\\
		\subfloat[~]{ \includegraphics[width=\myscale\linewidth]{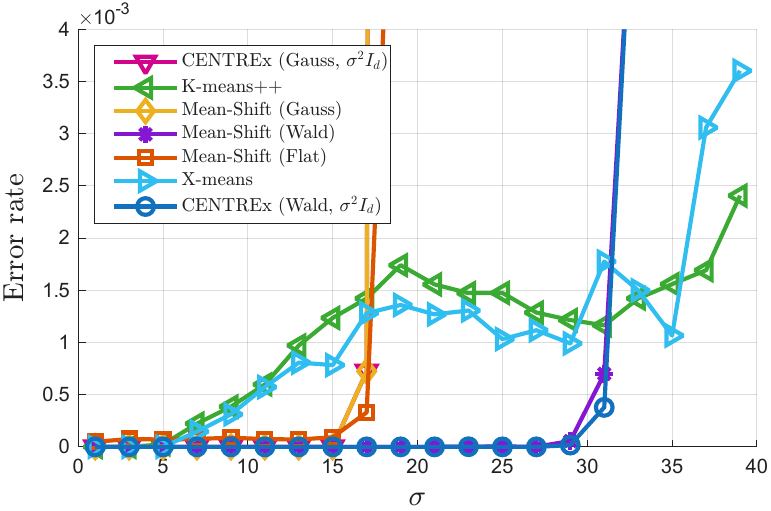}}	
	\caption{\small Clustering performance of \algo{} vs. K-means++, X-means and Mean-Shift with various kernels (flat, Gaussian, Wald). For all $n \in \intn$, $\MCov_n = \sigma^2 \, \Id$ with known $\sigma$: (a) Percentage of correctly retrieved number of clusters; (b) Silhouette; (c) Error rate; (d) Average estimated value of $K$; (e) Zoom on error rate.}
	\label{fig:CentreX_vs_MeanShift}
\end{figure*}

\clearpage

\subsubsection{Clustering performance for constant unknown scaled-identity covariance matrices (see Figure~\ref{fig:clustering_robustness})}\label{sec:fixed unknownsigma}

We still consider the homoscedastic case of equal covariance matrices $\MCov_n = \sigma^2 \, \Id$ for all $n \in \intn$. We thus take $\sigma = \sigmamin = \sigmamax$ again. This time, we consider Mean-Shift and \algo{}, both with the Wald kernel, when $\sigma$ is estimated by $\sigmaMLE$ calculated according to Subsection \ref{sec:est_sigma}, with $M = P = 50$ in \eqref{eq: fsigma} and 
$\MCov = \Id$ in \eqref{eq: final estimate of MCov in the MLE case}.
\medskip \\
\indent
To further assess their performance and robustness, we present in Figure~\ref{fig:clustering_robustness} the clustering performance of:
\begin{itemize}
\item \algo~($\sigmaMLEsq \, \Id$) with, for any $n \in \intn$, $\estMCovn = \sigmaMLEsq \, \Id$;
\item Mean-Shift~($\sigmaMLE$) where we set $h = \sigmaMLE$ in \eqref{eq: MyNewfunc};
\item \algo~($\sigma^2 \Id$) already considered in the previous experiment.
% \item \important{Mean-Shift (Wald, \revoir{$\sigmaMLE$}) and Mean-Shift (Gauss, \revoir{$\sigmaMLE$}): Mean-Shift with Wald kernel \revoir{and Mean-Shift with Gauss kernel, both with mean-shift function of \eqref{eq: MyNewfunc} where we set $h = \sigmaMLE$}};
% \item Mean-Shift (Gauss, \revoir{$\sigmaMLE$}): Mean-Shift with Gaussian kernel and mean-shift function of \eqref{eq: MyNewfunc} with $h = \sigmaMLE$; 
% \item \algo{} (Wald, \revoir{$\sigma^2 \, \Id$}), \revoir{Mean-Shift~(Wald, $\sigma$)} and Mean-Shift (Gauss, \revoir{$\sigma$}), already considered in the previous experiment \mysout{and thus adjusted with the exact value of $\sigma$}.
\item K-means++ and X-means.
\end{itemize}
\algo~($\sigmaMLEsq \, \Id$) and Mean-Shift~($\sigmaMLE$) perform similarly as explained by Remark~\ref{rmk: MS = centrex} above. \algo~($\sigmaMLEsq \, \Id$) suffers from a limited performance degradation compared to \algo{} ($\sigma^2 \Id$). The behavior of \algo~($\sigmaMLEsq \, \Id$) in comparison with X-means and K-means++ is thus similar to that of \algo~($\sigma^2 \Id$). This means that if we have a constant standard deviation for the data, \algo~can be used even if this standard deviation is unknown and estimated.

%The same holds for Mean-Shift (Gauss, \revoir{$\sigmaMLE$}) compared to Mean-Shift (Gauss, \revoir{$\sigma$}). However, Mean-Shift (Gauss, \revoir{$\sigmaMLE$}) and Mean-Shift (Gauss, \revoir{$\sigma$}) yield significantly lower performance than their counterpart with Wald kernel.

%\important{Since we consider \mysout{homoscedastic clusters with} scaled-identity covariance matrices, our mean-shift function $\Thefuncbasic$ given by \eqref{eq: our general function} equates the {\em mean-shift} of \eqref{eq: MyNewfunc} with $h = \sigmaMLE$. Thus, as in Section \ref{sec:fixed_sigma}, \algo{} (Wald, \revoir{$\sigmaMLEsq \, \Id$}) and Mean-Shift (Wald, \revoir{$\sigmaMLE$}) yield almost the same performance results, \algo{} (Wald, \revoir{$\sigmaMLEsq \, \Id$}) being however much faster than Mean-Shift (Wald, \revoir{$\sigmaMLE$}), as already mentioned above.}

%\mysout{As in Figure \ref{fig:CentreX_vs_MeanShift}, X-means is outperformed by \algo{} (Wald, MLE) in terms of estimation of $K$ and error rates. However, as above, X-means yields the best Silhouette curve, probably by attempting to optimize this criterion when estimating $K$. As above again, the computation of the average estimated values of $K$ of the algorithms shows that, as $\sigma$ grows, X-means still tends to overestimate the number of centroids, whereas the other algorithms, like K-means++, tend to underestimate this number.}

\newpage 

\begin{figure*}[t]
		\subfloat[~]{ \includegraphics[width=\myscale\linewidth]{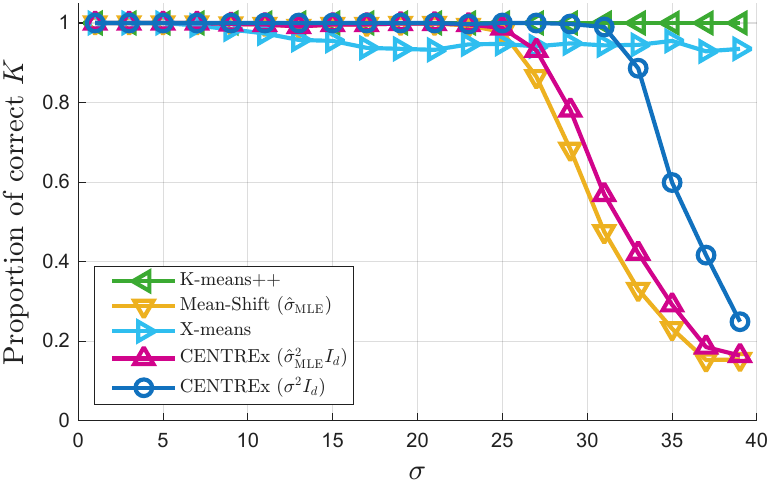}}
		\subfloat[~]{ \includegraphics[width=\myscale\linewidth]{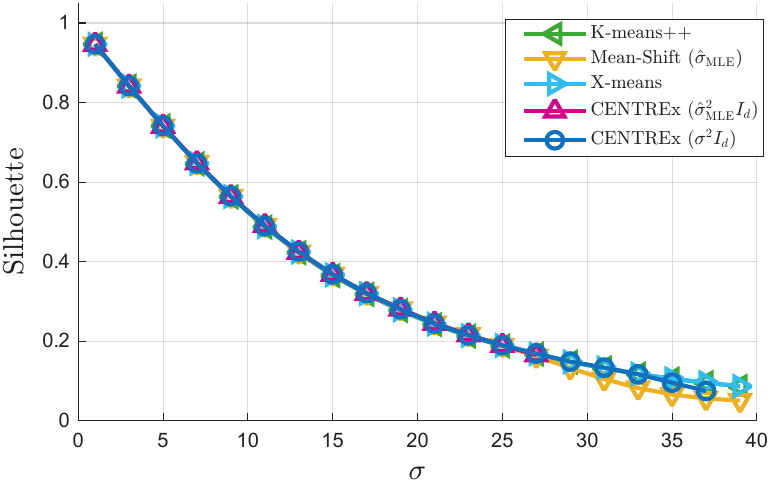}} \\
		\subfloat[~]{ \includegraphics[width=\myscale\linewidth]{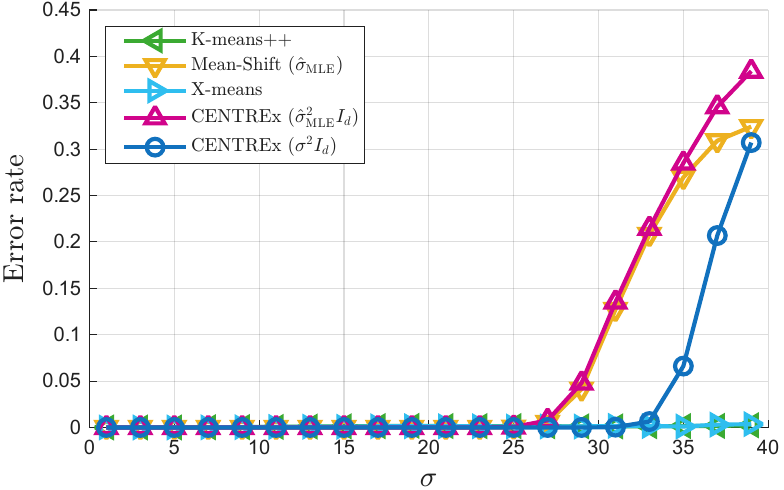}}
		\subfloat[~]{ \includegraphics[width=\myscale\linewidth]{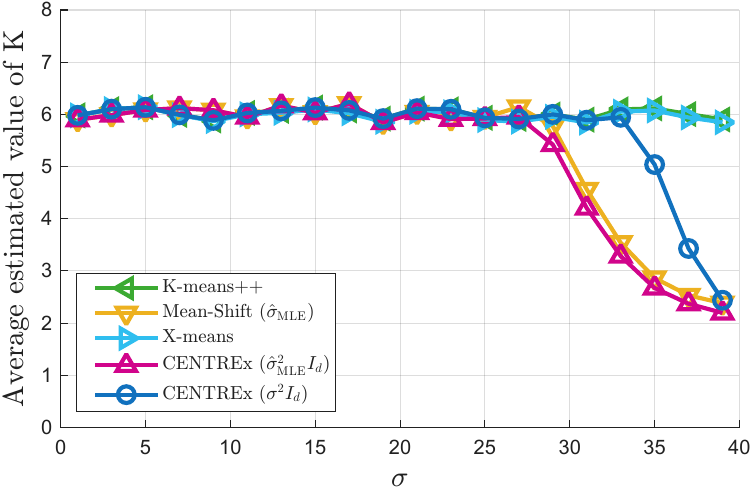}}	\\
		\subfloat[~]{ \includegraphics[width=\myscale\linewidth]{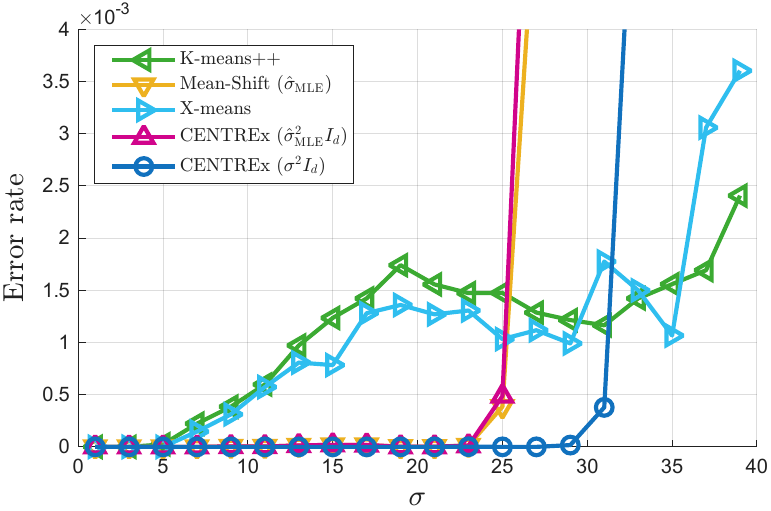}}	
	\caption{\small Clustering performance and robustness of \algo{} vs. K-means, X-means, and Mean-Shift. For all $n \! \in \! \intn$, $\MCov_n \! = \! \sigma^2 \, \Id$ and $\sigma$ is estimated by MLE if unknown to \algo: (a) Percentage of correctly retrieved number of clusters; (b) Silhouette; (c) Error rate; (d) Average estimated value of $K$; (e) zoom on error rate.}
	\label{fig:clustering_robustness}
\end{figure*}

\clearpage

\subsubsection{Clustering performance for varying diagonal covariance matrices (See Figures \ref{fig:clustering_robustness_varying-fig4} and \ref{fig:clustering_robustness_varying-fig6})}\label{sec:varying_sigma}

%In Figures \ref{fig:clustering_robustness_varying-fig4} and \ref{fig:clustering_robustness_varying-fig6}, each matrix $\MCov_n$ is diagonal but the $\mydim$ values $\sqrteigval_1, \ldots, \sqrteigval_{\mydim}$ in $[\sqrteigvalmin,\sqrteigvalmax]$ according to a process described precisely in the corresponding sections. These $\mydim$ values determine unequivocally the covariance matrix $\MCov_n$ of $\Obs_n$ via \eqref{eq: eigen decompositions}. 

%By so proceeding, we generate {heteroscedastic} Gaussian clusters according to the model described in Section~\ref{sec:model}. 

In this section, for each $n \in \intn$, we assume a diagonal covariance matrix $$\MCov_n = \Diagmat_n = \diag(\sigma^2_{n,1}, \ldots, \sigma^2_{n,\mydim})$$ where, for each $\dimi \in \intd$, $\sigma_{n,\dimi}$ is randomly chosen in the interval $[\sigmamin,\sigmamax]$, with $\sigmamax = \sigmamin + 4$ and $\sigmamin$ ranging in $[1,40]$. The probability distribution followed by the values $\sigma_{n,\dimi}$ will be specified below and will be different for Figure \ref{fig:clustering_robustness_varying-fig4} and Figure \ref{fig:clustering_robustness_varying-fig6}.

When the exact values of the coefficients $\sigma_{n,\dimi}$ are unknown to \algo{}, it would be desirable to estimate them. Unfortunately, we have no theoretical answer to this estimation problem. We thus proceed empirically instead, by setting $\estMCovn = \est{\sigma}^2 \Id$ for all $n \in \intn$, where $\est{\sigma} \in  \Open{0}{\infty}$ is chosen as a unique estimate for all the unknown standard deviations $\sqrteigval_{n,\dimi}$. In this respect, the experimental results of Figures \ref{fig:clustering_robustness_varying-fig4} and \ref{fig:clustering_robustness_varying-fig6} below are obtained by using the following values for $\est{\sigma}$: 
\medskip \\
\noindent (i) $\est{\sigma} = \sigmamin$; 
\medskip \\
\noindent
(ii) $\est{\sigma} = \sigmamax$; 
\medskip \\
\noindent 
(iii) $\est{\sigma} = \sigmaMEAN = (\sigmamin + \sigmamax)/2$; 
\medskip \\
\noindent
(iv) $\est{\sigma} = \sigmaMLE$, where $\sigmaMLE$ is calculated according to Subsection \ref{sec:est_sigma} with same parameter values as those used to produce Figure~\ref{fig:clustering_robustness}. 
\medskip \\
The difference between Figures \ref{fig:clustering_robustness_varying-fig4} and \ref{fig:clustering_robustness_varying-fig6} lies in the way we generate data. 
\medskip \\
\indent
To obtain Figure~\ref{fig:clustering_robustness_varying-fig4}, data are generated according to Section \ref{sec: data generation} with, for any $n \in \intn$ and any $\dimi \in \intd$, $\sigma_{n,\dimi}$ randomly drawn with uniform distribution in $[\sigmamin,\sigmamax]$. In Figure~\ref{fig:clustering_robustness_varying-fig4}, we then display the results achieved by:
\begin{itemize}
    \item \algo{} with the different setups (i), (ii), (iii) and (iv) proposed above and respectively denoted by \algo{} ($\sigmamin^2 \Id$), \algo{} ($\sigmamax^2 \Id$), \algo{} ($\sigmaMEAN^2 \Id$) and \algo{} ($\sigmaMLE^2 \Id$);
    \item \algo{} ($\Delta$), that is, \algo{} with perfect knowledge of the matrices $\Diagmat_1, \ldots, \Diagmat_N$;
%    \item Mean-Shift ($\sigmaMEAN^2 \Id$): Mean-Shift with Wald kernel and $h = \sigmaMEAN$;
    \item K-means++ and X-means.
\end{itemize}
The Mean-Shift algorithm is not considered because, according to Remark \ref{rmk: MS = centrex}, Mean-Shift equipped with the Wald kernel and $\est{\sigma} \in \{ \sigmamin, \sigmamax, \sigmaMEAN , \sigmaMLE\}$ would yield approximately the same results as \algo{} ($\est{\sigma}^2 \Id$).
\medskip \\
%\revoir{We first notice that heteroscedasticity has induced a performance loss of \algo{} ($\Delta$) in comparison with Figures \ref{fig:CentreX_vs_MeanShift} and  \ref{fig:clustering_robustness}}.
\indent
We first observe that \algo{} ($\sigmaMEAN^2 \Id$) yields no significant performance loss with respect to \algo{} ($\Delta$). Therefore, setting $\estMCovn = \sigmaMEAN^2 \Id$ for all $n \in \intn$ can be seen as a satisfactory solution in the experimental setting considered here. This is not surprising since $\sigmaMEAN$ is the expectation of the uniform distribution according to which the standard deviations $\sigma_{n,i}$ are chosen in $[\sigmamin, \sigmamax]$ for all $n \in \intn$ and all $\dimi \in \intd$. 

Although the theoretical hypotheses of Section \ref{sec:est_sigma} to compute $\sigmaMLE$ are clearly not satisfied under the present experimental setting, \algo{} ($\sigmaMLE^2 \Id$) undergoes only a slight performance loss in comparison to Fig. \ref{fig:clustering_robustness}. The clusters generated by choosing the standard deviations $\sigma_{n,i}$ uniformly and independently in the same interval $[\sigmamin, \sigmamax]$ are likely not heteroscedastic enough for \algo{} ($\sigmaMLE^2 \Id$) to achieve performance measurements significantly different from those of Fig. \ref{fig:clustering_robustness}.

It must also be noticed that \algo~($\sigmaMLE^2 \Id$) and \algo~($\sigmamax^2 \Id$) perform similarly. In contrast, \algo{} ($\sigmamin^2 \Id$) performs very poorly, with a significant overestimation of the number of centroids. In the present experimental setting, overestimating noise standard deviations is far better than underestimating them.

%\revoir{We must also remark that \algo{} ($\sigmamax^2 \Id$) performs similarly as \algo{} ($\Delta$), \algo{} ($\sigmaMEAN^2 \Id$) and \algo{} ($\sigmaMLE^2 \Id$). In contrast,} 

%\algo{} ($\sigmamin^2 \Id$) performs poorly in terms of proportion of correct estimates $K$ and error rates. The average estimated values of $K$ show that \algo{} ($\sigmamin^2 \Id$) overestimates the number of centroids in comparison with K-means, \algo{} ($\Delta$) and \algo{} ($\sigmaMEAN^2 \Id$). This overestimation explains the higher Silhouette coefficients of \algo{} ($\sigmamin^2 \Id$) in Figure~\ref{fig:clustering_robustness_varying-fig4}(b), this behaviour being analogous to that of Mean-Shift (flat) in Figure \ref{fig:CentreX_vs_MeanShift}(b). \revoir{The results yielded by \algo{} ($\sigmamax^2 \Id$) and \algo{} ($\sigmamin^2 \Id$) indicate that, in the present experimental setting, overestimating noise standard deviations is far better than underestimating them and, above all, does not impact significantly the performance of \algo.}

\clearpage
\begin{figure*}[t]
\vspace{-1cm}
 		\subfloat[~]{\includegraphics[width=\myscale\linewidth]{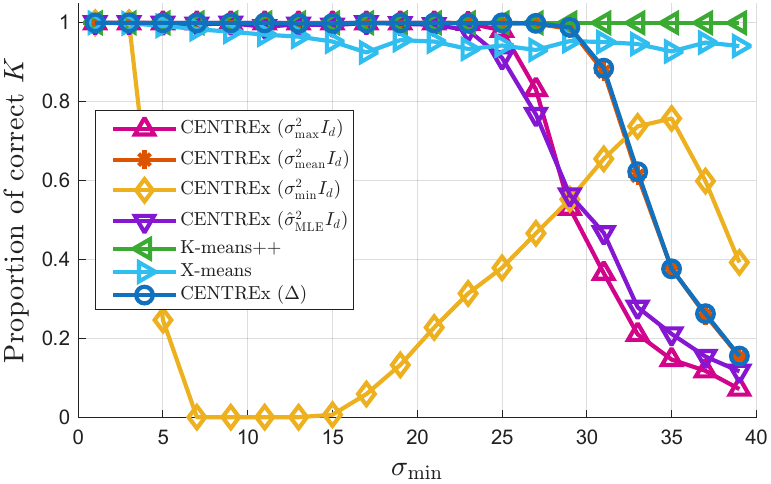}}
 		\subfloat[~]{\includegraphics[width=\myscale\linewidth]{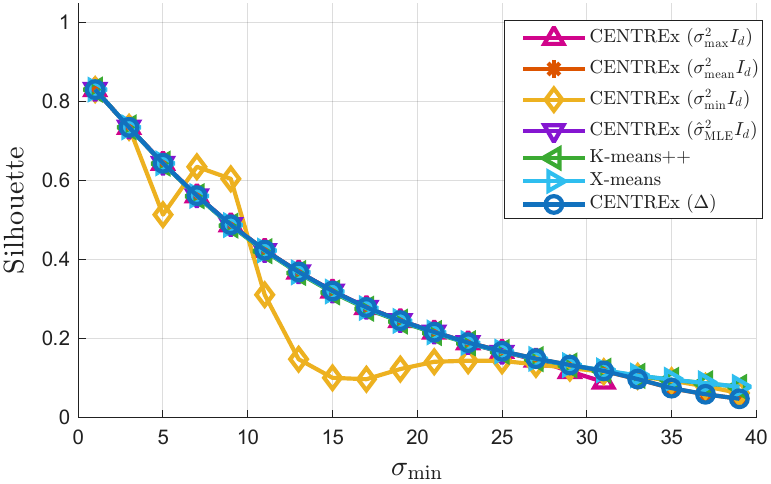}} \vspace{-0.35cm} \\
 		\subfloat[~]{\includegraphics[width=\myscale\linewidth]{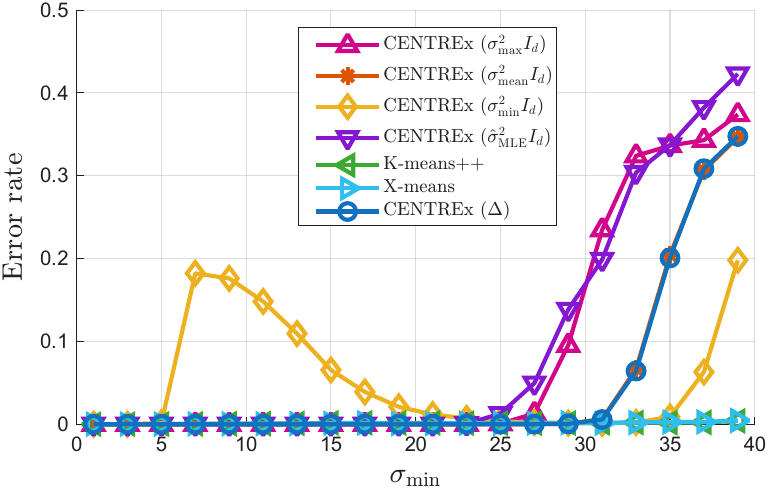}}
 		\subfloat[~]{\includegraphics[width=\myscale\linewidth]{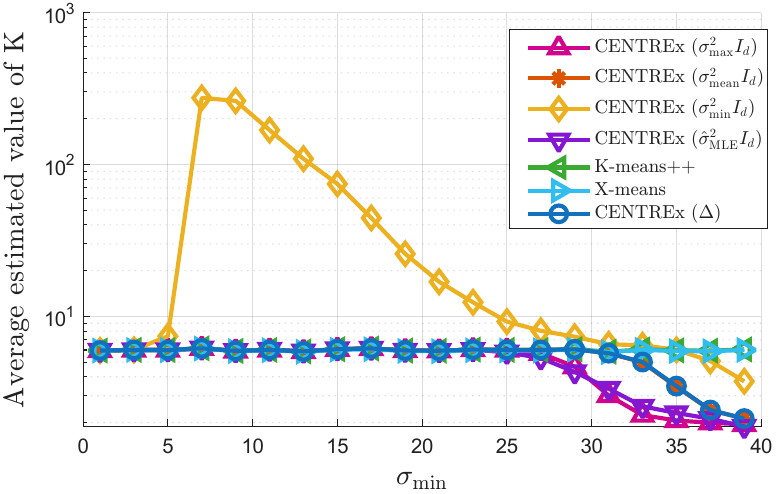}} \vspace{-0.35cm} \\
 		\subfloat[~]{ \includegraphics[width=\myscale\linewidth]{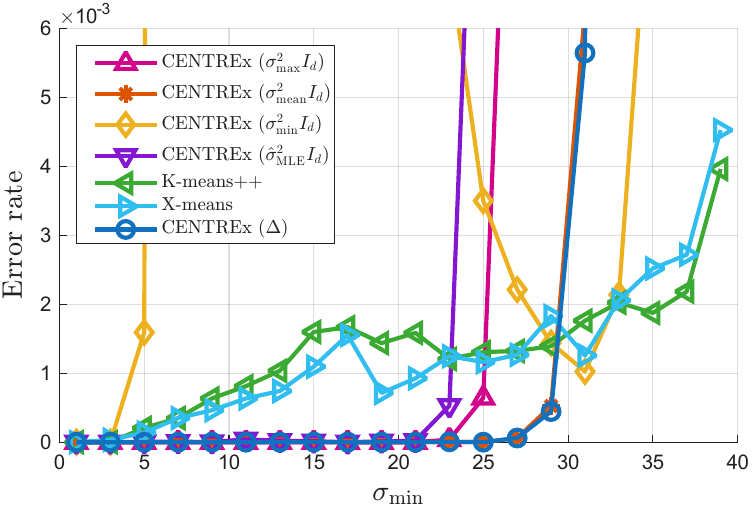}}	
        \vspace{-0.35cm}
 	\caption{\small Clustering performance of \algo{} vs. K-means++ and X-means. For each $\sigmamin \! \in \! [1,40]$ and for all $n \! \in \! \intn$, $\MCov_n$ is a diagonal matrix $\Diagmat_n$ with random diagonal elements uniformly distributed in $[\sigmamin, \sigmamax]$ with $\sigmamax = \sigmamin + 4$. \algo{} ($\Diagmat$) is \algo{} with perfect knowledge of $\Diagmat_1, \ldots, \Diagmat_N$ and, for different $\est{\sigma}$, \algo{} ($\est{\sigma}^2 \Id$) is \algo{} with $\Diagmat_n = \est{\sigma}^2 \Id$ for all $n \in \intn$: (a) Percentage of correctly retrieved number of clusters; (b) Silhouette; (c) Error rate; (d) Average estimated value of $K$; (e) zoom on error rate.}
 	\label{fig:clustering_robustness_varying-fig4}
 \end{figure*}

\clearpage

The heteroscedastic clusters generated to  obtain Figure \ref{fig:clustering_robustness_varying-fig4} does not make it possible to exhibit clear differences in performance between \algo{} ($\Delta$) and \algo~($\sigmaMEAN^2 \Id$), on the one hand, and between \algo~($\sigmaMLE$) and \algo~($\sigmamax^2 \Id$), on the other hand. Thence the following experimental setting to make appear such differences in our next Figure~\ref{fig:clustering_robustness_varying-fig6}.

\vspace{0.1cm}
Given the intervals $I_0 = [\sigmamin, \sigmamin+1]$ and $I_1 = [\sigmamax-1,\sigmamax]$, the results of Figure~\ref{fig:clustering_robustness_varying-fig6} are obtained by generating data as follows.
\begin{enumerate}
    \item Compute the number $K$ of centroids and the centroids themselves as described in Section \ref{sec: data generation}. %Let $\ctr_1, \ldots, \ctr_K$ be the $K$ centroids thus generated.
    \item For each $n \in \intn$, choose the mean of the measurement vector $\Obs_n$ among the $K$ available centroids, as specified in Section \ref{sec: data generation}. 
    \item For each $k \in \intk$, perform an independent and equiprobable Bernoulli trial $B_k$ valued in $\{0,1\}$. For all $n \in \intn$ such that $\Obs_n$ belongs to cluster $\#k$, set $$ J_n = \left \{ \begin{array}{lll} \! I_0 & \text{if $B_k = 0$} \vspace{0.1cm} \\ \! I_1 & \text{otherwise.~} \end{array}  \right.$$
    \item Compute the covariance matrix of each measurement vector $\Obs_n$ belonging to the $k^{\text{th}}$ cluster as $\MCov_n = \Diagmat_n = \diag(\sigma^2_{n,1}, \ldots, \sigma^2_{n,\mydim}),$ where the standard deviations $\sigma_{n,\dimi}$ are randomly and independently drawn with uniform distribution in the interval $J_n$.
\end{enumerate}
We thus create heteroscedastic clusters that are either small, in that the standard deviations of the coordinates of their elements are all in $I_0$, or large, in that the standard deviations of the coordinates of their elements are all in $I_1$. 

\vspace{0.1cm}
We then perform the clustering of the data via \algo, equipped with the Wald kernel, but tested with different choices for the matrices $\estMCovn$ when $n \in \intn$. In our benchmarking, we take into account that the intervals $J_1, \ldots, J_N$ basically provide a more precise location of the standard deviations $\sigma_{n,\dimi}$ than $[\sigmamin,\sigmamax]$, and that these intervals can be assumed to be known regardless of the Bernoulli trials from which they are generated. 
}

\vspace{0.1cm} 
Specifically, in Figure~\ref{fig:clustering_robustness_varying-fig6}, we assess:
\begin{itemize}
\item \algo{} ($\Diagmat$) with, for any $n \in \intn$, $\estMCovn = \Diagmat_n$;
\item \algo~($\est{\Diagmat}$) with, for any $n \in \intn$, $\estMCovn = \est{\Diagmat}_n = \diag(\est{\sigma}^2_{n,1}, \ldots, \est{\sigma}^2_{n,\mydim})$ where, for any $\dimi \in \intd$, $\est{\sigma}_{n,\dimi}$ is the middle point of the interval $J_n$ in which the value ${\sigma}_{n,\dimi}$ has been drawn;
\item \algo{} ($\est{\sigma}^2 \Id$) with, for any $n \! \in \! \intn$, $\estMCovn \! = \! \est{\sigma}^2 \! \Id$ and $\est{\sigma} \! \in \! \{ \sigmaMLE, \sigmaMEAN, \sigmamin, \sigmamax \}$ as in Figure \ref{fig:clustering_robustness_varying-fig4};
\item K-means++ and X-means.
\end{itemize}
We observe that \algo{} ($\est{\Diagmat}$) achieves the same performance as \algo{} (${\Diagmat}$). This is explained by the fact that both $I_0$ and $I_1$ are sufficiently small for their middle point to be good enough estimates of random variables uniformly distributed over each of them. Due to the heteroscedasticity of the measurement vectors, the other variants of \algo{} undergo a performance loss compared to the previous figures. Again, the hypotheses underlying the computation of $\sigmaMLE$ in Section \ref{sec:est_sigma} are clearly not satisfied under the present experimental setting where the measurement vectors used to generate the data are heteroscedastic.

\clearpage

\begin{figure*}[t]
\vspace{-0.5cm}
		\subfloat[~]{\includegraphics[width=\myscale\linewidth]{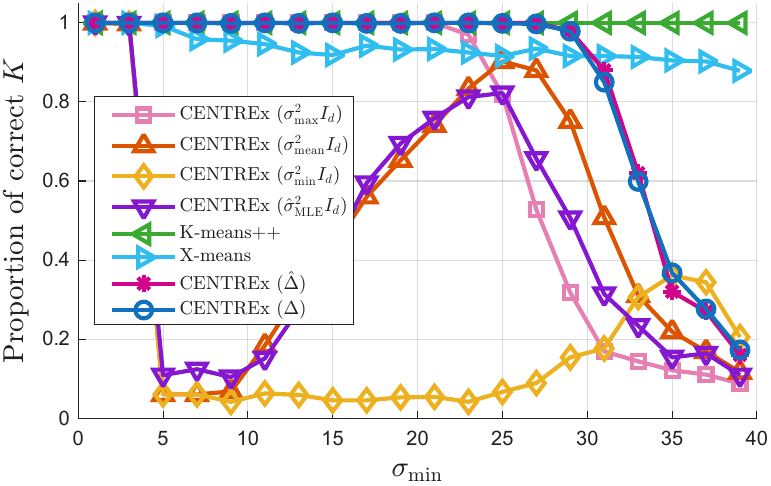}}
		\subfloat[~]{\includegraphics[width=\myscale\linewidth]{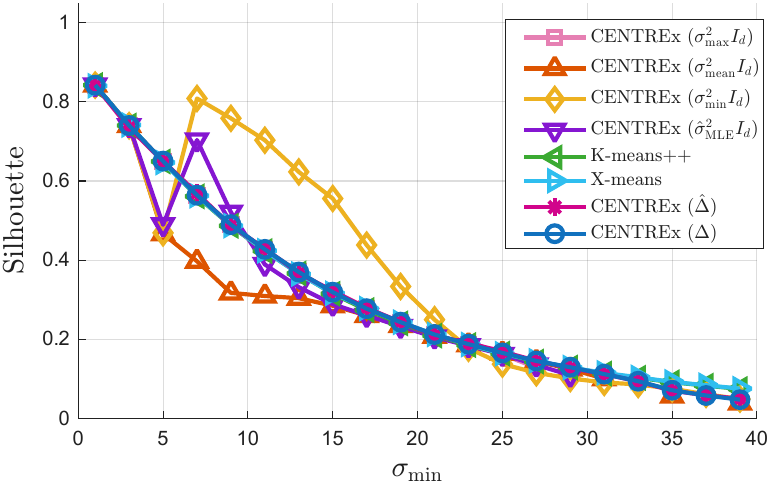}} \vspace{-0.4cm} \\
		\subfloat[~]{\includegraphics[width=\myscale\linewidth]{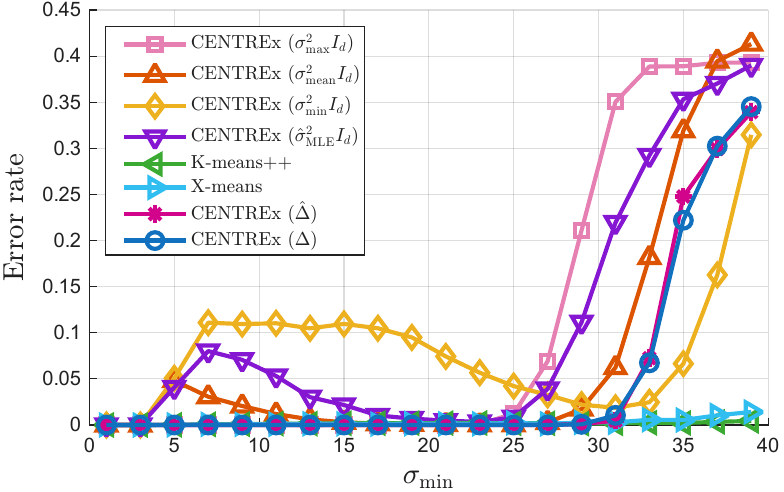}}
		\subfloat[~]{\includegraphics[width=\myscale\linewidth]{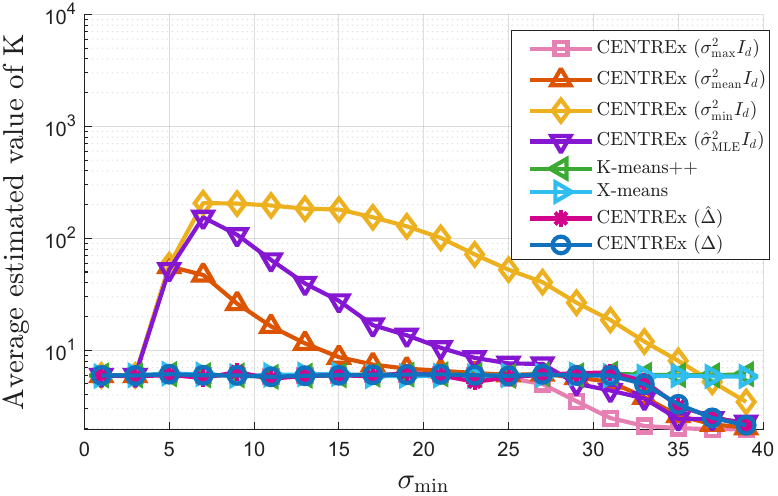}} \vspace{-0.4cm} \\
		\subfloat[~]{ \includegraphics[width=\myscale\linewidth]{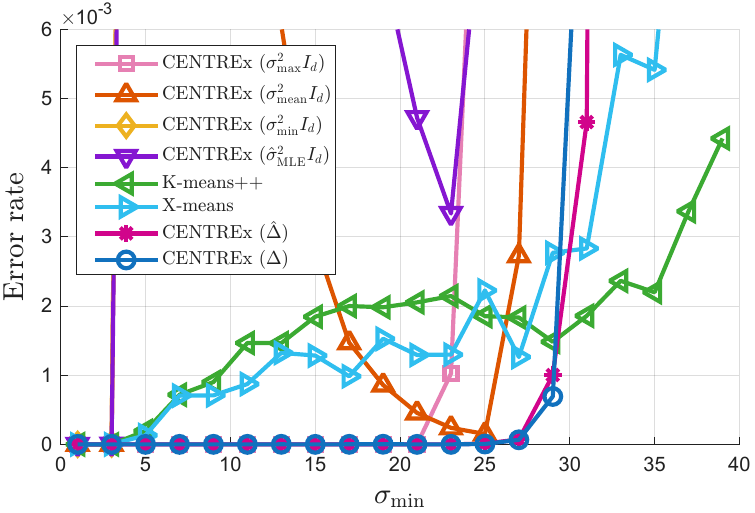}}
        \vspace{-0.4cm} 
	\caption{\small Performance of \algo{} vs. K-means++ and X-means. Given $\sigmamin \! \in \! [1,40]$ and cluster \#$k$, we equiprobably choose $I_0 \! = \! [\sigmamin, \sigmamin \! + \! 1]$ or $I_1 \! = \! [\sigmamax \! - \! 1,\sigmamax]$ with $\sigmamax \! = \! \sigmamin \! + \! 4$. The covariance matrix of each $\Obs_n$ in cluster \#$k$ is a diagonal matrix $\Diagmat_n$ with random diagonal elements uniformly distributed in the drawn interval. \algo{} ($\Diagmat$) is \algo{} with $\estMCovn \! = \! \Diagmat_n$ for all $n$, \algo{} ($\est{\Diagmat}$) is \algo{} with the $\estMCovn$ equal to estimates $\est{\Diagmat}_n$ of the $\Diagmat_n$ and, for different $\est{\sigma}$, \algo{} ($\est{\sigma}^2 \Id$) is \algo{} with all the $\estMCovn$ equal to $\est{\sigma}^2 \Id$ : (a) Percentage of correctly retrieved number of clusters; (b) Silhouette; (c) Error rate; (d) Average estimated value of $K$; (e) zoom on error rate.}
	\label{fig:clustering_robustness_varying-fig6}
\end{figure*}

\clearpage

\subsection{Clustering performance on standard data sets}

We also evaluate the performance of our algorithm on the two standard Machine Learning data sets Ruspini and IRIS. On both data sets, \algo{} is used under the assumption that the clusters have same covariance matrix equal to $\sigma^2 \Id$. It employs $\sigmaMLE$ according to Subsection~\ref{sec:est_sigma} to estimate $\sigma$, with parameter values $M \! = \! P = \! 75$ (resp. $M \! = \! P \! = \! 10$) in \eqref{eq: fsigma} for the Ruspini data set (resp. the Iris data set) and $\MCov \! = \! \Id$ again in \eqref{eq: final estimate of MCov in the MLE case} for the two data sets. We set $L=100$ maximum iterations for centroid estimation, and set $\epsilon_e = 10^{-3}$, $\epsilon_f = 1$, $\alpha = 10^{-3}$.

On the Ruspini data set, our algorithm correctly retrieves $K=4$ clusters, like most clustering algorithms. On the IRIS data set, it finds $K=2$ clusters while, in true, IRIS contains three classes. This is probably due to the IRIS data set geometry, since other clustering algorithms such as K-means with unknown $K$ also retrieve $K=2$ clusters on IRIS.

\begin{figure*}[t]
\begin{center}
  \subfloat[~]{ \includegraphics[width=\myscale\linewidth]{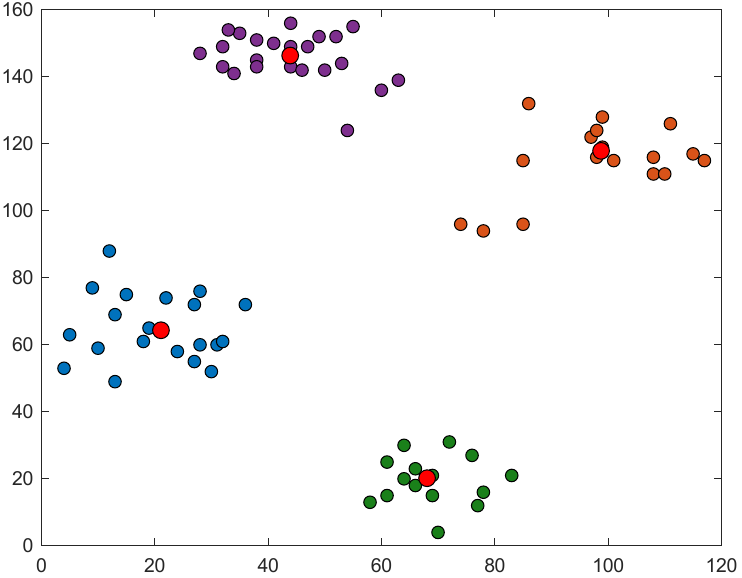}}
  \subfloat[~]{ \includegraphics[width=\myscale\linewidth]{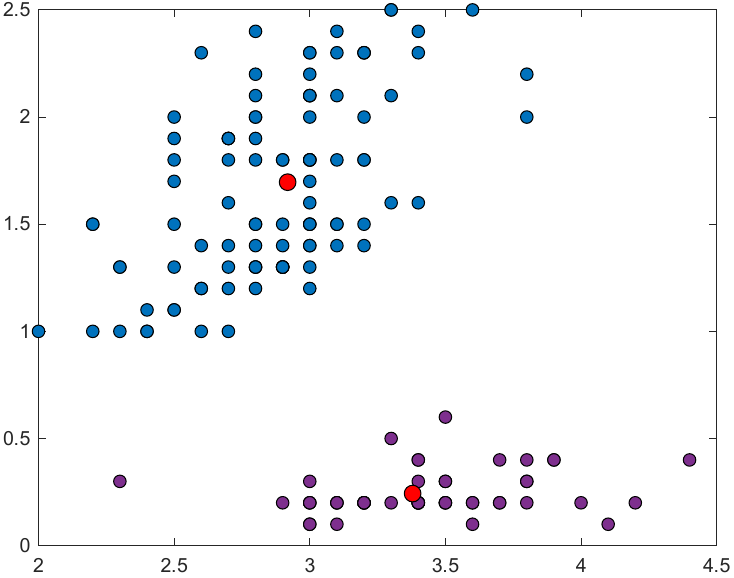}}
\end{center}
\caption{\small Clustering with \algo{} on (a) Ruspini, (b) IRIS ($2$nd and $4$th dimensions). The red dots are the estimated centroids.}
\label{fig:standard_bdd}
\end{figure*}

\subsection{{Brief summary and discussion}}
\label{subsec: summary}
Under our experimental set-up, the clustering algorithms with no prior knowledge of the number of clusters encounter difficulties to properly cluster data for $\sigmamin \geqslant 30$. Indeed, in our simulations, we set a minimum distance $\mu_{\min} = 200$ between any two centroids. With the Gaussian model, $99\%$ of the values taken by measurements of a specified cluster in a given direction are thus within $3\sigma$ of the cluster centroid. So, when $\sigma$ draws near $30$, $3\sigma \approx 90$ approaches $\mu_{\min}/2 = 100$, inducing that measurements from two clusters at a distance equal to $\mu_{\mymin}$ can mingle enough in certain directions to mislead clustering algorithms that have no prior knowledge of $K$.
\\
\indent
The simulation results presented above show that \algo{}, even when used with estimates of the measurement covariance matrices, is a relevant alternative to K-means++, Mean-Shift and X-means, when the number of clusters is unknown. These results suggest that a large difference between the numbers of centroids estimated by \algo{} and X-means respectively, means that the clusters are likely mixing too much to be separated without too many errors.
\\
\indent
In addition, considering the Wald kernel instead of the Gaussian kernel leads to significant gains in terms of clustering performance for both \algo{} and Mean-Shift. Finally, with a reduced complexity for \algo{} compared to Mean-Shift, when the covariance matrices of the measurements are all estimated by the same scaled-identity matrix, \algo{} and Mean-Shift achieve similar performance. All these experimental results emphasize the relevance of Theorem \ref{Theorem: fixed points} to estimate centroids by seeking the fixed-points of $\TheFuncbasic$, even in non-asymptotic cases.
\\
\indent
Like Mean-Shift, \algo{} and the underlying theory introduced in this paper are devised to handle Gaussian and sufficiently well-separated clusters. Although the experimental results above are promising regarding the ability of \algo{} to deal with estimates for the covariance matrices of the measurement vectors, the robustness of this algorithm should be further investigated to identify more precisely the relevant classes of estimators that could be used with it. The performance of \algo{} is also sensitive to the parameter $\epsilon_f$ used for the final fusion step required by Mean-Shift as well. Further studies on means to better control and mitigate the influence of this parameter are also required. % Figure 5 removed, replaced by Figure 6

\section{Conclusion and perspectives}\label{sec:conclusion}
In this paper, we have addressed the problem of clustering heteroscedastic Gaussian measurement vectors when the number $K$ of clusters is unknown. First, we have proposed a theoretical framework from which we have derived Theorem \ref{Theorem: fixed points} showing that the fixed-points of an extended and randomized version of the mean-shift function \citep{comaniciu2002mean} converge, in a probabilistic sense introduced in this paper, to the cluster centroids.
\\
\indent
From these theoretical results, we have derived a new clustering algorithm called \algo{}, which does not require any prior knowledge of $K$. \algo{} can be regarded as an extension of Mean-Shift with reduced complexity. We also introduced the Wald kernel, defined as the p-value of a Wald hypothesis test, in order to replace the Gaussian kernel usually considered in the literature. Extensive simulation results on synthetic data have been presented to get better insight into the behaviour and robustness of \algo{} in non-asymptotic situations. 
\\
\indent
Our approach opens several prospects to be addressed in future works. First, the properties of our extended and randomized mean-shift function $\TheFuncbasic$ could be further investigated to prove that the recursive routine used in our algorithm converges to its fixed-points. In this respect, it can be asked whether the theoretical frameworks established in \citet{ghassabeh2018modified} and \citet{yamasaki2019properties} can be combined with ours to yield a modified version of \algo{}, based on the Wald kernel and involving a convergence-guaranteed fixed-point search. On the other hand, because our theory and \algo{} algorithm extend the Mean-Shift theory and algorithm, the Mean-Shift variants in \citet{ghassabeh2018modified} and \citet{yamasaki2019properties} might in turn benefit from our theoretical results and the use of the Wald kernel. Finally, since our theoretical framework addresses the case of measurement vectors with possibly different covariance matrices, the theory and the resulting algorithm \algo{} provide us with a potential framework for distributed clustering over sensor networks, either in the centralized or the decentralized case.

\appendix

\section{Proof of Theorem \ref{Theorem: fixed points}}
\label{App: the proof}

This proof is based on many intermediate results, whose proofs are detailed as much as possible for the reader to follow the computations and arguments without undue effort. To alleviate its reading, we split the proof in {four} parts. {In Section \ref{subsec: prelim on esl}, we first state preliminary properties of the \esl~defined in Definition \ref{Definition: lim of limsup}. In Section \ref{subsec: the case of diagonal covariance matrices}, we establish $\myclubsuit$ in the particular case of any sequences $\MCov \! = \! (\MCov)_{n \in \Nset}$ and $\estMCov \! = \! (\estMCovn)_{n \in \Nset}$ of diagonal covariance matrices, that is, when $\Rmat = \Id$ in \eqref{eq: eigen decompositions}. From this particular case, we derive $\myclubsuit$ for non-diagonal covariance matrices in Section \ref{subsec:The case of any sequences of covariance matrices} and $\myspadesuit$ in Section \ref{subsec: proof of spade}.}

\subsection{{Preliminary properties of \esl}}
\label{subsec: prelim on esl}
{We begin with Proposition \ref{Proposition: Quanteur de lim of limsup} to express the $\esl$ in the form of a predicate. From this predicate, we derive useful properties that will be helpful to prove Theorem \ref{Theorem: fixed points}. To establish Proposition \ref{Proposition: Quanteur de lim of limsup}, we recall basic properties of a limsup in Lemma \ref{Lemma: limsup et quanteurs}}.
\begin{Lemma}
	\label{Lemma: limsup et quanteurs}
	Given any sequence $(x_n)_{n \in \Nset}$ of real numbers and any $\eta \in \Open{0}{\infty}$,
	\begin{onecol}
		\begin{subequations}
			\begin{empheq}[left={} \empheqlbrace]{align}
				& \mylimsup{N \to \infty} x_N < \eta \Rightarrow \big ( \exists N_0 \in \Nset, \forall N \in \Nset, N > N_0 \Rightarrow x_N < \eta \big ) \mycomma
				\label{eq: quanteur implication} \\
				& \big ( \exists N_0 \in \Nset, \forall N \in \Nset, N > N_0 \Rightarrow x_N < \eta \big ) \Rightarrow \mylimsup{N \to \infty} x_N \leqslant \eta \myfstop
				\label{eq: quanteur implication converse}
			\end{empheq}
		\end{subequations}
	\end{onecol}
\end{Lemma}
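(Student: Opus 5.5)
The plan is to argue directly from the definition of the limit superior, $\mylimsup{N \to \infty} x_N = \lim_{M \to \infty} \sup_{N > M} x_N = \inf_{M \in \Nset} \sup_{N > M} x_N$. This is valued in the extended real line $[-\infty,\infty]$. The key fact is that the sequence $s_M = \sup_{N > M} x_N$ is non-increasing in $M$, so its limit equals its infimum. Both implications then reduce to elementary manipulations of this infimum.

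\emph{First implication.} Assume $\mylimsup{N \to \infty} x_N < \eta$. Since $\inf_{M} s_M < \eta$, the definition of the infimum gives some $N_0 \in \Nset$ with $s_{N_0} < \eta$. This also covers the case where the limsup equals $-\infty$.
\begin{itemize}
\item For every $N > N_0$, we have $x_N \leqslant \sup_{N' > N_0} x_{N'} = s_{N_0} < \eta$.
\item This is exactly the right-hand side of the first implication.
\end{itemize}

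\emph{Second implication.} Assume there is $N_0$ such that $x_N < \eta$ for all $N > N_0$.
\begin{itemize}
\item Then $s_{N_0} = \sup_{N > N_0} x_N \leqslant \eta$, because $\eta$ is an upper bound of the set $\{x_N : N > N_0\}$.
\item Hence $\mylimsup{N \to \infty} x_N = \inf_M s_M \leqslant s_{N_0} \leqslant \eta$.
\item Only the non-strict inequality can be claimed. For example, $x_N = \eta - 1/N$ satisfies the hypothesis, yet its limsup equals $\eta$.
\end{itemize}

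There is no real obstacle in this argument. The one point needing care is to handle the extended-real values of the supremum and of the limsup, namely $s_M = +\infty$ or a limsup of $-\infty$, without dividing into cases. Using the infimum characterization of the limsup does this automatically, because the inequalities $s_{N_0} < \eta$ and $\inf_M s_M \leqslant s_{N_0}$ are valid in $[-\infty,\infty]$.
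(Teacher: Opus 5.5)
Your proof is correct and complete, including the extended-real cases and the remark that the second implication only yields the non-strict inequality. The paper gives no proof of this lemma and simply recalls it as a basic property of the limsup; your argument via the infimum of the non-increasing tail suprema is the standard justification it relies on.
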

We also remind the reader that, for any random variable $X \in \fnrv{\Omega}{\Rset}$ and any $\eta \in \RightOpen{0}{\infty}$ (see \citealt[p.~66]{rudin87}):
\begin{onecol}
	\begin{equation}
		\label{eq: elementary property of the infinity norm}
		\begin{array}{lll}
			\vert X \vert_\infty \leqslant \eta 
			& \Leftrightarrow & \vert X \vert \leqslant \eta \, \, \text{\big ($\Pbb$-a.s\big )} \vspace{0.1cm} \\		
			& \Leftrightarrow & \exists \, \Omega_0 \in \tribu, 
			\big ( \, \Pbb(\Omega_0) = 1 \wedge (\forall \omega \in \Omega_0, \vert X(\omega) \vert \leqslant \eta) \, \big ) \myfstop
		\end{array}
	\end{equation}
\end{onecol}

\begin{Proposition}
	\label{Proposition: Quanteur de lim of limsup}
	Given any parametric discrete random process $\Process{}: \aSet \times \Nset \to \rv$, any base $\Base$ in $\aSet$, any $\mylim \in \Rset^{\mydim}$ and any norm $\nu$ on $\Rset^\mydim$,
	\begin{equation}
		\label{eq: the ess sup lim of a discrete random process-v0}
        \limlimsupnu{\Base}{N}{\Process{\variable,N} - \mylim \, } = 0
        %\esssuplim{\Base}{N} \Process{\variable,N} = \mylim
	\end{equation} 
	if and only if,
	\begin{onecol}
		\begin{equation}
			\label{eq: limlimsup with quantifiers-0}
			\begin{array}{lll}
%				\hspace{-0.4cm}
				\forall \eta \in \Open{0}{\infty}, \exists B \in  \Base, \\ %\vspace{0.1cm} \\
                \hspace{1.5cm} \forall \variable \in \aSet, %\vspace{0.1cm} \\
				%\qquad 
				\variable \in B \Rightarrow \exists \, \Omega_0 \in \tribu, 
				\left \{
				\begin{array}{lll} 
					\! \! \Pbb(\Omega_0) = 1 \\ %\medskip \\ 
					\! \! \mywedge \\ %\medskip \\ 
					\! \! \forall \omega \in \Omega_0, \exists N_0 \in \Nset, \forall N \in \Nset, \vspace{0.1cm} \\ %\medskip \\
					\hspace{0.15cm} N > N_0 \Rightarrow \norm[\Process{\variable,N}(\omega) - \mylim] < \eta \myfstop
				\end{array}
				\right.
			\end{array}
			\hspace{-0.6cm}
		\end{equation}
	\end{onecol}
\end{Proposition}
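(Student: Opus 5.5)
The plan is to unfold the definition of the \esl~layer by layer: first the limit over the base $\Base$, then the essential supremum, then the limsup, translating each into quantifiers by means of \eqref{eq: def of a limit-any norm}, \eqref{eq: elementary property of the infinity norm} and Lemma~\ref{Lemma: limsup et quanteurs}. Throughout, write $X_{\variable}(\omega) = \mylimsup{N \to \infty} \nu(\Process{\variable,N}(\omega) - \mylim)$. This is a nonnegative random variable with values in $[0,\infty]$. Statement \eqref{eq: the ess sup lim of a discrete random process-v0} says that the real function $\variable \mapsto \vert X_\variable \vert_\infty$ has limit $0$ over $\Base$. By \eqref{eq: def of a limit-any norm}, applied to the absolute value on $\Rset$, this is equivalent to
\[
\forall \eta > 0,\ \exists B \in \Base,\ \forall \variable \in B,\ \vert X_\variable \vert_\infty < \eta .
\]
One point needs care: $\vert X_\variable\vert_\infty$ could be $+\infty$. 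The convention to adopt is that the limit exists only when these values are eventually finite, which is automatic once they are eventually below $\eta$.

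For the direct implication, fix $\eta>0$ and apply the display above with $\eta/2$. This gives $B \in \Base$ such that $\vert X_\variable\vert_\infty < \eta/2 \leqslant \eta/2$ for every $\variable \in B$. By \eqref{eq: elementary property of the infinity norm}, there is $\Omega_0 \in \tribu$ with $\Pbb(\Omega_0)=1$ and $X_\variable(\omega) \leqslant \eta/2 < \eta$ for all $\omega\in\Omega_0$. Then \eqref{eq: quanteur implication} supplies $N_0$ (depending on $\omega$) such that $\nu(\Process{\variable,N}(\omega)-\mylim) < \eta$ for $N>N_0$. This is exactly \eqref{eq: limlimsup with quantifiers-0}.

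For the converse, fix $\eta>0$ and apply \eqref{eq: limlimsup with quantifiers-0} with $\eta/2$. For $\variable\in B$ and $\omega\in\Omega_0$, \eqref{eq: quanteur implication converse} gives $X_\variable(\omega)\leqslant \eta/2$. Hence $X_\variable \leqslant \eta/2$ $\Pbb$-a.s., and so $\vert X_\variable\vert_\infty \leqslant \eta/2 < \eta$ by \eqref{eq: elementary property of the infinity norm}. The limit over $\Base$ is therefore $0$.

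The only delicate step I expect is the mismatch between the strict inequalities in Lemma~\ref{Lemma: limsup et quanteurs} and the non-strict one in \eqref{eq: elementary property of the infinity norm}. Halving $\eta$ in both directions absorbs it. A second point is that \eqref{eq: elementary property of the infinity norm} is stated for real random variables, while $X_\variable$ may take the value $\infty$. I would handle this by noting that the equivalence holds verbatim for $[0,\infty]$-valued measurable functions. I would also note that $X_\variable$ is measurable, being a limsup of measurable functions.

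The result is stated for an arbitrary norm $\nu$, but nothing in the argument depends on its choice. In particular it also covers the Euclidean case used in Definition~\ref{Definition: lim of limsup}.
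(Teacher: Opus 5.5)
Your unfolding (limit over $\Base$, then the a.s.\ characterization \eqref{eq: elementary property of the infinity norm} of $\vert\cdot\vert_\infty$, then Lemma~\ref{Lemma: limsup et quanteurs}, halving $\eta$ to handle strict versus non-strict inequalities) follows the paper's route. However, you have misread the statement. The predicate \eqref{eq: limlimsup with quantifiers-0} is written with the Euclidean norm $\norm$; only the left-hand side involves the arbitrary norm $\nu$. In the direct implication, your conclusion $\nu(\Process{\variable,N}(\omega)-\mylim) < \eta$ is therefore not ``exactly'' \eqref{eq: limlimsup with quantifiers-0}. In the converse, the predicate gives you $\norm[\Process{\variable,N}(\omega)-\mylim] < \eta/2$, which on its own says nothing about $\mylimsup{N\to\infty}\nu(\Process{\variable,N}(\omega)-\mylim)$. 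What you have proved is the diagonal statement, with $\nu$ on both sides. Your closing remark that the choice of $\nu$ plays no role is exactly where the gap is.

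The cross-norm form is the point of the proposition. It lets Corollary~\ref{corollary: esl with equivalent norms} conclude that the Euclidean and $\nu$-versions of the $\esl$ condition are equivalent, since both are equivalent to the same Euclidean predicate. That corollary in turn yields Proposition~\ref{prop: esl with linear transform}, which is applied with $\nu(\xvec)=\norm[\Amat\xvec]$ in the proof of $\myspadesuit$.

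The missing ingredient is the equivalence of norms on $\Rset^\mydim$, which the paper invokes explicitly in both directions. Fix $0<c\leqslant C$ with $c\norm[\yvec]\leqslant\nu(\yvec)\leqslant C\norm[\yvec]$ for all $\yvec\in\Rset^\mydim$.

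In the direct implication, apply the hypothesis with $c\eta/2$ instead of $\eta/2$. Then \eqref{eq: quanteur implication} with threshold $c\eta$ gives $\nu(\Process{\variable,N}(\omega)-\mylim)<c\eta$, hence $\norm[\Process{\variable,N}(\omega)-\mylim]<\eta$ for $N>N_0$.

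In the converse, apply \eqref{eq: limlimsup with quantifiers-0} with $\eta/(2C)$. Then $\nu(\Process{\variable,N}(\omega)-\mylim)<\eta/2$ for $N>N_0$, and \eqref{eq: quanteur implication converse} together with \eqref{eq: elementary property of the infinity norm} finish as in your argument.

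Since $\eta$ is universally quantified, these rescalings cost nothing, but without them the stated equivalence is not reached. The rest of your proposal is fine, including the care about $+\infty$ values and the measurability of the limsup.
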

\begin{proof} 
    Consider a base $\Base$ in $\aSet$, $\mylim \in \Rset^{\mydim}$ and a norm $\anothernormsh$ on $\Rset^\mydim$. Suppose first that $$\limlimsupnu{\Base}{N}{\Process{\variable,N} - \mylim \, } = 0 \myfstop$$ In \eqref{eq: def of a limit-any norm}, by equivalently replacing the strict inequality by a slack one and considering $f(\variable) = \big \vert \mylimsup{N \to \infty} \anothernorm{\Process{\variable,N} - \mylim} \, \big \vert_\infty$ and the absolute value as norm on $\Rset$, we obtain:
	\begin{onecol}
		\begin{equation}
			\forall \eta \in \Open{0}{\infty}, \exists B \in \Base, \forall \variable \in \aSet, \variable \in B \Rightarrow \big \vert \, \mylimsup{N \to \infty} \anothernorm{\Process{\variable,N} - \mylim} \, \big \vert_{\infty} \leqslant \eta/2 \myfstop
			\label{eq: limlimsup and quantifiers - 1}
			\nonumber
		\end{equation}
	\end{onecol}
	According to \eqref{eq: elementary property of the infinity norm}, we can rewrite the predicate above as
	\begin{onecol}
		\begin{equation}
			\forall \eta \in \Open{0}{\infty}, \exists B \in \Base, \forall \variable \in \aSet, \variable \in B \Rightarrow \mylimsup{N \to \infty} \anothernorm{\Process{\variable,N} - \mylim } \leqslant \eta/2 %< \eta 
            \, \, \text{\big ($\Pbb$-a.s\big )} \mycomma
			\label{eq: limlimsup and quantifiers - 2}
			\nonumber 
		\end{equation}
	\end{onecol}
	and thus, via \eqref{eq: elementary property of the infinity norm} again, it follows that
	\begin{onecol}
		\begin{equation}
        \begin{array}{lll}
			\forall \eta \in \Open{0}{\infty}, \exists B \in \Base, \vspace{0.1cm} \\
            \hspace{1.2cm} \forall \variable \in \aSet, \variable \in B \Rightarrow \exists \, \Omega_0 \in \tribu, 
			\left \{
			\begin{array}{lll}
				\! \! \Pbb(\Omega_0) = 1 \\ %\medskip \\
				\! \! \mywedge \\ %\medskip \\
				\! \! \forall \omega \in \Omega_0, \displaystyle \mylimsup{\, N \to \infty} \anothernorm{\Process{\variable,N}(\omega) - \mylim } \leqslant \eta/2 < \eta 
			\end{array}
			\right. \myfstop
        \end{array}
		\label{eq: limlimsup and quantifiers - 2nd intermediate iplication}
		\nonumber
	\end{equation}
	\end{onecol}
	We obtain \eqref{eq: limlimsup with quantifiers-0} by injecting \eqref{eq: quanteur implication} into the predicate above and taking into account that all norms on $\Rset^\mydim$ are equivalent.
	\medskip \\
	\indent
	Conversely, if \eqref{eq: limlimsup with quantifiers-0} holds true, we derive from \eqref{eq: quanteur implication converse} and the equivalence of norms on $\Rset^\mydim$ that
    \begin{onecol}
		\begin{equation}
			\label{eq: limlimsup with quantifiers-converse1}
			\forall \eta' \in \Open{0}{\infty}, \exists B \in \Base, \forall \variable \in \aSet,
			\variable \in B \, \Rightarrow \exists \, \Omega_0 \in \tribu, 
			\left \{		
			\begin{array}{lll} 
				\! \! \! \Pbb(\Omega_0) = 1 \\ %\medskip \\ 
				\! \! \! \mywedge \\ %\medskip \\ 
				\! \! \! \forall \omega \! \in \! \Omega_0, \displaystyle \mylimsup{N \to \infty} \anothernorm{\Process{\variable,N} \! - \! \mylim } \! \leqslant \! \eta'
			\end{array}
			\right. \myfstop
			\nonumber
		\end{equation}
	\end{onecol}
	This predicate is equivalent to
	\begin{onecol}
		\begin{equation}
			\label{eq: limlimsup with quantifiers-converse2}
			\forall \eta' \in \Open{0}{\infty}, \exists B \in \Base, \forall \variable \in \aSet, \variable \in B \Rightarrow \displaystyle \mylimsup{N \to \infty} \anothernorm{\Process{\variable,N} - \mylim} \leqslant \eta' \, \, \text{\big ($\Pbb$-a.s\big )} \mycomma
		\end{equation}
	\end{onecol}
	and hence, according to \eqref{eq: elementary property of the infinity norm}, to:
	\begin{onecol}
		\begin{equation}
			\forall \eta' \in \Open{0}{\infty}, \exists B \in \Base, \forall \variable \in \aSet,
			\variable \in B \Rightarrow \Big \vert \displaystyle \mylimsup{N \to \infty} \anothernorm{\Process{\variable,N} - \mylim} \Big \vert_{\infty} \leqslant \eta' \myfstop
			\label{eq: limlimsup and quantifiers - intermediate implication for the converse}
		\end{equation}
	\end{onecol}
    Consider any $\eta \in \Open{0}{\infty}$ and any $\eta' \in \Open{0}{\eta}$. It follows from \eqref{eq: limlimsup and quantifiers - intermediate implication for the converse} that
	\begin{equation}
		\nonumber
		\exists B \! \in \! \Base, \forall \variable \! \in \! \aSet, \variable \! \in \! B \Rightarrow \Big \vert \mylimsup{N \to \infty} \anothernorm{\Process{\variable,N} - \mylim} \Big \vert_{\infty} \leqslant \eta' < \eta \mycomma
	\end{equation}
    which concludes the proof.
\end{proof}

\begin{Corollary}
	\label{corollary: esl with equivalent norms}
	If $\Process{}: \aSet \times \Nset \to \rv$ is a parametric discrete random process then, for any base $\Base$ in $\aSet$, any $\mylim \in \Rset^{\mydim}$ and any norm $\nu$ on $\Rset^{\mydim}$,
	\begin{onecol}
		\begin{equation}
			\label{eq: esl of a discrete process in our case (b)}
			\esssuplim{\Base}{N}\Process{\variable,N} = \mylim
			\, \, \, \myiff \, \, \limlimsupnu{\Base}{N}{\Process{\variable,N} - \mylim \, } = 0 \myfstop
            \nonumber
		\end{equation}
	\end{onecol}
\end{Corollary}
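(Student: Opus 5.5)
The corollary asserts that the $\esl$ convergence of Definition \ref{Definition: lim of limsup}, stated with the Euclidean norm $\norm$, is unchanged when $\norm$ is replaced by any norm $\nu$ on $\Rset^\mydim$. The plan is to pass through the predicate \eqref{eq: limlimsup with quantifiers-0} of Proposition \ref{Proposition: Quanteur de lim of limsup}. That predicate is written with the Euclidean norm, and Proposition \ref{Proposition: Quanteur de lim of limsup} holds for an arbitrary norm $\nu$. So both sides of the corollary are equivalent to one and the same norm-free statement, and the corollary follows by transitivity.

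Concretely, first apply Proposition \ref{Proposition: Quanteur de lim of limsup} with $\nu = \norm$. This gives that $\esssuplim{\Base}{N}\Process{\variable,N} = \mylim$, which by Definition \ref{Definition: lim of limsup} means $\limlimsup{\Base}{N}{\Process{\variable,N} - \mylim} = 0$, is equivalent to \eqref{eq: limlimsup with quantifiers-0}. Next apply Proposition \ref{Proposition: Quanteur de lim of limsup} with the given norm $\nu$. This gives that $\limlimsupnu{\Base}{N}{\Process{\variable,N} - \mylim \, } = 0$ is equivalent to the same predicate \eqref{eq: limlimsup with quantifiers-0}. Chaining the two equivalences yields the claim.

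The only point to check is that the predicate \eqref{eq: limlimsup with quantifiers-0} really is independent of $\nu$. As written, it involves only $\norm$, and the norm-equivalence step is already absorbed inside the proof of Proposition \ref{Proposition: Quanteur de lim of limsup}. I therefore expect no real obstacle. If one preferred a self-contained argument, one could instead use constants $c, C > 0$ with $c\,\norm[\xvec] \leqslant \nu(\xvec) \leqslant C\,\norm[\xvec]$. These inequalities pass through the limsup, the essential supremum and the limit over $\Base$ by monotonicity, so each of the two limits vanishes exactly when the other does.
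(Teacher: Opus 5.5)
Your proposal is correct and matches the paper's proof. The paper also first uses Definition \ref{Definition: lim of limsup} and Proposition \ref{Proposition: Quanteur de lim of limsup} with the Euclidean norm to show that the left-hand side is equivalent to the $\nu$-independent predicate \eqref{eq: limlimsup with quantifiers-0}, and then applies the same proposition with the given $\nu$ to conclude by transitivity. Your fallback argument is also valid: push the constants $c\,\norm[\xvec] \leqslant \nu(\xvec) \leqslant C\,\norm[\xvec]$ through the limsup, the essential supremum and the limit over $\Base$ by monotonicity. That route would make the corollary independent of Proposition \ref{Proposition: Quanteur de lim of limsup}.
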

\begin{proof}
    By Definition \ref{Definition: lim of limsup}, $\esssuplim{\Base}{N} \Process{\variable,N} = \mylim$ is equivalent to
	\begin{equation}
		\label{eq: lim of limsup = 0}
		\limlimsup{\Base}{N}{\Process{\variable,N} - \mylim} = 0 \myfstop
	\end{equation}
    In the particular case of the Euclidean norm $\norm$, Proposition \ref{Proposition: Quanteur de lim of limsup} implies that \eqref{eq: lim of limsup = 0} is equivalent to \eqref{eq: limlimsup with quantifiers-0}. Thence the result by Proposition \ref{Proposition: Quanteur de lim of limsup}.
\end{proof}

Consider any $\Amat \in \GL$. We define a norm on $\Rset^\mydim$ by setting $\nu(\xvec) = \norm[\Amat \xvec \,]$ for each $\xvec \in \Rset^\mydim$. Therefore, we obtain the following statement from {Corollary} \ref{corollary: esl with equivalent norms}.

\begin{Proposition}
	\label{prop: esl with linear transform}
	If $\Process{}: \aSet \times \Nset \to \rv$ is a parametric discrete random process and $\Amat \in \GL$ then, for any base $\Base$ in $\aSet$ and any $\mylim \in \Rset^{\mydim}$,
	\begin{equation}
		%\label{eq: multidim ess sup lim}
		\esssuplim{\Base}{N} \Process{\variable,N} = \mylim \, \, \Leftrightarrow \, \, \esssuplim{\Base}{N} \Amat \Process{\variable,N} = \Amat \mylim \myfstop
		\nonumber
	\end{equation}
\end{Proposition}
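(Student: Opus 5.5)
The plan is to use Corollary~\ref{corollary: esl with equivalent norms} with the norm $\nu$ on $\Rset^\mydim$ defined by $\nu(\xvec) = \norm[\Amat \xvec \,]$. Since $\Amat \in \GL$, $\nu$ is indeed a norm: homogeneity and the triangle inequality come from linearity of $\Amat$, and definiteness comes from $\ker \Amat = \{0\}$.

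First, by Corollary~\ref{corollary: esl with equivalent norms} applied to $\Process{}$ with this $\nu$, the statement $\esssuplim{\Base}{N} \Process{\variable,N} = \mylim$ is equivalent to $\limlimsupnu{\Base}{N}{\Process{\variable,N} - \mylim \,} = 0$. Second, linearity gives, pointwise for every $\omega \in \Omega$, $\nu(\Process{\variable,N}(\omega) - \mylim) = \norm[\Amat \Process{\variable,N}(\omega) - \Amat \mylim]$. Hence the random variables $\mylimsup{N \to \infty} \nu(\Process{\variable,N} - \mylim)$ and $\mylimsup{N \to \infty} \norm[\Amat \Process{\variable,N} - \Amat\mylim]$ coincide identically on $\Omega$, and so do their essential suprema, for every $\variable \in \aSet$.

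Next, observe that $(\variable,N) \mapsto \Amat \Process{\variable,N}$ is itself a parametric discrete random process. Each $\Amat\Process{\variable,N}$ is measurable as the composition of a measurable map with a continuous linear map, so it belongs to $\rv$. Definition~\ref{Definition: lim of limsup}, applied to this process with limit $\Amat \mylim$, therefore states that $\esssuplim{\Base}{N} \Amat\Process{\variable,N} = \Amat\mylim$ iff $\limlimsup{\Base}{N}{\Amat\Process{\variable,N} - \Amat\mylim} = 0$. By the identity of the second step, this is exactly the condition $\limlimsupnu{\Base}{N}{\Process{\variable,N} - \mylim \,} = 0$. Chaining the equivalences yields the result.

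There is no real obstacle; the only point needing care is that the equality holds for every $\omega$, not merely almost surely. This makes the passage through the limsup and the essential supremum trivial, with no need to invoke the quantifier form of Proposition~\ref{Proposition: Quanteur de lim of limsup}.
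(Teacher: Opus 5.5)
Your proof is correct and follows the paper's own argument. The paper also introduces the norm $\nu(\xvec) = \|\Amat \xvec\|$, which is a norm because $\Amat \in \GL$, and deduces the proposition directly from Corollary~\ref{corollary: esl with equivalent norms}; your write-up additionally makes explicit the pointwise identity $\nu(\Process{\variable,N}(\omega) - \mylim) = \|\Amat \Process{\variable,N}(\omega) - \Amat\mylim\|$ and the fact that $\Amat\Process{\variable,N} \in \rv$, both of which the paper leaves implicit.
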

The next two results are also direct consequences of Proposition \ref{Proposition: Quanteur de lim of limsup} with $\anothernormsh = \norm$.
%\vspace{0.1cm}
\begin{Proposition}
	\label{prop: esl = esl for each d}
	If $\Process{}: \! \aSet \times \Nset \to \rv$ is a parametric discrete random process and if, for any $\dimi \in \intd$, $\Process{}_\dimi: \aSet \times \Nset \to \fnrv{\Omega}{\Rset}$ is defined by:
	\[
	\begin{array}{llll}
		\Process{}_\dimi: & \aSet \times \Nset & \to & \fnrv{\Omega}{\Rset} 
		\vspace{0.1cm} \\
		& (\variable,N) & \mapsto & (\Process{\variable,N})_{\dimi}
	\end{array}
	\]
	then, for any base $\Base$ in $\aSet$ and any $\mylim \in \Rset^{\mydim}$,
	\begin{onecol}
		\begin{equation}
			%\label{eq: multidim ess sup lim}
			\esssuplim{\Base}{N} \Process{\variable,N} = \mylim \, \, \Leftrightarrow \, \, \forall \dimi \in \intd, \esssuplim{\Base}{N} 	\Process{}_\dimi(\variable,N) = \mylim_{\dimi} \myfstop
			\nonumber
		\end{equation}
	\end{onecol}
\end{Proposition}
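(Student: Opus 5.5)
We prove Proposition~\ref{prop: esl = esl for each d} by applying Proposition~\ref{Proposition: Quanteur de lim of limsup} with $\anothernormsh = \norm$ to $\Process{}$, and with the absolute value on $\Rset$ (i.e.\ dimension one) to each coordinate process $\Process{}_\dimi$. Both sides of the equivalence then become predicates of the form \eqref{eq: limlimsup with quantifiers-0}, and we compare these predicates directly. The argument rests on two elementary inequalities valid for every $\xvec \in \Rset^\mydim$ and every $\dimi \in \intd$:
\[
\vert x_\dimi \vert \leqslant \norm[\xvec] \leqslant \sum_{\dimib=1}^{\mydim} \vert x_\dimib \vert .
\]

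\textbf{Direct implication.} Assume $\esssuplim{\Base}{N} \Process{\variable,N} = \mylim$. Fix $\dimi \in \intd$ and $\eta > 0$. The predicate \eqref{eq: limlimsup with quantifiers-0} for $\Process{}$ yields some $B \in \Base$. For any $\variable \in B$, it also yields $\Omega_0$ with $\Pbb(\Omega_0)=1$ and, for each $\omega \in \Omega_0$, an integer $N_0$ such that $\norm[\Process{\variable,N}(\omega) - \mylim] < \eta$ for all $N > N_0$. Since $\vert (\Process{\variable,N}(\omega))_\dimi - \mylim_\dimi \vert \leqslant \norm[\Process{\variable,N}(\omega) - \mylim]$, the same $B$, $\Omega_0$ and $N_0$ witness the one-dimensional predicate for $\Process{}_\dimi$. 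This gives the $\esl$ convergence of each coordinate.

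\textbf{Converse.} Assume each coordinate converges in $\esl$ and fix $\eta > 0$. For each $\dimi$, apply the one-dimensional predicate with $\eta/\mydim$. This gives a set $B_\dimi \in \Base$ and, for each $\variable \in B_\dimi$, a set $\Omega_{0,\dimi}$ of probability one with corresponding integers $N_{0,\dimi}(\omega)$. We then combine these witnesses as follows.
\begin{itemize}
\item By property (ii) of a base, applied iteratively over finitely many indices, there is $B \in \Base$ with $B \subset \bigcap_\dimi B_\dimi$.
\item For $\variable \in B$, set $\Omega_0 = \bigcap_\dimi \Omega_{0,\dimi}$. As a finite intersection of probability-one events, $\Omega_0 \in \tribu$ and $\Pbb(\Omega_0) = 1$.
\item For $\omega \in \Omega_0$, set $N_0 = \max_\dimi N_{0,\dimi}(\omega)$. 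For $N > N_0$,
\[
\norm[\Process{\variable,N}(\omega) - \mylim] \leqslant \sum_{\dimi=1}^{\mydim} \vert (\Process{\variable,N}(\omega))_\dimi - \mylim_\dimi \vert < \mydim \cdot \eta/\mydim = \eta .
\]
\end{itemize}
Hence \eqref{eq: limlimsup with quantifiers-0} holds for $\Process{}$, and Proposition~\ref{Proposition: Quanteur de lim of limsup} concludes.

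The only delicate point is this converse step. There, one must use the filter-base axiom to intersect the $B_\dimi$, and rely on the finiteness of $\mydim$ both to keep $\Pbb(\Omega_0)=1$ and to take a maximum of the $N_{0,\dimi}(\omega)$.
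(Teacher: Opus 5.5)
Your proof is correct and is essentially the paper's own argument: the paper only remarks that this proposition is a direct consequence of Proposition~\ref{Proposition: Quanteur de lim of limsup} with $\nu = \|\cdot\|$, applied implicitly also in dimension one to the coordinates. You supply the details the paper leaves out, namely the bounds $\vert x_i \vert \leqslant \|x\| \leqslant \sum_j \vert x_j \vert$, the filter-base axiom giving $B \subset \bigcap_i B_i$, and the finiteness of $\mathrm{d}$, which keeps $\Pbb(\Omega_0)=1$ and lets you take $N_0 = \max_i N_{0,i}(\omega)$.
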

\begin{Proposition}
	\label{prop: sums of ess sup lim}
	Consider any two parametric discrete random processes $\Processa{}, \Processb{}: \aSet \times \Nset \to \rv$.
	Given a base $\Base$ in $\aSet$, a pair $(\mylim,\mylim') \in \Rset^{\mydim} \times \Rset^{\mydim}$ and any ${(\alpha,\alpha')} \in \Rset$,
	\begin{equation}
		\begin{array}{lll}
			\hspace{-1cm} 
            \Big ( \, \big ( \, \esssuplim{\Base}{N} \Process{\variable,N} = \mylim \, \big ) \hspace{0.1cm} \mywedge \hspace{0.1cm} \big ( \, \esssuplim{\Base}{N} \Processb{\variable,N} = \mylim' \, \big ) \, \Big ) \vspace{0.2cm} \\
			\hspace{2cm} \Rightarrow \hspace{0.2cm} \esssuplim{\Base}{N} \left ( \alpha \Process{\variable,N} + {\alpha'} \Processb{\variable,N} \right ) = \alpha \mylim + {\alpha'} {\mylim'} \mycomma
		\end{array}
		\nonumber
	\end{equation}
	with
	$$\begin{array}{llll}
		\forall {\coeff} \in \Rset, {\coeff} \Process{\variable,N}: & \Omega & \to & \Rset^{\mydim} \\
		& \omega & \mapsto & {\coeff} \Process{\variable,N}(\omega) \myfstop
	\end{array}$$
\end{Proposition}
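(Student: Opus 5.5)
We prove the linearity statement of Proposition \ref{prop: sums of ess sup lim} directly from the quantified characterization of Proposition \ref{Proposition: Quanteur de lim of limsup}, applied with the Euclidean norm $\anothernormsh = \norm$.

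\textbf{Trivial case.} If $\alpha = \alpha' = 0$, the combined process is identically $0$ and the limit $0$ holds at once. Otherwise set $c = \vert \alpha \vert + \vert \alpha' \vert > 0$.

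\textbf{Choosing the base element.} Fix $\eta \in \Open{0}{\infty}$. Applying \eqref{eq: limlimsup with quantifiers-0} to $\Process{}$ with tolerance $\eta/c$ gives some $B_1 \in \Base$. Applying it to $\Processb{}$ with the same tolerance gives some $B_2 \in \Base$. By property (ii) of a base, there is $B \in \Base$ with $B \subset B_1 \cap B_2$.

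\textbf{Combining the almost-sure events.} Take any $\variable \in B$. Then $\variable \in B_1$ and $\variable \in B_2$, so we obtain events $\Omega_1, \Omega_2 \in \tribu$, each of probability one, as follows.
\begin{itemize}
\item For every $\omega \in \Omega_1$ there is $N_1(\omega)$ such that $\norm[\Process{\variable,N}(\omega) - \mylim] < \eta/c$ for all $N > N_1(\omega)$.
\item For every $\omega \in \Omega_2$ there is $N_2(\omega)$ such that $\norm[\Processb{\variable,N}(\omega) - \mylim'] < \eta/c$ for all $N > N_2(\omega)$.
\end{itemize}
Set $\Omega_0 = \Omega_1 \cap \Omega_2$, which belongs to $\tribu$ and still has probability one. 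For $\omega \in \Omega_0$, put $N_0 = \max(N_1(\omega), N_2(\omega))$. For $N > N_0$ the triangle inequality gives
\[
\norm[\alpha \Process{\variable,N}(\omega) + \alpha' \Processb{\variable,N}(\omega) - \alpha\mylim - \alpha'\mylim'] \leqslant \vert\alpha\vert \, \eta/c + \vert\alpha'\vert \, \eta/c = \eta .
\]

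\textbf{Getting the strict inequality.} The predicate \eqref{eq: limlimsup with quantifiers-0} requires a strict bound, while the display above yields only $\leqslant \eta$. To fix this, run the same argument with $\eta/(2c)$ in place of $\eta/c$. This gives a bound of $\eta/2 < \eta$. Thus \eqref{eq: limlimsup with quantifiers-0} holds for the process $\alpha\Process{} + \alpha'\Processb{}$ with limit $\alpha\mylim + \alpha'\mylim'$.

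\textbf{Conclusion.} Proposition \ref{Proposition: Quanteur de lim of limsup}, in its converse direction and together with Corollary \ref{corollary: esl with equivalent norms}, then gives the $\esl$ convergence of $\alpha \Process{\variable,N} + \alpha' \Processb{\variable,N}$ to $\alpha \mylim + \alpha' \mylim'$.

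The only step needing care is combining the two base elements and the two almost-sure events. This is handled by property (ii) of a base and by the fact that an intersection of two probability-one events has probability one. Everything else is the triangle inequality.
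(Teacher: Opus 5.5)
Your proof is correct and is the argument the paper has in mind: the paper simply records this proposition as a direct consequence of Proposition~\ref{Proposition: Quanteur de lim of limsup} with the Euclidean norm. You fill in the routine details the paper omits, namely intersecting the base elements via property (ii), intersecting the probability-one events, taking the larger of the two thresholds, and applying the triangle inequality. The halving of $\eta$ is not needed: when $c>0$, at least one of $\alpha,\alpha'$ is nonzero, so the triangle-inequality bound is already strictly below $\eta$.
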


The following corollary of Proposition \ref{Proposition: Quanteur de lim of limsup} and \eqref{eq: lim over B(Delta)} is direct.

\begin{Corollary}[of Proposition \ref{Proposition: Quanteur de lim of limsup}]
	\label{Corollary: Quanteur de lim of limsup pour notre pb}
	Given any $(\Ctr,\Ctr_0) \in \Rset^\mydim \times \Rset^\mydim$, any $\ctrind \in \intk$, any parametric discrete random process  $\Process{}: \Setbis{\Ctr_0} \times \Nset \to \rv$,
	\begin{equation}
		\label{eq: the ess sup lim of a discrete random process}
		\esssuplim{\mydist{\CtrMap} \to \infty}{N}\Process{\CtrMap,N} = \Ctr
		\nonumber
	\end{equation} 
	if and only if,
	\begin{equation}
		\label{eq: limlimsup with quantifiers}
		\begin{array}{lll}
			\hspace{-0.7cm}
			\forall \eta \in \Open{0}{\infty}, \exists \, \radius \in \Open{0}{\infty}, \forall \CtrMap \in \Setbis{\Ctr_0}, \vspace{0.1cm} \\
			\hspace{0.75cm} 
            \mydist{\CtrMap} > \radius  \Rightarrow \exists \, \Omega_0 \in \tribu, %\vspace{0.1cm} \\
			\left \{		
			\begin{array}{lll} 
				\! \! \Pbb(\Omega_0) = 1 
				\vspace{0.1cm} \\ %\medskip \\ 
				\! \! \mywedge 
				\vspace{0.1cm} \\ %\medskip \\ 
				\! \! \forall \omega \in \Omega_0, \exists N_0 \in \Nset, 
				\vspace{0.1cm} \\ %\medskip \\
				\hspace{0.2cm} 
                \forall N \in \Nset, N > N_0 \Rightarrow \norm[\Process{\CtrMap,N}(\omega) - \Ctr \, ] < \eta \myfstop
			\end{array}
			\right.
		\end{array}
        \hspace{-1cm}
	\end{equation}
\end{Corollary}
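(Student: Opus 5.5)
The plan is to specialise Proposition~\ref{Proposition: Quanteur de lim of limsup} to $\aSet = \Setbis{\Ctr_0}$, to the base $\Base = \GoodBase{\mydist{}}$ of \eqref{eq: the base}, to $\mylim = \Ctr$, and to the Euclidean norm $\anothernormsh = \norm$. The elements of the base then have the explicit form $\left(\mydist{}\right)^{-1}\big(\,\Open{r}{\infty}\,\big)$ with $r \in \RightOpen{0}{\infty}$. So the quantifier ``$\exists B \in \Base, \forall \CtrMap \in \Setbis{\Ctr_0}, \CtrMap \in B \Rightarrow \ldots$'' becomes ``$\exists r, \forall \CtrMap \in \Setbis{\Ctr_0}, \mydist{\CtrMap} > r \Rightarrow \ldots$''. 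This is exactly the translation already used to pass from \eqref{eq: lim over B(H)} to \eqref{eq: lim over B(Delta)}.

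First I check that $\GoodBase{\mydist{}}$ is a base. This requires $\left(\mydist{}\right)^{-1}\big(\,\Open{r}{\infty}\,\big) \neq \emptyset$ for every $r \geqslant 0$. To see it, fix $\CtrMap(\ctrind) = \Ctr_0$ and place the other $K-1$ centroids at distinct points $\Ctr_0 + t_k \ebm$ for a unit vector $\ebm$ and distinct $t_k > r$; the resulting map is injective and lies in $\Setbis{\Ctr_0}$. The nestedness condition (ii) holds because these sets decrease as $r$ increases. By Definition~\ref{Definition: lim of limsup} and \eqref{eq: esl of a discrete process in our case}, the left-hand side of the corollary is then exactly \eqref{eq: the ess sup lim of a discrete random process-v0} with these choices. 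Proposition~\ref{Proposition: Quanteur de lim of limsup} therefore makes it equivalent to \eqref{eq: limlimsup with quantifiers-0}.

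It remains to rewrite \eqref{eq: limlimsup with quantifiers-0} as \eqref{eq: limlimsup with quantifiers}. The only mismatch is that the base is indexed by $r \in \RightOpen{0}{\infty}$, whereas the corollary asks for $\radius \in \Open{0}{\infty}$. If the predicate holds with some $r \geqslant 0$, it also holds with $\radius = r+1 > 0$, because $\left(\mydist{}\right)^{-1}\big(\,\Open{r+1}{\infty}\,\big) \subset \left(\mydist{}\right)^{-1}\big(\,\Open{r}{\infty}\,\big)$ and the inner statement is universally quantified over $\CtrMap$ in that set. Conversely, any $\radius > 0$ is admissible as an index of the base. Substituting $B = \left(\mydist{}\right)^{-1}\big(\,\Open{\radius}{\infty}\,\big)$ and $\mylim = \Ctr$ then yields \eqref{eq: limlimsup with quantifiers} verbatim.

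The proof is essentially bookkeeping. The only points needing care are the non-emptiness of the base sets, which relies on $\Setbis{\Ctr_0}$ containing maps whose other centroids are arbitrarily far from $\Ctr_0$, and the harmless adjustment from $r \geqslant 0$ to $\radius > 0$.
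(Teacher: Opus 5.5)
Your proposal is correct and follows the paper's argument. The paper calls the corollary a direct consequence of Proposition~\ref{Proposition: Quanteur de lim of limsup} (with the Euclidean norm) combined with the rewriting \eqref{eq: lim over B(Delta)} of membership in $\GoodBase{\mydist{}}$ as the radius condition $\mydist{\CtrMap} > r$. The two points you check explicitly --- non-emptiness of the base sets (which tacitly needs $K \geqslant 2$ so that $\mydist{}$ is defined) and the passage from $r \geqslant 0$ to $\radius > 0$ --- are left implicit in the paper, and you handle both correctly.
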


\subsection{Proof of equivalence \texorpdfstring{$\myclubsuit$}{A} for diagonal positive-definite matrices}
\label{subsec: the case of diagonal covariance matrices}

\indent
From now on, consider a surjective sequence $\seq \! = \! (\seq_n)_{n \in \Nset}$ of integers valued in $\intk$, a function $w: \! \RightOpen{0}{\infty} \! \to \! \Open{0}{\infty}$ and two sequences $\MCov = (\MCov_n)_{n \in \Nset}$ and $\estMCov = (\estMCovn)_{n \in \Nset}$ of positive definite diagonal matrices satisfying \cref{Assumption: on alpha,Assumption: on w,Assumption: on C,Assumption: on lambda}. We set:
\begin{equation}
	\label{eq: deltamat = }	
	\forall n \in \Nset, \MCovn = \diag ( \, {\eigvalnone, \ldots, \eigvalndim} \, ) \myfstop
\end{equation}
We thus have:
\begin{equation}
	\label{eq: delta in interval}
	\forall n \in \Nset, \forall \dimi \in \intd, \sqrteigvalndimi \in [\sqrteigvalmin,\sqrteigvalmax] \myfstop
\end{equation}
Similarly, we put
\begin{equation}
	\forall n \in \Nset, \estMCovn = \diag (\,\esteigvalnone, \ldots, \esteigvalndim \, ) \myfstop
	\label{eq: estdeltamat = }
\end{equation}
and we have:
\begin{equation}
	\forall n \in \Nset, \forall \dimi \in \intd, \sqrtesteigvalndimi \in [\sqrtesteigvalmin,\sqrtesteigvalmax] \myfstop
	\label{eq: estdelta in estinterval }
\end{equation}
For any sequence $\Wvec = (\Wvec_n)_{n \in \Nset}$ of elements of $\rv$, any $N \in \Nset$, any $\Ctr \in \Rset^\mydim$ and any $\dimi \in \intd$, we set:
\begin{onecol}
	\begin{subequations}
		\label{eq: MyV and MyW-0}
		\begin{empheq}[left={} \empheqlbrace \hspace{-0.1cm} ]{align}
			& 
			\begin{array}{lll}
				\MyWTer(\Ctr,\Wvec) = \dfrac{1}{N} \displaystyle \sum_{n=1}^N w \big ( \Maha{\estMCovn}^2 (\Wvec_n - \Ctr) \big ) \sqrtesteigvalndimi^{\, -2} (\Wvec_n - \Ctr)_{\dimi} \mycomma
				\label{eq: MyV in Thm1-0} 
			\end{array}
			\\
			& 
			\begin{array}{lll}
				\MyVTer (\Ctr,\Wvec ) =
				\dfrac{1}{N} \displaystyle \sum_{n=1}^N w \big ( \Maha{\estMCovn}^2 (\Wvec_n - \Ctr) \big ) \sqrtesteigvalndimi^{\, -2} \myfstop
				\label{eq: MyW in Thm1-0}
			\end{array}
		\end{empheq}
	\end{subequations}
\end{onecol}
% \begin{twocol}
% 	\begin{subequations}
% 		\label{eq: MyV and MyW-0}
% 		\begin{empheq}[left={\hspace{-0.5cm}} \empheqlbrace \hspace{-0.2cm} ]{align}
% 			& 
% 			\begin{array}{lll}
% 				\MyWTer(\Ctr,\Wvec) 
% 				\vspace{0.1cm} \\
% 				\hspace{0.5cm}
% 				=  \dfrac{1}{N} \displaystyle \sum_{n=1}^N \hspace{-0.05cm} w \big ( \Maha{\estMCovn}^2 \hspace{-0.1cm} (\Wvec_n - \Ctr) \big ) \sqrtesteigvalndimi^{\, -2} (\Wvec_n - \Ctr)_{\dimi} 
% 				\label{eq: MyV in Thm1-0} 
% 			\end{array}
% 			\\
% 			& 
% 			\begin{array}{lll}
% 				\MyVTer (\Ctr,\Wvec ) =
% 				\dfrac{1}{N} \displaystyle \sum_{n=1}^N \hspace{-0.05cm} w \big ( \Maha{\estMCovn}^2 \hspace{-0.1cm} (\Wvec_n - \Ctr) \big ) \sqrtesteigvalndimi^{\, -2} 
% 				\hspace{-0.4cm}
% 				\label{eq: MyW in Thm1-0}
% 			\end{array}
% 		\end{empheq}
% 	\end{subequations}
% \end{twocol}
It follows from \eqref{eq: new h (a) - 0} \& \eqref{eq: MyV and MyW-0} that:
\begin{equation}
	\label{eq: new h (a) - 3}
	\TheFunc(\Ctr,\Wvec) - \Ctr = \left ( \dfrac{\MyWTer( \Ctr,\Wvec)}{\MyVTer( \Ctr,\Wvec)} \right )_{\! 1 \leqslant \dimi \leqslant \mydim} \myfstop
\end{equation}
According to \eqref{eq: esl centred}, when $N$ and $\mydist{\CtrMap}$ tend to $\infty$, the convergence in \esl~of $\TheFunc(\Ctr,\aObsSeq)$ to $\Ctr$ is equivalent to that of $\TheFunc(\Ctr,\aObsSeq) - \Ctr$ to $0$. Hence, \eqref{eq: new h (a) - 3} and Proposition \ref{prop: esl = esl for each d} show that, when $N$ and $\mydist{\CtrMap}$ tend to $\infty$, the convergence in \esl~of $\TheFunc(\Ctr,\aObsSeq)$ to $\Ctr$ boils down to that of each 
$\frac{\MyWTer( \Ctr,\aObsSeq)}{\MyVTer( \Ctr,\aObsSeq)}$ to $0$ for $1 \leqslant \dimi \leqslant \mydim$.
The proof thus articulates around the analyzis, given $\dimi \in \intd$, of $\MyWTer(\Ctr, \YObsSeqFull)$ and $\MyVTer( \Ctr, \YObsSeqFull )$, when $N$ and $\mydist{\CtrMap}$ tend to $\infty$. This analyzis is achieved through Lemmas \ref{step: approx a.s of V and W} to \ref{step: Expect of TNi}. These results are then combined to state Lemma \ref{step:ess.sup.lim tends to 0}. Corollary \ref{step:ess.sup.lim tends to 0 - corollary} of Lemma \ref{step:ess.sup.lim tends to 0} is then used to conclude the proof with the help of Lemma \ref{step:equivalence of limits} whose crucial role to achieve the proof can hardly be construed at this stage.

\vspace{0.1cm}
\indent
To proceed, we define
\begin{equation}
	\label{eq: indices of n in intn belonging to cluster k}
	\left \{
	\begin{array}{lll}
		\hspace{-0.1cm} \forall k \in \intk, \TheSet_{k} = \left \{ n \in \Nset : \seq_n = k) \right \} \mycomma \medskip \\
		\hspace{-0.1cm} \forall (k,N) \in \intk \times \NsetRestricted, \TheSet_{k,N} = \TheSet_{k} \cap \intn \myfstop
	\end{array}
	\right.
\end{equation}
and set also
%\vspace{-0.1cm}
\begin{equation}
	\hspace{-0.2cm} \forall (k,N) \! \in \! \intk \! \times \! \NsetRestricted, \thecoeff_{k,N} = \cardkN / N \geqslant \thecoeff_0 \mycomma
	\label{eq: inequality on alpha}
\end{equation}
%\begin{subequations}
%	\begin{empheq}[left={\forall (k,N) \in \intk \times \NsetRestricted, \,}\empheqlbrace]{align}
	%		& \, N_k = \cardkN \label{eq: N_k} \medskip \\
	%		& \, \thecoeff_{k,N} = {\cardkN}/{N} \geqslant \thecoeff_0 \label{eq: inequality on alpha}
	%	\end{empheq}	
%\end{subequations}
where the inequality in \eqref{eq: inequality on alpha} directly follows from \cref{Assumption: on alpha}.

\vspace{0.1cm}
\indent
Given any $\Ctr \in \Rset^{\mydim}$ and any sequence $\Wvec = (\Wvec_n)_{n \in \Nset}$ of $\mydim$-dimensional real random vectors, we also define, for any $k \in \intk$, any $N \in \NsetRestricted$ and any $\dimi \in \intd$,
\begin{onecol}
	\begin{subequations}
		\label{eq: S and T}
		\begin{empheq}[left={}\hspace{-0.2cm} \empheqlbrace]{align}
			& \MySBis(\Ctr,\Wvec) = \dfrac{1}{\cardkN} \displaystyle \sum_{n \in \TheSet_{k,N}} \hspace{-0.1cm} w \big ( \Maha{\estMCovn}^2(\Wvec_n - \Ctr) \big ) \, \sqrtesteigvalndimi^{ \, -2}
            \left ( \Wvec_n - \Ctr \right)_{\dimi} \mycomma
			\label{eq: S} \\
			& \MyTBis(\Ctr,\Wvec) = \dfrac{1}{\cardkN} \hspace{-0.25cm} \displaystyle \sum_{\quad n \in \TheSet_{k,N}} \hspace{-0.3cm} w \big ( \Maha{\estMCovn}^2(\Wvec_n - \Ctr) \big ) \, \sqrtesteigvalndimi^{\, -2} \myfstop
			\label{eq: T}
		\end{empheq}	
	\end{subequations}
\end{onecol}
% \begin{twocol}
% 	\begin{subequations}
% 		\label{eq: S and T}
% 		\begin{empheq}[left={}\hspace{-0.35cm} \empheqlbrace]{align}
% 			& 
% 			\begin{array}{lll}
% 				\hspace{-0.2cm}
% 				\MySBis(\Ctr, \Wvec) = \vspace{0.1cm} \\
% 				%				\hspace{0.1cm} 
% 				\dfrac{1}{\cardkN} \! \displaystyle \sum_{n \in \TheSet_{k,N}} \! \! \! \! w \big ( \Maha{\estMCovn}^2(\Wvec_n - \Ctr) \big ) \invsqrtesteigvalndimi \left ( \Wvec_n - \Ctr\right)_{\dimi}
% 			\end{array}
% 			\label{eq: S} \\
% 			& 
% 			\begin{array}{lll}
% 				%				\hspace{-0.2cm}
% 				\MyTBis(\Ctr,\Wvec) = \vspace{0.1cm} \\
% 				\hspace{0.1cm}
% 				\dfrac{1}{\cardkN} \! \displaystyle \sum_{n \in \TheSet_{k,N}} \! \! \! \!  w \big ( 	\Maha{\estMCovn}^2(\Wvec_n - \Ctr) \big ) \invsqrtesteigvalndimi 
% 			\end{array}
% 			\label{eq: T}
% 		\end{empheq}	
% 	\end{subequations}
% \end{twocol}
With this notation, we have:
\begin{subequations}
	\label{eq: MyV and MyW-4}
	\begin{empheq}[left={\forall \dimi \in \intd, } \empheqlbrace]{align}
		& \hspace{0.1cm} \MyWTer(\Ctr, \Wvec) = \displaystyle \sum_{k=1}^K \thecoeff_{k,N} \MySBis(\Ctr, \Wvec) \mycomma
		\label{eq: MyV and MyW-4a} \\
		& \hspace{0.1cm} \MyVTer(\Ctr, \Wvec) = \displaystyle \sum_{k=1}^K \thecoeff_{k,N} \MyTBis(\Ctr, \Wvec) \myfstop
		\label{eq: MyV and MyW-4b}
	\end{empheq}	
\end{subequations}

To ease the reading, we remind the reader that $\Set$, $\family$, and $\MVM$, are defined by Definitions \ref{def: sequence of centroids}, \ref{def: family F} and \ref{Definition: CtrMap}, respectively.

\begin{Lemma}
	\label{step: approx a.s of V and W}
	Given any $\YObsSeq \in \MVM$, any $\CtrMap \in \Set$, any $\Ctr \in \Rset^{\mydim}$ and any $\dimi \in \intd$, we have \text{\big ($\Pbb$-a.s\big )}:
	\begin{subequations}
		\begin{empheq}[left= {} \hspace{-0.45cm} \empheqlbrace]{align}
			& \displaystyle \lim_{N \to \infty} \left ( \MyWTer(\Ctr,\YObsSeqFull) - \Expect{\MyWTer(\Ctr,\YObsSeqFull)} \right ) = 0 \mycomma \nonumber \medskip \\
			& \displaystyle \lim_{N \to \infty} \left ( \MyVTer(\Ctr,\YObsSeqFull) - \Expect{\MyVTer(\Ctr,\YObsSeqFull)} \right ) = 0 \myfstop \nonumber
		\end{empheq}
	\end{subequations}
\end{Lemma}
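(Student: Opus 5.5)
Both limits follow from a strong law of large numbers for independent, non-identically distributed summands with uniformly bounded variances. I would use Kolmogorov's criterion: if $(X_n)$ are independent with $\sum_n \mathrm{Var}(X_n)/n^2<\infty$, then $\frac1N\sum_{n=1}^N (X_n-\Ebb[X_n])\to 0$ ($\Pbb$-a.s). One point needs care: Definition \ref{def: family F} does not say explicitly that the vectors $\Squel{\CtrMap}{n}$ are independent. I would read the model (as the paper implicitly does) as a sequence of independent Gaussian vectors and state this explicitly. The summands are then the independent random variables
\[
X_n = w\big(\Maha{\estMCovn}^2(\Squel{\CtrMap}{n}-\Ctr)\big)\,\sqrtesteigvalndimi^{\,-2}\,(\Squel{\CtrMap}{n}-\Ctr)_{\dimi}
\]
for $\MyWTer$, and the same expression without the last factor for $\MyVTer$. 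Each is a measurable function of $\Squel{\CtrMap}{n}$ alone, since $w$ is continuous by \cref{Assumption: on w}.

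\textbf{Step 1: the $\MyVTer$ term.} Since $w$ is non-increasing and positive, $0<w\leqslant w(0)$. By \eqref{eq: estdelta in estinterval }, $\sqrtesteigvalndimi^{\,-2}\leqslant \investeigvalmin$. So each summand satisfies $|X_n|\leqslant w(0)\investeigvalmin$, and its variance is bounded by $w(0)^2\sqrtesteigvalmin^{\,-4}$ uniformly in $n$. Hence $\sum_n \mathrm{Var}(X_n)/n^2<\infty$, and Kolmogorov's SLLN gives the second limit.

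\textbf{Step 2: the $\MyWTer$ term.} I would first write $x=\Wvec_n-\Ctr$ and note that
\[
(x)_\dimi^2 \leqslant \sqrtesteigvalmax^{\,2}\,\Maha{\estMCovn}^2(x).
\]
This holds because $\estMCovn$ is diagonal with entries bounded by $\esteigvalmax$. Let $t=\Maha{\estMCovn}^2(x)$. Then $|X_n|\leqslant \investeigvalmin\,\sqrtesteigvalmax\, w(t)\sqrt t$. The function $t\mapsto w(t)\sqrt t$ is continuous on $\RightOpen{0}{\infty}$. It also tends to $0$ at infinity, because $w(t)\sqrt t = t w(t)/\sqrt t$ for $t\geqslant 1$ and $tw(t)\to 0$ by \cref{Assumption: on w}. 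It is therefore bounded by some constant $M_w$. Consequently $|X_n|\leqslant \investeigvalmin\sqrtesteigvalmax M_w$ for every $n$, deterministically and independently of $\CtrMap$ and of the Gaussian laws. The variances are again uniformly bounded, and the same SLLN gives the first limit.

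The main obstacle is Step 2: $(\Wvec_n-\Ctr)_\dimi$ is unbounded, and its mean depends on $\CtrMap(\seq_n)$, which may be far away. The decay condition $tw(t)\to0$ in \cref{Assumption: on w} is exactly what tames this factor. If one prefers not to use it, a fallback is to bound the second moment directly using Gaussianity. That route is less clean because the bound depends on $\CtrMap$; since $\CtrMap$ is fixed in this lemma this is harmless, but the uniform bound above is better suited to later use. Integrability is automatic in both steps, so the expectations appearing in the statement are well defined.
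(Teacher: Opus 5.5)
Your proof is correct. It uses the same core tool as the paper: Kolmogorov's strong law for independent, non-identically distributed summands, which the paper cites alongside Cantelli's fourth-moment version. The difference is in how the argument is organized.

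The paper first writes $\MyWTer$ and $\MyVTer$ as combinations of the cluster-wise averages $\MySBis$ and $\MyTBis$, with weights $\thecoeff_{k,N}$ summing to $1$. It uses \cref{Assumption: on alpha} to guarantee that every cluster's cardinality diverges, applies the strong law to each cluster average separately, and then recombines. The recombination goes through the triangle inequality, an intersection of full-measure events, and a maximum over $K$ thresholds.

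You apply the strong law once, to the global average. This removes the bookkeeping and shows that the lemma does not actually need \cref{Assumption: on alpha}. The split gains nothing at this step, since the summands within a cluster are not identically distributed anyway (the $\MCov_n$ are heteroscedastic and $\sqrtesteigvalndimi$ depends on $n$).

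Your version also makes explicit two things the paper leaves implicit. First, the mutual independence of the vectors $\Squel{\CtrMap}{n}$: either strong law requires it, but Definition \ref{def: family F} never states it. Second, the moment hypothesis itself. You secure it with the deterministic bound $|(\xvec)_\dimi| \leqslant \sqrtesteigvalmax \, \Maha{\estMCovn}(\xvec)$, combined with the boundedness of $t \mapsto w(t)\sqrt{t}$ that follows from \cref{Assumption: on w}. The paper invokes the theorems without checking variances or fourth moments.
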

\begin{proof}
	For any $\YObsSeq \in \MVM$, any $\CtrMap \in \Set$, any $\Ctr \in \Rset^{\mydim}$, any $\dimi \in \intd$, any $\omega \in \Omega$ and any $N \in \Nset$, it follows from \eqref{eq: MyV and MyW-4} that:
	\begin{subequations}
		\label{eq: upper-bound on Wn - E(Wn) and Vn - E(Vn)}
		\begin{empheq}[left={} \empheqlbrace]{align}
			& \begin{array}{lll}
				\big \vert \, \MyWTer(\Ctr,\YObsSeqFull) (\omega) - \Expect{\, \MyWTer(\Ctr,\YObsSeqFull) \, } \big \vert  \vspace{0.1cm} \\
				\hspace{1cm}
				\leqslant \displaystyle \sum_{k = 1}^{K} \thecoeff_{k,N} \big \vert \, \MySBis(\Ctr,\YObsSeqFull)(\omega) - \Expect{\, \MySBis(\Ctr,\YObsSeqFull) \, } \big \vert \mycomma
				\vspace{0.2cm}
			\end{array}
			\label{eq: upper-bound on Wn - E(Wn)} \\
			& \begin{array}{lll}
				\hspace{-0.1cm} 
				\big \vert \, \MyVTer(\Ctr, \YObsSeqFull) (\omega) - \Expect{\, \MyVTer(\Ctr,\YObsSeqFull) \, } \big \vert \vspace{0.1cm} \\
				\hspace{1cm}
				\leqslant \displaystyle \sum_{k = 1}^{K} \thecoeff_{k,N} \big \vert \, \MyTBis(\Ctr,\YObsSeqFull)(\omega) - \Expect{\, \MyTBis(\Ctr,\YObsSeqFull) \, } \big \vert \mycomma
			\end{array}
			\label{eq: upper-bound on Vn - E(Vn)}
		\end{empheq}	
	\end{subequations}
	According to \citet[p.~388, Theorem 2, Kolmogorov]{Shiryaev2006} or \citet[p.~388, Theorem 1, Cantelli]{Shiryaev2006}, and since \eqref{eq: inequality on alpha} implies that $\cardkN$ grows to $\infty$ when $N$ grows itself to $\infty$, we can write that, for any $k \in \intk$,
	\begin{equation}
		\label{eq: Kolmogorov}
		\displaystyle \lim_{N \to \infty} \left ( \MySBis(\Ctr,\YObsSeqFull) - \Expect{\MySBis(\Ctr,\YObsSeqFull)} \right ) = 0 \, \, \text{\big ($\Pbb$-a.s\big )} \myfstop
		\nonumber
	\end{equation}
	Thus, for any $k \in \intk$, there exists $\Omega'_{k} \in \tribu$ such that $\Pbb(\Omega'_{k}) = 1$ and, for any $\omega \in \Omega'_{k}$,
	\begin{equation}
		\label{eq: Kolmogorov in extenso}
		\displaystyle \lim_{N \to \infty} \left ( \MySBis(\Ctr,\YObsSeqFull)(\omega) -\Expect{\MySBis(\Ctr,\YObsSeqFull)} \right ) = 0 \myfstop
	\end{equation}
	Since $\Pbb(\Omega') = 1$ with $\Omega' = \bigcap_{k=1}^K \Omega'_{k}$, the foregoing establishes the existence of $\Omega' \in \tribu$ such that $\Pbb(\Omega') = 1$ and such that \eqref{eq: Kolmogorov in extenso} holds true for any $k \in \intk$ and any $\omega \in \Omega'$.
	Hence, for any $k \in \intk$, any $\omega \in \Omega'$ and any $\truc \in \Open{0}{\infty}$, there exists $\intmin_{k,\omega,\truc} \in \Nset$ such that:
	\begin{onecol}
		\begin{equation}
			\forall N \in \Nset, N > \intmin_{k,\omega,\truc} 
			\Rightarrow \left \vert \MySBis(\Ctr,\YObsSeqFull)(\omega) - \Expect{\MySBis (\Ctr,\YObsSeqFull)} \right \vert < \truc \myfstop
			\label{eq: upper-bound on each term of the bound on Wn - E(Wn)-0}
			%		\nonumber
		\end{equation}
	\end{onecol}
	From \eqref{eq: upper-bound on Wn - E(Wn)}, \eqref{eq: upper-bound on each term of the bound on Wn - E(Wn)-0} and the definition of $\thecoeff_{k,N}$ given by \eqref{eq: inequality on alpha}, we conclude to the existence of $\Omega' \in \tribu$ with $\Pbb(\Omega') = 1$ such that, for any $\omega \in \Omega'$ and any $\truc \in \Open{0}{\infty}$, there exists $N'_{\omega,\truc} = \max \big \{ \intmin_{k,\omega,\truc}: k \in \intk \big \} \in \Nset$ for which we have
	\begin{onecol}
		\begin{equation}
			\forall N \in \Nset, N > N'_{\omega,\truc} \Rightarrow \left \vert \MyWTer(\Ctr,\YObsSeqFull) (\omega) - \Expect{\MyWTer(\Ctr,\YObsSeqFull)} \right \vert < \truc \myfstop
			\label{eq: 1st convergence}
			%\nonumber
		\end{equation}
	\end{onecol}
	By a similar reasoning left to the reader, we establish the existence of $\Omega'' \in \tribu$ such that $\Pbb(\Omega'') = 1$ and such that, for any $\omega \in \Omega''$ and any $\truc \in \Open{0}{\infty}$, there exists $N''_{\omega,\truc} \in \Nset$ for which we have
	\begin{onecol}
		\begin{equation}
			\forall N \in \Nset, N > N''_{\omega,\truc} \Rightarrow \left \vert \MyVTer(\Ctr,\YObsSeqFull) (\omega) - \Expect{\MyVTer(\Ctr,\YObsSeqFull)} \right \vert < \truc \myfstop
			\label{eq: 2nd convergence}
			%\nonumber
		\end{equation}
	\end{onecol}
	\eqref{eq: 1st convergence} and \eqref{eq: 2nd convergence} establish the existence of $\Omega_0 = \Omega' \cap \Omega'' \in \tribu$ such that $\Pbb(\Omega_0) =1$ and such that, for any $\omega \in \Omega_0$ and any $\truc \in \Open{0}{\infty}$, there exists $N_0 = \max \{ N'_{\omega,\truc}, N''_{\omega,\truc} \} \in \Nset$ such that
	\begin{onecol}
		\begin{equation}
			\forall N \in \Nset, N > N_0 \Rightarrow 
			\left \{ 
			\begin{array}{lll}
				\hspace{-0.1cm} \left \vert \, \MyWTer(\Ctr,\YObsSeqFull) (\omega) - \Expect{\MyWTer(\Ctr,\YObsSeqFull)} \, \right \vert < \truc \mycomma
				\medskip \\
				\hspace{-0.1cm} \left \vert \, \MyVTer(\Ctr,\YObsSeqFull) (\omega) - \Expect{\MyVTer(\Ctr,\YObsSeqFull)} \, \right \vert < \truc \myfstop
			\end{array}
			\right.
			\nonumber
		\end{equation}
	\end{onecol}
	Thence the result.
\end{proof}
\indent
For all $\YObsSeq \in \MVM$, $\CtrMap \in \Set$, $\Ctr \in \Rset^{\mydim}$ and $\dimi \in \intd$, the result above provides us with the behavior of $\MyWTer(\Ctr,\YObsSeqFull) - \Exp{\MyWTer(\Ctr,\YObsSeqFull)}$ and  $\MyVTer(\Ctr,\YObsSeqFull) - \Exp{\MyVTer(\Ctr,\YObsSeqFull)}$ when $N$ tends to $\infty$. In what follows, we address the behaviour of $\Exp{\MyWTer(\Ctr,\YObsSeqFull)}$ and $\Exp{\MyVTer(\Ctr,\YObsSeqFull)}$ when $\mydist{\CtrMap} \to \infty$.

\vspace{0.1cm}
\indent
As of now, given $\Ctr \in \Rset^{\mydim}$ and $\radius \in \RightOpen{0}{\infty}$, $\ball(\Ctr,\radius) = \{ \xvec \in \Rset^{\mydim}: \| \xvec-\Ctr \| \leqslant \radius \}$ denotes the (closed) ball centred at $\Ctr$ with radius $\radius$. In addition, $\ball(\Ctr,\radius)^c$ designates the complementary of $\ball(\Ctr,\radius)$ in $\Rset^{\mydim}$. To proceed, we need the following reminders and preliminary results.

\vspace{0.1cm}
\noindent
1) According to \eqref{eq: definition of the initial mahanorm}, the Mahalanobis norms $\Maha{\MCov_n}$ and $\Maha{\estMCovn}$, respectively associated with the diagonal matrices $\MCov_n$ and $\estMCovn$, are given by
\begin{subequations}
	\begin{empheq}[left={\forall n \in \Nset, \forall \xvec \in \Rset^\mydim,} \empheqlbrace]{align}
		& \Maha{\MCov_n}(\xvec) = \norm[\white \xvec \,] \mycomma
		\label{eq: Mahalanobis norm associated with MCov_n - diagonal case}
		\\
		& \Maha{\estMCovn}(\xvec) = \norm[\invsqrtestMCovn\xvec \,] \myfstop
		\label{eq: estimated Mahalanobis norm - relation with euclidean - diagonal case}
	\end{empheq}
\end{subequations}
Since we have set \eqref{eq: deltamat = } \& \eqref{eq: estdeltamat = }, we also have:
\begin{subequations}
	\begin{empheq}[left={\forall n \in \Nset, \forall \dimi \in \intd, \forall \xvec \in \Rset^\mydim,} \empheqlbrace]{align}
		& \big ( \MCov_n^{-1/2} \xvec \big )_{\dimi} = {\sqrteigval_{n,\dimi}^{-1}} \, x_i \mycomma
		\label{eq: coordinate of MCov_n(-1/2) x}
		\\
		& \big (\invsqrtestMCovn\xvec \big )_{\dimi} = \invsqrtesteigvalndimi \, x_i \myfstop
		\label{eq: coordinate of estMCov_n(-1/2) x}
	\end{empheq}
\end{subequations}

\vspace{0.1cm}
\noindent
2) We derive from \eqref{eq: property of Squel}, \eqref{eq: deltamat = } \& \eqref{eq: estdeltamat = } that:
\begin{onecol}
	\begin{equation}
		\label{eq: dist of Zn(xi)}
		\begin{array}{lll}
			\forall \, \YObsSeq \in \MVM, \forall \CtrMap \in \Set, \forall \Ctr \in \Rset^{\mydim}, \forall n \in \Nset, 
			\vspace{0.1cm} \\
			\hspace{1cm} 
			\invsqrtestMCovn(\SquelY{\CtrMap}{n} - \Ctr) \thicksim \Ncal\left (\invsqrtestMCovn(\CtrMap(\seq_n) - \Ctr), \diag \left ( {\dfrac{\eigval_{n,1}}{\esteigvalnone}, \ldots, \dfrac{\eigval_{n,\mydim}}{\esteigvalndim}} \right )
			\right ) \myfstop
		\end{array}
	\end{equation}
\end{onecol}
\noindent
3) It is also convenient to introduce the following definition.
\begin{Definition}
	\label{def: Ndiagcov}
	Given any interval $[{\lambdamin^2},{\lambdamax^2}] \subset \Open{0}{\infty}$, $\fnGdiag{{\lambdamin^2}}{{\lambdamax^2}}$ is the set of all Gaussian-distributed $\Xvec \in \rv$ whose covariance matrix is diagonal with diagonal elements in $[{\lambdamin^2},{\lambdamax^2}]$.
\end{Definition}

\vspace{0.1cm}
With this definition, it follows from \eqref{eq: delta in interval}, \eqref{eq: estdelta in estinterval } \& \eqref{eq: dist of Zn(xi)} that
\begin{equation}
	\label{eq: dist of Zn(xi)-prop}
	\begin{array}{lll}
		\forall \, \YObsSeq \in \MVM, \forall \CtrMap \in \Set,
		\forall \Ctr \in \Rset^{\mydim}, \forall n \in \Nset, 
		\vspace{0.1cm} \\ 
		\hspace{3cm} 
		\invsqrtestMCovn(\SquelY{\CtrMap}{n} - \Ctr) \in  \fnGdiag{{\lambdamin^2}}{{\lambdamax^2}} \mycomma
	\end{array}
\end{equation}
where we henceforth put
\begin{equation}
	\label{eq: sigmamin and sigmamax}
	{\lambdamin^2} = {\eigvalmin \, / \, \esteigvalmax} \quad \& \quad {\lambdamax^2} = {\eigvalmax \, / \, \esteigvalmin} \myfstop
\end{equation}
Note also that, according to \eqref{eq: coordinate of estMCov_n(-1/2) x}, we have the next inequality:
\begin{equation}
	\label{eq: inequality on the mean}
	\hspace{-0.2cm}
	\forall n \in \Nset, \forall \xvec \in \Rset^\mydim, {\invsqrtesteigvalmax} \| \xvec \| \leqslant \| \invsqrtestMCovn\xvec \| \leqslant {\invsqrtesteigvalmin} \| \xvec \| \myfstop
\end{equation}

The next lemma provides approximations of $\Exp{\MySBis(\Ctr,\YObsSeqFull)}$ and $\Exp{\MyTBis(\Ctr,\YObsSeqFull)}$ defined by \eqref{eq: S and T}, in function of $\| \Ctr - \CtrMap(k) \|$. This behavior is rewritten in Corollary \ref{step: of behaviors of E[S] and E[T]} in function of $\mydist{\CtrMap}$.

%\vspace{0.1cm}
\begin{Lemma}
	\label{lemma: behaviors of E[S] and E[T]}
	For any $\machin \in \Open{0}{\infty}$, there exists $\radius \in \Open{0}{\infty}$ such that:
	\begin{equation}
		\begin{array}{lll}
			\label{eq: behavior of E[SNk] and T[SNk]}
			\forall (k,N) \in \intk \times \NsetRestricted, \forall \dimi \in \intd, \forall \, \YObsSeq \in \MVM, \vspace{0.2cm} \\
			\hspace{2cm} 
			\forall \CtrMap \in \Set, \forall \Ctr \in \ball(\CtrMap(k) , \radius)^c, 
			\left \{
			\begin{array}{llll}
				\! \! \! \left \vert \, \Expect{\MySBis(\Ctr,\YObsSeqFull)} \, \right \vert < \machin
                \vspace{0.2cm} \\
                \! \! \mywedge
                \vspace{0.2cm} \\
				\! \! \! \Expect{\MyTBis(\Ctr,\YObsSeqFull)} < \machin \myfstop
			\end{array}
			\right.
		\end{array}
	\end{equation}
\end{Lemma}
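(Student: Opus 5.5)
The plan is to reduce both expectations to a single uniform estimate on Gaussian vectors from the class $\fnGdiag{\lambdamin^2}{\lambdamax^2}$ whose mean is far from the origin. The quantities involved are $\Expect{\MySBis(\Ctr,\YObsSeqFull)}$ and $\Expect{\MyTBis(\Ctr,\YObsSeqFull)}$. Since $\MySBis$ and $\MyTBis$ are averages over $n \in \TheSet_{k,N}$, it suffices to bound each summand's expectation uniformly in $n \in \TheSet_k$, $\YObsSeq$, $\CtrMap$ and $N$.

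Fix $n \in \TheSet_k$ and set $\Zbm_n = \invsqrtestMCovn(\SquelY{\CtrMap}{n} - \Ctr)$. By \eqref{eq: estimated Mahalanobis norm - relation with euclidean - diagonal case} and \eqref{eq: coordinate of estMCov_n(-1/2) x}, the summand of $\MySBis$ equals $w(\|\Zbm_n\|^2)\,\invsqrtesteigvalndimi\,(\Zbm_n)_\dimi$. The summand of $\MyTBis$ equals $w(\|\Zbm_n\|^2)\,\investeigvalndimi$. Using \eqref{eq: estdelta in estinterval }, we get
\[
\big\vert \Expect{\MySBis(\Ctr,\YObsSeqFull)} \big\vert \leqslant \invsqrtesteigvalmin \sup_{n \in \TheSet_k} \Expect{w(\|\Zbm_n\|^2)\,\|\Zbm_n\|},
\qquad
\Expect{\MyTBis(\Ctr,\YObsSeqFull)} \leqslant \investeigvalmin \sup_{n \in \TheSet_k} \Expect{w(\|\Zbm_n\|^2)} .
\]
By \eqref{eq: dist of Zn(xi)-prop}, each $\Zbm_n$ lies in $\fnGdiag{\lambdamin^2}{\lambdamax^2}$. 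By \eqref{eq: inequality on the mean}, since $\seq_n = k$, its mean $\mathbf m_n = \invsqrtestMCovn(\CtrMap(k)-\Ctr)$ satisfies $\|\mathbf m_n\| \geqslant \invsqrtesteigvalmax \|\CtrMap(k)-\Ctr\| > \invsqrtesteigvalmax\,\radius$ whenever $\Ctr \in \ball(\CtrMap(k),\radius)^c$.

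It therefore suffices to prove the following claim. With $g(t) = w(t^2)(1+t)$ for $t \geqslant 0$,
\[
\sup\Big\{ \Expect{g(\|\Xvec\|)} : \Xvec \in \fnGdiag{\lambdamin^2}{\lambdamax^2},\ \|\Expect{\Xvec}\| \geqslant \rho \Big\} \longrightarrow 0 \quad \text{as } \rho \to \infty .
\]
This bounds both required quantities at once. By \cref{Assumption: on w}, $w$ is continuous, positive and non-increasing, so $w \leqslant w(0)$. Moreover $t\,w(t^2) = t^2 w(t^2)/t \to 0$ as $t \to \infty$. Hence $g$ is continuous, tends to $0$ at infinity, and is therefore bounded by some $\|g\|_\infty < \infty$.

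To prove the claim, split on the event $\{\|\Xvec\| > \rho/2\}$:
\[
\Expect{g(\|\Xvec\|)} \leqslant \sup_{t > \rho/2} g(t) + \|g\|_\infty\, \Pbb\big[\|\Xvec\| \leqslant \rho/2\big].
\]
On $\{\|\Xvec\| \leqslant \rho/2\}$ we have $\|\Xvec - \Expect{\Xvec}\| \geqslant \rho/2$. Chebyshev's inequality then gives $\Pbb[\|\Xvec\| \leqslant \rho/2] \leqslant 4\,\Expect{\|\Xvec-\Expect{\Xvec}\|^2}/\rho^2 \leqslant 4\mydim\lambdamax^2/\rho^2$. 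Both terms vanish as $\rho \to \infty$, uniformly over the class. Given $\machin$, choose $\rho$ so that the supremum in the claim is less than $\machin \min(\sqrtesteigvalmin, \esteigvalmin)$. Then set $\radius = \sqrtesteigvalmax\,\rho$.

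The main obstacle is the uniformity: $\radius$ must not depend on $k$, $N$, $\dimi$, $\YObsSeq$, $\CtrMap$ or the individual covariances. The reduction to the single class $\fnGdiag{\lambdamin^2}{\lambdamax^2}$, together with the crude Chebyshev bound (which depends only on $\lambdamax$ and $\mydim$), is what delivers it. The remaining care is to check that $t\,w(t^2) \to 0$ follows from $\lim_{t\to\infty} t\,w(t) = 0$, which is the one-line computation above.
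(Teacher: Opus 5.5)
Your proof is correct, and it uses the same reduction as the paper. Each summand is rewritten through $\Zbm_n=\invsqrtestMCovn(\SquelY{\CtrMap}{n}-\Ctr)\in\fnGdiag{\lambdamin^2}{\lambdamax^2}$, whose mean has norm larger than $\radius/\sqrtesteigvalmax$. Coordinates are bounded by the norm, and one sets $\radius=\sqrtesteigvalmax\,\rho$.

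Where you differ is in the uniform estimate on the Gaussian class. The paper applies Lemma~\ref{Lemma: asymptotic behavior-2} separately to $t\mapsto t\,w(t^2)$ and $t\mapsto w(t^2)$. That lemma is proved by dominating the density of any $\Xvec$ in the class by $(\lambdamax/\lambdamin)^{\mydim}$ times the density of $\Ncal(\Ebb[\Xvec],\lambdamax^2\Id)$, and then invoking dominated convergence via Lemma~\ref{Lemma: asymptotic behavior-1}. You instead merge both functions into $g(t)=w(t^2)(1+t)$, split on $\{\|\Xvec\|>\rho/2\}$, and control the complementary event with Chebyshev's inequality.

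Your route is more elementary and quantitative. It gives the explicit bound $\sup_{t>\rho/2}g(t)+4\mydim\lambdamax^2\|g\|_\infty/\rho^2$ and avoids the factor $(\lambdamax/\lambdamin)^{\mydim}$. It also uses neither Gaussianity, nor diagonality, nor the lower variance bound $\lambdamin$: it needs only a uniform bound on the trace of the covariance. So this step of the proof would carry over unchanged to non-Gaussian noise with bounded second moments.

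The paper's argument is purely qualitative and relies on explicit Gaussian density comparisons. It covers the same class of functions $g$ as yours, namely bounded functions vanishing at infinity, so it gains no extra generality.
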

\begin{proof}
	Consider any $\machin \in \Open{0}{\infty}$. Since $w$ is continuous and $\displaystyle \lim\limits_{t \rightarrow \infty} {t w(t)} = 0$ by \cref{Assumption: on w}, the function $t \in \RightOpen{0}{\infty} \mapsto t w(t) \in \RightOpen{0}{\infty}$ is bounded. Therefore, $\displaystyle \lim\limits_{t \rightarrow \infty} w(t) = 0$ as well. We thus apply Lemma \ref{Lemma: asymptotic behavior-2} of Appendix \ref{App: Instrumental lemmas} to the functions $t \mapsto t {w(t^2)}$ and $t \mapsto {w(t^2)}$ to derive the existence of $\radius' \in \Open{0}{\infty}$ such that, for any $\Xvec \in \fnGdiag{{\lambdamin^2}}{{\lambdamax^2}}$ satisfying $\norm[ \, \Expect{\Xvec}] > \radius'$,
	\begin{equation}
		\left \{
		\begin{array}{lll}
			\! \! \left \| \Expect{w( \norm[\Xvec]^2) \Xvec} \right \|  \leqslant \Expect{w( \norm[\Xvec]^2) \| \Xvec \| } < \machin \sqrtesteigvalmin \mycomma
			\vspace{0.2cm} \\
			\! \! \Expect{w( \norm[\Xvec]^2)} < {\machin} \esteigvalmin \myfstop
		\end{array}
		\right.
		\label{eq: behavior of E[wc(Zkn)Zkn] and E[wc(Zkn)]_v1}
	\end{equation}
    where the slant inequality above is standard by Jensen's inequality applied to $\norm$.
    \vspace{0.1cm} \\
    \indent
	Set $\radius = \radius' \times \sqrtesteigvalmax$ and fix arbitrarily $ (k,N) \in \intk \times \NsetRestricted$, $\CtrMap \in \Set$, $\Ctr \in \ball(\CtrMap(k),\radius)^c$, $\YObsSeq \in \MVM$ and $n \in \TheSet_{k,N}$. We deduce from \eqref{eq: indices of n in intn belonging to cluster k}, \eqref{eq: dist of Zn(xi)} and the first inequality in \eqref{eq: inequality on the mean} that 
	\begin{equation}
		\label{eq: behavior of the transformed measurement vector}
		\hspace{-0.2cm} \big \| \Ebb [ \, \invsqrtestMCovn(\SquelY{\CtrMap}{n}-\Ctr) \, ] \big \| = \big \| \invsqrtestMCovn(\CtrMap(k)-\Ctr) \big \| > \radius'
	\end{equation}
	and thus, from \eqref{eq: dist of Zn(xi)-prop}, \eqref{eq: behavior of E[wc(Zkn)Zkn] and E[wc(Zkn)]_v1} \& \eqref{eq: behavior of the transformed measurement vector} that:
	\begin{onecol}
    \begin{subequations}
    	\begin{empheq}[left={} \empheqlbrace]{align}
		& \big \| \, \Expect{ \, w( \| \invsqrtestMCovn(\SquelY{\CtrMap}{n} - \Ctr) \|^2) \, \invsqrtestMCovn(\SquelY{\CtrMap}{n} - \Ctr) } \big \| < \machin \sqrtesteigvalmin \mycomma
		\label{eq: behavior of E[wc(Zkn)Zkn] and E[wc(Zkn)]_v2 (a)}
		\\
		& \Expect{ \, w( \| \invsqrtestMCovn(\SquelY{\CtrMap}{n} - \Ctr) \|^2) \, } <  {\machin} \esteigvalmin \myfstop
		\label{eq: behavior of E[wc(Zkn)Zkn] and E[wc(Zkn)]_v2 (b)}
    	\end{empheq}
        \label{eq: behavior of E[wc(Zkn)Zkn] and E[wc(Zkn)]_v2}
    \end{subequations}
	\end{onecol}
	% \begin{twocol}
	% 	\begin{equation}
	% 		\hspace{-0.1cm}
	% 		\left \{
	% 		\begin{array}{lll}
	% 			\hspace{-0.35cm} 
	% 			\begin{array}{lll}
	% 				\big \| \, \Expect{w( \| \invsqrtestMCovn(\SquelY{\CtrMap}{n} - \Ctr) \|^2) 
	% 					\vspace{0.1cm} \\ 
	% 					\hspace{2cm} 
	% 					\times (\invsqrtestMCovn(\SquelY{\CtrMap}{n} - \Ctr))} \big \| < \machin
	% 			\end{array}
	% 			\vspace{0.2cm} \\
	% 			\hspace{-0.1cm} \Expect{ w( \| \invsqrtestMCovn(\SquelY{\CtrMap}{n} \! - \! \Ctr) \|^2)} <  {\machin} \modif{\esteigvalmin} 
	% 		\end{array}
	% 		\right.
	% 		\label{eq: behavior of E[wc(Zkn)Zkn] and E[wc(Zkn)]_v2}
	% 		\vspace{0.1cm}	
	% 	\end{equation}
	% \end{twocol}
	For any $\dimi \in \intd$, \eqref{eq: coordinate of estMCov_n(-1/2) x} induces that
	$$\big ( \invsqrtestMCovn(\SquelY{\CtrMap}{n} - \Ctr) \big )_\dimi = \invsqrtesteigvalndimi
    \big ( \SquelY{\CtrMap}{n} - \Ctr \big )_\dimi$$
    and thereby that:
    \begin{align}
    \big \vert \, \Ebb \, \big [ \, w( \| & \invsqrtestMCovn (\SquelY{\CtrMap}{n} - \Ctr) \|^2) \, \invsqrtesteigvalndimi \big ( \SquelY{\CtrMap}{n} - \Ctr \big )_\dimi \, ] \, \big \vert \vspace{0.1cm} \nonumber \\
    & = 
    \big \vert \,  \Expect{\, w( \| \invsqrtestMCovn(\SquelY{\CtrMap}{n} - \Ctr) \|^2) \big ( \invsqrtestMCovn(\SquelY{\CtrMap}{n} - \Ctr) \big )_\dimi  \, } \, \big \vert\vspace{0.1cm} \nonumber \\
    & \leqslant  \big \| \, \Expect{ \, w( \| \invsqrtestMCovn(\SquelY{\CtrMap}{n} - \Ctr) \|^2) \, \invsqrtestMCovn(\SquelY{\CtrMap}{n} - \Ctr) } \big \| \myfstop
    \nonumber
    \end{align}
	It results from the inequality above, \eqref{eq: behavior of E[wc(Zkn)Zkn] and E[wc(Zkn)]_v2} and \eqref{eq: estdelta in estinterval } that, for all $\dimi \in \intd$,
	\begin{onecol}
		\begin{equation}
			\label{eq: behavior of E[S] and E[T]}
%			\forall \dimi \in \intd,
			\left \{
			\hspace{-0.1cm} 
			\begin{array}{lll}
            \begin{array}{lll}
				\Big \vert \, \Ebb \big [ \, w \big( \| \invsqrtestMCovn(\SquelY{\CtrMap}{n} - \Ctr) \|^2 \, \big ) \, \sqrtesteigvalndimi^{\, -2} \, \big ( \SquelY{\CtrMap}{n} - \Ctr \big )_\dimi \, \big ] \, \Big \vert \vspace{0.15cm} \\
                \hspace{2cm} = \sqrtesteigvalndimi^{\, -1} \big \vert \, \Ebb \big [ \, w \big( \| \invsqrtestMCovn(\SquelY{\CtrMap}{n} - \Ctr) \|^2 \, \big ) \, \sqrtesteigvalndimi^{\, -1} \, \big ( \SquelY{\CtrMap}{n} - \Ctr \big )_\dimi \, \big ] \, \big \vert
                \vspace{0.2cm} \\
                \hspace{2cm} \leqslant \sqrtesteigvalndimi^{\, -1} \, \big \| \, \Expect{ \, w( \| \invsqrtestMCovn(\SquelY{\CtrMap}{n} - \Ctr) \|^2) \, \invsqrtestMCovn(\SquelY{\CtrMap}{n} - \Ctr) \big \| }
                \vspace{0.2cm} \\
                \hspace{2cm} < \sqrtesteigvalndimi^{\, -1} \, \sqrtesteigvalmin \, \machin
                \vspace{0.2cm} \\
                \hspace{2cm} < \machin
                \end{array}
				\vspace{0.2cm} \\
				\Expect{ \, w \big ( \, \| \invsqrtestMCovn(\SquelY{\CtrMap}{n} \! - \! \Ctr) \|^2 \big ) \, } \sqrtesteigvalndimi^{\, -2} < \machin \myfstop
			\end{array}
			\right.
			\nonumber
		\end{equation}
	\end{onecol}
	% \begin{twocol}
	% 	\begin{equation}
	% 		\label{eq: behavior of E[S] and E[T]}
	% 		%	\forall n \! \in \! \TheSet_{k,N},
	% 		\left \{
	% 		\hspace{-0.1cm} 
	% 		\begin{array}{lll}
	% 			\Big \vert \, \Ebb \big [ w \big ( \, \| \invsqrtestMCovn(\SquelY{\CtrMap}{n}-\Ctr) \|^2 \, \big ) \vspace{0.1cm} \\
	% 			\hspace{2.5cm} 
	% 			\times \, \invsqrtestDiagmatndimi \big ( \SquelY{\CtrMap}{n} \! - \! \Ctr \big )_\dimi \, \big ] \, \Big \vert < \machin
	% 			\vspace{0.1cm} \\
	% 			\Expect{ w \big ( \, \| \invsqrtestMCovn(\SquelY{\CtrMap}{n} \! - \! \Ctr) \|^2 \big ) } \modif{\sqrtesteigvalndimi^{\, -2}} < \machin
	% 		\end{array}
	% 		\right.
	% 		\nonumber
	% 	\end{equation}
	% \end{twocol}
	Whence the result by \eqref{eq: estimated Mahalanobis norm - relation with euclidean - diagonal case} \& \eqref{eq: S and T}, where we take $\Wvec = \YObsSeqFull$ and $W_n = \SquelY{\CtrMap}{n}$.
\end{proof}

For what follows, we recall that $\Setbis{\Ctr_0}$, $\MVMbis$, and $\mydist{\CtrMap}$, are defined by Definitions \ref{definition: subset of injective maps}, \ref{definition: definition of the subset of interest}, and \ref{definition: the distance}, respectively.

\begin{Corollary}
	\label{step: of behaviors of E[S] and E[T]} For any $\machin \in \Open{0}{\infty}$ and any pair $(\Ctr_0,\Ctr) \in \Rset^{\mydim} \times \Rset^{\mydim}$, there exists $\radiusb \in \Open{0}{\infty}$ such that
	\begin{onecol}
		\begin{equation}
			\label{eq: corollary of behavior of E[SNk] and T[SNk]}
			\begin{array}{lll}
				\forall\dimi \in \intd, \forall \ctrind \in \intk, \forall \, \YObsSeq \in \MVMbis, \forall \CtrMap \in \Setbis{\Ctr_0}, 
                \vspace{0.25cm} \\
				\hspace{1cm}\mydist{\CtrMap} > \radiusb \Rightarrow
				\forall (k,N) \in \big ( \intk\setminus \{\ctrind\} \big ) \times \NsetRestricted, 
				\left \{
				\begin{array}{llll}
					\hspace{-0.2cm} \left \vert \Expect{\MySBis(\Ctr,\YObsSeqFull)} \right \vert < \machin \vspace{0.1cm} \\
                    \mywedge
                    \vspace{0.1cm} \\					\hspace{-0.15cm} \Expect{\MyTBis(\Ctr,\YObsSeqFull)} < \machin \myfstop
				\end{array}
				\right.
			\end{array}
			\nonumber
		\end{equation}
	\end{onecol}
	% \begin{twocol}
	% 	\begin{equation}
	% 		\label{eq: corollary of behavior of E[SNk] and T[SNk]}
	% 		\begin{array}{llll}
	% 			\hspace{-0.44cm} \forall \ctrind \in \intk, \forall \, \YObsSeq \in \MVMbis,  \vspace{0.1cm} \\
	% 			%			\hspace{0.4cm} 
	% 			\forall \CtrMap \in \Setbis{\Ctr_0}, \mydist{\CtrMap} > \radiusb \Rightarrow
	% 			\vspace{0.1cm} 
	% 			\\
	% 			\hspace{0.5cm} 
	% 			\forall (k,N) \! \in \! \big ( \intk \! \setminus \! \{\ctrind\} \big ) \! \times \! \NsetRestricted, 
	% 			\! \vspace{0.1cm} \\
	% 			\hspace{1cm} 
	% 			\left \{
	% 			\begin{array}{llll}
	% 				\hspace{-0.2cm} \left \vert \Expect{\MySBis(\Ctr,\YObsSeqFull)} \right \vert \! < \! \machin \vspace{0.1cm} \\
	% 				\hspace{-0.2cm} \Expect{\MyTBis(\Ctr,\YObsSeqFull)} \! < \! \machin
	% 			\end{array}
	% 			\right.
	% 		\end{array}
	% 		\nonumber
	% 	\end{equation}
	% \end{twocol}
\end{Corollary}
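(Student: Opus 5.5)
We will derive the corollary directly from Lemma \ref{lemma: behaviors of E[S] and E[T]} by a triangle-inequality argument. The key point is that Lemma \ref{lemma: behaviors of E[S] and E[T]} gives a radius $\radius$ that is uniform in $k$, $N$, $\dimi$, $\YObsSeq$ and $\CtrMap$. All we need is to guarantee $\Ctr \notin \ball(\CtrMap(k),\radius)$ for every $k \neq \ctrind$ once $\mydist{\CtrMap}$ is large enough.

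Fix $\machin \in \Open{0}{\infty}$ and $(\Ctr_0,\Ctr) \in \Rset^\mydim \times \Rset^\mydim$. Let $\radius$ be the radius provided by Lemma \ref{lemma: behaviors of E[S] and E[T]} for this $\machin$, and set
\[
\radiusb = \radius + \norm[\Ctr - \Ctr_0] + 1 .
\]
This is independent of $\ctrind$, $\dimi$, $\YObsSeq$ and $\CtrMap$. Now take any $\ctrind \in \intk$, any $\CtrMap \in \Setbis{\Ctr_0}$ with $\mydist{\CtrMap} > \radiusb$, and any $k \in \intk \setminus \{\ctrind\}$. By Definition \ref{definition: the distance}, $\norm[\CtrMap(k) - \Ctr_0] \geqslant \mydist{\CtrMap} > \radiusb$. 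The reverse triangle inequality then gives
\[
\norm[\Ctr - \CtrMap(k)] \geqslant \norm[\CtrMap(k) - \Ctr_0] - \norm[\Ctr - \Ctr_0] > \radius ,
\]
so $\Ctr \in \ball(\CtrMap(k),\radius)^c$.

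It remains to check that Lemma \ref{lemma: behaviors of E[S] and E[T]} applies to $\YObsSeq \in \MVMbis$ rather than $\MVM$. The lemma only uses the distribution of $\SquelY{\CtrMap}{n}$ for the given $\CtrMap$, namely \eqref{eq: property of Squel}, and hence \eqref{eq: dist of Zn(xi)} and \eqref{eq: dist of Zn(xi)-prop}. For $\YObsSeq \in \MVMbis$ and $\CtrMap \in \Setbis{\Ctr_0}$, condition \eqref{eq: CtrSeq circ Obs = id_{Setbis{Ctr_0}}} together with Definition \ref{def: family F} yields exactly the same distributional property. The argument in the lemma's proof therefore carries over verbatim. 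Alternatively, one can note that its conclusion is pointwise in the pair $(\YObsSeq(\CtrMap),\CtrMap)$.

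Applying the lemma with this $k$, every $N \in \NsetRestricted$ and every $\dimi$ gives both bounds, which concludes the proof. The only mildly delicate step is this domain mismatch between $\MVM$ and $\MVMbis$, which is handled by the remark just made.
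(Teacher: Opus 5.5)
Your proof is correct and follows the paper's argument: take the radius $\radius$ from Lemma~\ref{lemma: behaviors of E[S] and E[T]}, set $\radiusb = \radius + \|\Ctr-\Ctr_0\|$, and use the reverse triangle inequality to place $\Ctr$ outside $\ball(\CtrMap(k),\radius)$ for every $k\neq\ctrind$ (your extra $+1$ is harmless but unnecessary, since $\radius>0$ already makes $\radiusb$ positive). You also note that the lemma is stated for $\MVM$ but applied to $\YObsSeq\in\MVMbis$, and your justification closes this correctly: the lemma's argument uses only the law of $\Squel{\CtrMap}{n}$ for the fixed $\CtrMap$. The paper's own proof applies the lemma there without comment.
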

\begin{proof}
	Let $\machin \in \Open{0}{\infty}$. Lemma \ref{lemma: behaviors of E[S] and E[T]} implies the existence of $\radius \in \Open{0}{\infty}$ such that \eqref{eq: behavior of E[SNk] and T[SNk]} holds true. Given any $(\Ctr_0,\Ctr) \! \in \! \Rset^{\mydim} \times \Rset^{\mydim}$, set $\radiusb = \radius + \| \Ctr-\Ctr_0 \|$. Consider arbitrary $\ctrind \in \intk$, $\YObsSeq \in \MVMbis$ and $\CtrMap \in \Setbis{\Ctr_0}$. If $\mydist{\CtrMap} > \radiusb$ then, for any $k \in \intk \setminus \{\ctrind\}$, $\norm[\CtrMap(k) - \Ctr_0] > \radiusb$, which implies that $\norm[\CtrMap(k) - \Ctr] \geqslant \norm[\CtrMap(k) - \Ctr_0] - \norm[\Ctr-\Ctr_0] > r$. Thence the result as a consequence of \eqref{eq: behavior of E[SNk] and T[SNk]}.
\end{proof}
\indent
For $\Ctr \in \Rset^{\mydim}$, $N \in \NsetRestricted$, $\YObsSeq \in \MVMbis$ and $\CtrMap \in \Setbis{\Ctr_0}$, it follows from \eqref{eq: MyV and MyW-4} that
\begin{equation}
	\left \{ 
	\begin{array}{lll} 
		\! \! \Exp{\MyWTer(\Ctr,\YObsSeqFull)} = \sum_{k = 1}^{K} \thecoeff_{k,N} \, \Exp{\MySBis(\Ctr,\YObsSeqFull)} \mycomma
		\vspace{0.1cm} \\ 
		\! \! \Exp{\MyVTer(\Ctr,\YObsSeqFull)} = \sum_{k = 1}^{K} \thecoeff_{k,N} \, \Exp{\MyTBis(\Ctr,\YObsSeqFull)} \myfstop
	\end{array}
	\right.
	\nonumber
\end{equation}
{According to Lemma \ref{step: approx a.s of V and W} and Corollary \ref{step: of behaviors of E[S] and E[T]}, the equalities above} suggest that, when $N$ and $\mydist{\CtrMap}$ are both large enough, $\MyWTer(\Ctr,\YObsSeqFull)$ and $\MyVTer(\Ctr,\YObsSeqFull)$ can be approximated by $\thecoeff_{\ctrind,N} \Exp{\Sbm_{\ctrind,N,i}(\Ctr,\YObsSeqFull)}$ and $\thecoeff_{\ctrind,N} \Exp{T_{\ctrind,N,i}(\Ctr,\YObsSeqFull)}$, respectively. This heuristic is formalized by Corollary \ref{step:ess.sup.lim tends to 0 - corollary} below, according to which 
$$\dfrac{\MyWTer(\Ctr,\YObsSeqFull) - \thecoeff_{\ctrind,N} \Expect{S_{\ctrind,N,i}(\Ctr,\YObsSeqFull)}}{\MyVTer(\Ctr,\YObsSeqFull)}$$ vanishes when $N$ and $\mydist{\CtrMap}$ tend to $\infty$. To establish Lemma \ref{step:ess.sup.lim tends to 0}, the next result will be instrumental. In this lemma and the rest of the proof, we set
\begin{equation}
	\label{eq: the expect}
	\begin{array}{lll}
		\theexpect(\radius) = \dfrac{1}{(2 \pi)^{\mydim/2} {\lambdamax^\mydim}} \displaystyle \int_{\Rset^\mydim} w \left ( \left ( \norm[\xvec] + \radius \right )^2 \right ) e^{- \norm[\xvec]^2/2 {\lambdamin^2}} \, \dxvec \myfstop
	\end{array}
%	\nonumber
\end{equation}
Note that $\theexpect$ does not increase since $w$ does not.

\begin{Lemma}
	\label{step: Expect of TNi}
	Given $\YObsSeq \in \MVM$, $\CtrMap \in \Set$, $\radius \in \RightOpen{0}{\infty}$, $\Ctr \in \ball(\CtrMap(k),\radius)$, $(k,N) \in \intk \times \NsetRestricted$ and $\dimi \in \intd$,
	\begin{equation}
		\label{eq: lower bound on expectation of T}
		\Expect{\MyTBis(\Ctr,\YObsSeqFull)} \geqslant 
		\theexpect \Big ( \invsqrtesteigvalmin \radius \Big ) \, \investeigvalmax > 0 \myfstop
	\end{equation}
\end{Lemma}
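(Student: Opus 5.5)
The plan is to bound each summand of $\MyTBis$ from below separately and then average over $\TheSet_{k,N}$. By \eqref{eq: T} and \eqref{eq: estimated Mahalanobis norm - relation with euclidean - diagonal case}, we have
\[
\Expect{\MyTBis(\Ctr,\YObsSeqFull)} = \dfrac{1}{\cardkN}\sum_{n \in \TheSet_{k,N}} \Expect{ w\big(\| \invsqrtestMCovn(\SquelY{\CtrMap}{n}-\Ctr)\|^2\big)}\, \sqrtesteigvalndimi^{\, -2} \myfstop
\]
By \eqref{eq: estdelta in estinterval }, $\sqrtesteigvalndimi^{\, -2} \geqslant \investeigvalmax$. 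It therefore suffices to show that, for every $n \in \TheSet_{k,N}$,
\[
\Expect{ w\big(\| \invsqrtestMCovn(\SquelY{\CtrMap}{n}-\Ctr)\|^2\big)} \geqslant \theexpect\big(\invsqrtesteigvalmin \radius\big) \myfstop
\]
This expectation is finite because $w$ is non-increasing and hence bounded by $w(0)$.

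Fix $n \in \TheSet_{k,N}$ and set $\Xvec = \invsqrtestMCovn(\SquelY{\CtrMap}{n}-\Ctr)$. By \eqref{eq: dist of Zn(xi)} and \eqref{eq: dist of Zn(xi)-prop}, we can write $\Xvec = \mbf + \Zvec$ with the following properties.
\begin{itemize}
\item The mean is $\mbf = \invsqrtestMCovn(\CtrMap(k)-\Ctr)$. The second inequality of \eqref{eq: inequality on the mean} and $\Ctr \in \ball(\CtrMap(k),\radius)$ give $\|\mbf\| \leqslant \invsqrtesteigvalmin \radius =: \rho$.
\item $\Zvec$ is centred Gaussian with diagonal covariance $\diag(d_1,\ldots,d_\mydim)$, where $d_\dimi \in [\lambdamin^2,\lambdamax^2]$ by \eqref{eq: sigmamin and sigmamax}.
\end{itemize}
The triangle inequality gives $\|\Xvec\| \leqslant \|\Zvec\| + \rho$. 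Since $w$ is non-increasing (\cref{Assumption: on w}), this yields $w(\|\Xvec\|^2) \geqslant w\big((\|\Zvec\|+\rho)^2\big)$ pointwise.

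The only real step is comparing the density of $\Zvec$ with the one appearing in \eqref{eq: the expect}. For every $\zbm \in \Rset^\mydim$,
\[
\prod_{\dimi=1}^{\mydim} (2\pi d_\dimi)^{-1/2} e^{-z_\dimi^2/2d_\dimi} \geqslant (2\pi)^{-\mydim/2}\lambdamax^{-\mydim} e^{-\|\zbm\|^2/2\lambdamin^2} \mycomma
\]
because $d_\dimi \leqslant \lambdamax^2$ in the normalising constant and $d_\dimi \geqslant \lambdamin^2$ in the exponent. Integrating the non-negative function $w\big((\|\zbm\|+\rho)^2\big)$ against both sides gives
\[
\Expect{w(\|\Xvec\|^2)} \geqslant \Expect{w\big((\|\Zvec\|+\rho)^2\big)} \geqslant \theexpect(\rho) = \theexpect\big(\invsqrtesteigvalmin \radius\big) \myfstop
\]
Here $\rho$ is exactly $\invsqrtesteigvalmin\radius$ by construction, so the fact that $\theexpect$ is non-increasing is not even needed.

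It remains to check strict positivity. Since $w$ takes values in $\Open{0}{\infty}$, the integrand in \eqref{eq: the expect} is strictly positive everywhere, so $\theexpect(\rho) > 0$. As $\investeigvalmax > 0$, the product in \eqref{eq: lower bound on expectation of T} is strictly positive. Averaging the per-$n$ bound over $\TheSet_{k,N}$ then concludes. No step is a serious obstacle; the only care needed is to use the second inequality of \eqref{eq: inequality on the mean}, not the first, and to get the directions right in the density comparison.
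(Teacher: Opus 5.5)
Your proof is correct and follows essentially the same route as the paper: you bound each summand via the triangle inequality and the monotonicity of $w$, compare the Gaussian density with $(2\pi)^{-\mydim/2}\lambdamax^{-\mydim}e^{-\|\xvec\|^2/2\lambdamin^2}$, and then average over $\TheSet_{k,N}$ using $\sqrtesteigvalndimi^{\, -2} \geqslant \investeigvalmax$. The only cosmetic difference is that you insert the bound $\invsqrtesteigvalmin \radius$ directly into the triangle inequality, whereas the paper first obtains $\theexpect\big(\|\invsqrtestMCovn(\CtrMap(k)-\Ctr)\|\big)$ and then uses that $\theexpect$ is non-increasing.
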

\begin{proof} 
	Fix arbitrarily $\YObsSeq \in \MVM$, $\CtrMap \in \Set$, $(k,N) \in \intk \times \NsetRestricted$, $n \in \TheSet_{k,N}$. For any $\Ctr \in \Rset^{\mydim}$, it results from \eqref{eq: indices of n in intn belonging to cluster k} \& \eqref{eq: dist of Zn(xi)}, that
	\begin{equation}
		\begin{array}{lll}
			\Expect{w \big ( \norm[ \invsqrtestMCovn( \SquelY{\CtrMap}{n} - \Ctr)]^2 \big )} \vspace{0.2cm} \\
			\hspace{1cm} = \dfrac{1}{(2 \pi)^{\mydim/2} \, \prod_{\dimi = 1}^\mydim {\sqrteig_{n,\dimi}}} 
			\displaystyle \int w \big ( \, \norm[\xvec + \invsqrtestMCovn(\CtrMap(k) \! - \! \Ctr) ]^2 \, \big ) \, e^{-\frac{1}{2} \sum_{\dimi = 1}^\mydim  x_\dimi^2 / {\eig_{n,\dimi}}} \, \dxvec 
		\end{array}
		\label{eq: towards the lower bound for the expectation}
	\end{equation}
	with $\xvec = (x_1, \ldots, x_\mydim)^\transpose$ and ${\eig_{n,i} = \eigval_{n,\dimi}\, / \, \esteigvalndimi}$ for all $\dimi \in \intd$. Because of \eqref{eq: dist of Zn(xi)-prop}, we have $\prod_{\dimi = 1}^\mydim \sqrteig_{n,\dimi} \leqslant \lambdamax^{\mydim}$ and, for all $\xvec \in \Rset^{\mydim}$, $e^{-\frac{1}{2} \sum_{\dimi = 1}^\mydim  x_\dimi^2 / \eig_{n,\dimi}} \geqslant e^{-\frac{1}{2} \| \xvec \|^2 / \lambdamin^2}$.
%    $$\forall \xvec \in \Rset^{\mydim}, e^{-\frac{1}{2} \sum_{\dimi = 1}^\mydim  x_\dimi^2 / {\eig_{n,\dimi}}} \geqslant e^{-\frac{1}{2} \| \xvec \|^2 / \lambdamin^2} \myfstop$$ 
    In addition, the triangle inequality and the non-increasingness of $w$ induce that, for all $\xvec \in \Rset^{\mydim}$, $$\forall \xvec \in \Rset^{\mydim}, w \big ( \, \norm[\xvec + \invsqrtestMCovn(\CtrMap(k) - \Ctr) ]^2 \, \big ) \geqslant w \left ( \big ( \, \| \xvec \| +  \| \invsqrtestMCovn(\CtrMap(k) - \Ctr \big ) \| \, )^2 \right ) \myfstop$$ Injecting the three inequalities above into \eqref{eq: towards the lower bound for the expectation} yields
	\begin{onecol}
		\begin{equation}
			\label{eq: inequality on expectation with w}
			\Expect{w \big ( \norm[ \invsqrtestMCovn( \SquelY{\CtrMap}{n} - \Ctr)]^2 \big )} \geqslant
			\theexpect \left ( \norm[ \invsqrtestMCovn( \CtrMap(k) - \Ctr ) ] \right ) \myfstop
		\end{equation}
	\end{onecol}
	% \begin{twocol}
	% 	\begin{equation}
	% 		\label{eq: inequality on expectation with w}
	% 		\begin{array}{l}
	% 			\hspace{-0.25cm} 
	% 			\Expect{w \big ( \norm[ \invsqrtestMCovn( \SquelY{\CtrMap}{n} - \Ctr)]^2 \big )} \vspace{0.1cm} \\
	% 			\hspace{1.5cm} \geqslant 
	% 			\theexpect \left ( \norm[ \invsqrtestMCovn( \CtrMap(k) - \Ctr ) ] \right )
	% 		\end{array}
	% 	\end{equation}
	% \end{twocol}
    Take any $\Ctr \in \ball(\CtrMap(k),\radius)$. The second inequality in \eqref{eq: inequality on the mean} implies that
    $$
    \norm[ \invsqrtestMCovn( \CtrMap(k) - \Ctr ) ] \leqslant \invsqrtesteigvalmin \radius \myfstop
    $$
	Since $\theexpect$ does not increase, we derive from the inequality above and \eqref{eq: inequality on expectation with w} that
	\begin{equation}
		\label{eq: almost done with the inequaility in this lemma}
		\Expect{w \big ( \norm[ \invsqrtestMCovn( \SquelY{\CtrMap}{n} - \Ctr)]^2 \big )} \geqslant \theexpect \Big ( {\invsqrtesteigvalmin} \radius \Big ) \myfstop
	\end{equation}
	From \eqref{eq: T} with $\Wvec = \YObsSeqFull$ and $\Wvec_n = \SquelY{\CtrMap}{n}$, we have
	\begin{onecol}
		\begin{equation}
			\Expect{\MyTBis(\Ctr,\YObsSeqFull)} = 
			\dfrac{1}{\cardkN} \hspace{-0.3cm}
            \displaystyle \sum_{\quad n \in \TheSet_{k,N}} \hspace{-0.2cm} \Expect{w \big ( \Maha{\estMCovn}^2 (\SquelY{\CtrMap}{n} - \Ctr) \big )} \,\investeigvalndimi
			\nonumber
		\end{equation}
	\end{onecol}
	for any $\dimi \in \intd$. The slack inequality in \eqref{eq: lower bound on expectation of T} follows by injecting \eqref{eq: estimated Mahalanobis norm - relation with euclidean - diagonal case} and \eqref{eq: almost done with the inequaility in this lemma} into the equality above and taking that ${\sqrtesteigval_{n,\dimi}^{-2}} \geqslant {\sqrtesteigval_{\mymax}^{-2}}$ for all $\dimi \in \intd$ into account. The strict inequality in \eqref{eq: lower bound on expectation of T} results from the definition of $\theexpect$.
\end{proof}
\vspace{-0.5cm}
\begin{Lemma}
	\label{step:ess.sup.lim tends to 0}
	Given $\dimi \in \intd$, $\ctrind \in \intk$, $(\Ctr_0, \Ctr) \in \Rset^{\mydim} \times \Rset^{\mydim}$, $\YObsSeq \in \MVMbis$ and $\eta \in \Open{0}{\infty}$, there exists $\theradius \in \Open{0}{\infty}$ such that, for all $\CtrMap \in \Setbis{\Ctr_0}$:
	\begin{equation}
		\hspace{-3cm}
		\begin{array}{lll}
%			\exists \, \theradius \in \Open{0}{\infty}, \forall \, \CtrMap \in \Setbis{\Ctr_0}, 
%			\vspace{0.2cm} \\ 
			\hspace{-1cm} 
            \mydist{\CtrMap} > \theradius \Rightarrow \exists \, \Omega_0 \in \tribu, \Pbb(\Omega_0) = 1, \forall \omega \in \Omega_0, \exists N_0 \in \Nset, \forall N \in \Nset, 
		\end{array}
		\vspace{-0.5cm}
		\nonumber
	\end{equation}
	\begin{subequations}
		\label{eq: full predicate before limsup-2}
		\begin{empheq}[left={\hspace{-0.5cm} N \! > \! N_0 \Rightarrow } \empheqlbrace]{align}
			&\begin{array}{lll}
				\hspace{-0.2cm} \left \vert \MyWTer(\Ctr,\YObsSeqFull)(\omega) \! - \! \thecoeff_{\ctrind,N} \Expect{S_{\ctrind,N,i}(\Ctr,\YObsSeqFull)} \right \vert %\vspace{0.2cm} \\
				%\hspace{5cm} 
                \! < \! \dfrac{\eta}{1 + \eta} \, \thecoeff_0 \tau( \norm[\Ctr - \Ctr_0] )
			\end{array}
            \hspace{-0.8cm}
			\vspace{0.1cm} 
			\label{eq: full predicate before limsup-2a} \\
			& \, \mywedge \vspace{0.1cm} \nonumber \\
			& \MyVTer(\Ctr,\YObsSeqFull)(\omega) > \dfrac{\thecoeff_0 \, \tau( \norm[\Ctr - \Ctr_0] )}{1+\eta} > 0
			\label{eq: full predicate before limsup-2b}
		\end{empheq}
	\end{subequations}
	where
	\begin{equation}
		\label{eq: def of delta}
		\forall \radius \in \RightOpen{0}{\infty}, \tau( \radius ) = \theexpect \big ( {\invsqrtesteigvalmin} \, \radius \big ) {\investeigvalmax} > 0
        \myfstop
	\end{equation} 
\end{Lemma}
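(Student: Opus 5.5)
We split $\MyWTer(\Ctr,\YObsSeqFull)-\thecoeff_{\ctrind,N}\Ebb[S_{\ctrind,N,i}]$ and $\MyVTer(\Ctr,\YObsSeqFull)$ into a random fluctuation part and a deterministic part, handled by Lemma \ref{step: approx a.s of V and W} and Corollary \ref{step: of behaviors of E[S] and E[T]} respectively, with Lemma \ref{step: Expect of TNi} supplying the lower bound. Set $\tau_0=\tau(\norm[\Ctr-\Ctr_0])>0$. Since $\Ctr\in\ball(\CtrMap(\ctrind),\norm[\Ctr-\Ctr_0])$ because $\CtrMap(\ctrind)=\Ctr_0$, Lemma \ref{step: Expect of TNi} with $\radius=\norm[\Ctr-\Ctr_0]$ gives $\Ebb[T_{\ctrind,N,i}(\Ctr,\YObsSeqFull)]\geqslant\tau_0$ for every $N\in\NsetRestricted$ and every $\CtrMap\in\Setbis{\Ctr_0}$. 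With \eqref{eq: inequality on alpha}, this yields
$$\Ebb[\MyVTer(\Ctr,\YObsSeqFull)]\geqslant\thecoeff_{\ctrind,N}\,\tau_0\geqslant\thecoeff_0\tau_0 ,$$
because all the other summands in \eqref{eq: MyV and MyW-4b} are nonnegative.

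Next we choose the deterministic error. Let $\machin>0$, to be fixed below, and apply Corollary \ref{step: of behaviors of E[S] and E[T]} to obtain $\theradius$ such that $\mydist{\CtrMap}>\theradius$ implies $|\Ebb[\MySBis]|<\machin$ for all $k\neq\ctrind$ and all $N$. Since $\sum_k\thecoeff_{k,N}=1$, we then have
$$\Big|\Ebb[\MyWTer]-\thecoeff_{\ctrind,N}\Ebb[S_{\ctrind,N,i}]\Big|=\Big|\sum_{k\neq\ctrind}\thecoeff_{k,N}\Ebb[\MySBis]\Big|<\machin .$$
Note that $\theradius$ depends only on $(\machin,\Ctr_0,\Ctr)$, so the order of quantifiers in the statement is respected. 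The bound on $\Ebb[\MyTBis]$ from the corollary is not needed here, since positivity suffices for $V$.

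The random part is handled as follows. Fix $\CtrMap\in\Setbis{\Ctr_0}$ with $\mydist{\CtrMap}>\theradius$. Since $\YObsSeq$ is defined on $\Setbis{\Ctr_0}$, the argument of Lemma \ref{step: approx a.s of V and W} applies verbatim to $\YObsSeqFull$ (it only uses independence/SLLN per cluster and Assumption \ref{Assumption: on alpha}). This gives $\Omega_0$ with $\Pbb(\Omega_0)=1$ such that, for each $\omega\in\Omega_0$, there is $N_0\geqslant K$ with both $|\MyWTer(\omega)-\Ebb\MyWTer|<\machin$ and $|\MyVTer(\omega)-\Ebb\MyVTer|<\machin$ for $N>N_0$. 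The triangle inequality then gives
$$\Big|\MyWTer(\omega)-\thecoeff_{\ctrind,N}\Ebb[S_{\ctrind,N,i}]\Big|<2\machin, \qquad \MyVTer(\omega)>\thecoeff_0\tau_0-\machin .$$
Choosing
$$\machin=\min\Big\{\tfrac{\eta}{2(1+\eta)}\thecoeff_0\tau_0,\ \tfrac{\eta}{1+\eta}\thecoeff_0\tau_0\Big\}=\tfrac{\eta}{2(1+\eta)}\thecoeff_0\tau_0$$
yields \eqref{eq: full predicate before limsup-2a}, and also
$$\thecoeff_0\tau_0-\machin\geqslant\thecoeff_0\tau_0\Big(1-\tfrac{\eta}{1+\eta}\Big)=\tfrac{\thecoeff_0\tau_0}{1+\eta},$$
which is \eqref{eq: full predicate before limsup-2b}.

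The main obstacle is bookkeeping rather than analysis. First, $\machin$ must be fixed before $\theradius$, so that $\theradius$ is independent of $\CtrMap$. Second, the almost-sure convergence must be invoked per fixed $\CtrMap$, which the statement allows since $\Omega_0$ and $N_0$ may depend on $\CtrMap$. One must also check that Lemma \ref{step: approx a.s of V and W}, stated for $\MVM$, transfers to $\MVMbis$ restricted to $\Setbis{\Ctr_0}$; its proof is pointwise in $\CtrMap$, so it does. Finally, the strong law needs independence within each cluster. This is implicit in the model and is used here exactly as the paper uses it in Lemma \ref{step: approx a.s of V and W}.
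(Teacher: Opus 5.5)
Your proof is correct and takes essentially the paper's route. It uses the same split into an almost-sure fluctuation term (Lemma~\ref{step: approx a.s of V and W}) and a deterministic inter-cluster term (Corollary~\ref{step: of behaviors of E[S] and E[T]}), the same $\varepsilon=\frac{\eta}{2(1+\eta)}\thecoeff_0\tau_0$, and Lemma~\ref{step: Expect of TNi} for the lower bound. The one small difference is in \eqref{eq: full predicate before limsup-2b}: you use the one-sided bound $\mathbb{E}[V_{N,i}]\geqslant \thecoeff_{\ctrind,N}\mathbb{E}[T_{\ctrind,N,i}]$, which holds because the other summands are nonnegative. This lets you skip the paper's two-sided estimate $|V_{N,i}-\thecoeff_{\ctrind,N}\mathbb{E}[T_{\ctrind,N,i}]|<2\varepsilon$, and with it the $\mathbb{E}[T_{k,N,i}]<\varepsilon$ half of the corollary.
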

\begin{proof}
	The proof proceeds by combining predicates drawn from the lemmas above. From now on, we fix arbitrarily $\dimi \in \intd$, $\ctrind \in \intk$, $(\Ctr_0, \Ctr) \in \Rset^{\mydim} \times \Rset^{\mydim}$ and $\eta \in \Open{0}{\infty}$.

    \vspace{0.2cm}
	First, \eqref{eq: MyV and MyW-4} induces, with $\Wvec = \YObsSeqFull$, that
    \begin{Predicate}
    \label{predicate:p1}
	\begin{onecol}
		\begin{equation}
			\label{eq: third basic upper-bound for Preliminary result}
			\begin{array}{l}
				\begin{array}{llll}
					\forall \, \YObsSeq \in \MVMbis, \forall \, \CtrMap \in \Setbis{\Ctr_0}, \forall \omega \in \Omega, \forall N \in \Nset, N \geqslant K \implies \vspace{0.2cm} \\
					\hspace{0.4cm}
					\left \{
					\begin{array}{llll}
						\hspace{-0.3cm}
						\begin{array}{lll}
							\! \big \vert \, \MyWTer(\Ctr,\YObsSeqFull)(\omega) - \thecoeff_{\ctrind,N} 	\Expect{S_{\ctrind,N,\dimi}(\Ctr,\YObsSeqFull)} \big \vert \vspace{0.2cm} \\
							\begin{array}{lll} 
								\leqslant \big \vert \MyWTer(\Ctr,\YObsSeqFull)(\omega) - \Ebb \big [ \MyWTer(\Ctr,\YObsSeqFull) \big ] \big \vert + \sum_{k = 1, k \neq \ctrind}^K \thecoeff_{k,N} \big \vert \Expect{\MySBis(\Ctr,\YObsSeqFull)} \big \vert
							\end{array}			
						\end{array} 
						\vspace{0.2cm} \\
                        \hspace{-0.1cm} \mywedge
                        \vspace{0.2cm} \\
						\hspace{-0.3cm}
						\begin{array}{lll}
							\! \left \vert \MyVTer(\Ctr,\YObsSeqFull)(\omega) - \thecoeff_{\ctrind,N} 	\Expect{T_{\ctrind,N,i}(\Ctr,\YObsSeqFull)} \right \vert \vspace{0.2cm} \\
%							\quad 
							\begin{array}{lll} 
								\leqslant \big \vert \MyVTer(\Ctr,\YObsSeqFull)(\omega) - 	\Expect{\MyVTer(\Ctr,\YObsSeqFull)} \big \vert + \sum_{k = 1, k \neq \ctrind}^K \thecoeff_{k,N} \Big \vert \Expect{\MyTBis(\Ctr,\YObsSeqFull)} \Big \vert \myfstop
							\end{array}
						\end{array}
					\end{array}
					\right.
				\end{array}
				\hspace{-0.2cm}
			\end{array}
			\nonumber
		\end{equation}
	\end{onecol}
    \end{Predicate}
	Setting
	\begin{equation}
		\label{eq: definition of varepsilon}
		\myeps = \dfrac{1}{2} \dfrac{\eta}{1+\eta} \, \thecoeff_0 \, \tau( \norm[\Ctr - \Ctr_0] ), %\dfrac{1}{2} \dfrac{\eta}{1+\eta} \thecoeff_0 \delta(\| \CCtr - \CtrMapBis(i) \|)
	\end{equation}
	Lemma \ref{step: approx a.s of V and W} induces that:
    \begin{Predicate}
    \label{predicate:p2}
	\begin{onecol}
		\begin{equation}
			\label{eq: fifth basic upper-bound for Preliminary result}
			\begin{array}{l}
				\begin{array}{lll}
					\forall \, \YObsSeq \in \MVMbis, \forall \, \CtrMap \in \Setbis{\Ctr_0}, \exists \, \Omega_0 \in \tribu, 
					\vspace{0.1cm} \\
					\hspace{0.25cm} 
					\left \{
					\begin{array}{llll}
						\! \! \Pbb(\Omega_0) = 1, 
                        \medskip \\
						\! \! \mywedge \\
						\! \! \forall \omega \! \in \! \Omega_0, \exists N'_0 \! \in \! \Nset, \forall N \! \in \! \Nset, N \! > \! N'_0 \Rightarrow \! 
						\left \{
						\! \! \! 
						\begin{array}{llll}
							\begin{array}{lll}
								\! \! \big \vert \, \MyWTer(\Ctr,\YObsSeqFull)(\omega) - \Expect{\MyWTer(\Ctr,\YObsSeqFull)} \big \vert 
								< \myeps
							\end{array}
                            \vspace{0.1cm} \\
    						\mywedge
                            \vspace{0.1cm} \\
							\begin{array}{lll}
								\! \! \big \vert \, \MyVTer(\Ctr,\YObsSeqFull)(\omega) - \Expect{\MyVTer(\Ctr,\YObsSeqFull)} \big \vert 
								%						\\
								%						\qquad 
								< \myeps \vspace{0.1cm}
							\end{array}
						\end{array}
						\right.
					\end{array}
					\right.
				\end{array}
			\end{array}
			\nonumber
		\end{equation}
	\end{onecol}
        \end{Predicate}

	By combining \cref{predicate:p1,predicate:p2}, where we take $N_0 = \max(N'_0,K)$, we obtain:
    \begin{Predicate}
        \label{predicate:p3}
	\begin{onecol}
		\begin{equation}
			\label{eq: intermediate predicate}
			\begin{array}{l}
	
				\forall \, \YObsSeq \in \MVMbis, \forall \, \CtrMap \in \Setbis{\Ctr_0}, \exists \, \Omega_0 \in \tribu, \vspace{0.1cm} \\
				\hspace{0.4cm} \left \{
				\begin{array}{lll}
					\hspace{-0.1cm} 
					\Pbb(\Omega_0) = 1 \vspace{0.1cm} \\
					\mywedge \vspace{0.1cm} \\
					\hspace{-0.1cm}
					\forall \omega \in \Omega_0, \exists N_0 \in \Nset, \forall N \in \Nset, \vspace{0.2cm} \\
					\hspace{0.3cm} 
					N > N_0 \Rightarrow \hspace{-0.2cm}
					\begin{array}{llll}	
						\left \{
						\begin{array}{llll}
							\begin{array}{lll}
								\hspace{-0.3cm}
								\big \vert \, \MyWTer(\Ctr,\YObsSeqFull)(\omega) \! - \! \thecoeff_{\ctrind,N} 		\Expect{S_{\ctrind,N,i}(\Ctr,\YObsSeqFull)} \big \vert 
								\vspace{0.2cm} \\ 
								\hspace{2cm} < \myeps + \! \sum_{k = 1, k \neq \ctrind}^K \thecoeff_{k,N} \, \Big \vert \, \Expect{\Sbm_{k,N,i}(\Ctr,\YObsSeqFull)} \, \Big \vert
								\vspace{0.2cm} \\
                                \hspace{-0.3cm} \mywedge
								\vspace{0.2cm} \\
								\hspace{-0.3cm}
								\big \vert \, \MyVTer(\Ctr,\YObsSeqFull)(\omega) - \thecoeff_{\ctrind,N} 				\Expect{T_{\ctrind,N,i}(\Ctr,\YObsSeqFull)} \, \big \vert 
								\vspace{0.2cm} \\
								\hspace{2cm} < \myeps + \! \sum_{k = 1, k \neq \ctrind}^K \thecoeff_{k,N} \Big \vert \,  \Expect{\MyTBis(\Ctr,\YObsSeqFull)} \, \Big \vert \myfstop 
							\end{array}
						\end{array}
						\right.
					\end{array}
				\end{array}
				\right.
			\end{array}
			\nonumber
		\end{equation}
	\end{onecol}
    \end{Predicate}
	On the other hand, it follows from Corollary \ref{step: of behaviors of E[S] and E[T]} that
    \begin{Predicate}
        \label{predicate:p4}
    
	\begin{onecol}
		\begin{equation}
			\label{eq: second basic upper-bound for Preliminary result}
			\begin{array}{l}
				\begin{array}{llll}
					\exists \, \theradius \in \Open{0}{\infty}, \forall \, \YObsSeq \in \MVMbis, \forall \, \CtrMap \in \Setbis{\Ctr_0}, \vspace{0.1cm} \\
					\hspace{0.1cm}
					\begin{array}{lll}
						\mydist{\CtrMap} > \theradius
						\Rightarrow \forall (k,N) \in \left ( \intk \setminus \{\ctrind\} \right ) \times \NsetRestricted, 
						\left \{ \! \! 
						\begin{array}{llll}
							\big \vert \, \Expect{\MySBis(\Ctr,\YObsSeqFull)} \, \big \vert 
							< \myeps 
                            \vspace{0.2cm} \\
                            \mywedge
                            \vspace{0.2cm} \\
							%0 < 
                            \Expect{\MyTBis(\Ctr,\YObsSeqFull)} < \myeps \myfstop 
						\end{array}
						\right.
					\end{array}
				\end{array}
			\end{array}
			\nonumber
		\end{equation}
	\end{onecol}
    \end{Predicate}
	It results from \cref{predicate:p3,predicate:p4} and the definition of the coefficients  $\thecoeff_{k,N}$ given by \eqref{eq: inequality on alpha} that:
    \begin{Predicate}
        \label{predicate:p5}
    
	\begin{onecol}
		\begin{equation}
            \hspace{-1cm}
			\begin{array}{l}
				\begin{array}{llll}
%					\hspace{1cm} 
					\exists \, \theradius \in \Open{0}{\infty}, \forall \, \YObsSeq \in \MVMbis, 
					\forall \, \CtrMap \in \Setbis{\Ctr_0}, \exists \, \Omega_0 \in \tribu, \vspace{0.2cm} \\ 
					\hspace{0.5cm} 
					\left \{
					\begin{array}{lll}
						\! \! \Pbb(\Omega_0) = 1 \vspace{0.1cm} \\ %\medskip \\
						\! \! \mywedge \vspace{0.1cm} \\ %\medskip \\
						\! \! \forall \, \omega \in \Omega_0, \exists N_0 \in \Nset, \forall N \in \Nset, \vspace{0.2cm} \\
						\hspace{1cm}
						\big ( N > N_0 \big ) \wedge \big ( \mydist{\CtrMap} > \theradius \big ) 
						\implies \vspace{0.2cm} \\
						\hspace{2.25cm} 
						\left \{
						\begin{array}{lll}
							\hspace{-0.35cm} 
							\begin{array}{lll}
								\begin{array}{lll}
									\hspace{-0.1cm} 
									\big \vert \, \MyWTer (\Ctr,\YObsSeqFull)(\omega) - \thecoeff_{\ctrind,N} 	\Expect{S_{\ctrind,N,i}(\Ctr,\YObsSeqFull)} \big \vert < 2 \myeps
								\end{array}
							\end{array}
							\vspace{0.2cm} \\
                            \mywedge
							\vspace{0.2cm} \\
							\begin{array}{l}
								\hspace{-0.35cm} 
								\begin{array}{lll}
									\hspace{-0.1cm}
									\big \vert \, \MyVTer(\Ctr, \YObsSeqFull)(\omega) - \thecoeff_{\ctrind,N} 	\Expect{T_{\ctrind,N,i}(\Ctr, \YObsSeqFull)} \big \vert < 2 \myeps \myfstop
								\end{array}
							\end{array}
						\end{array}
						\right.
					\end{array}
					\right.
				\end{array}
			\end{array}
			\nonumber
		\end{equation}
	\end{onecol}
    \end{Predicate}
	On the one hand, $\mydist{\CtrMap}$ is unequivocally determined by $\CtrMap$, $\ctrind$ and $\Ctr_0$. On the other hand, according to Corollary \ref{step: of behaviors of E[S] and E[T]} and the definition of $\myeps$ given by \eqref{eq: definition of varepsilon}, $\theradius$ depends on $\eta, \Ctr_0, \Ctr$ only. Therefore, \cref{predicate:p5} implies
    \begin{Predicate}
    \label{predicate:p6}
        
	\begin{onecol}
	\begin{equation}
		\begin{array}{l}
			\begin{array}{llll}
				\hspace{-0.35cm} 
				\forall \, \YObsSeq \in \MVMbis, \exists \, \theradius \in \Open{0}{\infty}, \forall \, \CtrMap \in \Setbis{\Ctr_0}, 
				\vspace{0.2cm} \\
				\mydist{\CtrMap} > \theradius \Rightarrow \exists \, \Omega_0 \in \tribu, %\vspace{0.2 cm} \\ 
				\left \{
				\begin{array}{lll}
					\! \! \Pbb(\Omega_0) = 1 \vspace{0.1cm} \\ 
					\! \! \mywedge \vspace{0.1cm} \\ 
					\! \! \forall \omega \in \Omega_0, \exists N_0 \in \Nset, \forall N \in \Nset, N > N_0 \Rightarrow \vspace{0.2cm} \\
					\hspace{0.1cm} 
                    \left \{
					\begin{array}{lll}
						\hspace{-0.35cm} 
						\begin{array}{lll}
							\begin{array}{lll}
								\hspace{-0.15cm}
								\big \vert \, \MyWTer(\Ctr,\YObsSeqFull)(\omega) - \thecoeff_{\ctrind,N} 	\Expect{S_{\ctrind,N,i}(\Ctr,\YObsSeqFull)} \, \big \vert < 2 \myeps
							\end{array}
						\end{array}
						\vspace{0.2cm} \\
                        \mywedge
						\vspace{0.2cm} \\
    					\hspace{-0.15cm} 
                        \MyVTer(\Ctr, \YObsSeqFull)(\omega) > \thecoeff_{\ctrind,N} \Expect{T_{\ctrind,N,i}(\Ctr, \YObsSeqFull)} - 2 \myeps
                        \myfstop
						% \hspace{-0.15cm}
						% \big \vert \, \MyVTer(\Ctr, \YObsSeqFull)(\omega) - \thecoeff_{\ctrind,N} \Expect{T_{\ctrind,N,i}(\Ctr, \YObsSeqFull)} \, \big \vert < 2 \myeps \myfstop
					\end{array}
					\right.
				\end{array}
				\right.
			\end{array}
		\end{array}
		\nonumber
	\end{equation}
	\end{onecol}
        \end{Predicate}

	We have
	\begin{equation}
		\label{eq: very easy and basic inequalities}
		\left \{
		\begin{array}{lll}
			\thecoeff_{\ctrind,N} \geqslant \thecoeff_0 & \text{[from \eqref{eq: inequality on alpha}]} \mycomma \vspace{0.1cm} \\
			\thecoeff_0 \, \tau( \norm[\Ctr - \Ctr_0] ) - 2 \myeps = \dfrac{\thecoeff_0 \, \tau( \norm[\Ctr - \Ctr_0] )}{1+\eta} > 0 & \text{[from \eqref{eq: definition of varepsilon}]} \myfstop
		\end{array}
		\right.
	\end{equation}
	On the other hand, given $\YObsSeq \in \MVMbis$ and $\CtrMap \in \Setbis{\Ctr_0}$, for which we thus have $\CtrMap(\ctrind) = \Ctr_0$, we derive from \eqref{eq: def of delta} and Lemma \ref{step: Expect of TNi}, with $\radius = \norm[\Ctr - \Ctr_0]$, that:
	\begin{equation}
		\label{eq: another basic inequality}
		\forall N \in \NsetRestricted, \Expect{T_{\ctrind,N,i}(\Ctr,\YObsSeqFull)} \geqslant \tau( \norm[\Ctr - \Ctr_0] ) > 0
        \myfstop
	\end{equation}
	The result thus follows by injecting \eqref{eq: definition of varepsilon}, \eqref{eq: very easy and basic inequalities} and \eqref{eq: another basic inequality} into \cref{predicate:p6}.
\end{proof}

\begin{Corollary}[of Lemma \ref{step:ess.sup.lim tends to 0}]
	\label{step:ess.sup.lim tends to 0 - corollary}
	Given $\dimi \in \intd$, $\ctrind \in \intk$, $(\Ctr_0, \Ctr) \in \Rset^{\mydim} \times \Rset^{\mydim}$ and $\YObsSeq \in \MVMbis$,
	\begin{onecol}
		\begin{equation}
			\begin{array}{lll}
				\esslimsup{\mydist{\CtrMap} \to \infty}{N}
				\Big ( \MyWTer(\Ctr,\YObsSeqFull) - \thecoeff_{\ctrind,N} \Expect{S_{\ctrind,N,i}(\Ctr,\YObsSeqFull)} \Big ) \vspace{0.2cm} \\
				\hspace{1cm} 
				= \esslimsup{\mydist{\CtrMap} \to \infty}{N} \dfrac{\MyWTer(\Ctr,\YObsSeqFull) - \thecoeff_{\ctrind,N} \Expect{S_{\ctrind,N,i}(\Ctr,\YObsSeqFull)}}{\MyVTer(\Ctr,\YObsSeqFull)}
				\vspace{0.2cm} \\
				\hspace{1cm} 
				= 0	\myfstop
			\end{array}
			\nonumber
		\end{equation}
	\end{onecol}
\end{Corollary}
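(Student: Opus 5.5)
The plan is to read both equalities directly off the predicate of Lemma \ref{step:ess.sup.lim tends to 0} through the quantifier characterization of Corollary \ref{Corollary: Quanteur de lim of limsup pour notre pb}, taking $\Ctr = 0$ as the limit there. For brevity, write $A_N(\CtrMap) = \MyWTer(\Ctr,\YObsSeqFull) - \thecoeff_{\ctrind,N} \Expect{S_{\ctrind,N,i}(\Ctr,\YObsSeqFull)}$, viewed as a parametric discrete random process on $\Setbis{\Ctr_0} \times \Nset$ (with values in $\Rset$, i.e.\ the case $\mydim = 1$; the appendix results hold for any dimension). Fix $\dimi$, $\ctrind$, $(\Ctr_0,\Ctr)$ and $\YObsSeq \in \MVMbis$, and set $\tau_0 = \tau(\norm[\Ctr-\Ctr_0]) > 0$. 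This quantity depends only on $\Ctr$ and $\Ctr_0$, not on $\CtrMap$, $N$ or $\omega$, and this uniformity is what makes the argument work.

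\textbf{First equality (both quantities tend to $0$).} I will check that each side equals $0$, rather than relating the two sides to each other. Given a target $\eta' > 0$, choose $\eta > 0$ such that $\frac{\eta}{1+\eta} \thecoeff_0 \tau_0 \leqslant \eta'$, which is possible since $\eta/(1+\eta) \to 0$ as $\eta \to 0$. Lemma \ref{step:ess.sup.lim tends to 0} applied with this $\eta$ gives a radius $\theradius$ with the following property. For every $\CtrMap$ with $\mydist{\CtrMap} > \theradius$, there is a full-measure event $\Omega_0$, and for each $\omega \in \Omega_0$ an index $N_0$, such that $\vert A_N(\CtrMap)(\omega) \vert < \eta'$ for all $N > N_0$. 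This is exactly predicate \eqref{eq: limlimsup with quantifiers} with limit $0$, so Corollary \ref{Corollary: Quanteur de lim of limsup pour notre pb} yields that the ess.lim.sup of $A_N$ is $0$.

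\textbf{Ratio term.} Apply the same lemma again with a fresh $\eta > 0$. On the same event, and for the same $N > N_0$, \eqref{eq: full predicate before limsup-2b} gives $\MyVTer(\Ctr,\YObsSeqFull)(\omega) > \thecoeff_0 \tau_0/(1+\eta) > 0$, so the ratio is well defined. Combining this with \eqref{eq: full predicate before limsup-2a} gives
\[
\left\vert \frac{A_N(\CtrMap)(\omega)}{\MyVTer(\Ctr,\YObsSeqFull)(\omega)} \right\vert < \frac{\eta\,\thecoeff_0\tau_0/(1+\eta)}{\thecoeff_0\tau_0/(1+\eta)} = \eta .
\]
This is precisely why the lemma was stated with the factors $\eta/(1+\eta)$ and $1/(1+\eta)$: the common factor $\thecoeff_0\tau_0/(1+\eta)$ cancels exactly. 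Since $\eta$ is arbitrary, Corollary \ref{Corollary: Quanteur de lim of limsup pour notre pb} again gives ess.lim.sup equal to $0$.

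\textbf{Conclusion and main obstacle.} Both sides vanish, so the displayed chain of equalities holds. There is no real obstacle in this argument. The only point needing care is the order of the quantifiers: Corollary \ref{Corollary: Quanteur de lim of limsup pour notre pb} requires the radius to be chosen before $\CtrMap$ and allows $\Omega_0$ and $N_0$ to depend on $\CtrMap$ and $\omega$. Lemma \ref{step:ess.sup.lim tends to 0} provides exactly this order, because its radius depends only on $\eta$, $\Ctr_0$ and $\Ctr$, once $\YObsSeq$ is fixed.
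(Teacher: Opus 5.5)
Your proposal is correct and follows essentially the same route as the paper: you show that each ess.lim.sup vanishes separately, first by feeding \eqref{eq: full predicate before limsup-2a} into Corollary \ref{Corollary: Quanteur de lim of limsup pour notre pb} with $\eta$ tuned so that $\frac{\eta}{1+\eta}\thecoeff_0\tau \leqslant \eta'$ (the paper takes $\eta = \eta'/(\thecoeff_0\tau - \eta')$ to get equality), and then by dividing \eqref{eq: full predicate before limsup-2a} by \eqref{eq: full predicate before limsup-2b}, where the factor $\thecoeff_0\tau/(1+\eta)$ cancels. One point to tighten: the ratio process must be defined for every $\omega$ and every $N$, not only on $\Omega_0$ for $N > N_0$; as the paper notes, this holds because $w > 0$, which gives $\MyVTer > 0$ everywhere.
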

\begin{proof}
	Pick any $\dimi \! \in \! \intd$, $\ctrind \! \in \! \intk$, $(\Ctr_0, \Ctr) \! \in \! \Rset^{\mydim} \! \times \! \Rset^{\mydim}$, and $\YObsSeq \! \in \! \MVMbis$.
	For any $(\CtrMap,N) \in \Setbis{\Ctr_0} \times \NsetRestricted$, define $\Fbm_1(\CtrMap,N)$ by 
	$$\Fbm_1(\CtrMap,N) = \MyWTer(\Ctr,\YObsSeqFull) - \thecoeff_{\ctrind,N} 	\Expect{S_{\ctrind,N,i}(\Ctr,\YObsSeqFull)} \myfstop$$
	Consider any $\eta' \in \Open{0}{\thecoeff_0 \tau(\norm[\Ctr-\Ctr_0])}$, with $\tau(\norm[\Ctr-\Ctr_0]))$ given by \eqref{eq: def of delta}, and set $$\eta = \frac{\eta'}{\thecoeff_0 \tau(\norm[\Ctr-\Ctr_0])-\eta'} \myfstop$$
	From \eqref{eq: full predicate before limsup-2a} in Lemma \ref{step:ess.sup.lim tends to 0}, we obtain that 
	\begin{onecol}
		\begin{equation}
			\label{eq: els F1 = 0}
			\hspace{-0.6cm}
			\begin{array}{lll}
				\exists \, \theradius \in \Open{0}{\infty}, \forall \, \CtrMap \in \Setbis{\Ctr_0}, 
%				\vspace{0.2cm} \\ 
%				\hspace{0.6cm} 
				\mydist{\CtrMap} > \theradius \, \Rightarrow \, \exists \, \Omega_0 \in \tribu, \Pbb(\Omega_0) = 1, 
				\forall \omega \in \Omega_0, \exists N_0 \in \Nset, 
				\vspace{0.2cm} \\
				\hspace{7.5cm} {\forall N \in\Nset, N > N_0 \, \Rightarrow} \, \vert \Fbm_1(\CtrMap,N)(\omega) \vert < \eta' \myfstop
			\end{array}
			\hspace{-0.3cm}
		\end{equation}
	\end{onecol}
	Therefore, according to Corollary \ref{Corollary: Quanteur de lim of limsup pour notre pb}, we obtain $$\esslimsup{\mydist{\CtrMap} \to \infty}{N} \Fbm_1(\CtrMap,N) = 0$$
    and thus, the first equality.
    \vspace{0.25cm} \\
    \indent
	Now, given any $(\CtrMap,N) \in \Setbis{\Ctr_0} \times \NsetRestricted$, set     
    $$\Fbm_2(\CtrMap,N) = \dfrac{\MyWTer(\Ctr,\YObsSeqFull) - \thecoeff_{\ctrind,N} \Expect{S_{\ctrind,N,i}(\Ctr,\YObsSeqFull)}}{\MyVTer(\Ctr,\YObsSeqFull)} \myfstop$$
    This parametric discrete random process is well defined because \eqref{eq: T}, \eqref{eq: MyV and MyW-4b} and the fact that $w(t) > 0$ for all $t \in \RightOpen{0}{\infty}$ by \cref{Assumption: on w}, imply that $\MyVTer(\Ctr,\YObsSeqFull) > 0$. Given any $\eta \in \Open{0}{\infty}$, we derive from \eqref{eq: full predicate before limsup-2} that
	\begin{equation}
		\label{eq: els F2 = 0}
		\hspace{-0.35cm}
		\begin{array}{lll}
			\exists \, \theradius \in \Open{0}{\infty}, \forall \, \CtrMap \in \Setbis{\Ctr_0}, \mydist{\CtrMap} > \theradius \Rightarrow 
			\vspace{0.2cm} \\ 
			\hspace{0.5cm} \exists \, \Omega_0 \in \tribu, \Pbb(\Omega_0) = 1, 
			\forall \, \omega \in \Omega_0, \exists N_0 \in \Nset, \forall N \in \Nset, N > N_0 \Rightarrow \vert \, \Fbm_2(\CtrMap,N)(\omega) \, \vert < \eta \myfstop
		\end{array}
        \hspace{-0.3cm}
	\end{equation}
	Therefore, according to Corollary \ref{Corollary: Quanteur de lim of limsup pour notre pb}, 
    $$\esslimsup{\mydist{\CtrMap} \to \infty}{N} \Fbm_2(\CtrMap,N) = 0 \myfstop$$
    Whence the second equality.
\end{proof}
\indent
For further use below, note that, given $\ctrind \! \in \! \intk$, $N \! \in \! \NsetRestricted$, $i \! \in \! \intd$, $\Ctr \! \in \! \Rset^{\mydim}$, $\YObsSeq \! \in \! \MVM$ and $\CtrMap \! \in \! \Set$, it results from \eqref{eq: S}, \eqref{eq: estimated Mahalanobis norm - relation with euclidean - diagonal case} \& \eqref{eq: coordinate of estMCov_n(-1/2) x} that:
\begin{onecol}
	\begin{equation}
		\label{eq: Sbm(i,N)}
        \hspace{-0.1cm}
		S_{\ctrind,N,i}(\Ctr,\YObsSeqFull) 
		\! = \! \dfrac{1}{\cardkoN} \! \! \displaystyle \sum_{n \in \TheSet_{\ctrind,N}} \hspace{-0.25cm} \! w \Big ( \displaystyle \sum_{\dimib=1}^\mydim \sqrtesteigvalndimib^{\, -2} ( \SquelY{\CtrMap}{n}-\Ctr )_{\dimib}^{2} \, \Big  ) \, \sqrtesteigvalndimi^{\, -2} \, ( \SquelY{\CtrMap}{n} - \Ctr )_\dimi
        \myfstop
    \end{equation}
\end{onecol}
Also, if $\ctrind \! \in \! \intk$, $N \! \in \! \NsetRestricted$, $i \! \in \! \intd$, $(\Ctr_0, \Ctr) \! \in \! \Rset^{\mydim} \times \Rset^{\mydim}$, $\YObsSeq \! \in \! \MVMbis$ and $\CtrMap \! \in \! \Setbis{\Ctr_0}$, then $\CtrMap(\seq_n) \! = \! \CtrMap(\ctrind) \! = \! \Ctr_0$ for all $n \! \in \! \TheSet_{\ctrind,N}$ and thus, according to \eqref{eq: property of Squel} and \eqref{eq: deltamat = }:
\begin{equation}
    \label{eq: the law of a coordinate}
    \forall \, n \in \TheSet_{\ctrind,N}, \forall \, \dimi \in \intd, (\Squel{\CtrMap}{n} - \Ctr)_\dimi \thicksim \Ncal \left ( (\ctr_0 - \ctr)_\dimi, \eigvalndimi \right ) \myfstop
\end{equation}
\begin{Lemma}
	\label{step:equivalence of limits}
	Given $\ctrind \in \intk$, $(\Ctr_0, \Ctr) \in \Rset^{\mydim} \times \Rset^{\mydim}$, $\YObsSeq \in \MVMbis$ and $\CtrMap \in \Setbis{\Ctr_0}$,
	\begin{equation}
		\left ( \, \forall \, i \in \intd, \displaystyle \lim_{N \to \infty} \Expect{S_{\ctrind,N,i}(\Ctr,\YObsSeqFull)} = 0 \, \right ) \, \, \, \Leftrightarrow \, \, \, \Ctr = \Ctr_0 %\Ctr_i = (\Ctr_0)_i \myfstop
		\nonumber
	\end{equation}
\end{Lemma}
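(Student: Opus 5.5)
The plan is to write $\Expect{S_{\ctrind,N,i}(\Ctr,\YObsSeqFull)}$ explicitly using \eqref{eq: Sbm(i,N)} and \eqref{eq: the law of a coordinate}, and then handle the two implications separately. For $n \in \TheSet_{\ctrind,N}$, set $\Xvec_n = \Squel{\CtrMap}{n} - \Ctr$. Its coordinates are independent, with $(\Xvec_n)_\dimib \thicksim \Ncal((\Ctr_0-\Ctr)_\dimib, \eigval_{n,\dimib})$ because $\MCov_n$ is diagonal. Consequently
\[
\Expect{S_{\ctrind,N,i}(\Ctr,\YObsSeqFull)} = \dfrac{1}{\cardkoN} \sum_{n \in \TheSet_{\ctrind,N}} \sqrtesteigvalndimi^{\, -2} \, \Expect{ w \Big ( \textstyle\sum_{\dimib=1}^\mydim \sqrtesteigvalndimib^{\, -2} (\Xvec_n)_\dimib^2 \Big ) (\Xvec_n)_\dimi } .
\]

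For ($\Leftarrow$), suppose $\Ctr = \Ctr_0$. Then each $\Xvec_n$ is centred with a diagonal covariance matrix, so its law is invariant under the sign flip $(\Xvec_n)_\dimi \mapsto -(\Xvec_n)_\dimi$. The argument of $w$ depends only on the squares of the coordinates, so the integrand is odd in $(\Xvec_n)_\dimi$. The expectation is finite because $w$ is bounded by \cref{Assumption: on w} and the Gaussian has a finite first moment. Each term therefore vanishes, so $\Expect{S_{\ctrind,N,i}} = 0$ for every $N$ and every $\dimi$, and the limit is $0$.

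For ($\Rightarrow$), argue by contraposition. Assume $\Ctr \neq \Ctr_0$ and choose $\dimi$ with $m = (\Ctr_0 - \Ctr)_\dimi \neq 0$; replacing the coordinate by its negative if needed, we may take $m>0$. Condition on the coordinates other than $\dimi$. With $a \geqslant 0$ denoting the contribution of the remaining coordinates and $b = \sqrtesteigvalndimi^{\, -2}$, the function $g(x) = w(a + b x^2)$ is even, positive, and non-increasing in $\vert x \vert$. Let $p$ be the $\Ncal(m,s^2)$ density, with $s = \sqrteigval_{n,\dimi}$. Folding the integral gives
\[
\Expect{g(X)X} = \int_0^\infty g(x)\, x\, \big(p(x) - p(-x)\big)\, \dx .
\]
This is strictly positive, since $p(x) > p(-x)$ for $x>0$ when $m>0$, and $g>0$. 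Integrating over the other coordinates keeps it strictly positive.

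The remaining step, which I expect to be the main obstacle, is to make this positivity uniform in $n$ so that the average cannot tend to $0$. The plan is a compactness argument. The quantity $h(\theta) = \Expect{w(\cdot)\,(\Xvec)_\dimi}$ depends only on the parameter vector $\theta = (\sqrteigval_{n,\dimib}, \sqrtesteigval_{n,\dimib})_{\dimib}$. By \cref{Assumption: on C,Assumption: on lambda}, $\theta$ lies in the compact set $[\sqrteigvalmin,\sqrteigvalmax]^\mydim \times [\sqrtesteigvalmin,\sqrtesteigvalmax]^\mydim$, and $\Ctr,\Ctr_0$ are fixed. After the change of variables $\Xvec = \text{mean} + \diag(\sqrteigval)\Zbm$ with $\Zbm$ standard Gaussian, $h$ is continuous in $\theta$ by dominated convergence, using that $w$ is bounded and continuous and the dominating function $C(1+\|\Zbm\|)$. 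Hence $h \geqslant c > 0$ on the compact set. Together with $\sqrtesteigvalndimi^{\, -2} \geqslant \sqrtesteigval_{\mymax}^{\, -2}$, this gives $\Expect{S_{\ctrind,N,\dimi}} \geqslant c\,\sqrtesteigval_{\mymax}^{\, -2} > 0$ for all $N \in \NsetRestricted$. So the limit for that $\dimi$ is not $0$, which proves the contrapositive.
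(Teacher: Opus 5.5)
Your proposal is correct. The ($\Leftarrow$) direction is the paper's symmetry argument, but your ($\Rightarrow$) direction takes a genuinely different route.

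\textbf{The paper's route.} It standardizes the coordinates and writes each summand as $\mathbb{E}[f_m(\mathbf{X}_m)X_{m,i}]$, where $\mathbf{X}_m \sim \mathcal{N}(\Lambda_m\theta, I)$, $\theta = \varphi_0-\varphi$, and $\Lambda_m = C_{n_m}^{-1/2}$. Each $f_m$ is bounded, positive, even in each coordinate, and bounded below by a common minorant $f^*$. It then invokes Lemma~\ref{Lemma:The last very useful lemma}. That lemma's proof uses Lemmas~\ref{Lemma: Impaire} and~\ref{Lemma: Paire} together with an induction over the remaining coordinates. This yields an explicit, $m$-independent lower bound $\int_0^\infty x_1\Upsilon(x_1)u(x_1,\theta_1)\,dx_1$ on $\mathrm{sign}(\theta_1)\,\mathbb{E}[f_m(\mathbf{X}_m)X_{m,1}]$. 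The bound is built only from $\lambda_{\min}|\theta_1|$, $\lambda_{\max}|\theta_1|$ and $f^*$, and it vanishes only if $\theta_1=0$.

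\textbf{Your route.} You get strict positivity of each summand from the same folding idea ($p(x)>p(-x)$ for $x>0$, integrated against a positive even weight). You then obtain uniformity without explicit bounds: the summand is a continuous function of $(\sigma_{n,j},\sigma'_{n,j})_{j}$, by dominated convergence, on the compact box $[\sigma_{\min},\sigma_{\max}]^d\times[\sigma'_{\min},\sigma'_{\max}]^d$. It is strictly positive at every point of that box, so it is bounded below by some $c>0$.

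\textbf{What each buys.} Your argument is shorter and avoids the comparison functions $u_{a,b}$ and the coordinate-by-coordinate induction. The price is a non-explicit constant, and reliance on the continuity of $w$ (available from Assumption~\ref{Assumption: on w}) and on the fact that the summands form a finite-dimensional, compactly parametrized family. The paper's instrumental lemma gives a quantitative bound. It also applies to arbitrary sequences of bounded, positive, coordinatewise-even $f_m$ with a common positive minorant, with no continuity or parametric structure required, so it is more general than this particular lemma needs.
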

\begin{proof}
	Fix arbitrarily $\ctrind \in \intk$, $(\Ctr_0, \Ctr) \in \Rset^{\mydim} \times \Rset^{\mydim}$, $\YObsSeq \in \MVMbis$, $\CtrMap \in \Setbis{\Ctr_0}$ and $i \in \intd$. Let $N \in \NsetRestricted$. Set $M = \cardkoN$ and write $\TheSet_{\ctrind,N} = \{n_1, n_2, \ldots, n_M \}$ by ordering the elements of this set. For each $m \in \intM$ and each $\dimi \in \intd$, put 
    \begin{equation}
         \label{eq: Xmi}
         X_{m,\dimi} = \invsqrteigvalnmdimi \left ( \SquelY{\CtrMap}{n_m} - \Ctr \right )_\dimi \myfstop
         \nonumber
    \end{equation}
    We thus derive from \eqref{eq: the law of a coordinate} that
    \begin{equation}
         \label{eq: Distribution of Xmi}
         X_{m,\dimi} \thicksim 
         \Ncal \left ( \invsqrteigvalnmdimi  (\ctr_0 - \ctr)_\dimi, 1 \right ) \myfstop
    \end{equation}
	If we set $\Xvec_m = (X_{m,1}, \ldots, X_{m,\mydim})^\transpose$, we thus have
	\begin{equation}
		\label{eq: distribution is what we want}
		\forall m \in \intM, \Xvec_m \thicksim \Ncal(\Coeff_m \meanvec,\Id) \, \text{with} \left \{ \begin{array}{lll} \hspace{-0.1cm} \Coeff_m = \MCovnm^{\, -1/2} \mycomma \vspace{0.1cm} \\ \hspace{-0.1cm} \meanvec = \Ctr_0 - \Ctr \myfstop \end{array} \right.
		\hspace{-0.4cm}
	\end{equation}
    Thence, according to \eqref{eq: Sbm(i,N)}, $S_{\ctrind,N,i}(\Ctr,\YObsSeqFull)$ rewrites as
    $$
    S_{\ctrind,N,i}(\Ctr,\YObsSeqFull) 
		= \dfrac{1}{M} \displaystyle \sum_{m = 1}^M \sqrtesteigvalnmdimi^{\, -2} \, \sqrteigvalnmdimi \, w \Big ( \displaystyle \sum_{\dimib=1}^\mydim \sqrtesteigvalnmdimib^{\, -2} \, \sqrteigvalnmdimib^2 \, X_{m,\dimib}^{2} \, \Big  ) \, X_{m,\dimi} \myfstop
    $$
    Therefore, we have:
%	It thus follows from the foregoing and \eqref{eq: Sbm(i,N)} that 
	\begin{equation}
		\label{eq: sum rewritten}
		\Expect{S_{\ctrind,N,i}(\Ctr,\YObsSeqFull)} = \dfrac{1}{M} \sum_{m=1}^M \Expect{\fbm_m(\Xvec_m) X_{m,\dimi}}
	\end{equation}
	with
	\begin{equation}
	    \label{eq: expression of fm}
        \forall \, \xvec = (x_1, \ldots, x_\mydim)^\transpose \in \Rset^\mydim, \fbm_{m}(\xvec) = \sqrtesteigvalnmdimi^{\, -2} \, \sqrteigvalnmdimi \, w \Big ( \displaystyle \sum_{\dimib=1}^\mydim \sqrtesteigvalnmdimib^{\, -2} \, \sqrteigvalnmdimib^{\, 2} \, x_{\dimib}^{\, 2} \, \Big  )
        \myfstop
        \nonumber
	\end{equation}
	Since \cref{Assumption: on alpha} induces the equivalence between an arbitrarily large $N$ and an arbitrarily large $M$, it follows from \eqref{eq: sum rewritten} that
	\begin{onecol}
		\begin{equation}
			\label{eq: equivalence between the limits}
			\displaystyle \lim_{N \to \infty} \Expect{S_{\ctrind,N,i}(\Ctr,\YObsSeqFull)} = 0 \, \Leftrightarrow \, \displaystyle \lim_{M \to \infty} \dfrac{1}{M} \sum_{m=1}^M \Expect{\fbm_m(\Xvec_m) X_{m,\dimi}} = 0 \myfstop
		\end{equation}
	\end{onecol}
According to \eqref{eq: deltamat = } and \eqref{eq: delta in interval}, for all $m \in \intM$, $\Coeff_m$ is diagonal {with diagonal elements in} $[\invsqrteigvalmax, \invsqrteigvalmin] \subset \Open{0}{\infty}$. In addition, $w(t)$ does not increase with $t$ {and it follows from \cref{Assumption: on w} that} $w(t)$ is finitely upper-bounded. {Without loss of generality, suppose that this upper-bound is $1$.} Therefore, it follows from \eqref{eq: expression of fm} that, for any $m \in \intM$, $0 < \fbm^* \leqslant \fbm_{m} \leqslant \sqrtesteigvalmin^{-2} \, \sqrteigvalmax$ with
$$\forall \xvec \in \Rset^{\mydim}, \fbm^*(\xvec) = \sqrtesteigvalmax^{\, -2} \, \sqrteigvalmin \, w \left ( \sqrtesteigvalmin^{\, -2} \, \sqrteigvalmax^{\, 2} \, \norm[\xvec]^2 \right ) \myfstop$$
For any $m \in \intM$, $\fbm_{m}(\xvec)$ is, in the sense of Definition \ref{def: fn even in each coord} in Appendix \ref{App: Instrumental lemmas}, even in each coordinate of $\xvec \in \Rset^{\mydim}$. The assumptions of Lemma \ref{Lemma:The last very useful lemma} are thus satisfied and the result follows from \eqref{eq: distribution is what we want}, \eqref{eq: sum rewritten}, \eqref{eq: equivalence between the limits} \& \eqref{eq: equivalence between the limit of the sum and a null mean} in Lemma \ref{Lemma:The last very useful lemma}.
\end{proof}
\indent
We now achieve the proof of $\myclubsuit$ in Theorem \ref{Theorem: fixed points} for diagonal positive definite matrices. As of now, we fix arbitrarily $\ctrind \in \intk$, $(\Ctr_0,\Ctr) \in \Rset^{\mydim} \times \Rset^{\mydim}$ and $\YObsSeq \in \MVMbis$. 

\vspace{0.1cm}
We begin with the direct implication. Suppose that $\Ctr = \Ctr_0$. Consider any $\CtrMap \in \Setbis{\Ctr_0}$ and any $\dimi \in \intd$. Lemma \ref{step:equivalence of limits} implies that $$\displaystyle \lim_{N \to \infty} \Expect{S_{\ctrind,N,i}(\Ctr,\YObsSeqFull)} = 0 \myfstop$$ 
%Consider any $\dimi \in \intd$. 
We thus have:
\begin{onecol}
	\begin{equation}
		\label{eq: predicate on lim of E[Skni]}
		\forall \, \myeps \in \Open{0}{\thecoeff_0 \tau(0)}, \exists \, N'_0 \in \Nset, {\forall N \, \in \Nset, N > N'_0 \Rightarrow} \, 0 \leqslant \Expect{S_{\ctrind,N,i}(\Ctr,\YObsSeqFull)} < \myeps^2 \myfstop
	\end{equation}
\end{onecol}
On the other hand, set $\eta = \frac{\thecoeff_0 \tau(0) - \myeps}{\myeps}$. Since $\eta \in \Open{0}{\infty}$, it follows from \eqref{eq: full predicate before limsup-2b} in Lemma \ref{step:ess.sup.lim tends to 0} and our choice for $\eta$ that:
\begin{onecol}
	\begin{equation}
		\label{eq: full predicate before limsup-3}
        \hspace{-0.2cm}
		\begin{array}{lll}
			\exists \, \theradius \in \Open{0}{\infty}, \forall \, \CtrMap \in \Setbis{\Ctr_0}, \vspace{0.2cm} \\
			\hspace{0.9cm} \mydist{\CtrMap} > \theradius \Rightarrow \exists \, \Omega_0 \in \tribu, 
            \left \{
            \begin{array}{lll}
            \hspace{-0.1cm} \Pbb(\Omega_0) = 1, \forall \omega \in \Omega_0, \exists N''_0 \in \Nset, \vspace{0.1cm} \\
            \hspace{-0.1cm} 
            \forall N \in \Nset, %\vspace{0.1cm} \\ 
%			\hspace{0.5cm} 
            %\hspace{-0.2cm} 
            N > N''_0 \, \Rightarrow \, \, \MyVTer(\Ctr,\YObsSeqFull)(\omega) > \myeps > 0 \myfstop
            \end{array}
            \right.
		\end{array}
        \hspace{-0.1cm}
	\end{equation}
\end{onecol}
Define $\Fbm(\CtrMap,N)$ for any $(\CtrMap,N) \in \Setbis{\Ctr_0} \times \Nset$ by setting
$$\Fbm(\CtrMap,N) = \dfrac{\Expect{S_{\ctrind,N,i}(\Ctr,\YObsSeqFull)}}{\MyVTer(\Ctr,\YObsSeqFull)} \myfstop$$
By combining \eqref{eq: predicate on lim of E[Skni]} \& \eqref{eq: full predicate before limsup-3}, setting $N_0 = \max\{N'_0,N''_0\}$ and taking into account that, by definition of $\thecoeff_{\ctrind,N}$ given in \eqref{eq: inequality on alpha}, $\thecoeff_{\ctrind,N} \leqslant 1$ for all $N \in \Nset$, we obtain:
\begin{onecol}
	\begin{equation}
		\label{eq: full predicate before limsup-4}
		\begin{array}{lll}
			\forall \, \myeps \in \Open{0}{\thecoeff_0 \tau(0)} , \exists \, \theradius \in \Open{0}{\infty}, \forall \, \CtrMap \in \Setbis{\Ctr_0}, 
			\vspace{0.2cm} \\ 
			\hspace{1.5cm} \mydist{\CtrMap} > \theradius \Rightarrow  \exists \, \Omega_0 \in \tribu, \Pbb(\Omega_0) = 1, \forall \omega \in \Omega_0, \exists N_0 \in \Nset, \vspace{0.2cm} \\
			\hspace{5cm} {\forall N \in \Nset, N > N_0 \Rightarrow} \, 0 \leqslant \thecoeff_{\ctrind,N} \Fbm(\CtrMap,N)(\omega) < \myeps \myfstop
		\end{array}
		\hspace{-0.5cm}
		\nonumber
	\end{equation}
\end{onecol}
It follows from the predicate above and Corollary \ref{Corollary: Quanteur de lim of limsup pour notre pb} of Proposition \ref{Proposition: Quanteur de lim of limsup} that $$\esslimsup{\mydist{\CtrMap} \to \infty}{N} \thecoeff_{\ctrind,N} \Fbm(\CtrMap,N) = 0 \myfstop$$ 
Since $\dimi$ is arbitrary in $\intd$, we thus have
\begin{onecol}
	\begin{equation}
		\label{Eq: Encore une limsup}
		\forall \dimi \in \intd, \esssuplim{\mydist{\CtrMap} \to \infty}{N} \dfrac{\thecoeff_{\ctrind,N} \, \Expect{S_{\ctrind,N,i}(\Ctr,\YObsSeqFull)}}{\MyVTer(\Ctr,\YObsSeqFull)} = 0 \myfstop
	\end{equation}
\end{onecol}
Consequently, by Proposition \ref{prop: sums of ess sup lim} and the second equality in Corollary \ref{step:ess.sup.lim tends to 0 - corollary} of Lemma \ref{step:ess.sup.lim tends to 0}, \eqref{Eq: Encore une limsup} is equivalent to
\begin{equation}
	\label{eq: the limit for each i}
	\forall \dimi \in \intd, \esssuplim{\mydist{\CtrMap} \to \infty}{N} \,  \dfrac{\MyWTer(\Ctr,\YObsSeqFull)}{\MyVTer(\Ctr,\YObsSeqFull)} = 0 \myfstop
\end{equation}
We thus derive from \eqref{eq: new h (a) - 3} and \eqref{eq: esl centred} that \eqref{eq: the limit for each i} is equivalent to
\begin{equation}
	\label{eq: premisse}
	\esssuplim{\mydist{\CtrMap} \to \infty}{N} \TheFunc(\Ctr,\aObsSeq) = \Ctr \myfstop
	%	\nonumber
\end{equation}
Conversely, suppose that \eqref{eq: premisse} holds true. By the equivalences established above, we thus have \eqref{Eq: Encore une limsup}. By assumption, $w$ is upper-bounded and non-negative. Without loss of generality, suppose that this upper-bound is $1$ as in the proof of Lemma \ref{step:equivalence of limits}. Therefore, according to \eqref{eq: MyW in Thm1-0} and since $w > 0$ by assumption\footnote{Note that $w > 0$ almost everywhere with respect to Lebesgue measure would suffice.} $$\forall \omega \in \Omega, 0 < \MyVTer(\Ctr, \YObsSeqFull) (\omega) \leqslant {\investeigvalmin} \myfstop$$
As a consequence, since $\thecoeff_{\ctrind,N} \geqslant \thecoeff_0$ by \eqref{eq: inequality on alpha}, we derive from the foregoing that
\begin{onecol}
	$$
	\forall \omega \in \Omega, \left \vert \Expect{S_{\ctrind,N,i}(\Ctr,\YObsSeqFull)} \right \vert 
	\leqslant \dfrac{\investeigvalmin}{\thecoeff_0} \dfrac{\thecoeff_{\ctrind,N} \left \vert \Expect{S_{\ctrind,N,i}(\Ctr,\YObsSeqFull)} \right \vert} { \MyVTer(\Ctr,\YObsSeqFull) (\omega)} %\hspace{0.4cm} \left (\text{$\Pbb$-a.s} \right )
	$$
\end{onecol}
and thus, that
\begin{onecol}
	\begin{equation}
		\label{eq: inequalities on limsup}
		\forall \omega \in \Omega, \displaystyle \mylimsup{N \to \infty} \left \vert \, \Expect{S_{\ctrind,N,i}(\Ctr,\YObsSeqFull)} \, \right \vert \leqslant \dfrac{\investeigvalmin}{\thecoeff_0} \displaystyle \mylimsup{N \to \infty} \dfrac{\thecoeff_{\ctrind,N} \left \vert \Expect{S_{\ctrind,N,i}(\Ctr,\YObsSeqFull)} \right \vert} { \MyVTer(\Ctr,\YObsSeqFull)(\omega)} \myfstop
	\end{equation}
\end{onecol}
According to \eqref{eq: elementary property of the infinity norm}, there exists $\Omega_0 \in \tribu$ with $\Pbb(\Omega_0) = 1$ such that
\begin{onecol}
	\begin{equation}
		\label{eq: bound for the limsup}
		\forall \omega \in \Omega_0, \mylimsup{N \to \infty} \dfrac{\thecoeff_{\ctrind,N} \left \vert \Expect{S_{\ctrind,N,i}(\Ctr,\YObsSeqFull)} \right \vert}{ \MyVTer(\Ctr,\YObsSeqFull)(\omega)} \leqslant \left \vert \mylimsup{N \to \infty} \dfrac{\thecoeff_{\ctrind,N} \left \vert \Expect{S_{\ctrind,N,i}(\Ctr,\YObsSeqFull)} \right \vert}{ \MyVTer(\Ctr,\YObsSeqFull)} \right \vert_{\infty} \myfstop
	\end{equation}
\end{onecol}
Thence, \eqref{eq: inequalities on limsup} and \eqref{eq: bound for the limsup} imply that
\begin{onecol}
	\begin{equation}
		\label{eq: inequalities on limsup-}
%		\nonumber
		\mylimsup{N \to \infty} \left \vert \Expect{S_{\ctrind,N,i}(\Ctr,\YObsSeqFull)} \right \vert \leqslant \dfrac{{\investeigvalmin}}{\thecoeff_0} \left \vert \mylimsup{N \to \infty} \dfrac{\thecoeff_{\ctrind,N} \left \vert \Expect{S_{\ctrind,N,i}(\Ctr,\YObsSeqFull)} \right \vert}{ \MyVTer(\Ctr,\YObsSeqFull)} \right \vert_{\infty} \myfstop
	\end{equation}
\end{onecol}
On the other hand, from \eqref{Eq: Encore une limsup} and \eqref{eq: esl of a discrete process in our case}, we have
$$\lim_{\mydist{\CtrMap} \to \infty} \left \vert \mylimsup{N \to \infty} \dfrac{\thecoeff_{\ctrind,N} \left \vert \Expect{S_{\ctrind,N,i}(\Ctr,\YObsSeqFull)} \right \vert}{ \MyVTer(\Ctr,\YObsSeqFull)} \right \vert_{\infty} = 0 \myfstop$$
It thus follows from \eqref{eq: inequalities on limsup-} that
\begin{equation}
	\label{eq: final limit-1}
	\lim_{\mydist{\CtrMap} \to \infty} \, \left ( \, \mylimsup{N \to \infty} \left \vert \, \Expect{S_{\ctrind,N,i}(\Ctr,\YObsSeqFull)} \, \right \vert \, \right ) = 0 \myfstop
\end{equation}
Since $\YObsSeq \in \MVMbis$ and $\CtrMap \in \Setbis{\Ctr_0}$, \eqref{eq: Sbm(i,N)} and \eqref{eq: the law of a coordinate} induce that $\Expect{S_{\ctrind,N,i}(\Ctr,\YObsSeqFull)}$ depends only on $\Ctr_0 - \Ctr$, $\MCovn$ and $\estMCovn$ with $n \in \TheSet_{\ctrind,N}$. Hence, the value of $\Expect{S_{\ctrind,N,i}(\Ctr,\YObsSeqFull)}$ does not depend on $\CtrMap$ and thus on $\mydist{\CtrMap}$ defined by \eqref{eq: definition of distmin}.
As a consequence, the value of $\mylimsup{N \to \infty} \left \vert \, \Expect{S_{\ctrind,N,i}(\Ctr,\YObsSeqFull)} \, \right \vert$, which always exists in $[0,\infty]$, does not depend on $\mydist{\CtrMap}$ either. Thereby, \eqref{eq: final limit-1} implies that:
\begin{onecol}
	\begin{equation}
		\label{eq: equality between esl and limsup}
		\mylimsup{N \to \infty} \left \vert \, \Expect{S_{\ctrind,N,i}(\Ctr,\YObsSeqFull)} \, \right \vert 
		= \displaystyle \lim_{\mydist{\CtrMap} \to \infty} \left ( \, \mylimsup{N \to \infty} \left \vert \, \Expect{S_{\ctrind,N,i}(\Ctr,\YObsSeqFull)} \, \right \vert \, \right )
		= 0 \myfstop
	\end{equation}
\end{onecol}
Therefore, $\displaystyle \lim_{N \to \infty} \Expect{S_{\ctrind,N,\dimi}(\Ctr,\YObsSeqFull)} = 0$ and since $\dimi$ is arbitrary in $\intd$, Lemma \ref{step:equivalence of limits} implies that $\Ctr = \Ctr_0$. 
\subsection{Proof of Equivalence (\textbf{A}) for non-diagonal \spdms}
\label{subsec:The case of any sequences of covariance matrices}
We begin with a preliminary lemma that makes it possible to move from the diagonal case addressed above to the general case of non-diagonal \spdms.
\begin{Lemma}
	\label{lemma: lemma useful in practice}
	If $\MCov  = (\MCovn)_{n \in \Nset}$ and $\estMCov = (\estMCovn)_{n \in \Nset}$ are two sequences of \spdms~satisfying \cref{Assumption: on C,Assumption: on lambda}, then, for any sequence $\Wvec = (\Wvec_n)_{n \in \Nset}$ of elements of $\rv$ and any $\Ctr \in \Rset^\mydim$,	
	\begin{onecol}
		\begin{equation}
			\label{eq: equivalence of els-0}
			%\forall \, \Ctr \in \Rset^{\mydim}, \forall \, \Wvec = (\Wvec_{n})_{n \in \Nset} \in \family,
            \TheFunc(\Ctr,\Wvec) = \Rmat \, \TheFuncb \left (\Rmat^\transpose \Ctr,\fn{\Rmat^\transpose} ( 	\Wvec ) \right )
			\nonumber
		\end{equation}
	\end{onecol}
	where $\estDiagmat = (\estDiagmatn)_{n \in \Nset}$ and $\Rmat$ are defined according to \eqref{eq: eigen decompositions}.
\end{Lemma}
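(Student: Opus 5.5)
The identity is purely algebraic, so I will obtain it by substituting the eigen decompositions \eqref{eq: eigen decompositions} into the definition \eqref{eq: new h (a) - 0} of $\TheFunc$ and simplifying, using only that $\Rmat$ is orthogonal. No probabilistic argument is needed; I will work pointwise for each $\omega\in\Omega$.

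First, by \eqref{eq: eigen decomposition of estMCov} and $\Rmat^\transpose\Rmat=\Rmat\Rmat^\transpose=\Id$, we have $\investMCovn = \Rmat\,\estDiagmatn^{-1}\Rmat^\transpose$ for every $n$. Second, by \eqref{eq: estimated Mahalanobis norm - relation with euclidean}, the kernel weights satisfy
$w\big(\Maha{\estMCovn}^2(\Wvec_n-\Ctr)\big) = w\big(\Maha{\estDiagmatn}^2(\Rmat^\transpose\Wvec_n-\Rmat^\transpose\Ctr)\big)$.
Denote these common scalars by $a_n$; they are positive by \cref{Assumption: on w}.

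Substituting into \eqref{eq: new h (a) - 0} gives
$\sum_n a_n\investMCovn = \Rmat\big(\sum_n a_n\estDiagmatn^{-1}\big)\Rmat^\transpose$.
Since $\Rmat$ is orthogonal, its inverse is $\Rmat\big(\sum_n a_n\estDiagmatn^{-1}\big)^{-1}\Rmat^\transpose$, and this inverse exists because the middle matrix is diagonal with positive entries. In the same way,
$\sum_n a_n\investMCovn\Wvec_n = \Rmat\sum_n a_n\estDiagmatn^{-1}(\Rmat^\transpose\Wvec_n)$.
Multiplying the two, the inner factor $\Rmat^\transpose\Rmat$ cancels, and what remains is $\Rmat$ applied to exactly the expression \eqref{eq: new h (a) - 0} for $\TheFuncb$, evaluated at the point $\Rmat^\transpose\Ctr$ and the sequence $(\Rmat^\transpose\Wvec_n)_{n}=\fn{\Rmat^\transpose}(\Wvec)$.

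The statement takes $\Wvec$ to be an arbitrary sequence in $\rv$, whereas \eqref{eq: definition of W bar} defines $\fn{\Rmat^\transpose}$ only on $\family$. I will therefore read $\fn{\Rmat^\transpose}(\Wvec)$ simply as $(\Rmat^\transpose\Wvec_n)_{n\in\Nset}$. This bookkeeping about the domain is the only point needing any care, since the computation itself is routine.
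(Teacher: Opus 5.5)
Your proof is correct and follows the paper's argument essentially line by line. Like the paper, you substitute $\estMCovn = \Rmat \estDiagmatn \Rmat^\transpose$ and the Mahalanobis-norm identity into the definition of the randomized mean-shift function, use positivity of $w$ to make the diagonal sum invertible, and let $\Rmat^\transpose \Rmat = \Id$ cancel. Your componentwise reading of $\fn{\Rmat^\transpose}$ on arbitrary sequences is also the one the paper tacitly adopts.
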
	
\begin{proof}
	Let $\Wvec = (\Wvec_n)_{n \in \Nset}$ be any sequence of elements of $\rv$ and $\Ctr \in \Rset^\mydim$. First, according to \eqref{eq: new h (a) - 0} applied to $\estDiagmat$, we have:
	\begin{onecol}
		\begin{equation}
			\label{eq: new h (a) - case diagonal}
			\TheFuncb(\Ctr,\Wvec) 
			= \left ( \sum_{n=1}^N w \big ( \Maha{\estDiagmatn}^2 (\Wvec_n - \Ctr) \big ) \investDiagmatn \right )^{\! -1} \! 
			\times \, \sum_{n=1}^N w \big ( \Maha{\estDiagmatn}^2 ( \Wvec_n - \Ctr) \big ) \investDiagmatn \Wvec_n \myfstop
		\end{equation}
	\end{onecol}
	Second, by injecting \eqref{eq: eigen decomposition of estMCov} \& \eqref{eq: estimated Mahalanobis norm - relation with euclidean} into \eqref{eq: new h (a) - 1}, we obtain
	\begin{equation}
		\label{eq: we rewrite G}
		\begin{array}{lll}
			\hspace{-0.15cm} 
			\TheFunc(\Ctr,\Wvec)
			\vspace{0.2cm} 
			\\
			= \Big ( \Rmat \displaystyle \sum_{n=1}^N w \big ( \Maha{\estDiagmatn}^2 \! \big ( \Rmat^\transpose (\Wvec_n-\Ctr) \big ) \big ) \investDiagmatn \Rmat^\transpose \, \Big )^{\! -1} \! \! \times \! \Rmat \displaystyle \sum_{n=1}^N w \big ( \Maha{\estDiagmatn}^2 \! \big ( \Rmat^\transpose (\Wvec_n-\Ctr) \big ) \big ) \investDiagmatn \Rmat^\transpose \Wvec_n
			\vspace{0.2cm} \\
			= \Rmat \Big ( \displaystyle \sum_{n=1}^N \! w \big ( \Maha{\estDiagmatn}^2 \! \big ( \Rmat^\transpose \Wvec_n \! - \! \Rmat^\transpose \Ctr \big ) \big ) \investDiagmatn \Big )^{\! -1} \! \Rmat^\transpose \! \! \times \! \Rmat \displaystyle \sum_{n=1}^N \! w \big ( \Maha{\estDiagmatn}^2 \! \big ( \Rmat^\transpose \Wvec_n \! - \! \Rmat^\transpose \Ctr \big ) \big ) \investDiagmatn \! \Rmat^\transpose \Wvec_n
			%\vspace{0.1cm} 
			\\
			= \Rmat \Big  ( \displaystyle \sum_{n=1}^N w \big ( \Maha{\estDiagmatn}^2 \! \big ( \Rmat^\transpose \Wvec_n - \Rmat^\transpose \Ctr \big ) \big ) \investDiagmatn \Big )^{\! -1} \! \times \, \displaystyle \sum_{n=1}^N w \big ( \Maha{\estDiagmatn}^2 \! \big ( \Rmat^\transpose \Wvec_n - \Rmat^\transpose \Ctr \big ) \big ) \investDiagmatn \Rmat^\transpose \Wvec_n \mycomma
		\end{array}
	\end{equation}    
    since $\Rmat$ is orthogonal and, because $w(t) > 0$ for all $t \in \RightOpen{0}{\infty}$, $$\forall \omega \in \Omega, \sum_{n=1}^N w \left ( \Maha{\estDiagmatn}^2 \big ( \Rmat^\transpose \Wvec_n (\omega)- \Rmat^\transpose \Ctr \big ) \right ) \investDiagmatn \in \GL \myfstop$$
    Whence the result as a consequence of \eqref{eq: definition of W bar}, \eqref{eq: new h (a) - case diagonal} and the last equality in \eqref{eq: we rewrite G}.
\end{proof}
\indent
Suppose that \cref{Assumption: on alpha,Assumption: on w,Assumption: on C,Assumption: on lambda} hold for a sequence $\seq = (\seq_n)_{n \in \Nset}$ of integers in $\intk$, a function $w: \RightOpen{0}{\infty} \to \Open{0}{\infty}$ and two sequences of \spdms~$\MCov = (\MCovn)_{n \in \Nset}$ and $\estMCov = (\estMCovn)_{n \in \Nset}$. Consider any $\YObsSeq \in \MVMbis$, any $\CtrMap \in \aSetbis$ and any $\Ctr \in \Rset^\mydim$. 
\medskip \\
\indent
We derive from Lemma \ref{lemma: lemma useful in practice} that
\begin{onecol}
\begin{equation}
	\label{eq: equivalence of els}
		\TheFunc(\Ctr, \aObsSeq) - \Ctr = \Rmat \, \left (\TheFuncb(\Rmat^\transpose \Ctr,\fn{\Rmat^\transpose} (\aObsSeq) ) - \Rmat^\transpose \Ctr \right ) \mycomma
\end{equation}
\end{onecol}
where $\estDiagmat = (\estDiagmatn)_{n \in \Nset}$, the matrices $\estDiagmatn$ and $\Rmat$ being defined by \eqref{eq: eigen decompositions}.

\vspace{0.1cm}
In addition, choose any $\Ctr_0 \in \Rset^\mydim$ and any $\ctrind \in \intk$. Introduce the function $\psitfn: \bSetbis \to \aSetbis$ , denoted shortly $\psit$, assigning to a given $\CtrMapBis \in \bSetbis$ the sequence $\psit(\CtrMapBis) \in \aSetbis$ defined by:
\begin{equation}
	\label{eq: definition of psit}
	\begin{array}{cccc}
		\psit(\CtrMapBis):
		& \intk & \to & \Rset^{\mydim} \\
		& k & \mapsto & \Rmat \, \CtrMapBis(k) \myfstop
	\end{array}
\end{equation}
Since $\Rmat \in \GL$, $\psit$ is a bijection and its inverse is $\invpsitfn: \aSetbis \to \bSetbis$ --- in short, $\invpsit$. According to \eqref{eq: definition of psit}, $\invpsit$ assigns to a given $\CtrMap \in \aSetbis$ the sequence:
\begin{equation}
	\label{eq: definition of invpsit}
	\begin{array}{cccc}
		\invpsit(\CtrMap):
		& \intk & \to & \Rset^{\mydim}  \\
		& k & \mapsto & \Rmat^\transpose \CtrMap(k) \myfstop
	\end{array}
\end{equation}
According to the definition of $\fn{\Rmat^\transpose}$ given by \eqref{eq: definition of W bar}, we construct the composite map 
\begin{equation}
	\label{eq: composite map}
	\hspace{-0.2cm}
	\begin{array}{cccccc}
		\fn{\Rmat^\transpose} \! \circ \! \Obs \! \circ \! \psit: 
		& \bSetbis \! & \! \to \! & \! \familyfn{\Delta} 
		\vspace{0.1cm} \\
		& \CtrMapBis \! & \! \mapsto \! & \! \left ( \Rmat^\transpose \Squel{\psit(\CtrMapBis)}{n} \right )_{n \in \Nset} \myfstop
		\vspace{0.1cm}
	\end{array}
\end{equation}
Let $\CtrMapBis \! \in \! \bSetbis$. We have $\psit(\CtrMapBis) \! \in \! \aSetbis$ and $\YObsSeq(\psit(\CtrMapBis)) \! \in \! \familyfn{\MCov}$. Therefore, since we have $\YObsSeq \! \in \! \MVMbis$, it follows that $\CtrSeq(\YObsSeq(\psit(\CtrMapBis))) \! = \! \psit(\CtrMapBis)$, where we recall that $\CtrSeq$ is defined by Definition \ref{Definition: CtrMap}. \eqref{eq: distribution of Rmat'Wn} thus implies that
\begin{equation}
	\label{eq: distribution of Rmat' Y R}
	\begin{array}{lll}
		\forall n \in \Nset, \Rmat^\transpose \Squel{\psit(\CtrMapBis)}{n} 
		& \thicksim & \Ncal \! \left ( \Rmat^\transpose \big ( \psit(\CtrMapBis) \big ) (\seq_n) , \Diagmatn \right )
		\vspace{0.15cm} \\
		& \thicksim & \Ncal \! \left ( \CtrMapBis (\seq_n) , \Diagmatn \right ) 
	\end{array}
\end{equation}
since, by definition of $\psit$ and because $\Rmat$ is orthogonal, \eqref{eq: definition of psit} induces that
$$\forall k \in \intk, \Rmat^\transpose \big ( \psit(\CtrMapBis) \big ) (k) = \Rmat^\transpose \Rmat \, \CtrMapBis(k) = \CtrMapBis(k) \myfstop$$
Thus, for any $\YObsSeq \in \MVMbis$ and any $\Ctr_0 \in \Rset^\mydim$,
\begin{equation}
	\label{eq: the composite map belongs to the correct set}
	\fn{\Rmat^\transpose} \circ \Obs \circ \psit \in \MVMter \myfstop
\end{equation}
For any $\CtrMap \in \aSetbis$, it follows from Definition \ref{definition: the distance} and \eqref{eq: definition of invpsit} that $\mydistfn{}{\Rmat^\transpose \! \Ctr_0} \circ \invpsit = \mydist{}$. Since $\invpsit$ is the inverse of $\psit$, the above equality induces that 
\begin{equation}
	\label{eq: change of distance}
 	\mydistfn{}{\Rmat^\transpose \! \Ctr_0} = \mydist{} \circ \psit \myfstop
     \nonumber
\end{equation}
From this equality and \eqref{eq: change of variable} with $H = \mydist{}$ and $\gamma^{-1} = \psit$, we derive that, for any function $f: \Setbis{\Ctr_0} \to \Rset$, 
\begin{equation}
	\label{eq: change of variable applied to our case}
	\displaystyle \lim_{\mydist{\CtrMap} \to \infty} f(\CtrMap) = \displaystyle \lim_{\mydistfn{\CtrMapBis}{\Rmat^\transpose \! \Ctr_0} \to \infty} f \big ( \psit(\CtrMapBis) \big ) \myfstop
%	\nonumber
\end{equation}
Since $\Rmat$ is orthogonal, it follows from \eqref{eq: esl of a discrete process in our case}, \eqref{eq: equivalence of els} and \eqref{eq: change of variable applied to our case} that $$\esssuplim{\mydist{\CtrMap} \to \infty}{N} \TheFunc(\Ctr,\aObsSeq) = \Ctr$$
if and only if
\begin{onecol}
	\begin{equation}
	\label{eq: equivalence 2}
        \limlimsupbasic{\mydistfn{\CtrMapBis}{\Rmat^\transpose \Ctr_0} \to \infty}{N}{\, \TheFuncb(\Rmat ^\transpose \Ctr, \fn{\Rmat^\transpose} ( \YObsSeq(\psit(\CtrMapBis) ) ) - \Rmat^\transpose \Ctr} 
		= 0 \myfstop
	\end{equation}
\end{onecol}
According to \eqref{eq: the composite map belongs to the correct set}, for any $\CtrMapBis \in \bSetbis$, $\fn{\Rmat^\transpose} \circ \YObsSeq \circ \psit(\CtrMapBis) \in \familyfn{\Diagmat}$. Therefore, the covariance matrices of the elements of $\fn{\Rmat^\transpose} \circ \YObsSeq \circ \psit(\CtrMapBis)$ are all diagonal. By Equivalence $\myclubsuit$ established in Section \ref{subsec: the case of diagonal covariance matrices} for diagonal covariance matrices, the equality in \eqref{eq: equivalence 2} is satisfied if and only if $\Rmat^\transpose \Ctr = \Rmat^\transpose \Ctr_0$, and hence, if and only if $\Ctr = \Ctr_0$ because $\Rmat \in \GL$.

\subsection{Proof of Equivalence (\textbf{B})}
\label{subsec: proof of spade}

Suppose that a sequence $\seq = (\seq_n)_{n \in \Nset}$ of integers valued in $\intk$, a function $w: \RightOpen{0}{\infty} \to \Open{0}{\infty}$ and two sequences $\MCov = (\MCov)_{n \in \Nset}$ and $\estMCov = (\estMCov)_{n \in \Nset}$ of \spdms~satisfy \cref{Assumption: on alpha,Assumption: on w,Assumption: on C,Assumption: on lambda}. 
\medskip \\
\indent
Given any $\ctrind \! \in \! \intk$, any pair $(\Ctr_0,\Ctr) \! \in \Rset^{\mydim} \! \times \! \Rset^{\mydim}$ and any $\YObsSeq \! \in \! \MVMbis$, it follows from \eqref{eq: esl centred} that 
\begin{onecol}
	\begin{equation}
		\label{eq: equiv1}
		\esssuplim{\mydist{\CtrMap} \to \infty}{N} \TheFunc(\Ctr,\aObsSeq) = \Ctr 
		\quad \Leftrightarrow \quad \esssuplim{\mydist{\CtrMap} \to \infty}{N} \left ( \TheFunc(\Ctr,\aObsSeq) - \Ctr \right ) = 0 \myfstop
		\nonumber
	\end{equation}
\end{onecol}
According to \eqref{eq: equivalence of els} \& Proposition \ref{prop: esl with linear transform}, which applies since $\Rmat \in \GL$, we derive from the equivalence above that 
\begin{onecol}
	\begin{equation}
        \begin{array}{lll}
		\esssuplim{\mydist{\CtrMap} \to \infty}{N} \TheFunc(\Ctr,\aObsSeq) = \Ctr \\
        \hspace{2cm}
		\Leftrightarrow \quad 
		\esssuplim{\mydist{\CtrMap} \to \infty}{N} \left (\TheFuncb(\Rmat^\transpose \Ctr,\fn{\Rmat^\transpose} (\aObsSeq) ) - \Rmat^\transpose \Ctr \right ) = 0
        \end{array}
		\nonumber
	\end{equation}
\end{onecol}
and thus, by applying \eqref{eq: esl centred} again, we obtain that 
\begin{onecol}
	\begin{equation}
        \begin{array}{lll}
		\esssuplim{\mydist{\CtrMap} \to \infty}{N} \TheFunc(\Ctr,\aObsSeq) = \Ctr 
        \\
		\hspace{2cm} \Leftrightarrow \quad
		\esssuplim{\mydist{\CtrMap} \to \infty}{N} \TheFuncb(\Rmat^\transpose \Ctr,\fn{\Rmat^\transpose} (\aObsSeq) ) = \Rmat^\transpose \Ctr \myfstop
        \end{array}
		\nonumber
	\end{equation}
\end{onecol}
Whence the result as a consequence of $\myclubsuit$.

%%%%%%%%%%%%%%%%%%%%%%%%%%%%%%%%%%%%%%%%%%%%%%
%%%%%%%%%%%%%%%%%%%%%%%%%%%%%%%%%%%%%%%%%%%%%%
%%%%%%%%%%%%%%%%%%%%%%%%%%%%%%%%%%%%%%%%%%%%%%
%%%%%%%%%%%%%%%%%%%%%%%%%%%%%%%%%%%%%%%%%%%%%%
\section{Instrumental lemmas}
\label{App: Instrumental lemmas}
In what follows, $\normpdf$ denotes the probability density function of $\Ncal(0,1)$ and we write (a.e) for ``almost everywhere'' with respect to the Lebesgue measure on $\Rset$. As usual, $L^1(\Rset)$ denotes the set of all integrable functions $f: \Rset \to \Rset$ with respect to Lebesgue measure and the $\sgn$ function is defined for all $x \in \Rset$ by setting:
$$\sgn(x) = 
\left \{
\begin{array}{lll}
	1 & \text{if} & x > 0 \mycomma
	%\vspace{0.1cm} 
	\\
	-1 & \text{if} & x < 0 \mycomma
	%\vspace{0.1cm} 
	\\
	0 & \text{if} & x = 0 \myfstop
\end{array}
\right.
$$

\begin{Definition}
	\label{def: fn even in each coord}
	We say that a function $\fbm: \Rset^{\mydim} \to \RightOpen{0}{\infty}$ is even in each coordinate of $\xvec = (\xvec_1, \ldots, \xvec_m)^\transpose \in \Rset^{\mydim}$ if, for any $i \in \llbracket 1, \mydim \, \rrbracket$, $\fbm(\ldots, -x_i, \ldots ) = \fbm(\ldots, x_i, \ldots)$ when the coordinates other than $x_i$ remain unchanged.
\end{Definition}

\vspace{0.1cm}
To state and prove Lemmas \ref{Lemma: Impaire} and \ref{Lemma:The last very useful lemma} below, we introduce the following functions:
\begin{onecol}
\begin{equation}
	\label{eq:function_uab}
	\forall (a,b) \in \RightOpen{0}{\infty} \times \RightOpen{0}{\infty}, \forall x \in \RightOpen{0}{\infty}, u_{a,b}(x) = \dfrac{1}{\sqrt{2 \pi}} e^{-x^2 / 2} e^{-b^2/2} \left (e^{ax} - e^{-ax} \right ) \myfstop
\end{equation}
\end{onecol}
\begin{Lemma}
	\label{Lemma: Impaire}
		If $G: \Rset \to \Rset$ is odd and non-negative for any $x \geqslant 0$ and if, for any $\mu \in \Rset$, $G \times  \normpdf(\cdot-\mu) \in L^1(\Rset)$ then, for any $\mu \in \Rset$ and any $(a,b) \in \RightOpen{0}{\infty} \times \RightOpen{0}{\infty}$ such that $0 \leqslant a \leqslant \abs{\mu} \leqslant b < \infty$,
	\begin{equation}
		\label{eq: int of G}
		\displaystyle \int_0^\infty G(x) u_{a,b}(x) \dx \leqslant \sgn (\mu) \displaystyle \int_{-\infty}^\infty G(x) \normpdf \left ( x-\mu \right ) \dx \myfstop
		\nonumber
	\end{equation}
\end{Lemma}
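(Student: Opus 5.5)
The plan is to fold the integral over $\Rset$ onto $\RightOpen{0}{\infty}$ using the oddness of $G$, and then to bound the resulting kernel from below by $u_{a,b}$.

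\emph{Reduction to $\mu \geqslant 0$.} If $\mu = 0$, then $\sgn(\mu)=0$ and $0 \leqslant a \leqslant \abs{\mu} = 0$ forces $a=0$. In that case $u_{0,b} \equiv 0$, so both sides vanish and the inequality holds trivially. If $\mu < 0$, the substitution $x \mapsto -x$ together with the oddness of $G$ gives
$$\int_{-\infty}^\infty G(x)\normpdf(x-\mu)\dx = -\int_{-\infty}^\infty G(x)\normpdf(x+\mu)\dx = -\int_{-\infty}^\infty G(x)\normpdf(x-\abs{\mu})\dx.$$
Multiplying by $\sgn(\mu) = -1$ then yields exactly the corresponding expression for $\abs{\mu}$. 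It therefore suffices to treat $\mu > 0$, with $0 \leqslant a \leqslant \mu \leqslant b$.

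\emph{Folding.} Split the integral at $0$ and use $G(-x) = -G(x)$ on the negative half-line. This gives
$$\int_{-\infty}^\infty G(x)\normpdf(x-\mu)\dx = \int_0^\infty G(x)\big(\normpdf(x-\mu)-\normpdf(x+\mu)\big)\dx.$$
Integrability of both pieces follows from the hypothesis that $G\,\normpdf(\cdot-\mu)\in L^1(\Rset)$ for every $\mu$. Next, write
$$\normpdf(x-\mu)-\normpdf(x+\mu) = \frac{1}{\sqrt{2\pi}}e^{-x^2/2}e^{-\mu^2/2}\big(e^{\mu x}-e^{-\mu x}\big).$$

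\emph{Pointwise comparison.} For $x \geqslant 0$, the function $c \mapsto e^{cx}-e^{-cx} = 2\sinh(cx)$ is non-decreasing in $c \geqslant 0$, so $e^{\mu x}-e^{-\mu x} \geqslant e^{ax}-e^{-ax} \geqslant 0$ because $a \leqslant \mu$. Likewise, $e^{-\mu^2/2} \geqslant e^{-b^2/2}$ because $\mu \leqslant b$. Combining the two, $\normpdf(x-\mu)-\normpdf(x+\mu) \geqslant u_{a,b}(x) \geqslant 0$ on $\RightOpen{0}{\infty}$. Since $G \geqslant 0$ there, multiplying by $G$ and integrating gives the claim.

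The only delicate point is the integrability bookkeeping. Because $0\leqslant G\,u_{a,b}\leqslant G\,(\normpdf(\cdot-\mu)-\normpdf(\cdot+\mu))$ on $\RightOpen{0}{\infty}$, the left-hand integral is finite, so both sides make sense. Everything else is elementary.
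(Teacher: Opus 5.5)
Your proof is correct and follows essentially the paper's argument: fold the integral with the oddness of $G$ to obtain $\int_0^\infty G\,u_{\mu,\mu}$, then use the pointwise bound $\sgn(\mu)\,u_{\mu,\mu} \geqslant u_{a,b} \geqslant 0$ on $[0,\infty)$ together with $G \geqslant 0$ there. The only cosmetic difference is that you reduce to $\mu>0$ by a substitution and treat $\mu=0$ separately, whereas the paper absorbs the sign directly by rewriting $\sgn(\mu)\,u_{\mu,\mu}$ in terms of $|\mu|$.
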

\begin{proof}
Suppose that $G: \Rset \to \Rset$ is odd and non-negative for any $x \geqslant 0$. Fix $\mu \in \Rset$. Readily, we have
\begin{onecol}
	\begin{equation}
		\label{eq: integral of G times p}
			\begin{array}{lll}
				\displaystyle \int_{-\infty}^\infty G(x) \normpdf(x-\mu) \dx 
				& = & \dfrac{1}{\sqrt{2 \pi}} \displaystyle \int_0^\infty G(x) \left ( e^{-\frac{1}{2}(x - \mu)^2} - e^{-\frac{1}{2}(x + \mu)^2} \right ) \dx
				\Big ) \dx \vspace{0.25cm} \\
				% & = & \displaystyle \int_0^\infty G(x) \Big ( \normpdf(x - \mu) - \normpdf(x + \mu) \Big ) \dx \vspace{0.2cm} \\
				& = & \displaystyle \int_0^\infty G(x) u_{\mu,\mu}(x) \dx	\myfstop
			\end{array}
	\end{equation}
\end{onecol}
In addition,
\begin{onecol}
\begin{equation}
	\label{eq: umumu2}
	\forall x \in \Rset, \sgn(\mu) u_{\mu,\mu}(x) = \frac{1}{\sqrt{2 \pi}} e^{-x^2 / 2} e^{-\mu^2/2} \left (e^{ \vert \mu \vert x} - e^{- \vert \mu \vert x} \right ) \myfstop
\end{equation}
\end{onecol}
If $(a,b) \in \RightOpen{0}{\infty} \times \RightOpen{0}{\infty}$ verifies $0 \leqslant a \leqslant \abs{\mu} \leqslant b < \infty$, it straightforwardly results from \eqref{eq: umumu2} and \eqref{eq:function_uab} that
\begin{equation}
	\label{eq: inequality in lemma impaire}
	\forall x \geqslant 0, \sgn(\mu) {u_{\mu,\mu}(x)} \geqslant u_{a,b}(x) \geqslant 0 \myfstop
\end{equation}
The result thus derives from \eqref{eq: integral of G times p} and \eqref{eq: inequality in lemma impaire}, since $G(x) \geqslant 0$ if $x \geqslant 0$. 
\end{proof}
\begin{Lemma}
	\label{Lemma: Paire}
	If $F: \Rset \to \RightOpen{0}{\infty}$ is even and $F \times \normpdf(\cdot - \mu) \in L^1(\Rset)$ for any $\mu \in \Rset$ then, for any $\mu \in \Rset$ and any pair $(a,b) \in \RightOpen{0}{\infty} \times \RightOpen{0}{\infty}$ such that $0 \leqslant a \leqslant \abs{\mu} \leqslant b < \infty$,
	\[
	\displaystyle \int_{-\infty}^\infty \hspace{-0.2cm} F(x) \normpdf \left ( x-\mu \right ) \dx \geqslant e^{-\frac{1}{2} (b^2 - a^2)} \hspace{-0.1cm} \displaystyle \int_0^\infty \hspace{-0.2cm} F(x) \normpdf \left ( x - a \right ) \dx \myfstop
	\]
\end{Lemma}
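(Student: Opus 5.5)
We mirror the proof of Lemma \ref{Lemma: Impaire}: fold the integral onto $\RightOpen{0}{\infty}$ using the evenness of $F$, then bound the resulting Gaussian kernel from below pointwise.

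Fix $\mu \in \Rset$ and $(a,b)$ with $0 \leqslant a \leqslant \abs{\mu} \leqslant b$. Splitting $\Rset$ at $0$, substituting $x \mapsto -x$ on $\Open{-\infty}{0}$ and using $F(-x)=F(x)$ gives
\[
\int_{-\infty}^\infty F(x)\normpdf(x-\mu)\dx = \int_0^\infty F(x)\big(\normpdf(x-\mu)+\normpdf(x+\mu)\big)\dx .
\]
The integrability assumption makes every manipulation legitimate. The bracket is even in $\mu$, so we may replace $\mu$ by $\abs{\mu}$.

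Next, expand the bracket as
\[
\normpdf(x-\abs{\mu})+\normpdf(x+\abs{\mu}) = \frac{1}{\sqrt{2\pi}}e^{-x^2/2}e^{-\mu^2/2}\big(e^{\abs{\mu}x}+e^{-\abs{\mu}x}\big) .
\]
For $x \geqslant 0$ we compare it with $e^{-(b^2-a^2)/2}\normpdf(x-a)$ through three pointwise inequalities:
\begin{itemize}
\item $e^{-\abs{\mu}x}\geqslant 0$, so the second exponential can be dropped;
\item $e^{\abs{\mu}x}\geqslant e^{ax}$, because $x\geqslant 0$ and $a\leqslant\abs{\mu}$;
\item $e^{-\mu^2/2}\geqslant e^{-b^2/2}$, because $\abs{\mu}\leqslant b$.
\end{itemize}
Together these give the bracket $\geqslant \frac{1}{\sqrt{2\pi}}e^{-x^2/2}e^{-b^2/2}e^{ax}$. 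Completing the square, $e^{-x^2/2}e^{ax}=e^{-(x-a)^2/2}e^{a^2/2}$, so the lower bound equals $e^{-(b^2-a^2)/2}\normpdf(x-a)$.

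Finally, since $F\geqslant 0$, we multiply this pointwise lower bound by $F(x)$ and integrate over $\RightOpen{0}{\infty}$, which yields the claim. No step is really an obstacle. The only care points are that $x\geqslant 0$ is needed for the monotonicity $e^{\abs{\mu}x}\geqslant e^{ax}$, and that the case $\mu<0$ is covered by the evenness in $\mu$.
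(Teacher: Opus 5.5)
Your proof is correct and follows essentially the same route as the paper: fold the integral onto $[0,\infty)$ using the evenness of $F$, then use the pointwise bounds $e^{\mu x}+e^{-\mu x}\geqslant e^{ax}$ and $e^{-\mu^2/2}\geqslant e^{-b^2/2}$ for $x\geqslant 0$. You also write out the completion of the square, $e^{-x^2/2}e^{ax}e^{-b^2/2}=\sqrt{2\pi}\,e^{-(b^2-a^2)/2}\,\normpdf(x-a)$, which the paper leaves implicit.
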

\begin{proof}
Since $F$ is even, we have:
\begin{onecol}
	\begin{equation}
		\label{eq: equality in lemma paire}
		\displaystyle \int_{-\infty}^\infty F(x) \normpdf\left ( x-\mu \right ) \dx = \dfrac{1}{\sqrt{2 \pi}} \displaystyle \int_0^\infty F(x) e^{- x^2/2} e^{-\mu^2/2} \left ( e^{\mu x} + e^{-\mu x} \right ) \dx \myfstop
	\end{equation}
\end{onecol}
If $(a,b) \in \RightOpen{0}{\infty} \times \RightOpen{0}{\infty}$ verifies $0 \leqslant a \leqslant \abs{\mu} \leqslant b < \infty$, we have the following inequalities:
	\begin{equation}
		\label{eq: easy facts for lemma paire}
		\forall x \geqslant 0, 
		\left \{
		\begin{array}{lll}
			e^{\mu x} + e^{-\mu x} \geqslant e^{ \abs{\mu} x } \geqslant e^{a x} \mycomma \vspace{0.1cm} \\
			e^{-\mu^2/2} \geqslant e^{-b^2 /2} \myfstop
		\end{array}
		\right.
		\nonumber
	\end{equation}
	Thence the result by injecting the inequalities above into \eqref{eq: equality in lemma paire}, since $F \geqslant 0$.
\end{proof}
\begin{Lemma}
	\label{Lemma:The last very useful lemma} %$ $ \\
	%	\noindent
	Suppose that $[\constmin,\constmax] \subset \Open{0}{\infty}$ and consider any $\meanvec = (\mean_1 \ldots, \mean_\mydim)^\transpose \in \Rset^{\mydim}$.	Let $(\Xvec_m)_{m \in \Nset}$ be a sequence of $\mydim$-dimensional real Gaussian random vectors such that $$\forall m \in \Nset, \Xvec_m = (X_{m,1}, \ldots, X_{m,\mydim})^\transpose \thicksim \Ncal(\Coeff_m \meanvec, \Identity{\mydim})$$ where, for any $m \in \Nset$, $\Coeff_m$ is a diagonal matrix {whose diagonal elements belong to} $[\constmin,\constmax]$. If $(\fbm_m)_{m \in \Nset}$ is a sequence of positive and bounded functions $\fbm_m: \Rset^{\mydim} \to \Open{0}{\infty}$ that are even in each coordinate and if there exists a positive function $\fbm^*: \Rset^{\mydim} \to \Open{0}{\infty}$ such that $\forall m \in \Nset, \displaystyle \fbm^* \leqslant \fbm_m$ (a.e), %(\ed{where (a.e) stands for almost everywhere}), 
	then 
	\begin{onecol}
		\begin{equation}
			\label{eq: equivalence between the limit of the sum and a null mean}
			\left ( \forall i \in \intd, \displaystyle \lim_{M \to \infty} \dfrac{1}{M} \displaystyle \sum_{{m = 1}}^M 	\Expect{\fbm_m(\Xvec_m) X_{m,i}} = 0 \right ) \, \, \Leftrightarrow \, \, \mean = 0 \myfstop
		\end{equation}
	\end{onecol}
\end{Lemma}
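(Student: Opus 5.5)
The plan is to prove the two implications of \eqref{eq: equivalence between the limit of the sum and a null mean} separately, working one coordinate at a time. For fixed $m$ and $i$, condition on the other coordinates. Independence of the coordinates of $\Xvec_m$ (its covariance is $\Identity{\mydim}$) lets us write
\[
\Expect{\fbm_m(\Xvec_m) X_{m,i}} = \Ebb\Big[ \int_{-\infty}^{\infty} \fbm_m(\ldots,x,\ldots)\, x \, \normpdf\big(x - \lambda_{m,i}\mean_i\big) \dx \Big],
\]
where $\lambda_{m,i} \in [\constmin,\constmax]$ is the $i$th diagonal entry of $\Coeff_m$ and the outer expectation is over the remaining coordinates. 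The function $G(x) = x\,\fbm_m(\ldots,x,\ldots)$ is odd and non-negative for $x \geqslant 0$, because $\fbm_m$ is even in each coordinate and positive. It is also integrable against any Gaussian, because $\fbm_m$ is bounded. Lemma \ref{Lemma: Impaire} therefore applies.

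\textbf{Easy direction ($\mean = 0 \Rightarrow$ limit $=0$).} If $\mean_i = 0$, the $i$th coordinate is centred and $G$ is odd, so each inner integral vanishes. Hence every term $\Expect{\fbm_m(\Xvec_m) X_{m,i}}$ is zero and the Cesàro average is identically $0$.

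\textbf{Converse (nonzero $\mean_i$ gives a uniform lower bound).} Suppose $\mean_i \neq 0$ for some $i$. I take $\mu = \lambda_{m,i}\mean_i$ in Lemma \ref{Lemma: Impaire} with $a = \constmin|\mean_i|$ and $b = \constmax|\mean_i|$, so that $0 < a \leqslant |\mu| \leqslant b$. The lemma gives
\[
\sgn(\mean_i)\,\Expect{\fbm_m(\Xvec_m) X_{m,i}} \geqslant \Ebb\Big[\int_0^\infty x\,\fbm_m(\ldots,x,\ldots)\,u_{a,b}(x)\dx\Big].
\]
Replacing $\fbm_m$ by the smaller $\fbm^*$, valid since $\fbm^* \leqslant \fbm_m$ a.e. and $x\,u_{a,b}(x) \geqslant 0$ for $x \geqslant 0$, makes the inner integral independent of $m$ in its integrand. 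Only the outer law still depends on $m$, through the means $\lambda_{m,j}\mean_j$ of the other coordinates. To remove that dependence, apply Lemma \ref{Lemma: Paire} to each remaining coordinate $j \neq i$ in turn. The function $\fbm^*$ is not assumed even, so instead I apply it to the even, non-negative function $x_j \mapsto \int_0^\infty x\,\min(\fbm^*,\fbm_m)\,u_{a,b}(x)\dx$; if that is awkward, I will first symmetrize $\fbm^*$ coordinatewise, which is harmless because $\fbm_m$ is even in each coordinate. Each application uses $a_j = \constmin|\mean_j|$ and $b_j = \constmax|\mean_j|$, and replaces the Gaussian in $x_j$ with mean $\lambda_{m,j}\mean_j$ by a fixed one with mean $a_j$, restricted to $x_j \geqslant 0$, at the cost of the factor $e^{-(b_j^2-a_j^2)/2}$.

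The result is a constant $c > 0$, independent of $m$, with $\sgn(\mean_i)\Expect{\fbm_m(\Xvec_m) X_{m,i}} \geqslant c$ for all $m$. The constant is positive because $\fbm^* > 0$, $u_{a,b} > 0$ on $\Open{0}{\infty}$ when $a > 0$, and the integration domain has positive measure. Averaging over $m$ gives $\big|\frac1M\sum_{m=1}^M \Expect{\fbm_m(\Xvec_m) X_{m,i}}\big| \geqslant c$ for every $M$, which contradicts the limit being $0$. Hence every $\mean_i = 0$.

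The main obstacle is this last step: obtaining an $m$-uniform positive lower bound while the means of the other coordinates vary with $m$, and $\fbm^*$ is only a lower bound that is not itself even. I plan to handle it through the symmetrization remark and the iterated use of Lemma \ref{Lemma: Paire}, with Fubini/Tonelli justifying the interchanges thanks to non-negativity and boundedness.
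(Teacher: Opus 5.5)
Your argument is correct and follows the paper's route:
\begin{itemize}
\item Lemma~\ref{Lemma: Impaire} on the target coordinate. The paper first integrates the other coordinates into $g_m(x_1)$, which is the same thing by Fubini.
\item Lemma~\ref{Lemma: Paire} iterated over the remaining coordinates, replacing the $m$-dependent means $\lambda_{m,j}\mean_j$ by $\constmin|\mean_j|$ on $\RightOpen{0}{\infty}$ at cost $e^{-\theconst^2\mean_j^2}$.
\item $\fbm^*$ to make the bound uniform in $m$.
\end{itemize}
Your contrapositive ending is equivalent to the paper's deduction of $\mean_1=0$ from $\int_0^\infty x_1\Upsilon(x_1)u(x_1,\mean_1)\,dx_1=0$.

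The one real difference is the order of two steps, and that order is what creates the obstacle you flag. You pass to $\fbm^*$ before invoking Lemma~\ref{Lemma: Paire}, so you need $\fbm^*$ to be even. The paper instead applies Lemma~\ref{Lemma: Paire} to $\fbm_m$ itself, which is even in each coordinate. Only afterwards, on the restricted domain $\mathbb{R}_+^{\mydim-1}$, does it bound $\fbm_m$ below by $\fbm^*$, and there no evenness is needed.

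Of your two patches, the first does not work. $\min(\fbm^*,\fbm_m)$ coincides with $\fbm^*$ almost everywhere, so it is no more even than $\fbm^*$ is.

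The symmetrization patch does work. For instance, take $\fbm^{\mathrm{sym}}(\xvec)=\min_{\varepsilon\in\{-1,1\}^{\mydim}}\fbm^*(\varepsilon_1x_1,\ldots,\varepsilon_{\mydim}x_{\mydim})$. It is positive, independent of $m$, even in each coordinate, and satisfies $\fbm^{\mathrm{sym}}\leqslant\fbm^*\leqslant\fbm_m$ a.e. Still, simply swapping the two steps, as the paper does, removes the issue without any extra construction.
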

\begin{proof} 
We prove the direct implication only, the converse being straightforwardly true because each $\fbm_m$ is even in its $i^{\text{th}}$ variable. Choose arbitrarily any $m \in \Nset$. To fix ideas, consider the first coordinate $X_{m,1}$ of $\Xvec_m$. This choice is convenient to alleviate notation without incurring any loss of generality. For any given $m \in \Nset$, we put $\Coeff_m = \diag(\const_{m,1},\ldots,\const_{m,\mydim})$. Since $\fbm_m$ is bounded, $\Expect{\, \vert \, \fbm_m(\Xvec_m) X_{m,1} \, \vert \,} < \infty$. Therefore, we have
\begin{onecol}
	\begin{equation}
		\label{eq: general expression of E[f(Xn)Xn1]}
		\Expect{f_m (\Xvec_m) X_{m,1}} 
		= \displaystyle \int_{-\infty}^\infty  x_1 \, g_m(x_1) \normpdf \left ( x_1 - \const_{m,1} \mean_1 \right ) \dx_1 < \infty		
	\end{equation}
\end{onecol}
with
\begin{subequations}
	\label{eq: definition of gm}
	\begin{empheq}[left={\forall x_1 \in \Rset, g_m(x_1) =} 
    \empheqlbrace]{align}
        & \, \fbm_m(x_1) \mycomma & \text{if $\mydim = 1$} \mycomma \label{eq: definition of gm-a} 
		\\
		& \, \displaystyle \int_{\Rset^{{\mydim}-1}} \fbm_m(x_1, \ldots, x_{\mydim}) \prod_{i=2}^{\mydim} \normpdf \left ( x_i - \const_{m,i} \mean_i \right ) \dx_i \mycomma & \text{if $\mydim \geqslant 2$} \mycomma
        \label{eq: definition of gm-b}
	\end{empheq}
\end{subequations}
where \eqref{eq: definition of gm-b} is obtained via Fubini's Theorem. The function $G:x_1 \in \Rset \mapsto x_1 \, g_m(x_1) \in \Rset$ is odd because $\fbm_{m}$ is even in each of its coordinate. Furthermore, $G(x_1) \geqslant 0$ if $x_1 \geqslant 0$. We also have $0 \leqslant \constmin \abs{\mean_1} \leqslant \const_{m,1} \abs{\mean_1} \leqslant \constmax \abs{\mean_1} < \infty$. In addition, $$\int_{-\infty}^\infty G(x_1) \normpdf(x_1-\const_{m,1} \mean_1) \dx < \infty$$ by \eqref{eq: general expression of E[f(Xn)Xn1]}. Thence, Lemma \ref{Lemma: Impaire} applies to $G$ and yields that
	\begin{onecol}
		\begin{equation}
			\label{eq: basic equality to bound E[f(Xn)Xn1]}
			\displaystyle \int_0^\infty x_1 g_m(x_1) u(x_1,\mean_1) \dx_1 \leqslant \text{sign}(\const_{m,1} \mean_1) \displaystyle \int_{-\infty}^\infty x_1 \, g_m(x_1) \normpdf(x_1 - \const_{m,1} \mean_1) \dx_1 
		\end{equation}
	\end{onecol}
	where $u(x_1,\mean_1) = u_{\constmin|\mean_1|,\constmax|\mean_1|}(x_1)$ is calculated according to \eqref{eq:function_uab}. Since $\const_{m,1} > 0$, $\text{sign}(\const_{m,1} \mean_1) = \text{sign}(\mean_1)$. We thus derive from \eqref{eq: basic equality to bound E[f(Xn)Xn1]} and \eqref{eq: general expression of E[f(Xn)Xn1]} that
	\begin{equation}
		\label{eq: first equality to bound E[f(Xn)Xn1]}
		\hspace{-0.5cm} \displaystyle \int_0^\infty \hspace{-0.25cm} x_1 g_m(x_1) u(x_1,\mean_1) \dx_1 %\\ 
		\leqslant 
		\text{sign}(\mean_1) \, \Expect{{f_m} (\Xvec_{m} ) X_{m,1}}
		\hspace{-0.4cm} 
	\end{equation}
	Set $\theconst^2 = \frac{1}{2} \left ( \constmax^2 - \constmin^2 \right )$ and suppose that $\mydim \geqslant 2$. 
    %Given any $x_1 \in \Rset$, any $j \in {\llbracket 2 , \mydim \, \rrbracket}$, and, if $j \leqslant \mydim-1$, any real values $x_{j+1}, \ldots, x_{\mydim}$, we now prove that:     
    We now prove that, for any $j \in {\llbracket 2 , \mydim \, \rrbracket}$, any $x_1 \in \Rset$ and, if $j \leqslant \mydim-1$, any real values $x_{j+1}, \ldots, x_{\mydim}$,
	\begin{equation}
		\label{eq: recurrence}
		\hspace{-0.35cm}
		\begin{array}{lll}
			\displaystyle \int_{\Rset^{j-1}} \fbm_m(x_1, x_2, \ldots, x_{\mydim}) \prod_{i = 2}^{j} \normpdf(x_i - \const_{m,i} \mean_i) \dx_i \vspace{0.1cm} \\ 
			\hspace{2cm} \geqslant \, e^{-\theconst^2 \sum_{i=2}^{j} \mean_i^2} \displaystyle \int_{\mathbb{R}_{+}^{j-1}} \fbm_m(x_1, x_2, \ldots, x_{\mydim}) \prod_{i=2}^{j} \normpdf(x_i - \constmin \abs{\mean_i}) \dx_i \myfstop
		\end{array}
	\end{equation}
	\noindent
    We proceed by recurrence and begin with $j = 2$. Consider any $x_1 \in \Rset$ and, if $\mydim \geqslant 3$, any $x_3, \ldots, x_\mydim$ in $\Rset$. We then define $F_2: \Rset \to \RightOpen{0}{\infty}$ for every $x_2 \in \Rset$ by setting $$F_2(x_2) = \fbm_m(x_1, x_2, \ldots, x_{\mydim}) \myfstop$$
	The function $F_2$ is even and 
	$$\displaystyle \int_{-\infty}^\infty F_2(x_2) \normpdf(x_2 - \const_{m,2} \mean_2) \dx_2 < \infty$$ 
	because $\fbm_m$ is bounded. Since $0 \leqslant \constmin \abs{\mean_2} \leqslant \const_{m,2} \abs{\mean_2} \leqslant \constmax \abs{\mean_2} < \infty$, Lemma \ref{Lemma: Paire} applied to $F_2$ induces that
    \begin{onecol}
		\begin{equation}
			\label{eq: recurrence (j=2)}
            \begin{array}{lll}
			\displaystyle \int_{-\infty}^\infty \fbm_m(x_1, x_2, \ldots, x_{\mydim}) \normpdf(x_2 - \const_{m,2} \mean_2) \dx_2 \\
            \hspace{2cm} \geqslant \, e^{-\theconst^2 \mean_2^2} \displaystyle \int_0^\infty \fbm_m(x_1, x_2, \ldots, x_{\mydim}) \normpdf(x_2 - \constmin \abs{\mean_2} ) \dx_2
            \end{array}
			\nonumber
		\end{equation}
	\end{onecol}
	and \eqref{eq: recurrence} is thus verified for $j = 2$. If $\mydim = 2$, the recurrence stops here.
	\medskip \\
	\indent
	Suppose that $\mydim \geqslant 3$ and that \eqref{eq: recurrence} is satisfied for a given $j \in {\llbracket 2 , \mydim-1\, \rrbracket}$. Given any $x_1 \! \in \! \Rset^{\mydim}$ and any given real values $x_{j+1}, \ldots, x_{\mydim}$, the Fubini theorem implies:
	\begin{onecol}
		\begin{equation}
			\label{eq: recurrence (j)-0}
			\begin{array}{lll}
				\displaystyle \int_{\Rset^{j}} \fbm_m(x_1, \ldots, x_{\mydim}) \prod_{i=2}^{j+1} \normpdf(x_i - \const_{m,i} \mean_i ) \dx_i \vspace{0.1cm} \\ %\medskip \\
				= \displaystyle \int_{-\infty}^\infty\bigg (  \int_{\Rset^{j-1}} \! \fbm_m(x_1, \ldots, x_{\mydim}) \prod_{i=2}^{j} \normpdf(x_i - \const_{m,i} \mean_i ) \dx_i \bigg )
				\times \normpdf(x_{j+1} - \const_{m,j+1} \mean_{j+1} ) \dx_{j+1} \myfstop
			\end{array}
		\nonumber
		\end{equation}
	\end{onecol}
Therefore, according to \eqref{eq: recurrence}, we obtain:
\begin{equation}
	\label{eq: recurrence (j)-1}
	\begin{array}{lll}
		\displaystyle \int_{\Rset^{j}} \fbm_m(x_1, \ldots, x_{\mydim}) \prod_{i=2}^{j+1} \normpdf(x_i - \const_{m,i} \mean_i ) \dx_i
		\vspace{0.15cm} \\ 			
		\hspace{1cm} \geqslant e^{-\theconst^2 \sum_{i=2}^j \mean_i^2} 
		\displaystyle \int_{-\infty}^\infty \hspace{-0.1cm} \bigg (\displaystyle \int_{\mathbb{R}_{+}^{j-1}} \fbm_m(x_1, \ldots, x_{\mydim}) \prod_{i=2}^j \normpdf \left (x_i - \constmin \abs{\mean_i} \right ) \dx_i \bigg ) \vspace{0.15cm} \\
		\hspace{8cm} \times \, \normpdf(x_{j+1} - \const_{m,j+1} \mean_{j+1} ) \dx_{j+1} \mycomma
	\end{array}
\end{equation}
%\indent
Define the map $F_{j+1}: \Rset \to \RightOpen{0}{\infty}$ by setting:
	\begin{onecol}
		\[
		\forall x_{j+1} \in \Rset, F_{j+1}(x_{j+1}) = e^{-\theconst^2 \sum_{i=2}^j \mean_i^2} \int_{\mathbb{R}_{+}^{j-1}} \fbm_m(x_1, \ldots, x_{\mydim}) \prod_{i=2}^j \normpdf(x_i - \constmin \abs{\mean_i} ) \dx_i \myfstop
		\]
	\end{onecol}
	This function is even and $\fbm_m$ being bounded, \eqref{eq: recurrence (j)-1} implies that $F_{j+1} \normpdf(\cdot- \const_{m,j+1} \mean_{j+1}) \in L^1(\Rset)$. As $0 \leqslant \constmin \abs{\mean_{j+1}} \leqslant \const_{m,j+1} \abs{\mean_{j+1}} \leqslant  \constmax  \abs{\mean_{j+1}} < \infty$, we apply Lemma \ref{Lemma: Paire} to $F_{j+1}$, which leads to:
\begin{onecol}
\begin{equation}
 	\label{eq: recurrence (j)-2}
 	\hspace{-0.1cm}
 	\begin{array}{lll}
 		e^{-\theconst^2 \sum_{i=2}^j \mean_i^2} %\vspace{0.1cm} \\
 		%\times 
 		\displaystyle \int_{-\infty}^\infty
 		\hspace{-0.1cm} 
 		\bigg ( \int_{\mathbb{R}_{+}^{j-1}} \hspace{-0.1cm} 
 		\fbm_m(x_1, \ldots, x_{\mydim}) \prod_{i=2}^j \! \normpdf(x_i \! - \! \constmin \abs{\mean_i}) \dx_i \! \bigg ) \vspace{0.1cm} \\
 		\hspace{7cm} \times \, \normpdf(x_{j+1} - \const_{m,j+1} \mean_{j+1} ) \dx_{j+1} \vspace{0.1cm} \\
 		\hspace{1cm} 
 		\geqslant \, e^{-\theconst^2 \sum_{i=2}^{j+1} \mean_i^2}
 		\displaystyle \int_{\mathbb{R}_{+}^{j}} \hspace{-0.1cm} \fbm_m(x_1, \ldots, x_{\mydim}) \prod_{i=2}^{j+1} \normpdf(x_i - \constmin \abs{\mean_i} ) \dx_i \myfstop
 	\end{array}
 	\vspace{0.1cm}
\end{equation}
\end{onecol}
The recurrence follows by combining \eqref{eq: recurrence (j)-1} and \eqref{eq: recurrence (j)-2}.
\vspace{0.1cm} \\
\indent
When $j = \mydim$ and $\mydim \geqslant 2$, \eqref{eq: recurrence}, \eqref{eq: definition of gm-b} {and the inequality $\sum_{i=2}^j \mean_i^2 \leqslant \norm[\meanvec]^2$} imply that:
\begin{onecol}
\begin{equation}
	\label{eq: lower bound on the recurrence-0}
	\forall x_1 \in \Rset^{\mydim}, g_m(x_1) \geqslant e^{-\theconst^2 \norm[\meanvec]^2} \displaystyle \int_{\mathbb{R}_{+}^{{\mydim}-1}} \fbm_m(x_1, \ldots, x_{\mydim}) \prod_{i=2}^{{\mydim}} \normpdf(x_i - \constmin \abs{\mean_i} ) \dx_i \myfstop
	% \nonumber
\end{equation}
\end{onecol}
Since $\fbm_m(x_1, x_2, \ldots, x_{\mydim}) \geqslant \fbm^*(x_1, x_2, \ldots, x_{\mydim}) > 0$ for almost every $(x_1, \ldots, x_{\mydim}) \in \Rset^{\mydim}$, \eqref{eq: lower bound on the recurrence-0} and \eqref{eq: definition of gm} induce the following lower bound:
\begin{equation}
\label{eq: lower bound on the recurrence}
\forall x_1 \in \Rset^{\mydim}, g_m(x_1) \geqslant \integral 
\end{equation}
with
\begin{onecol}
\begin{equation}
	\label{eq: definition of integral}
	\integral = %(x_1,\mean_2, \ldots, \mean_{\mydim}) = 
    \left \{
    \begin{array}{lll}
    \hspace{-0.1cm} 
    \fbm^*(x_1) \mycomma & \text{if $\mydim = 1$} \mycomma
    \vspace{0.2cm} \\
    \hspace{-0.1cm} 
    e^{-\theconst^2 \norm[\meanvec]^2} \displaystyle \int_{\mathbb{R}_{+}^{\mydim-1}} \fbm^*(x_1, \ldots, x_{\mydim)} \, \prod_{i=2}^{{\mydim}} \normpdf(x_i- \constmin \abs{\mean_i}) \dx_i \mycomma & \text{if $\mydim \geqslant 2$} \myfstop
    \end{array}
    \right.
%	\nonumber
\end{equation}
\end{onecol}
We now derive from \eqref{eq: first equality to bound E[f(Xn)Xn1]}, \eqref{eq: lower bound on the recurrence} and \eqref{eq: definition of integral} that
$$
\forall m \in \Nset, 
0 \leqslant \displaystyle \int_0^\infty x_1 \integral %(x_1,\mean_2, \ldots, \mean_{\mydim}) 
u(x_1,\mean_1) \dx_1 
\leqslant \text{sign}(\mean_1) \Expect{f_m (\Xvec_m) X_{m,1}}
$$
and thus that
\begin{onecol}
\begin{equation}
	\label{eq: the final lower bound on sumE[f(Xn)Xn1]}
	0 \leqslant %e^{-\theconst^2 \norm[\meanvec]^2} 
    \displaystyle \int_0^\infty x_1 \integral %(x_1,\mean_2, \ldots, \mean_{\mydim}) 
    u(x_1,\mean_1) \dx_1
	\leqslant \text{sign}(\mean_1) \frac{1}{M} \displaystyle \sum_{m=1}^M \Expect{f_m (\Xvec_m) X_{m,1}} \myfstop
	\nonumber
\end{equation}
\end{onecol}
Suppose that $\lim\limits_{M \to \infty} \frac{1}{M} \sum_{m=1}^M \Expect{f_m (\Xvec_m) X_{m,1}} = 0$. The inequalities above then imply that $$\int_0^\infty x_1 \integral u(x_1,\mean_1) \dx_1 = 0 $$ and thus that $x_1 \integral u(x_1,\mean_1) = 0$ for almost every $x_1 \in \RightOpen{0}{\infty}$. Because $x_1 \integral > 0$ for any $x_1 > 0$, $x_1 \integral u(x_1,\mean_1) = 0$ for almost every $x_1 \in \RightOpen{0}{\infty}$ implies that $u(x_1,\mean_1) = 0$ for almost every $x_1 \in \Open{0}{\infty}$ and thus, that $\mean_1 = 0$ because, as shown by a straightforward computation, for any given $x_1 \neq 0$, $u(x_1,\mean_1) = 0$ if and only if $\mean_1 = 0$. We can reiterate the same reasoning for any other $i =2, \ldots, \mydim$ to conclude. 
\end{proof}
\indent
In \eqref{eq: example of a base via a function}, take the Euclidean norm $\norm[\cdot]$ in $\Rset^\mydim$ for $H$. Given a function $f: \Rset^\mydim \to \Rset$, we derive from \eqref{eq: lim over B(H)} that
\begin{onecol}
\begin{equation}
	\label{eq: limit when the norm tends to infty}
	\displaystyle \lim_{\norm[\xvec] \to \infty} f(\xvec) = 0 
	\Leftrightarrow \big ( \, \forall \eta \in \Open{0}{\infty}, \exists \, r \in \RightOpen{0}{\infty}, \forall \xvec \in \Rset^{\mydim}, \norm[\xvec] > r \Rightarrow \vert f(\xvec) \vert < \eta \, \big ) \myfstop
\end{equation}
\end{onecol}
Similarly, $\Lone$ being the set of all $\Xvec \in \rv$ such that $\Exp{ \norm[\Xvec] } < \infty$, consider 
$$
\begin{array}{ccccc}
H: 
& \Lone & \to & \RightOpen{0}{\infty} \vspace{0.1cm} \\
& \Xvec & \mapsto & \| \, \Ebb \, [ \Xvec ] \, \| 
\end{array}
$$
in \eqref{eq: example of a base via a function}. Given $f: \Lone \to \RightOpen{0}{\infty}$, it follows from \eqref{eq: lim over B(H)} that
\begin{equation}
	\label{eq: limit when the norm tends to infty for random vector}
	\begin{array}{lll}
		\hspace{-0.4cm} 
		\displaystyle \lim_{\| \, \Ebb \, [ \Xvec ] \, \| \to \infty} \fbm(\Xvec) = 0 \vspace{0.2cm} \\
		\Leftrightarrow \big ( \forall \eta \in \Open{0}{\infty}, \exists \, r \in \RightOpen{0}{\infty}, \forall \Xvec \in \rv, 
		\| \, \Ebb \, [ \Xvec ] \, \| > r \Rightarrow \vert f(\xvec) \vert < \eta \big ) \myfstop
	\end{array}
\end{equation}
\begin{Lemma}
	\label{Lemma: asymptotic behavior-1}
	Let $\Xvec(\xivec) \thicksim \Ncal(\xivec,\sigma^2 \Identity{{\mydim}})$ with $\sigma > 0$ and $g: \RightOpen{0}{\infty} \to \RightOpen{0}{\infty}$ be a non-null continuous function. If $\displaystyle \lim\limits_{t \rightarrow \infty} {g(t)} = 0$ then $$\displaystyle \lim\limits_{ \| \xivec \| \rightarrow \infty} \Expect{g ( \| \Xvec(\xivec) \|)} = 0 \myfstop$$
\end{Lemma}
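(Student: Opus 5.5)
We need to prove Lemma \ref{Lemma: asymptotic behavior-1}: for $\Xvec(\xivec) \thicksim \Ncal(\xivec,\sigma^2 \Identity{\mydim})$ and $g$ continuous, non-negative, with $g(t)\to 0$ as $t\to\infty$, the expectation $\Expect{g(\|\Xvec(\xivec)\|)}$ tends to $0$ as $\|\xivec\|\to\infty$, in the sense of \eqref{eq: limit when the norm tends to infty}.

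The plan is a direct split of the expectation into a region where the argument of $g$ is large and a complementary region of small probability. First, $g$ is bounded. Indeed, $g$ is continuous on $\RightOpen{0}{\infty}$ and tends to $0$, so there is $t_1$ with $g\leqslant 1$ on $\RightOpen{t_1}{\infty}$, and $g$ is bounded on the compact $[0,t_1]$ by continuity. Hence $G=\sup g<\infty$ and every expectation below is finite.

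Fix $\eta>0$ and choose $T$ with $g(t)<\eta/2$ for $t>T$. Write $\Xvec(\xivec)=\xivec+\sigma\Zbm$ with $\Zbm\thicksim\Ncal(0,\Identity{\mydim})$. Then
\[
\Expect{g(\|\Xvec(\xivec)\|)} \leqslant \frac{\eta}{2} + G\,\Pbb\big[\|\xivec+\sigma\Zbm\|\leqslant T\big].
\]
By the triangle inequality, $\|\xivec+\sigma\Zbm\|\leqslant T$ forces $\sigma\|\Zbm\|\geqslant \|\xivec\|-T$. The probability is therefore at most $\Pbb[\|\Zbm\|\geqslant (\|\xivec\|-T)/\sigma]$, which tends to $0$ as $\|\xivec\|\to\infty$ because $\|\Zbm\|$ is an almost surely finite random variable with a fixed law (a $\chi$ distribution). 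It is also bounded by $\mydim\sigma^2/(\|\xivec\|-T)^2$ by Chebyshev/Markov, since $\Expect{\|\Zbm\|^2}=\mydim$.

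Choose $r>T$ so large that $G\,\mydim\sigma^2/(r-T)^2<\eta/2$. Then $\|\xivec\|>r$ gives $\Expect{g(\|\Xvec(\xivec)\|)}<\eta$, which is exactly the predicate in \eqref{eq: limit when the norm tends to infty} applied to $f(\xivec)=\Expect{g(\|\Xvec(\xivec)\|)}$.

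The only step needing care is the boundedness of $g$, which is the first step above. The non-nullity hypothesis is not used.
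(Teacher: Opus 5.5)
Your proof is correct, but it takes a different route from the paper's. The paper changes variables to write $\Expect{g(\|\Xvec(\xivec)\|)}$ as the integral of $g(\|\yvec+\xivec\|)$ against the fixed density of $\Ncal(0,\sigma^2\Identity{\mydim})$. It then notes that $g$ is bounded and that $g(\|\yvec+\xivec\|)\to 0$ for each fixed $\yvec$ as $\|\xivec\|\to\infty$, and concludes by dominated convergence. You instead split on the event $\{\|\Xvec(\xivec)\|>T\}$ and bound the probability of its complement using the triangle inequality and Markov's inequality on $\|\Zbm\|^2$.

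Your version is quantitative: any $r > T+\sigma\sqrt{2G\mydim/\eta}$ is an admissible radius. It also works directly with the limit $\|\xivec\|\to\infty$ in $\Rset^{\mydim}$. Dominated convergence, by contrast, is a statement about sequences, so the paper's argument tacitly relies on the sequential characterization of that limit.

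A further benefit is that your estimate depends on the covariance only through $\Expect{\|\Xvec(\xivec)-\xivec\|^2}=\mydim\sigma^2$. If you replace $\sigma\Zbm$ by $\Lambda\Zbm$ with $\Lambda$ diagonal and entries at most $\lambda_{\max}$, the same bound holds with $\sigma$ replaced by $\lambda_{\max}$. This gives the uniform statement of Lemma~\ref{Lemma: asymptotic behavior-2} directly, without the density-domination step used there and without the lower bound $\lambda_{\min}$.

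The paper's route is a one-line appeal to a standard theorem but gives no rate. You are also right that the non-nullity of $g$ plays no role; the paper's proof does not use it either.
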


%\vspace{0.1cm}
\begin{proof} 
This straightforward application of the Lebesgue dominated convergence theorem is provided for self-completeness of the paper. 

\vspace{0.1cm}
Let $g: \RightOpen{0}{\infty} \to \RightOpen{0}{\infty}$ be a non-null continuous function and suppose that $\Xvec(\xivec) \thicksim \Ncal(\xivec,\sigma^2 \Identity{{\mydim}})$ with $\sigma > 0$. We have:
\begin{onecol}
\begin{equation}
	\label{eq: limit of E[w(X)2] in appendix}
	\begin{array}{lll}
		\Expect{g ( \| \Xvec(\xivec) \| ) } 
		& = & \dfrac{1}{(2 \pi)^{{\mydim}/2} \sigma^{\mydim}} \displaystyle \int_{\Rset^{\mydim}} g \big ( \| \xvec \| \big ) e^{-\| \xbm- \xibm \| / 2 \sigma^2} \dxvec \vspace{0.25cm} \\
		& = & \dfrac{1}{(2 \pi)^{{\mydim}/2} \sigma^{\mydim}} \displaystyle \int_{\Rset^{\mydim}} g \big (
		\| \yvec + \xivec \| \big ) e^{-\| \yvec \| / 2 \sigma^2} \dyvec \myfstop
	\end{array}
\end{equation}
\end{onecol}
Since $g$ is continuous and $\displaystyle \lim\limits_{t \rightarrow \infty} {g(t)} = 0$, $g$ is bounded and so are the values $g \left ( \| \yvec + \xivec \| \right )$ for $(\yvec,\xivec) \in \Rset^{\mydim} \times \Rset^{\mydim}$. In addition, given any $\yvec \in \Rset^{\mydim}$, since $\displaystyle \lim\limits_{t \rightarrow \infty} {g(t)} = 0$, it results from \eqref{eq: limit when the norm tends to infty} that $\displaystyle \lim\limits_{ \| \xivec \| \rightarrow \infty} g \left ( \| \yvec + \xivec \| \right ) = 0$. The result thus follows from \eqref{eq: limit of E[w(X)2] in appendix} and the Lebesgue dominated convergence theorem.
\end{proof}
\indent
We now use Definition \ref{def: Ndiagcov} introduced in Appendix \ref{App: the proof} to state our next result.
\begin{Lemma}
\label{Lemma: asymptotic behavior-2}
If $[\constmin,\constmax] \! \subset \! \Open{0}{\infty}$ and $\myf: \! \RightOpen{0}{\infty} \! \to \! \RightOpen{0}{\infty}$ is a continuous function such that $\displaystyle \lim\limits_{t \rightarrow \infty} \myf(t) = 0$, then $$\displaystyle \lim\limits_{ \| \Ebb [{\Xvec}] \| \rightarrow \infty} \Expect{\myf ( \| \Xvec \| ) } = 0$$
uniformly in $\Xvec \in \fnGdiag{{\constmin^2}}{{\constmax^2}}$ in that:
\begin{onecol}
\begin{equation}
	\label{eq: crucial uniform convergences}
	\begin{array}{c}
		\forall \, \varepsilon > 0, \exists \, \radius > 0, \forall \Xvec\in \fnGdiag{{\constmin^2}}{{\constmax^2}},  \| \, \Ebb[{\Xvec}] \, \| > \radius \Rightarrow \, \Expect{ \, \myf ( \, \| \Xvec \| \, ) \, } < \varepsilon \myfstop
	\end{array}
\end{equation}
\end{onecol}
\end{Lemma}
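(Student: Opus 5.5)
We need the uniform version of Lemma \ref{Lemma: asymptotic behavior-1}. The plan is to reduce to a standard Gaussian and then bound the expectation by splitting on a large ball, with all constants depending only on $\constmin,\constmax$ and $\myf$.

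Fix $\varepsilon>0$. Since $\myf$ is continuous on $\RightOpen{0}{\infty}$ and tends to $0$ at infinity, it is bounded; let $M_g=\sup \myf<\infty$. Choose $t_0$ such that $\myf(t)<\varepsilon/2$ for all $t>t_0$.

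Take any $\Xvec\in\fnGdiag{\constmin^2}{\constmax^2}$ and write $\Xvec=\xivec+\Dbm\Zbm$. Here $\xivec=\Ebb[\Xvec]$, $\Dbm$ is diagonal with entries in $[\constmin,\constmax]$, and $\Zbm\thicksim\Ncal(0,\Id)$. By the triangle inequality, $\|\Xvec\|\geqslant\|\xivec\|-\constmax\|\Zbm\|$. Split the expectation on the event $A=\{\|\Zbm\|\leqslant s\}$, where $s$ is chosen so that $\Pbb[\|\Zbm\|>s]<\varepsilon/(2M_g)$. This is possible because $\|\Zbm\|^2\thicksim\chi^2_\mydim$ does not depend on $\Xvec$. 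Then
\[
\Expect{\myf(\|\Xvec\|)}\leqslant \Expect{\myf(\|\Xvec\|)\mathbbm{1}_A}+M_g\,\Pbb[A^c].
\]
On $A$ we have $\|\Xvec\|\geqslant\|\xivec\|-\constmax s$. Setting $\radius=t_0+\constmax s$, the condition $\|\xivec\|>\radius$ forces $\|\Xvec\|>t_0$ on $A$, so the first term is below $\varepsilon/2$. The second term is also below $\varepsilon/2$, and $\radius$ does not depend on $\Xvec$, which gives \eqref{eq: crucial uniform convergences}.

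There is no real obstacle. The only point needing care is the uniformity: $s$ must depend only on $\mydim$ and $\varepsilon$, and $\radius$ only on $\constmax$, $t_0$ and $s$. This holds because the covariance enters only through the bound $\constmax$. In particular the lower bound $\constmin$ is not even needed, and $\myf$ need not be monotone, since we only use $\myf<\varepsilon/2$ beyond $t_0$. As an alternative, one could follow the proof of Lemma \ref{Lemma: asymptotic behavior-1} and dominate the density, but the splitting argument above avoids any compactness argument over the covariances.
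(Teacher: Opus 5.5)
Your proof is correct, but it does not follow the paper's route.

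The paper proves the lemma by dominating the density of $\Xvec$ pointwise. Let $\lambda_1^2,\ldots,\lambda_\mydim^2$ be the variances of $\Xvec$ and $\xivec=\Ebb[\Xvec]$. The paper uses $\prod_i\lambda_i\geqslant\constmin^{\mydim}$ and $\sum_i(x_i-\xi_i)^2/\lambda_i^2\geqslant\|\xvec-\xivec\|^2/\constmax^2$. This gives $\Expect{\myf(\|\Xvec\|)}\leqslant(\constmax/\constmin)^{\mydim}\,\Expect{\myf(\|\Zvec(\xivec)\|)}$, where $\Zvec(\xivec)\thicksim\Ncal(\xivec,\constmax^2\Id)$. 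The majorant depends on $\Xvec$ only through its mean. It tends to $0$ by Lemma~\ref{Lemma: asymptotic behavior-1}, which is a dominated-convergence statement for one fixed isotropic family. Uniformity over $\fnGdiag{\constmin^2}{\constmax^2}$ follows immediately.

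You instead couple $\Xvec=\xivec+\Dbm\Zbm$ with a single standard Gaussian $\Zbm$ and split on $\{\|\Zbm\|\leqslant s\}$.
\begin{itemize}
\item \emph{What your route buys.} It is self-contained: no appeal to Lemma~\ref{Lemma: asymptotic behavior-1} or to dominated convergence. It gives an explicit radius $\radius=t_0+\constmax s$. It uses only the bound $\|\Dbm\|\leqslant\constmax$, so neither $\constmin>0$ nor the diagonal structure is needed. From $\myf$ it needs only boundedness and smallness beyond $t_0$.
\item \emph{What the paper's route buys.} It reuses the non-uniform lemma and produces a single majorant that is a function of the mean alone. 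The cost is requiring $\constmin>0$ for the determinant factor.
\end{itemize}

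Your closing remark is slightly off. The density-domination alternative does not need any compactness argument over the covariances either, since the paper's bound is pointwise and holds uniformly by construction. Also, your choice of $s$ divides by $\sup\myf$; the case $\sup\myf=0$ should be set aside, and it is trivial.
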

\begin{proof}
Given any $\meanvecc \in \Rset^{\mydim}$, set
\begin{onecol}
$$
F(\meanvecc) 
= \dfrac{1}{(2 \pi)^{\mydim/2} \constmin^\mydim} \displaystyle \int_{\Rset^{\mydim}} \myf \big ( \| \xvec \| \big ) e^{-\| \xvec - \meanvecc \|^2 / 2 \constmax^2} \dxvec 
= \dfrac{\constmax^{\mydim}}{\constmin^{\mydim}} \, \Expect{ \myf ( \| \Zvec (\meanvecc) \| ) }
$$
\end{onecol}
where $\Zvec (\meanvecc)$ is any $\mydim$-dimensional real random vector with distribution $\Ncal(\meanvecc, \constmax^2 \, \Identity{{\mydim}})$. It follows from Lemma \ref{Lemma: asymptotic behavior-1} that $\lim\limits_{\| \meanvecc \| \to \infty} \Expect{ \myf ( \| \Zvec (\meanvecc) \| ) } = 0$ and thus that
\begin{equation}
	\label{eq: limit of F}
    \lim\limits_{\| \meanvecc \| \to \infty} F(\meanvecc) = 0 \myfstop
\end{equation} 
Consider any $\Xvec \in \fnGdiag{{\constmin^2}}{\constmax^2}$. Let $\const^2_1, \ldots, \const^2_\mydim$ be the diagonal elements of the covariance matrix of $\Xvec$ and set $\meanvecc = (\meanveccc_1, \ldots, \meanveccc_\mydim)^\transpose = \Ebb [ \Xvec]$. We thus have:
\begin{onecol}
\[
\Exp{ \myf ( \norm[\Xvec]) } = \dfrac{1}{(2 \pi)^{\mydim/2} \, \prod_{\dimi = 1}^\mydim \! \const_{\dimi}} \displaystyle \int_{\Rset^{\mydim}} \myf \big ( \| \xvec \| \big ) e^{-\frac{1}{2} \sum_{\dimi = 1}^\mydim \! {(\xvec_\dimi-\meanveccc_\dimi)^2}/{\const^2_{\dimi}}} \, \dxvec \myfstop
\]
\end{onecol}
Since $\prod_{\dimi = 1}^\mydim \! \const_{\dimi} \geqslant \constmin^{\mydim}$ and $\sum_{\dimi = 1}^\mydim \! {(x_\dimi-\meanvecc_\dimi)^2}/{\const^2_{\dimi}} \! \geqslant \! \norm[\xvec - \meanvecc]^2 / {\constmax^2}$ for any $\xvec \! \in \! \Rset^\mydim$, we have
$\Expect{\myf ( \norm[\Xvec] ) } \leqslant F(\meanvecc)$. Thence the result as a consequence of \eqref{eq: limit of F}.
\end{proof}

%%%%%%%%%%%%%%%%%%%%%%%%%%%%%%%%%%%%%%%%%%%%%%
%%%%%%%%%%%%%%%%%%%%%%%%%%%%%%%%%%%%%%%%%%%%%%
%%%%%%%%%%%%%%%%%%%%%%%%%%%%%%%%%%%%%%%%%%%%%%
%%%%%%%%%%%%%%%%%%%%%%%%%%%%%%%%%%%%%%%%%%%%%%

\section{Proof of Proposition \ref{Proposition:the iterative method converges}}
\label{App: proof of the convergence}
This proof, up to \eqref{eq: absolute value of the difference of elementary risks} below, relies on the same type of arguments as in the Appendix of \citet{comaniciu2002mean}. To begin with, through the same steps as those employed to derive \eqref{eq: we rewrite G}, it follows from \eqref{eq: our general function} that:
\begin{equation}
	\label{eq: MyNewfunc_app: general-prelim}
	\hspace{-0.3cm}
	\begin{array}{lll}
		\forall \, (\ell,N) \in \Nset \times \Nset, 
		\medskip \\
		\hspace{0.5cm}
        \begin{array}{lll}
		\Ctr^{(\ell+1)} 
        & = & \Thefuncbasic(\Ctr^{(\ell)}) \medskip \\
		& = & \Rmat \left ( \displaystyle \sum_{n=1}^N w \big ( \Maha{\estDiagmatn}^2 (\Rmat^\transpose \obs_n - \Rmat^\transpose \Ctr^{(\ell)}) \big ) \investDiagmatn \right )^{\! -1} 
        \medskip \\
        & & \hspace{2cm}
		\times \displaystyle \sum_{n=1}^N w \big ( \Maha{\estDiagmatn}^2 (\Rmat^\transpose \obs_n - \Rmat^\transpose \Ctr^{(\ell)}) \big ) \investDiagmatn \Rmat^\transpose \obs_n \myfstop
        \end{array}
	\end{array}
\end{equation}
For any $\ell \in \Nset$, set $\CCtr^{(\ell)} = \Rmat^\transpose \Ctr^{(\ell)}$ and, in addition, for any $n \in \Nset$, $\oobs_n = \Rmat^\transpose \obs_n$. We can thus rewrite \eqref{eq: MyNewfunc_app: general-prelim} as:
\begin{equation}
	\label{eq: MyNewfunc_app: general}
	\hspace{-0.3cm}
	\begin{array}{lll}
		\forall \, {(\ell,N) \in \Nset \times \Nset}, 
		\medskip \\
		\hspace{0.3cm}
        \begin{array}{lll}
		\CCtr^{(\ell+1)}
		& = & \Rmat^\transpose \Ctr^{(\ell+1)} 
        \medskip \\
        & = & \left ( \displaystyle \sum_{n=1}^N w \big ( \Maha{\estDiagmatn}^2 (\oobs_n-\CCtr^{(\ell)}) \big ) \investDiagmatn \right )^{\! -1} \hspace{-0.1cm}
		\times \displaystyle \sum_{n=1}^N w \big ( \Maha{\estDiagmatn}^2 (\oobs_n-\CCtr^{(\ell)}) \big ) \investDiagmatn \oobs_n \myfstop
        \end{array}
	\end{array}
\end{equation}
For any $(\ell,N) \in \Nset \times {\Nset}$, set $$\elrisk(\ell) = \sum_{n=1}^N \risk(\Maha{\estDiagmatn}^2 ( \CCtr^{(\ell)} - \oobs_n )) \mycomma$$ where $\risk$ has derivative $w$ according to \eqref{eq:definition of w}. 
Since $w$ is non-increasing and differentiable, we have $\risk'' = w' \leqslant 0$. It follows that $\risk$ increases --- because $w$ is valued in $\Open{0}{\infty}$ --- and concave. The concavity of $\risk$, \eqref{eq:definition of w} and \eqref{eq: definition of the initial mahanorm} imply that, for any $(\ell,n) \in \Nset \times {\intn}$,
\begin{equation}
	\begin{array}{lll}
		\risk(\Maha{\estDiagmatn}^2 ( \CCtr^{(\ell+1)} - \oobs_n )) - \risk(\Maha{\estDiagmatn}^2 ( \CCtr^{(\ell)} - \oobs_n ))  \vspace{0.2cm} \\ 
		\qquad \leqslant w \left ( \Maha{\estDiagmatn}^2 ( \CCtr^{(\ell)} \! - \! \oobs_n ) \right )
		\times \big ( \Maha{\estDiagmatn}^2 ( \CCtr^{(\ell+1)} - \oobs_n ) - \Maha{\estDiagmatn}^2 ( \CCtr^{(\ell)} - \oobs_n ) \big ) \vspace{0.2cm} \\
		\qquad \leqslant 2 w \left ( \Maha{\estDiagmatn}^2 ( \CCtr^{(\ell)} \! - \! \oobs_n ) \right )  \big ( \meanvec^{(\ell)} \! - \! \oobs_n \big )^\transp \investDiagmatn \Diff^{(\ell)}
	\end{array}
	\nonumber
\end{equation}
with
\begin{subequations}
	\label{eq: def of meanvec and Diff}
	\begin{empheq}[left={\forall \ell \in \Nset,} \empheqlbrace]{align}
		& \meanvec^{(\ell)} = \frac{1}{2} \left ( \CCtr^{(\ell+1)} + \CCtr^{(\ell)}\right ) \mycomma
		\label{eq: def of meanvec}
		\\
		& \Diff^{(\ell)} = \CCtr^{(\ell+1)} - \CCtr^{(\ell)} \mycomma
		\label{eq: def of Diff}
	\end{empheq}
\end{subequations}
since
\begin{align}
\Maha{\estDiagmatn}^2 ( \CCtr^{(\ell+1)} & - \oobs_n ) - \Maha{\estDiagmatn}^2 ( \CCtr^{(\ell)} - \oobs_n ) 
\nonumber
\medskip \\
& = 
\Big ( \CCtr^{(\ell+1)} - \oobs_n + \CCtr^{(\ell)} - \oobs_n \Big )^\transpose \investDiagmatn \Big ( \CCtr^{(\ell+1)} - \oobs_n - \big ( \CCtr^{(\ell} - \oobs_n \big ) \Big )
\nonumber
\medskip \\
& = 2 \left ( \meanvec^{(\ell)} - \oobs_n \right ) ^\transpose \investDiagmatn \Diff^{(\ell)} \myfstop
\nonumber
\end{align}
We derive from the foregoing that, for any $(\ell,N) \in \Nset \times {\Nset}$,
\begin{onecol}
	\begin{equation}
		\label{eq: upper-bound on the diff of risks}
		\elrisk(\ell+1) - \elrisk(\ell) 
		\leqslant 2 \displaystyle \sum_{n=1}^N w \big ( \Maha{\estDiagmatn}^2 ( \CCtr^{(\ell)} - \oobs_n ) \big ) \big ( \meanvec^{(\ell)} - \oobs_n \big )^\transp \investDiagmatn \Diff^{(\ell)} \myfstop
	\end{equation}
\end{onecol}
According to \eqref{eq: MyNewfunc_app: general}, given $(\ell,N) \in \Nset \times \Nset$, we have:
\begin{onecol}
	\begin{equation}
	    \sum_{n=1}^N w \big (\Maha{\estDiagmatn}^2 ( \CCtr^{(\ell)} - \oobs_n ) \big ) \investDiagmatn \oobs_n = \sum_{n=1}^N w \big ( \,\Maha{\estDiagmatn}^2 ( \CCtr^{(\ell)} - \oobs_n ) \big ) \investDiagmatn \CCtr^{(\ell+1)} \myfstop
        \label{eq: equality 1}
	\end{equation}
\end{onecol}
For any given $(\ell,N) \in \Nset \times \Nset$, we derive from \eqref{eq: def of meanvec and Diff} and \eqref{eq: equality 1} that
\begin{onecol}
	$$\sum_{n=1}^N w \big (\Maha{\estDiagmatn}^2 ( \CCtr^{(\ell)} - \oobs_n ) \big ) \investDiagmatn \big ( \meanvec^{(\ell)} - \oobs_n \big ) = - \frac{1}{2} \sum_{n=1}^N w \big (\Maha{\estDiagmatn}^2 ( \CCtr^{(\ell)} - \oobs_n ) \big ) \investDiagmatn \Diff^{(\ell)}$$
\end{onecol}
and thus, according to \eqref{eq: definition of the initial mahanorm} for the definition of the Mahalanobis norm, that
\begin{align}
\displaystyle \sum_{n=1}^N w & \big ( \Maha{\estDiagmatn}^2 ( \CCtr^{(\ell)} - \oobs_n ) \big ) \big ( \meanvec^{(\ell)} - \oobs_n \big )^\transpose \investDiagmatn \Diff^{(\ell)}
\nonumber
\medskip \\
& =  
\displaystyle \sum_{n=1}^N w \big (\Maha{\estDiagmatn}^2 ( \CCtr^{(\ell)} - \oobs_n ) \big ) (\Diff^{(\ell)})^\transpose \investDiagmatn \big ( \meanvec^{(\ell)} - \oobs_n \big )
\nonumber
\medskip 
\\
& =  
- \dfrac{1}{2} \displaystyle \sum_{n=1}^N w \big (\Maha{\estDiagmatn}^2 ( \CCtr^{(\ell)} - \oobs_n ) \big ) \Maha{\estDiagmatn}^2(\Diff^{(\ell)})
\nonumber
\end{align}
By injecting the last equality into above into \eqref{eq: upper-bound on the diff of risks}, we obtain:
\begin{onecol}
	\begin{equation}
		\label{eq: bound on the difference between the elementary risks}
        \forall (\ell,N) \in \Nset \times \Nset,
		\elrisk(\ell + 1) - \elrisk(\ell) \leqslant - \displaystyle \sum_{n=1}^N w \big ( \Maha{\estDiagmatn}^2 ( \CCtr^{(\ell)} - \oobs_n ) \big ) \Maha{\estDiagmatn}^2(\Diff^{(\ell)}) \leqslant 0 \myfstop
	\end{equation}
\end{onecol}
Therefore, the sequence $\left ( \elrisk(\ell) \right )_{\ell \in \Nset}$ does not increase. Since $\elrisk(\ell) \geqslant 0$ for any $\ell \in \Nset$, this sequence has a limit. 

Given any $(\ell,L,N) \in \Nset \times  {\Nset} \times {\Nset}$, \eqref{eq: bound on the difference between the elementary risks} induces now that
\begin{onecol}
	\begin{equation}
		\label{eq: absolute value of the difference of elementary risks}
		\vert \elrisk(\ell+L) - \elrisk(\ell) \vert \geqslant \sum_{m=1}^{L} \left ( \, \sum_{n=1}^N w \big ( \Maha{\estDiagmatn}^2(\CCtr^{(\ell + m - 1)} - \oobs_n) \big ) \right ) \Maha{\estDiagmatn}^2 (\Diff^{(\ell+m-1)}) \myfstop
	\end{equation}
\end{onecol}
We now set
\begin{equation}
	\forall n \in \Nset, \estDiagmatn = \diag (\,\esteigvalnone, \ldots, \esteigvalndim \, ) \mycomma
	\label{eq: estdeltamat = appendix}
%	\nonumber
\end{equation}
where, for any given $n \in \Nset$, the positive real values $\esteigvalnone, \ldots, \esteigvalndim$ are the eigenvalues of $\estMCovn$ and thus verify:
\begin{equation}
	\forall n \in \Nset, \forall \dimi \in \intd, \sqrtesteigvalndimi \in [\sqrtesteigvalmin,\sqrtesteigvalmax] \myfstop
	\label{eq: estdelta in estinterval appendix}
\end{equation}
Hence, we    have:
\begin{onecol}
	\begin{equation}
		\label{eq: def of Mahalanobis norm for diag}
			\forall n \in \Nset, \forall \xvec = (x_1, \ldots, x_\mydim)^\transpose \in \Rset^\mydim, 
			\Maha{\estDiagmatn} (\xvec) = \sqrt{\textstyle \displaystyle \sum_{\dimi = 1}^\mydim \investeigvalndimi \, x^2_\dimi} \mycomma
	\end{equation}
\end{onecol}
from which we derive that
\begin{onecol}
	\begin{equation}
		\label{eq: inequality on operator norm-2}
		\forall {(\ell,n) \in \Nset \times \Nset}, 
		\Maha{\estDiagmatn}(\CCtr^{(\ell)}-\oobs_n) \leqslant \invsqrtesteigvalmin \times \norm[\CCtr^{(\ell)}-\oobs_n] \myfstop
		%	\nonumber
	\end{equation}
\end{onecol}
For any $\ell \in \Nset$, set $\CCtr^{(\ell)} = \big ( \cctr^{(\ell)}_1, \ldots, \cctr^{(\ell)}_\mydim \big )^\transpose$ and, for any $n \in \Nset$, $\oobs_n = (z_{n,1}, \ldots, z_{n,\mydim})^\transpose$. We obtain from \eqref{eq: MyNewfunc_app: general} and \eqref{eq: def of Mahalanobis norm for diag} that:
\begin{onecol}
\begin{equation}
	\label{eq: MyNewfunc_app}
	\forall (\ell,N) \in \Nset \times {\Nset}, \forall \dimi \in \intd, 
		\cctr_\dimi^{(\ell+1)} =
		\dfrac{\textstyle \sum_{n = 1}^N w \left ( \textstyle \sum_{\dimib=1}^\mydim \sqrtesteigvalndimib^{-2} \, \big (\oobs_{n} - \CCtr^{(\ell)} \big)_{\! \dimib}^{\! 2} \right ) \investeigvalndimi \, z_{n,i}}{\textstyle \sum_{n = 1}^N w \left ( \textstyle \sum_{\dimib=1}^\mydim \sqrtesteigvalndimib^{-2} \, \big ( \oobs_{n}-\CCtr^{(\ell)} \big)_{\! \dimib}^{\! 2} \right ) \investeigvalndimi} \myfstop
\end{equation}
\end{onecol}
Therefore, for any $\ell \in \intNrestrict$, any $\dimi \in \intd$, $\cctr_\dimi^{(\ell)}$ is a barycenter of the $N$ real values $z_{1,\dimi}, \ldots, z_{N,\dimi}$. Given any vector space $E$ endowed with a norm $\nu$, if $x^* = \textstyle \sum_{n=1}^N \alpha_n x_n$ is the barycenter of $N$ vectors $x_1, \ldots, x_N$ of $E$ respectively assigned to $N$ positive real coefficients $\alpha_1, \ldots, \alpha_N$ such that $\sum_{n=1}^N \alpha_n = 1$ then, for any $j \in \llbracket 1, N \rrbracket$,
\begin{equation}
	\label{eq: max for barycenter-0}
	\nu(x^*-x_j) = \nu \Big ( \sum_{r=1}^N \alpha_r (x_r- x_j) \Big ) \leqslant \sum_{r=1}^N \alpha_r \nu (x_r - x_j),
	\nonumber
\end{equation}
from which we derive that:
\begin{onecol}
	\begin{equation}
		\label{eq: max for barycenter}
		\nu(x^*-x_j) \leqslant \mymax \big \{ \nu(x_r-x_s): (r,s) \in \llbracket 1, N \rrbracket \times \llbracket 1, N \rrbracket \big \} \myfstop
	\end{equation}
\end{onecol}
Hence, from \eqref{eq: MyNewfunc_app} \& \eqref{eq: max for barycenter-0} with $E = \Rset$ and $\nu = \vert \cdot \vert$, we obtain that, for any $N \in {\Nset}$, any $n \in \intn$, any $\dimi \in \intd$ and any $\ell \in \intNrestrict$:
\begin{equation}
	\label{eq:barycenter application}
	\vert \cctr_\dimi^{(\ell)} - z_{n,\dimi} \vert \leqslant \max_{(r,s) \in \intn \times \intn } \{ \vert \, z_{r,\dimi} - z_{s,\dimi} \, \vert \} \myfstop
	\nonumber
\end{equation}
As a consequence, given  $\ell \in \intNrestrict$, $n \in \intn$ and $\dimi \in \intd$,
\begin{equation}
	\label{eq: upper bound on the square norm}
	\norm[\cctr^{(\ell)} - \oobs_n]^2 = \sum_{\dimi=1}^\mydim (\cctr_\dimi^{(\ell)} - z_{n,\dimi})^2 \leqslant \mydim \times \themax^2
\end{equation}
where $\themax = \max_{1 \leqslant \dimi \leqslant \mydim}  \left ( \max_{(r,s) \in \intn \times \intn } \{ \vert \, z_{r,\dimi} - z_{s,\dimi} \, \vert \} \right )$. 
Since $w$ does not increase, \eqref{eq: inequality on operator norm-2} \& \eqref{eq: upper bound on the square norm} imply that
\begin{onecol}
\begin{equation}
	\label{eq: lower bound on w for convergence}
		\forall {(\ell,n) \in \intNrestrict \times \Nset}, 
		w(\Maha{\estDiagmatn}^2 \! (\CCtr^{(\ell)} - \oobs_n) ) \geqslant w({\sqrtesteigvalmin^{-2}} \times \mydim \times \themax^2) \myfstop
	\nonumber
\end{equation} 
\end{onecol}
Thence the following lower bound 
$$\forall (\ell,N,m) \in \intNrestrict \times {\Nset} \times \Nset, \sum_{n=1}^N w(\Maha{\estDiagmatn}^2 (\CCtr^{(\ell + m)} - \oobs_n) ) \geqslant A$$ 
with $A = N \times w({\sqrtesteigvalmin^{-2}} \times \mydim \times \themax^2)$. Injecting the inequality above into \eqref{eq: absolute value of the difference of elementary risks} gives:
\begin{onecol}
\begin{equation}
	\label{eq: lower bound on the absolute value of the difference between the elementary risks}
		\forall (\ell,N,L) \in \intNfn{3} \times {\Nset} \times {\Nset}, 
		\vert \elrisk(\ell+L) - \elrisk(\ell) \vert \geqslant A \displaystyle \sum_{m=1}^{L} \Maha{\estDiagmatn}^2 \! (\Diff^{(\ell+m-1)}) \myfstop
\end{equation}
\end{onecol}
Fix $\varepsilon \in \Open{0}{\infty}$. Since the sequence $\left ( \elrisk(\ell) \right )_{\ell \in \Nset}$ has a limit, it is Cauchy. 
Therefore, there exists $\ell_0(\varepsilon) \in \intNfn{3}$ such that
\begin{onecol}
	\begin{equation}
		\label{eq: towards the stopping criterion}
		\forall {(\ell,L) \in \Nset \times \Nset}, \ell \geqslant \ell_0(\varepsilon) \Rightarrow \vert \elrisk(\ell+L) - \elrisk(\ell) \vert \leqslant A \varepsilon^2 / \eigvalmax \myfstop
	\end{equation}
\end{onecol}
According to \eqref{eq: estdeltamat = appendix} and \eqref{eq: def of Mahalanobis norm for diag}, we have 
\begin{equation}
    \label{eq: inequalities on mahalanobis norm}
    \forall \xvec \in \Rset^\mydim, \forall n \in \Nset, \Maha{\estDiagmatn}(\xvec) = \norm[\estDiagmatn^{-1/2} \xvec \,] \geqslant {\invsqrtesteigvalmax} \| \xvec \| \myfstop
\end{equation}
Thereby, it results from \eqref{eq: inequalities on mahalanobis norm}, \eqref{eq: lower bound on the absolute value of the difference between the elementary risks} and \eqref{eq: towards the stopping criterion} that, for any integer $\ell \geqslant \ell_0(\varepsilon)$ and any $L \in {\Nset}$, 
\begin{equation}
    \label{eq: final inequality for the stoppping criterion}
    \sum_{m=1}^{L} \norm[\Diff^{(\ell+m-1)}]^2 \leqslant \eigvalmax \sum_{m=1}^{L} \Maha{\estDiagmatn}^2 \big ( \Diff^{(\ell+m-1)} \big ) \leqslant \eigvalmax \, \vert \elrisk(\ell+L) - \elrisk(\ell) \vert \, / \, A \leqslant \varepsilon^2 \myfstop
\end{equation}
According to \eqref{eq: def of Diff}, 
\begin{equation}
    \label{eq: equality a}
    \forall \ell \in \Nset, \CCtr^{(\ell+L)} - \CCtr^{(\ell)} = \sum_{m=1}^{L} \Diff^{(\ell + m-1)} \myfstop
\end{equation} 
Straightforwardly, we also have:
\begin{equation}
    \label{eq: equality b}
    \forall L \in \Nset, \forall r \in \llbracket 1, L \rrbracket, \norm[\Diff^{(\ell + r -1)}] \leqslant \sqrt{\sum_{m=1}^{L} \norm[\Diff^{(\ell+m-1)}]^2} \myfstop
\end{equation} 
As a consequence, for any $L \in {\Nset}$ and any $\ell \geqslant \ell_0(\varepsilon)$, 
$$
\begin{array}{llllll}
	\norm[\Ctr^{(\ell+L)} - \Ctr^{(\ell)}]^2 
	& = \hspace{-0.2cm} & \hspace{-0.2cm} \norm[\CCtr^{(\ell+L)} - \CCtr^{(\ell)}]^2 \hspace{1cm} & \text{[$\Rmat$ is orthogonal]} \vspace{0.1cm} \\ 
	& \leqslant & \norm[\displaystyle \sum_{m=1}^{L} \Diff^{(\ell+m-1)} ]^2 \hspace{1cm} & \text{[By \eqref{eq: equality a}]}
	\vspace{0.1cm} \\
	& \hspace{-0.2cm} \leqslant \hspace{-0.2cm} & \hspace{-0.2cm} \Big ( \displaystyle\sum_{m=1}^{L}  \norm[ \Diff^{(\ell+m-1)} ] \Big )^2 
	\vspace{0.1cm} \\
	& \hspace{-0.2cm} \leqslant \hspace{-0.2cm} & \hspace{-0.2cm} L^2 \displaystyle \sum_{m=1}^{L} \norm[ \Diff^{(\ell+m-1)} ]^2 & \text{[By \eqref{eq: equality b}]}
	\vspace{0.2cm} \\
	& \hspace{-0.2cm} \leqslant \hspace{-0.2cm} & \hspace{-0.2cm} L^2 \varepsilon^2 & [\text{By \eqref{eq: final inequality for the stoppping criterion}}] \myfstop
\end{array}
$$

\section{p-value of Wald's test for Gaussian mean testing}
\label{Sec: RDT pvalue}

Suppose that $\Xvec \thicksim \Ncal(\xivec,\Smat)$, with $\xivec \in \Rset^{\dimc}$ and $\Smat$ is a $\dimc \times \dimc$ \spdm. The critical region of the Wald test $\Topt$ given by \eqref{Eq:Thresholding test from above} is defined as
\begin{onecol}
	\begin{equation}
		\label{eq: def of Salpha}
		\Scal_\level = \mybig \{ \xvec \in \Rset^{\dimc}: \Topt(\xvec) = 1 \mybig \} = \mybig \{ \xvec \in \Rset^{\dimc}:  \mu(\level) < \thenorm( \xvec ) \mybig \} \mycomma
	\end{equation}
\end{onecol}
where $\mu(\level)$ is defined by \eqref{Eq: Wald threshold}. It straightforwardly follows from \eqref{Eq: Wald threshold} and the properties of $\Fchi$ that the function $\level \in \intervalzeroone \mapsto \mu(\level) \in \Open{0}{\infty}$ is strictly decreasing. Therefore, for two levels $0 < \level < \level' < 1$, we have $\Scal_{\level} \subset \Scal_{\level'}$. Consider the function $\TestFamily_{\Smat}$ that assigns to each given $\alpha \in \Open{0}{1}$ the Wald test $\Topt$. According to \citet[p.~63, Sec.~3.3]{Lehmann2005} and \eqref{eq: def of Salpha}, the p-value of $\TestFamily_{\Smat}$ at some $\xvec \in \Rset^{\mydim}$ is given by:
\begin{equation}
	\label{eq: def of pval}
	\pval{\Smat}{\xvec} 
	= \inf \mybig \{ \level \in \Open{0}{1}: \xvec \in \Scal_\level \mybig \} 
	= \inf \mybig \{ \level \in \Open{0}{1}: \mu(\level) < \thenorm( \xvec ) \mybig \} \myfstop
\end{equation}
Set $\level_0(\xvec) = 1 - \Fchi(\nu_{\Smat}^2(\xvec))$. \eqref{Eq: Wald threshold} implies that $\level_0 (\xvec) = 1 - \Fchi(\mu(\level_0 (\xvec))^2)$. Thereby, we have $\mu(\level_0 (\xvec)) = \thenorm( \xvec )$. Since $\mu$ is decreasing and bijective, the foregoing implies that 
$$\mybig \{ \level \in \Open{0}{1}: \mu(\level) < \thenorm( \xvec ) \mybig \} = \mybig \{ \level \in \Open{0}{1}: \mu(\level) < \mu(\level_0 (\xvec)) \mybig \} = \Open{\level_0(\xvec)}{1}$$
We thus conclude from \eqref{eq: def of pval} that $\pval{\Smat}{\xvec} = \alpha_0(\xvec) = 1 - \Fbb_{\chi^2_{\mydim}}(\nu_{\Smat}^2( \xvec ))$. 

%%%%%%%%%%%%%%%%%%%%%%%%%%%%%%%%%%%%%%%%%%%%%%%%%%%%%%
%%%%%%%%%%%%%%%%%%%%%%%%%%%%%%%%%%%%%%%%%%%%%%%%%%%%%%
%%%%%%%%%%%%%%%%%%%%%%%%%%%%%%%%%%%%%%%%%%%%%%%%%%%%%%
%%%%%%%%%%%%%%%%%%%%%%%%%%%%%%%%%%%%%%%%%%%%%%%%%%%%%%

\bibliography{Refs}

\end{document}